\DeclareMathOperator*{\argmin}{arg\,min}
\newcommand{\meas}{\mathfrak{m}}
\newcommand{\Leb}{\mathsf{Leb}}
\newcommand{\unif}{\mathfrak{m}_\mathsf{U}}
\newcommand{\mineig}{\sigma_{\min}}
\newcommand{\KS}{\textup{KS}}
\newcommand{\fn}{\mathfrak{n}}
\newcommand{\algO}{\mathcal{F}_{\Omega}}
\newcommand{\wc}{\widecheck}
\newcommand{\R}{\mathbb{R}}
\newcommand{\E}{\mathbb{E}}
\newcommand{\N}{\mathbb{N}}
\newcommand{\prob}{\mathbb{P}}
\newcommand{\X}{\mathcal{X}}
\newcommand{\B}{\mathcal{B}}
\newcommand{\cL}{\mathcal{L}}
\newcommand{\cQ}{\mathcal{Q}}
\newcommand{\fR}{\mathfrak{R}}
\newcommand{\cP}{\mathcal{P}}
\newcommand{\cV}{\mathcal{V}}
\newcommand{\f}{\mathcal{F}}
\newcommand{\cH}{\mathcal{H}}
\newcommand{\wt}{\widetilde}
\newcommand{\wh}{\widehat}
\newcommand{\wb}{\widebar}
\newcommand{\bM}{\widebar{M}}
\newcommand{\htheta}{\widehat{\theta}}
\newcommand{\tI}{\textup{I}}
\newcommand{\indi}{\mathbbm{1}}
\newcommand{\trace}{\textup{trace}}
\newcommand{\KL}{D}
\newcommand{\vol}{\textup{Vol}}
\newcommand{\bsigma}{\boldsymbol{\sigma}}
\newcommand{\be}{\boldsymbol e}
\newcommand{\CDF}{\textsc{{CDF}}\xspace}
\newcommand{\bp}{\boldsymbol{p}}
\newcommand{\fB}{\mathfrak{\B}}
\newcommand{\F}{\mathcal F}
\newcommand{\D}{\mathcal D}
\renewcommand{\L}{\mathcal L}
\newcommand{\ber}{\textup{Bernoulli}}
\newcommand{\Lap}{\textup{Laplace}}
\newcommand{\logistic}{\textup{logi}}
\newtheorem{theorem}{Theorem}
\newtheorem{lemma}[theorem]{Lemma}
\newtheorem{proof*}{Proof}
\newtheorem{corollary}[theorem]{Corollary}
\newtheorem{proposition}[theorem]{Proposition}
\newtheorem{assumption}[theorem]{Assumption}
\title{Functional Linear Regression of Cumulative Distribution Functions}
\author{\name Qian Zhang \email zhan3761@purdue.edu \\
      \addr Department of Statistics\\
      Purdue University
      \AND
      \name Anuran Makur \email amakur@purdue.edu \\
      \addr Department of CS and School of ECE\\
      Purdue University
      \AND
      \name Kamyar Azizzadenesheli \email kamyara@nvidia.com \\
      \addr Nvidia Corporation}
\begin{document}

\maketitle

\begin{abstract}
The estimation of cumulative distribution functions (\CDF) is an important learning task with a great variety of downstream applications, such as risk assessments in predictions and decision making. In this paper, we study functional regression of contextual \CDF{}s where each data point is sampled from a linear combination of context dependent \CDF basis functions. We propose functional ridge-regression-based estimation methods that estimate \CDF{}s accurately everywhere. In particular, given $n$ samples with $d$ basis functions, we show estimation error upper bounds of $\wt O(\sqrt{d/n})$ for fixed design, random design, and adversarial context cases. We also derive matching information theoretic lower bounds, establishing minimax optimality for \CDF functional regression. Furthermore, we remove the burn-in time in the random design setting using an alternative penalized estimator. Then, we consider agnostic settings where there is a mismatch in the data generation process. We characterize the error of the proposed estimators in terms of the mismatched error, and show that the estimators are well-behaved under model mismatch. Moreover, to complete our study, we formalize infinite dimensional models where the parameter space is an infinite dimensional Hilbert space, and establish a self-normalized estimation error upper bound for this setting. Notably, the upper bound reduces to the $\wt O(\sqrt{d/n})$ bound when the parameter space is constrained to be $d$-dimensional. Our comprehensive numerical experiments validate the efficacy of our estimation methods in both synthetic and practical settings.
\end{abstract}

\section{Introduction} \label{Sec:Intro}
Estimating cumulative distribution functions (\CDF) of random variables is a salient theoretical problem that underlies the study of many real-world phenomena. For example, \citet{huang2021off} and \citet{leqi2022supervised} recently showed that estimating {\CDF}s is sufficient for risk assessment, thereby making \CDF estimation a key building block for such decision-making problems. In a similar vein, it is known that {\CDF}s can also be used to directly compute distorted risk functions \citep{wirch2001distortion}, coherent risks \citep{artzner1999coherent}, conditional value-at-risk and mean-variance \citep{cassel2023general}, and cumulative prospect theory risks \citep{prashanth2016cumulative}. Furthermore, {\CDF}s are also useful in calculating various risk functionals appearing in insurance premium design, portfolio design, behavioral economics, behavioral finance, and healthcare applications \citep{rockafellar2000optimization,shapiro2014lectures,prashanth2016cumulative,wongriskyzoo}. 
Given the broad utility of estimating {\CDF}s, there is a vast (and fairly classical) literature that tries to understand this problem.

In particular, the renowned \emph{Glivenko-Cantelli theorem}~\citep{cantelli1933sulla,glivenko1933sulla} states that given independent samples of a random variable, one can construct a consistent estimator for its \CDF. Tight non-asymptotic sample complexity rates for such estimation using the Kolmogorov-Smirnov (KS) distance as the loss have also been established in the literature~\citep{cantelli1933sulla,glivenko1933sulla,dvoretzky1956asymptotic,massart1990tight}. However, these results are all limited to the setting of a single random variable. In contrast, many modern learning problems, such as doubly-robust estimators in contextual bandits, treatment effects, and Markov decision processes~\citep{huang2021off,kallus2019localized,huang2022offMDP}, require us to simultaneously learn the {\CDF}s of potentially infinitely many random variables from limited data. Hence, the classical results on \CDF estimation do not address the needs of such emerging learning applications. 

\paragraph{Contributions.}
In this work, as a first step towards developing general \CDF estimation methods that fulfill the needs of the aforementioned learning problems, we study \emph{functional linear regression of \CDF{}s}, where samples are generated from {\CDF}s that are convex combinations of context-dependent \CDF{} bases. Our model resembles the well-studied linear regression and stochastic linear bandits problem. In linear regression, researchers analyzed finite-dimensional parametric models with pre-selected feature functions. These pre-designed features result from extensive feature engineering processes carried out for the underlying task. Similarly, within the domain of contextual bandits, researchers studied the stochastic linear bandit problem using a linear model \citep[Equation (19.1)]{lattimore_szepesvari_2020} with finite dimension and known feature map. Thus, it is natural to commence the analysis assuming the access to known ``feature'' \CDF{}s, which ultimately bestows the advantages intrinsic to linear regression. As our main contribution, we define both least-squares regression and ridge regression estimators for the unknown linear weight parameter, and establish corresponding estimation error bounds for the fixed design, random design, adversarial, and self-normalized settings. In particular, given $n$ samples with $d$ {\CDF} bases, we prove estimation error upper bounds that scale like $\wt{O}(\sqrt{d/n})$ (neglecting sub-dominant factors). Our derivations are inspired by the classical finite dimensional fixed design, random design, and adversarial self-normalized theories~\citep{pena2008self,abbasi2011online}. Our results achieve the same problem-dependent scaling as in canonical finite dimensional linear regression~\citep{abbasi2011online,abbasi2011improved,hsu2012random}, and importantly, in contrast to the mentioned works, do not depend on the label/reward/response magnitude. Moreover, we derive $\Omega(\sqrt{d/n})$ information theoretic lower bounds for functional linear regression of \CDF{}s. This establishes minimax estimation rates of $\wt{\Theta}(\sqrt{d/n})$ for the \CDF functional regression problem. We later show that this result directly implies the concentration of \CDF{}s in KS distance. 
We also propose a new penalized estimator that theoretically eliminates the requirement on the burn-in time of sample size in the random design setting. 
Then, we consider agnostic settings where there is a mismatch between our linear model and the actual data generation process. We characterize the estimation error of the proposed estimator in terms of the mismatch error, and demonstrate that the estimator is well-behaved under model mismatch. 
To complete our study, we generalize the parameter space in the linear model from finite-dimensional Euclidean spaces to general infinite-dimensional Hilbert spaces, extend the ridge regression estimator to the infinite-dimensional model with proper regularization, and establish a corresponding self-normalized estimation error upper bound which immediately recovers our previous $\wt O(\sqrt{d/n})$ upper bound when the parameter space is restricted to be $d$-dimensional.
Finally, we present numerical results for synthetic and real data experiments to illustrate the performance of our estimation methods. 

\paragraph{Related works.}
A complementary approach to the proposed \CDF regression framework is quantile regression~\citep{koenker1978regression}. Although quantile regression may appear to be closely related to \CDF regression at first glance, the two problems have very different flavors. Indeed, unlike \CDF{}s, quantiles are not sufficient for law invariant risk assessment. Besides, due to their infinite range, quantile estimation is quite challenging, resulting in analyses that only consider pointwise estimation~\citep{takeuchi2006nonparametric}. 
However, as it is necessary to estimate multiple quantiles for \CDF estimation, a simultaneous analysis of multiple quantile estimates is needed theoretically, which typically requires a union bound on the failure probability that increases linearly with the number of estimates. 
Furthermore, the estimated multiple quantiles may not be monotonically increasing with respect to the probability values, requiring extra effort to construct a valid \CDF from a finite series of quantile estimates. Moreover, any such construction will incur a non-convergent KS distance between the estimated \CDF and the true \CDF for some distribution, as a general \CDF can exhibit jumps or flat regions at any position. 
Additionally, the quality of the estimated CDF from multiple estimated quantiles relies heavily on the selection of grid points of probability values, which is instance-dependent and may require knowledge of the distribution the learner seeks to estimate. Thus, establishing a universal rule for choosing grid points that yield reasonable CDF estimates via quantile regression across diverse distributions proves challenging. In practice, the introduction of grid points introduces numerous hyperparameters to tune, adding artificial complexity to the methodology.
Perhaps more importantly, quantile regression can be ill-posed in many machine learning settings. For example, quantiles are not estimatable in decision-making problems and games with mixed random variables (which take both discrete and continuous values). For these reasons, our focus in this paper will be on \CDF regression. 

Several works have delved into the realm of conditional \CDF{} estimation. 
\citet{hall1999methods} estimated conditional \CDF{}s for fixed cutoff $y$ and context $x$ using local logistic methods and adjusted Nadaraya-Watson estimators. However, their analysis necessitates the assumption of strong regularity conditions on the conditional \CDF (including at least continuous second-order derivatives), the marginal \CDF{} of the context, and the data generating process. They established asymptotic convergence only for fixed cutoff and context. 
\citet{ferraty2006estimating} introduced a kernel-type nonparametric estimator for conditional \CDF{}s at a fixed context $x$. Their analysis mandates that the samples are independent and identically distributed (iid), in addition to some regularity assumptions concerning the marginal distribution of $x$ and the smoothness of the conditional \CDF. Their theoretical findings, too, revolve around asymptotic scenarios and apply solely to fixed contexts.
\cite{chung2009nonparametric} proposed a special class of conditional \CDF{}s based on probit stick-breaking process mixture models. They developed an MCMC algorithm for posterior sampling of parameters but did not furnish theoretical assurances regarding consistency. 
Distinguishing itself from existing endeavors, this paper introduces a novel linear model \eqref{eq:linear_model} or \eqref{eq:L2_linear_model} where we presume knowledge of an arbitrary family of contextual \CDF{}s and aim to estimate the weight parameter $\theta_*$. 
Consequently, our model possesses the capability to encompass any conditional \CDF, enabling the estimation of the conditional \CDF across all values of the context $x$ and cutoff $y$ by estimating one parameter. 
Furthermore, we embrace an adversarial data generation process (see Scheme I in Section \ref{Sec:Preliminaries}), which surpasses the limitations of the iid setting in terms of generality. We provide tight non-asymptotic analysis of the estimation error by showing matching upper bounds and lower bounds of the error. Additionally, 
our model \eqref{eq:linear_model} or \eqref{eq:L2_linear_model} readily accommodates the integration of estimated \CDF{}s from previous works on conditional \CDF estimation into the family of feature contextual \CDF{}s, thereby enhancing the overall quality of the conditional \CDF estimates. Furthermore, the probability approximately correct and Vapnik–Chervonenkis theory~\citep{devroye2013probabilistic} has been extended to CDF with new measures of complexities~\citep{leqi2022supervised}. 

\citet{chernozhukov2013inference} and \citet{koenker2013distributional} study ``distribution regression'' where for a fixed cutoff $y$, they estimate parameters in conditional \CDF{} models by maximizing log likelihood of $\indi\{y\ge Y_i\}$ for outcome samples $Y_1,\dots,Y_n$. Thus, both works require specific models for conditional \CDF{}s.
\citet{chernozhukov2013inference} introduced a ``distribution regression'' model where the conditional \CDF takes the form of a link function evaluated at the inner product of vector transformations of the context $X$ and outcome $Y$. However, due to the dependence of the log likelihood on the cutoff $y$ within this model, their estimator is inherently pointwise. They established asymptotic convergence of the estimated conditional \CDF. Nonetheless, this hinges on certain assumptions concerning the true parameter functions, which is challenging to validate.
\citet{koenker2013distributional} considered the ``linear local-scale model'' where the outcome is the summation of a linear local function of the context and the product of a linear scale function of the context and an independent random error boasting a smooth density. Their convergence results are of an asymptotic nature, assuming iid samples, alongside other conditions on the expected log likelihood and the asymptotic covariance function which also pose substantial verification challenges. 
Furthermore, the maximum likelihood estimation (MLE) used in both papers only accesses the indicators denoting whether the samples $Y_1,\dots,Y_n$ surpass a fixed cutoff $y$, which underutilizes the wealth of information inherent in the samples.
In stark contrast, our estimator \eqref{eq:estimator_general} or \eqref{eq:estimator_L2} uses the one-sample empirical \CDF{}s ($\indi\{Y_i\le \cdot\}$) which fully exploit the sample information.
Moreover, as previously mentioned, the estimated \CDF{}s derived in the above distribution regression problems can be seamlessly integrated into our proposed model. 

In some literature, ``distribution regression'' takes on a distinctive meaning, referring to the model where the context is a sequence of samples from some distribution which, together with the outcome, is sampled from some meta joint distribution \citep{poczos2013distribution,szabo2016learning}. The task is to learn a mapping from the distribution of the context to the outcome. Contrastingly, our model \eqref{eq:linear_model} or \eqref{eq:L2_linear_model} operates in a different realm: the outcome is a sample from a mixture of contextual \CDF{}s and the task is to learn the weight parameter $\theta_*$. Thus, our model diverges from the above notion of distribution regression. Our focus is not on estimating a mapping from distributions to outcomes but on estimating a parameter that governs the condition distribution. Moreover, there is no meta distribution that the samples follow in our adversarial data generating process.

Our results on \CDF regression have the potential for downstream applications in stochastic bandits \citep{dc35850b-2ca1-314f-9e0d-470713436b17,bams/1183517370,lattimore_szepesvari_2020} where learning algorithms necessitate the estimation of reward distributions under selected actions and the adaptive exploration of the action space. Then, under the linear assumption of the reward distributions, our \CDF regression method can serve to estimate the \CDF{}s of the rewards in stochastic bandit algorithms, with readily available theoretical results for integration into the analysis.
For instance, in the infinite-armed bandit problem \citep{berry1997bandit,NEURIPS2022_3c2fd72a}, assuming that the underlying distribution of arms satisfies our linear model, our method, in conjunction with an exploration algorithm for arm selection, can be employed to estimate the \CDF of the underlying distribution, which actually enables the estimation of any distribution functional, thereby broadening the class of indicator-based functionals considered in \citet{NEURIPS2022_3c2fd72a}. 
Furthermore, since estimating the \CDF of the reward under a target policy in stochastic bandits is adequate for assessing various risk functionals associated with the target policy \citep{huang2021off}, with our linear assumption on the reward distribution, our method can be applied to the risk assessment of policies in stochastic bandits. Then, combined with an exploration algorithm to select policies, our method becomes a valuable tool for minimizing diverse risks in stochastic bandits, which also extends the conventional scope of minimizing expected regret in stochastic bandits.

\paragraph{Outline.}
We briefly outline the rest of the paper. Notation and formal setup for our problem are given in Section \ref{Sec:Preliminaries}. We propose our estimation paradigm and analyze its theoretical performance in Section \ref{sec:upper_results}. We derive corresponding lower bounds on the estimation error in Section \ref{sec:minimax_lower_bounds}. 
We establish upper bounds on the estimation error under the existence of a mismatch in our proposed model in Section \ref{sec:mismatched_model}. 
We generalize the problem from estimating finite dimensional parameters to estimating infinite dimensional parameters, extend our estimation paradigm to this infinite dimensional setting, and prove an upper bound on estimation error in Section \ref{sec:inf_dim_model}. Numerical results are displayed in Section \ref{Sec:Numerical simulations}. Conclusions are drawn and future research directions are suggested in Section \ref{Sec:Conc}. All the proofs and additional results are presented in the appendices. 

\section{Preliminaries} \label{Sec:Preliminaries}
In this section, we introduce the notation used in the paper and set up the learning problem of contextual \CDF regression.
\paragraph{Notation.}
Let $\N$ denote the set of positive integers. 
For any $n\in\N$, let $[n]$ denote the set $\{1,\dots,n\}$. For any measure space $(\Omega,\F,\meas)$, define the Hilbert space $\L^2(\Omega,\meas):=\{f:\Omega\rightarrow\R\ \big|\ \int_\Omega|f|^2d\meas< \infty\}$ with $\L^2$-norm $\|f\|_{\L^2(\Omega,\meas)}:=\sqrt{\int_\Omega|f|^2d\meas}$ for $f\in \L^2(\Omega,\meas)$.
For any positive definite matrix $A \in \R^{d \times d}$, define $\|\cdot\|_A$ to be the weighted $\ell^2$-norm in $\R^d$ induced by $A$, i.e., $\|x\|_A=\sqrt{x^\top Ax}$ for $x \in \R^d$. 
For the standard Euclidean (or $\ell^2$-) norm $\|\cdot\|_{I_d}$, where $I_d$ denotes the $d\times d$ identity matrix, we omit the subscript $I_d$ and simply write $\|\cdot\|$.
For any square matrix $A$, let $\mu_{\min}(A)$ denote the smallest eigenvalue of $A$, $\mu_{\max}(A)$ denote the largest eigenvalue of $A$ and $\|A\|_2$ denote the spectral norm of the matrix $A$, i.e., $\|A\|_2:=\sqrt{\mu_{\max}(A^\top A)}$. Let $\KS(F_1,F_2):=\sup_{x\in\R}|F_1(x)-F_2(x)|$ denote the KS distance between two \CDF{}s $F_1$ and $F_2$. Finally, let $\mathbbm{1}\{\cdot\}$ denote the indicator function.
More technical notation dealing with measurability issues is provided at the beginning of Appendix \ref{Sec:Proofs_upper_finite}.

\paragraph{Problem setup.}
In this paper, we consider the problem of functional linear regression of \CDF{}s. To define this problem, let $\X$ denote the context space, and let $F(x,\cdot):\R\rightarrow[0,1]$ be the \CDF of some $\R$-valued random variable for any $x\in \X$. We assume that $\X$ is a Polish space throughout the paper. For a context $x\in \X$, we observe a sample $y$ from its corresponding \CDF $F(x,\cdot)$. We next summarize two schemes to generate $(x,y)$ samples:
\begin{itemize}[leftmargin=*]
\item \textbf{Scheme I (Adversarial).} For each $j\in\N$, an adversary picks $x^{(j)}\in\X$ (either deterministically or randomly) in an adaptive way given knowledge of the previous $y^{(i)}$'s for $i< j$, and then $y^{(j)}\in\R$ is sampled from $F(x^{(j)},\cdot)$. This includes the canonical \textbf{fixed design} setting as a special case, where all $x^{(j)}$'s are fixed a priori without knowledge of $y^{(j)}$'s.
\item \textbf{Scheme II (Random).} For each $j\in\N$, $x^{(j)}\in\X$ is sampled from some probability distribution $P_X^{(j)}$ on $\X$ independently, and then $y^{(j)}\in\R$ is sampled from $F(x^{(j)},\cdot)$ independently. This is known as the \textbf{random design} setting in the regression context.
\end{itemize}
Scheme I and Scheme II generalize the assumptions of the data generation process in canonical ridge regression in~\citet{abbasi2011improved} and \citet{hsu2012random} to the problem of \CDF estimation, respectively.
Note that although the random design setting in Scheme II is a special case of Scheme I, we emphasize it because it has specific properties that deserve a separate treatment. The adversarial setting in Scheme I is more general than what is typically considered for regression, and our corresponding self-normalized analysis has several potential future applications in risk assessment for reinforcement learning, e.g., in contextual bandits~\citep{abbasi2011improved}. 

The task of contextual \CDF regression is to recover $F$ from a sample $\{(x^{(j)},y^{(j)})\}_{j\in[n]}$ of size $n$. 
As an initial step towards this problem, inspired by the well-studied linear regression and linear contextual bandits problems~\citep[Equation (19.1)]{lattimore_szepesvari_2020}, where finite-dimensional parametric models with pre-selected feature functions are assumed, we consider a \emph{linear model} for $F$. Let $d$ be a fixed positive integer. For each $i\in[d]$ and $x\in \X$, let $\phi_i(x,\cdot):\R\rightarrow [0,1]$ be a feature function that is a \CDF of a $\R$-valued random variable with range contained in some Borel set $S\subseteq \R$, and assume that $\phi_i$ is measurable. Then, we define the vector-valued function $\Phi: \X\times\R\rightarrow [0,1]^d$, $\Phi(x,t) = [\phi_1(x,t),\dots,\phi_d(x,t)]^\top$. 
We assume that there exists some \emph{unknown} $\theta_*\in\Delta^{d-1}$, where $\Delta^{d-1}:=\{(\theta_1,\dots,\theta_d)\in\R^d:\sum_{i=1}^d\theta_i=1,\theta_i\ge 0 \text{ for } 1\le i\le d\}$ denotes the probability simplex in $\R^d$, such that,
\begin{align} \label{eq:linear_model}
F(x,t)=\theta_*^\top\Phi(x,t),\quad \forall\ x\in\X,\ t\in\R.
\end{align}
Thus, we can view $\Phi$ as a  ``basis'' for contextual \CDF{} learning. 

We visualize the sample generation process in Figure \ref{fig:visualization} where the contextual \CDF{}s are shown in the left column and the one-sample empirical \CDF{}s ($\indi\{y\le \cdot\}$ for sample $y$) are shown in the right column.
It is worth mentioning the differences between our model and the mixture model with known basis distributions in the statistics literature. First, the basis distributions in our model depend on the context of the sample and are not fixed. Second, in mixture models, the samples are assumed to be independent while in our Scheme I, the samples can be dependent since $x^{(j)}$ is picked adversarially given knowledge of the previous $y^{(i)}$'s. Thus, the mixture model with known basis distributions only corresponds to the fixed design setting with the same context $x^{(j)}=x$ for all samples.

As explained in the sampling schemes above, given $x^{(j)}$ at the $j$th sample, the observation $y^{(j)}$ is generated according to the \CDF $F(x^{(j)},\cdot)=\theta_*^\top\Phi(x^{(j)},\cdot)$. For notational convenience, we will often refer to the vector-valued function $\Phi(x^{(j)},\cdot)$ as $\Phi_j(\cdot)$ for all $j \in [n]$, so that $F(x^{(j)},\cdot)=\theta_*^\top\Phi_j(\cdot)$. Under the linear model in \eqref{eq:linear_model}, our \emph{goal is to estimate the unknown parameter $\theta_*$} from the sample $\{(x^{(j)},y^{(j)})\}_{j\in[n]}$ in a (regularized) least-squares error sense. This in turn recovers the contextual \CDF function $F$.

\begin{figure}
\centering
\includegraphics[width=\linewidth]{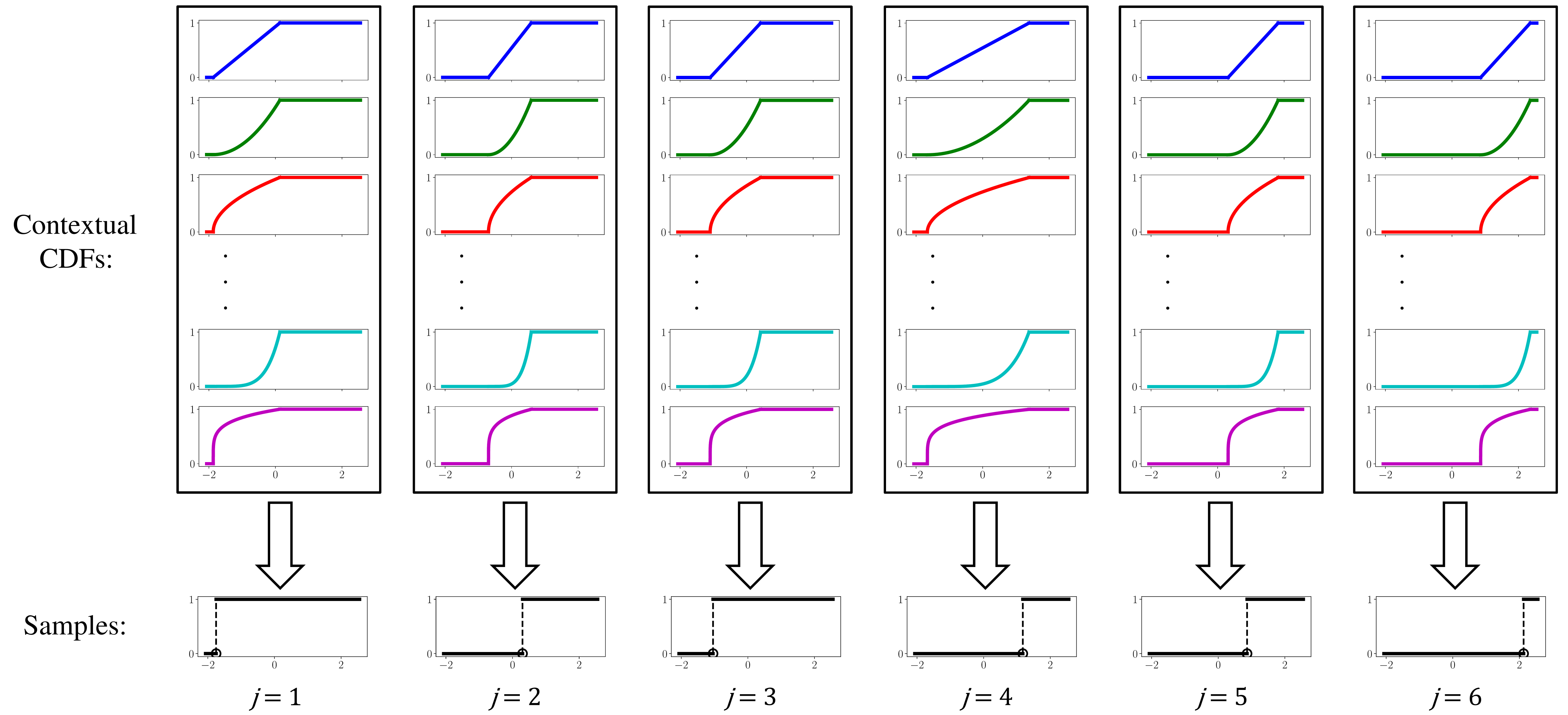}
\caption{{A visualization of the data generating process. For each $j\in[6]$ with context $x^{(j)}\in\X$, the upper row shows the $d$ contextual \CDF{}s ($\phi_i(x^{(j)},\cdot),\ i\in[d]$) under the context $x^{(j)}$. For $y^{(j)}$ drawn from the \CDF $F(x^{(j)},\cdot)=\theta_*^\top\Phi(x^{(j)},\cdot)$ where $\Phi(x^{(j)},\cdot):=[\phi_i(x^{(j)},\cdot),\dots,\phi_d(x^{(j)},\cdot)]^\top$, the bottom row shows the sample empirical \CDF $\tI_{y^{(j)}}(\cdot):=\indi\{y^{(j)}\le \cdot\}$.}}
\label{fig:visualization}
\end{figure}

\section{Upper bounds on estimation error} \label{sec:upper_results}
In this section, we propose an estimation paradigm for the unknown parameter $\theta_*$ in Section \ref{sec:estimator}, derive the upper bounds on the associated estimation error in Section \ref{sec:upper_bounds}, and propose a new penalized estimator that theoretically eliminates the burn-in time of the sample size in the random setting in Section \ref{sec:burn-in-free}.

\subsection{Ridge regression estimator} \label{sec:estimator}
We begin by formally stating our least-squares functional regression optimization problem to learn $\theta_*$.
Given a probability measure $\meas$ on $S$, the sample $\{(x^{(j)},y^{(j)})\}_{j\in[n]}$, and the set of basis functions $\{\Phi_j\}_{j\in[n]}$, we propose to estimate $\theta_*$ by minimizing the (ridge or) $\ell^2$-regularized squared $\L^2(S,\meas)$-distance between the estimated and empirical \CDF{}s:
\begin{equation} \label{eq:estimator_general}
\wh\theta_{\lambda}:=\argmin_{\theta\in
\R^{d}}\sum_{j=1}^n\|\textup{I}_{y^{(j)}}-\theta^\top\Phi_j\|_{\cL^2(S,\meas)}^2+\lambda\|\theta\|^2 ,
\end{equation}
where $\lambda \geq 0$ is the hyper-parameter that determines the level of regularization, and the function observation $\textup{I}_{y^{(j)}}(t):=\mathbbm{1}\{y^{(j)}\le t\}$ is an empirical \CDF of $y^{(j)}$ that forms an unbiased estimator for $F(x^{(j)},\cdot)$ conditioned on past contexts and observations. Hence, in Scheme I, we only require that $\textup{I}_{y^{(j)}}-\theta^\top\Phi_j$ is a zero-mean function given past contexts and observations, making our analysis suitable for online learning problems where the later contexts can depend on the past contexts and observations. 
{We remark that the adoption of $\mathcal{L}^2$-distance in \eqref{eq:estimator_general} is natural. Indeed, researchers have considered the $\mathcal{L}^2$-distance between a one-sample empirical \CDF and a \CDF estimate in the definition of Continuous Ranked Probability Score (CRPS) \citep{crps} to assess the performance of the CDF estimate in approximating data distributions. In fact, viewing the one-sample empirical \CDF{} as the response and the basis contextual CDFs as the feature in linear regression, it is natural to consider the least squares method, precisely corresponding to minimizing the $\mathcal{L}^2$-distance in our functional setting.}
Notice further that $\htheta_{\lambda}$ in \eqref{eq:estimator_general} is an \emph{improper estimator} since it may not lie in $\Delta^{d-1}$. However, since $\Delta^{d-1}$ is compact in $\R^d$, $\wt\theta_{\lambda}:=\argmin_{\vartheta\in\Delta^{d-1}}\|\vartheta-\wh\theta_{\lambda}\|_{A}$ exists for any positive definite $A\in\R^{d\times d}$. Moreover, since $\Delta^{d-1}$ is also convex, we have $\|\wt\theta_{\lambda}-\theta\|_A\le \|\wh\theta_{\lambda}-\theta\|_A$ \cite[Theorem 9.9]{beck2014introduction} for any $\theta\in\Delta^{d-1}$ including $\theta_*$. This means that an upper bound on $\|\wh\theta_{\lambda}-\theta_*\|_A$ is also an upper bound on $\|\wt\theta_{\lambda}-\theta_*\|_A$. 
Additionally, as we will see later, the $\wh\theta_{\lambda}$ has a closed-form analytic solution which benefits the analysis of the estimation error. 
Therefore, we focus our analysis on the improper estimator $\wh\theta_{\lambda}$, noting that its projection onto $\Delta^{d-1}$ yields an estimator $\wt\theta_{\lambda}$ for which the same upper bounds hold.

When $\lambda>0$, the objective function in \eqref{eq:estimator_general} is a $(2\lambda)$-strongly convex function of $\theta \in \R^d$ \citep[see, e.g.,][for the definition]{bertsekas2003convex}, and is uniquely minimized at 
\begin{equation} \label{eq:solution}
\wh\theta_{\lambda}=\left(\sum_{j=1}^n\int_{S}\Phi_j\Phi_j^\top d\meas + \lambda I_d\right)^{-1}\left(\sum_{j=1}^n\int_{S}\textup{I}_{y^{(j)}}\Phi_j d\meas\right).
\end{equation}
For the unregularized case where $\lambda=0$, we omit the subscript $\lambda$ and write $\wh\theta$ to denote a corresponding estimator in \eqref{eq:estimator_general}.
Note that when $\lambda=0$, if $\mu_{\min}(\sum_{j=1}^n\int_{S}\Phi_j\Phi_j^\top d\meas)>0$, 
the objective function in \eqref{eq:estimator_general} is still strongly convex, and is uniquely minimized at $\wh\theta$ given in \eqref{eq:solution} with $\lambda=0$. In practice, one can deploy standard numerical methods to compute the integral in \eqref{eq:solution}, and the computational complexity of the matrix inversion is cubic in the dimension $d$. However, iterative methods can be used to obtain better dimension dependence in the running time. 
As a remark, since the probability density functions (PDFs) of the basis distributions may not exist, the samples in Scheme I can be dependent, and the distributions of the contexts in Scheme II are unknown, the likelihood function of the samples generally does not exist in our problem setting, which rules out the usage of MLE. 
But our estimator \eqref{eq:estimator_general} always exists. Moreover, we focus on non-asymptotic analysis of our estimator and prove self-normalized upper bounds for the estimation error, which is rarely analyzed for MLEs.

Lastly, it is worth remarking upon the choice of measure $\meas$ used above.
In order for the estimator in \eqref{eq:estimator_general} to be well-defined, since $\tI_y(t), \theta^\top\Phi(x,t)\in[0,1]$ for any $t,y\in\R$ and $x\in\X$, it suffices to ensure that $\meas(S)<\infty$ (i.e., $\meas$ is a finite measure). 
This is the reason why we restrict $\meas$ to be a probability measure on $S$.
Furthermore, the probability measure $\meas$ can in general be chosen to adapt to specific problem settings. 
For example, the uniform measure $\unif$ on $S$ is often easy to compute for some choices of $S$. Specifically, if $0<\Leb(S)<\infty$, where $\Leb$ denotes the Lebesgue measure, $\unif$ is defined by $\frac{d\unif}{d \Leb}=\frac{1}{\Leb(S)}$, where $\frac{d\unif}{d \Leb}$ is the Radon-Nikodym derivative. If $S$ is a finite set with cardinality $\# S$, $\unif=\frac{1}{\# S}\sum_{s\in S}\delta_{s}$, where $\delta_s$ denotes the Dirac measure at $s$. On the other hand, when $S=\R$, $\meas$ can be set to the Gaussian measure $\gamma_{c,\sigma^2}$ defined by $\gamma_{c,\sigma^2}(dx)=\frac{1}{\sqrt{2\pi\sigma^2}}e^{-(x-c)^2/(2\sigma^2)}dx$ with $c \in \R$ and $\sigma^2 > 0$.

\subsection{Self-normalized bounds in various settings} \label{sec:upper_bounds}
For samples generated according to Scheme I, we prove self-normalized upper bounds on the error $\wh\theta_{\lambda}-\theta_*$. For any probability measure $\meas$ on $S$, define $U_n:=\sum_{j=1}^n \int_S\Phi_j\Phi_j^\top d\meas$ and $U_n(\lambda)=U_n+\lambda I_d$ for $n\in\N$ and $\lambda\ge 0$. For $n, d\in\N$, $\lambda,\tau\in(0,\infty)$, and $\delta\in(0,1)$,
define
\begin{align} \label{eq:UB_general}
\varepsilon_{\lambda}(n,d,\delta)&:=
\sqrt{d\log\left(1+n/\lambda\right)+2\log(1/\delta)}+\sqrt{\lambda}\|\theta_*\|
\ \ \textup{and}
\\ \label{eq:UB_general_unreg}
\varepsilon(n,d,\delta,\tau)&:=
\left(\sqrt{d}+\sqrt{8d\log(1/\delta)}+\frac{4}{3}\sqrt{d/n}\log(1/\delta)\right)/\sqrt{\tau}.
\end{align}
The next theorem states our self-normalized upper bound on the estimation error. 
\begin{theorem}[Self-normalized bound in adversarial setting] \label{thm:upper_fixed}
Assume $\meas$ is a probability measure on $S$ and $\{(x^{(j)},y^{(j)})\}_{j\in\N}$ is sampled according to Scheme I with $F$ defined in \eqref{eq:linear_model}. For any $\lambda>0$ and $\delta\in(0,1)$, with probability at least $1-\delta$, for all $n\in\N$, the estimator defined in \eqref{eq:estimator_general} satisfies 
\begin{align} \label{eq:upperbound}
\|\wh\theta_{\lambda}-\theta_*\|_{U_n(\lambda)}\le 
\varepsilon_{\lambda}(n,d,\delta).
\end{align}
\end{theorem}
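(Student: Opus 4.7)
The plan is to reduce to a vector-valued self-normalized martingale argument in the spirit of \citet{abbasi2011improved}. Let $\mathcal F_{j-1}$ denote the $\sigma$-field generated by $\{(x^{(i)},y^{(i)})\}_{i<j}$ together with $x^{(j)}$, so that $\Phi_j$ is $\mathcal F_{j-1}$-measurable under Scheme I. Define the residual $\eta_j:=\tI_{y^{(j)}}-\theta_*^\top\Phi_j$ (pointwise mean-zero given $\mathcal F_{j-1}$ by \eqref{eq:linear_model}) and the noise vector $S_n:=\sum_{j=1}^n\int_S\eta_j\,\Phi_j\,d\meas$. Substituting $\tI_{y^{(j)}}=\eta_j+\theta_*^\top\Phi_j$ into the closed form \eqref{eq:solution} and using $U_n=U_n(\lambda)-\lambda I_d$ rearranges to
\[ \wh\theta_\lambda-\theta_*=U_n(\lambda)^{-1}(S_n-\lambda\theta_*). \]
Taking the $U_n(\lambda)$-norm and applying the triangle inequality, together with $\lambda\|\theta_*\|_{U_n(\lambda)^{-1}}\le\sqrt{\lambda}\|\theta_*\|$ (which follows from $U_n(\lambda)\succeq\lambda I_d$), reduces the theorem to the uniform-in-$n$ bound $\|S_n\|_{U_n(\lambda)^{-1}}\le\sqrt{d\log(1+n/\lambda)+2\log(1/\delta)}$ holding with probability $1-\delta$.

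\textbf{Supermartingale and conditional MGF.} For each fixed $\xi\in\R^d$ I would show that $M_n(\xi):=\exp\!\bigl(\langle\xi,S_n\rangle-\tfrac12\xi^\top U_n\xi\bigr)$ is a nonnegative supermartingale with $M_0=1$. Because $\Phi_j$ is $\mathcal F_{j-1}$-measurable, this reduces to the key estimate
\[ \E\!\left[\exp\!\left(\int_S\eta_j\,g\,d\meas\right)\Bigm|\mathcal F_{j-1}\right]\le\exp\!\left(\tfrac12\|g\|_{\cL^2(S,\meas)}^2\right),\qquad g:=\xi^\top\Phi_j. \]
To prove this I introduce the scalar function $h(y):=\int_S\tI_y(t)g(t)\,d\meas(t)$; since $h(y_1)-h(y_2)=\int_{[y_1,y_2)}g\,d\meas$ for $y_1<y_2$, the range of $h$ is at most $\|g\|_{\cL^1(S,\meas)}$. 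Thus $\int\eta_j g\,d\meas=h(y^{(j)})-\E[h(y^{(j)})\mid\mathcal F_{j-1}]$ is mean-zero and supported in an interval of length $\le\|g\|_{\cL^1(S,\meas)}$, so Hoeffding's lemma yields an MGF bound of $\exp(\|g\|_{\cL^1}^2/8)$. Because $\meas$ is a probability measure, Cauchy–Schwarz gives $\|g\|_{\cL^1}\le\|g\|_{\cL^2}$, and the required estimate follows from $\|g\|_{\cL^1}^2/8\le\tfrac12\|g\|_{\cL^2}^2$.

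\textbf{Mixture, Ville, and determinant bound.} Integrating $M_n(\xi)$ against the Gaussian prior $\pi=\mathcal N(0,\lambda^{-1}I_d)$ and completing the square in $\xi$ produces
\[ \bar M_n:=\int_{\R^d}M_n(\xi)\,\pi(d\xi)=\left(\frac{\lambda^d}{\det U_n(\lambda)}\right)^{1/2}\exp\!\left(\tfrac12\|S_n\|_{U_n(\lambda)^{-1}}^2\right), \]
which by Fubini remains a nonnegative supermartingale with $\E\bar M_0=1$. Ville's maximal inequality gives $\Prob(\sup_n\bar M_n\ge 1/\delta)\le\delta$, so on the complement event $\|S_n\|_{U_n(\lambda)^{-1}}^2\le\log(\det U_n(\lambda)/\lambda^d)+2\log(1/\delta)$ holds uniformly in $n$. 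Since each coordinate $\phi_i(x,t)\in[0,1]$ implies $\|\Phi_j(t)\|^2\le d$, one has $\trace U_n\le nd$; AM–GM then yields $\det U_n(\lambda)\le(\lambda+n)^d$, so $\det U_n(\lambda)/\lambda^d\le(1+n/\lambda)^d$, and combining with the bias term from the first step recovers $\varepsilon_\lambda(n,d,\delta)$.

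\textbf{Main obstacle.} The delicate point is the conditional MGF estimate: unlike the scalar Abbasi–Yadkori setting where subgaussian noise is an axiom, here the summand $\int\eta_j\Phi_j\,d\meas$ is a vector assembled from an infinite family of correlated $\{0,1\}$-valued increments $\tI_{y^{(j)}}(t)$, so no scalar subgaussian parameter is given. The saving observation is that linearity in $\xi$ collapses the vector noise to the single bounded scalar $h(y^{(j)})$, and that $\meas$ being a probability measure lets the $\cL^1\!\le\!\cL^2$ inequality sharpen the crude Hoeffding bound into exactly the $\cL^2(S,\meas)$ variance proxy that matches the Gram integral $\int\Phi_j\Phi_j^\top d\meas$; this is also what removes any dependence on the response magnitude that would ordinarily appear through the subgaussian parameter.
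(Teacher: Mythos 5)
Your proposal is correct and follows essentially the same route as the paper: the same error decomposition $\wh\theta_\lambda-\theta_*=U_n(\lambda)^{-1}(S_n-\lambda\theta_*)$, the same supermartingale $M_n(\xi)$ with a Hoeffding-plus-Cauchy–Schwarz argument for the conditional MGF, the same Gaussian mixture and Doob/Ville maximal inequality, and the same AM–GM determinant bound. Your refinement in the MGF step — bounding the oscillation of $h(y)=\int_S \tI_y\, g\, d\meas$ by $\|g\|_{\cL^1}$ rather than bounding $|\alpha^\top V_j|$ via the pointwise bound $|\tI_{y}-\theta_*^\top\Phi_j|\le 1$ — is a factor-four tighter in the exponent but lands on the same $\exp(\tfrac12\|g\|_{\cL^2}^2)$ after Cauchy–Schwarz, so it is a cosmetic variant rather than a different argument.
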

Moreover, for the unregularized case, we have the following result.
\begin{proposition}[Self-normalized bound in adversarial setting for unregularized estimator] 
\label{prop:upper_fixed}
Under the same assumptions as Theorem \ref{thm:upper_fixed}, if $U_N$ is positive definite for a fixed $N\in\N$, then for any $\delta\in(0,1)$ and $n\ge N$, with probability at least $1-\delta$, the estimator defined in \eqref{eq:estimator_general} with $\lambda=0$ satisfies
\begin{align} \label{eq:upperbound_unreg}
\|\wh\theta-\theta_*\|_{U_n}\le
\varepsilon\left(n,d,\delta,\mu_{\min}(U_n)/n\right).
\end{align}
\end{proposition}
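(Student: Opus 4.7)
The plan is to reduce $\|\wh\theta-\theta_*\|_{U_n}$ with $\lambda=0$ to a dimension-free Bernstein-type concentration bound on the norm of a $\mathbb{R}^d$-valued martingale. Starting from the closed form in \eqref{eq:solution} with $\lambda=0$ (which is well defined because $U_n = U_N + \sum_{j=N+1}^{n}\int_S \Phi_j\Phi_j^\top d\meas$ inherits positive definiteness from $U_N$ for every $n\ge N$), I would write
\[
\wh\theta-\theta_* \;=\; U_n^{-1} M_n, \qquad M_n := \sum_{j=1}^n \xi_j, \qquad \xi_j := \int_S \bigl(\tI_{y^{(j)}}-\theta_*^\top\Phi_j\bigr)\,\Phi_j\, d\meas \in \mathbb{R}^d.
\]
A direct calculation then gives $\|\wh\theta-\theta_*\|_{U_n}^2 = M_n^\top U_n^{-1}M_n \le \|M_n\|^2/\mu_{\min}(U_n)$. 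Consequently, it is enough to prove
\[
\|M_n\| \;\le\; \sqrt{nd} \,+\, \sqrt{8nd\log(1/\delta)} \,+\, \tfrac{4}{3}\sqrt{d}\log(1/\delta)
\]
with probability at least $1-\delta$; dividing both sides by $\sqrt{\mu_{\min}(U_n)}=\sqrt{n\tau}$ recovers $\varepsilon(n,d,\delta,\tau)$ exactly.

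To set up the martingale, let $\F_j := \sigma(x^{(1)},y^{(1)},\ldots,x^{(j)},y^{(j)},x^{(j+1)})$ so that $\Phi_j$ is $\F_{j-1}$-measurable under Scheme~I. Since $\E[\tI_{y^{(j)}}(t)\mid \F_{j-1}] = F_j(t) = \theta_*^\top\Phi_j(t)$ pointwise in $t$, Fubini's theorem yields $\E[\xi_j\mid\F_{j-1}]=0$, so $\{\xi_j\}$ is an $\mathbb{R}^d$-valued martingale difference sequence. Using $|\tI_{y^{(j)}}-F_j|\le 1$, $\phi_{i,j}\in[0,1]$, and the fact that $\meas$ is a probability measure, each coordinate satisfies $|\xi_{j,i}|\le 1$, hence $B:=\sqrt{d}$ bounds $\|\xi_j\|$ a.s.\ Jensen's inequality applied coordinatewise with respect to the probability measure $\meas$ gives $\E[\xi_{j,i}^2\mid\F_{j-1}]\le \int_S \phi_{i,j}^2\,d\meas$, and summing over $i$ and $j$,
\[
V_n \;:=\; \sum_{j=1}^n \E[\|\xi_j\|^2 \mid \F_{j-1}] \;\le\; \trace(U_n) \;\le\; nd.
\]

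The main obstacle is then a sharp tail bound for $\|M_n\|$ that avoids a $\sqrt{\log d}$ penalty: applying a scalar Bernstein inequality coordinatewise followed by a union bound over the $d$ components would introduce exactly such a factor, which is absent from the stated $\varepsilon$. To avoid it, I would invoke a Hilbert-space Bernstein inequality for martingales (of Pinelis type, exploiting that $\mathbb{R}^d$ with the Euclidean norm is a $(2,1)$-smooth Hilbert space) of the form
\[
\|M_n\| \;\le\; \sqrt{V_n} \,+\, \sqrt{8 V_n \log(1/\delta)} \,+\, \tfrac{4B}{3}\log(1/\delta)
\]
with probability at least $1-\delta$. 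Here the leading $\sqrt{V_n}$ absorbs the mean via Jensen, $\E[\|M_n\|]\le \sqrt{\E[\|M_n\|^2]}\le\sqrt{V_n}$, while the other two terms are the standard sub-Gaussian and sub-exponential tails of Bernstein-type deviation. Substituting the almost-sure bounds $V_n\le nd$ and $B=\sqrt{d}$, combining with the first-paragraph reduction, and recalling $\tau=\mu_{\min}(U_n)/n$ yields $\|\wh\theta-\theta_*\|_{U_n}\le \varepsilon(n,d,\delta,\mu_{\min}(U_n)/n)$, completing the proof.
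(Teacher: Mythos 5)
Your proposal takes essentially the same route as the paper: reduce to $\|\wh\theta-\theta_*\|_{U_n} = \|W_n\|_{U_n^{-1}} \le \|W_n\|/\sqrt{\mu_{\min}(U_n)}$ with $W_n=\sum_j V_j$ a bounded $\R^d$-valued martingale, bound $\|V_j\|\le\sqrt d$ a.s. and the quadratic variation by $nd$, and apply a dimension-free Hilbert-space Bernstein inequality for martingale difference sequences (the paper's invocation of \cite[Proposition 1.2]{10.1214/ECP.v17-2079} is exactly the Pinelis-type result you describe). Your only deviation is bounding $\sum_j\E[\|V_j\|^2\mid\F_{j-1}]$ via Jensen and $\trace(U_n)\le nd$ rather than directly from $\|V_j\|\le\sqrt d$; both yield the same $nd$, so the argument and the resulting constant are identical.
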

The proofs of Theorem \ref{thm:upper_fixed} and Proposition \ref{prop:upper_fixed} are provided in Appendix \ref{sec:RUB_fixed}. 
Informally, Theorem \ref{thm:upper_fixed} and  Proposition \ref{prop:upper_fixed} convey that with high probability, the self-normalized errors $\|\wh\theta_{\lambda}-\theta_*\|_{U_n(\lambda)}$ and $\|\wh\theta-\theta_*\|_{U_n}$ scale as $\wt O(\sqrt{d})$
in the $\ell^2$-regularized and unregularized cases, where $\wt O(\cdot)$ ignores logarithmic and other sub-dominant factors. We note that Theorem \ref{thm:upper_fixed} and Proposition \ref{prop:upper_fixed} also imply upper bounds on the (un-normalized) error $\|\wh\theta_{\lambda}-\theta_*\|$. Indeed, for any positive definite matrix $A\in\R^{d\times d}$ and vector $a\in\R^d$, we have $\|a\|\le \mu_{\min}(A)^{-1/2} \|a\|_{A}$. Thus, for example, \eqref{eq:upperbound} in Theorem \ref{thm:upper_fixed} implies that 
$\|\wh\theta_{\lambda}-\theta_*\|\le \mu_{\min}(U_n(\lambda))^{-1/2}\varepsilon_{\lambda}(n,d,\delta)=\wt O\big(\sqrt{d/(1+\mu_{\min}(U_n))}\big)$
with high probability. Then, for the projected estimator $\wt \theta_{\lambda}\in\Delta^{d-1}$, we have $\|\wt\theta_{\lambda}-\theta_*\|\le \wt O\big(\min\{1,\sqrt{d/(1+\mu_{\min}(U_n))}\}\big)$ by the property of $\Delta^{d-1}$.
When $\mu_{\min}(U_n)=\Theta(n)$, we have $\|\wh\theta_{\lambda}-\theta_*\|=\wt O\big(\sqrt{d/n}\big)$.

The key idea in the proof of Theorem \ref{thm:upper_fixed} is to first notice that $\wh\theta_{\lambda}-\theta_*=U_n(\lambda)^{-1}W_n-U_n(\lambda)^{-1}(\lambda\theta_*)$, where $W_n:=\sum_{j=1}^n \int_S (\tI_{y^{(j)}}\Phi_j-\theta_*^{\top}\Phi_j\Phi_j)d\meas$. We next show that $\{\wb M_n\}_{n\ge 0}$ where
$\wb M_n:=\frac{\lambda^{d/2}}{\det(U_n(\lambda))^{1/2}}\exp\left(\frac{1}{2}\|W_n\|_{ U_n(\lambda)^{-1}}^2\right)$
is a super-martingale. \emph{Doob's maximal inequality} for super-martingales is then used in conjunction with some careful algebra to establish \eqref{eq:upperbound}. 
To prove Proposition \ref{prop:upper_fixed}, we use a vector \emph{Bernstein inequality} for bounded martingale difference sequences~\citep[Proposition 1.2]{10.1214/ECP.v17-2079} to show a high probability upper bound for $\|W_n\|$. Note that $U_N$ being positive definite implies that $U_n$ is positive definite for $n\ge N$. Since $\|\wh\theta-\theta_*\|_{U_n}=\|W_n\|_{U_n^{-1}}\le \|W_n\|/\sqrt{\mu_{\min}(U_n)}$, we establish \eqref{eq:upperbound_unreg}.

Since the fixed design is a special case of the adversarial setting, Theorem \ref{thm:upper_fixed} and Proposition \ref{prop:upper_fixed} imply the same $\wt O\big(\sqrt{d}\big)$-style upper bounds as a corollary in the fixed design setting.

\begin{corollary}[Self-normalized bound in fixed design setting] \label{coro:fixed_upper}
For an arbitrary probability measure $\meas$ on $S$ and an arbitrary sequence $\{x^{(j)}\}_{j\in\N}\in\X^\N$, assume that $y^{(j)}$ is sampled from $F(x^{(j)},\cdot)$ independently for each $j\in\N$ with $F$ defined in \eqref{eq:linear_model}. 
For any $\lambda>0$ and $\delta\in(0,1)$, with probability at least $1-\delta$, the estimator defined in \eqref{eq:estimator_general} satisfies \eqref{eq:upperbound} for all $n\in\N$.

If $U_N$ is positive definite for some fixed $N\in\N$, then for any $\delta\in(0,1)$ and $n\ge N$, with probability at least $1-\delta$, the estimator defined in \eqref{eq:estimator_general} with $\lambda=0$ satisfies \eqref{eq:upperbound_unreg}.
\end{corollary}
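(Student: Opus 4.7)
The plan is to show that the fixed design setting is a special case of Scheme I, so both parts of the corollary follow as immediate instantiations of Theorem \ref{thm:upper_fixed} and Proposition \ref{prop:upper_fixed}. Concretely, the fixed design hypothesis fixes the entire sequence $\{x^{(j)}\}_{j\in\N}$ a priori and samples each $y^{(j)}\sim F(x^{(j)},\cdot)$ independently. To embed this into Scheme I, I would view the fixed sequence as coming from an "adversary" whose choice of $x^{(j)}$ uses no information from past observations at all, i.e., each $x^{(j)}$ is a (deterministic) function of the empty history. This is consistent with the admissibility condition in Scheme I, where the adversary's pick of $x^{(j)}$ is allowed (but not required) to depend on $\{y^{(i)}\}_{i<j}$.

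Next I would check the probabilistic content: under fixed design, the joint law of $\{(x^{(j)},y^{(j)})\}_{j\in[n]}$ agrees with the law induced by Scheme I when the adversary uses a constant (measurable) policy equal to the predetermined sequence. In particular, the crucial conditional property used in the proof of Theorem \ref{thm:upper_fixed} — namely, that $\tI_{y^{(j)}}-\theta_*^\top\Phi_j$ has zero mean conditioned on the natural filtration generated by $(x^{(1)},y^{(1)}),\dots,(x^{(j-1)},y^{(j-1)}),x^{(j)}$ — holds trivially here because $y^{(j)}$ is drawn independently from $F(x^{(j)},\cdot)=\theta_*^\top\Phi_j(\cdot)$. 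So the martingale structure driving the super-martingale $\{\wb M_n\}$ and the vector Bernstein bound used in the proofs of Theorem \ref{thm:upper_fixed} and Proposition \ref{prop:upper_fixed} is preserved.

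With this embedding in hand, the first statement of the corollary follows directly by applying Theorem \ref{thm:upper_fixed} to obtain \eqref{eq:upperbound}, and the second statement follows by applying Proposition \ref{prop:upper_fixed} once one notes that $U_N$ being positive definite implies $U_n \succeq U_N$ (hence positive definite) for all $n\ge N$, since $U_n - U_N = \sum_{j=N+1}^n \int_S \Phi_j\Phi_j^\top\,d\meas \succeq 0$.

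The only real subtlety, and the step I would be most careful about, is verifying that the measurability conventions used implicitly in the statement of Scheme I (and in the appendix's technical preliminaries for Theorem \ref{thm:upper_fixed}) accommodate a deterministic choice of $x^{(j)}$. Since constant functions are trivially measurable and a deterministic sequence can be viewed as a random variable on a singleton probability space, this presents no genuine obstacle, but it should be stated explicitly so that no filtration- or $\sigma$-algebra-related hypothesis is silently violated. Apart from this bookkeeping, no new calculation is required: both bounds \eqref{eq:upperbound} and \eqref{eq:upperbound_unreg} transfer verbatim.
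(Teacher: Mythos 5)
Your proposal is correct and takes essentially the same route as the paper: the paper also observes that fixed design is a special case of the adversarial Scheme I, so that both bounds follow immediately from Theorem \ref{thm:upper_fixed} and Proposition \ref{prop:upper_fixed}.
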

The proof of Corollary \ref{coro:fixed_upper} is inline with those of Theorem \ref{thm:upper_fixed} and Proposition \ref{prop:upper_fixed}. 

Furthermore, based on Theorem \ref{thm:upper_fixed} and Proposition \ref{prop:upper_fixed}, we prove self-normalized upper bounds on the estimation error under Scheme II, which corresponds to the random design setting in linear regression. 
For any probability measure $\meas$ on $S\subseteq \R$, define $\Sigma^{(j)}:=\E_{x^{(j)}\sim P_{X}^{(j)}}\left[\int_{S}\Phi_j\Phi_j^\top d\meas\right]$ and $\Sigma_n:=\sum_{j=1}^n\Sigma^{(j)}$ for $j,n\in\N$. 

\begin{theorem}[Self-normalized bound in random design setting] \label{thm:upper_random}
Assume $\meas$ is a probability measure on $S$, $\{(x^{(j)},y^{(j)})\}_{j\in\N}$ is sampled according to Scheme II with $F$ defined in \eqref{eq:linear_model}, and $\mu_{\min}\left(\Sigma^{(j)}\right)\ge \mineig$ for some {constant} $\mineig>0$ and all $j\in \N$. 
For any $\delta\in(0,1/2)$ and $n\ge \frac{32d^2}{\mineig^2}\log(\frac{d}{\delta})$, with probability at least $1-2\delta$, the estimator in \eqref{eq:estimator_general} with $\lambda=0$ satisfies
\begin{align} \label{eq:upperbound_unreg_random}
\|\wh\theta-\theta_*\|_{\Sigma_n}\le 
2\varepsilon\left(n,d,\delta,\mineig\right).
\end{align}
\end{theorem}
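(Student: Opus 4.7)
} The strategy is to reduce the random design case to the adversarial unregularized bound of Proposition~\ref{prop:upper_fixed} by showing that, for $n$ large enough, the empirical Gram operator $U_n$ concentrates around its expectation $\Sigma_n$ in the positive-semidefinite order. Since Scheme II is a special case of Scheme I, Proposition~\ref{prop:upper_fixed} already gives, on an event $\mathcal{E}_1$ of probability at least $1-\delta$, that
\begin{equation*}
\|\wh\theta - \theta_*\|_{U_n} \le \varepsilon\!\left(n,d,\delta,\mu_{\min}(U_n)/n\right),
\end{equation*}
provided $U_n$ is positive definite. So everything reduces to lower-bounding $\mu_{\min}(U_n)$ in terms of $n\mineig$ and converting $\|\cdot\|_{U_n}$ into $\|\cdot\|_{\Sigma_n}$.

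The key lemma I would establish is an event $\mathcal{E}_2$ of probability at least $1-\delta$ on which $U_n \succeq \tfrac{1}{2}\Sigma_n$. Write $U_n - \Sigma_n = \sum_{j=1}^n Z_j$ where $Z_j := \int_S \Phi_j\Phi_j^\top d\meas - \Sigma^{(j)}$ is a zero-mean random matrix (under Scheme II the $Z_j$'s are independent). Each entry of $\int_S \Phi_j\Phi_j^\top d\meas$ lies in $[0,1]$ because $\phi_i(x,\cdot) \in [0,1]$ and $\meas$ is a probability measure, so every entry of $Z_j$ is a bounded centered random variable. I would apply Hoeffding's inequality entrywise to $(U_n-\Sigma_n)_{ik}$ and take a union bound over the $d^2$ entries, obtaining $\|U_n - \Sigma_n\|_{\max} \le \sqrt{n \log(2d^2/\delta)/2}$ with probability at least $1-\delta$. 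Combined with the elementary bound $\|A\|_2 \le d \|A\|_{\max}$ for $d \times d$ matrices, this yields $\|U_n - \Sigma_n\|_2 \le d\sqrt{n\log(2d^2/\delta)/2}$. The burn-in hypothesis $n \ge 32 d^2 \mineig^{-2} \log(d/\delta)$ is precisely what is needed so that this spectral deviation is at most $n\mineig/2 \le \mu_{\min}(\Sigma_n)/2$; I would then invoke Weyl's inequality to conclude $\mu_{\min}(U_n) \ge \mu_{\min}(\Sigma_n) - \|U_n-\Sigma_n\|_2 \ge n\mineig/2$, and more generally $\Sigma_n \preceq 2 U_n$ (equivalently $U_n \succeq \Sigma_n/2$).

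On $\mathcal{E}_1 \cap \mathcal{E}_2$ everything assembles cleanly. From $\Sigma_n \preceq 2U_n$ I get $\|\wh\theta - \theta_*\|_{\Sigma_n} \le \sqrt{2}\,\|\wh\theta - \theta_*\|_{U_n}$. From $\mu_{\min}(U_n)/n \ge \mineig/2$ and the fact that $\varepsilon(n,d,\delta,\tau)$ is monotonically decreasing in $\tau$ and scales as $1/\sqrt{\tau}$, I get $\varepsilon(n,d,\delta,\mu_{\min}(U_n)/n) \le \varepsilon(n,d,\delta,\mineig/2) = \sqrt{2}\,\varepsilon(n,d,\delta,\mineig)$. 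Chaining these two factors of $\sqrt{2}$ gives the target bound $\|\wh\theta - \theta_*\|_{\Sigma_n} \le 2\varepsilon(n,d,\delta,\mineig)$, and a union bound over $\mathcal{E}_1,\mathcal{E}_2$ produces the total failure probability $2\delta$ stated in the theorem.

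The main obstacle is the matrix concentration step producing $\mathcal{E}_2$ with the precise burn-in constant stated. An alternative route via matrix Chernoff would give a burn-in linear in $d$ rather than quadratic; matching the stated $32 d^2/\mineig^2$ essentially forces one to go through the entrywise Hoeffding plus $\|\cdot\|_2 \le d\|\cdot\|_{\max}$ route, which is slightly lossy in $d$ but yields the cleanest constant. A minor but important subtlety is verifying that applying Proposition~\ref{prop:upper_fixed} is valid in the random design setting: the proposition's hypothesis that $U_N$ be positive definite for some $N$ now holds almost surely on $\mathcal{E}_2$ (with $N$ any integer above the burn-in), so the conditioning argument goes through, and the union bound combining the two high-probability events is straightforward.
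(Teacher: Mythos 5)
Your proof is correct and follows the same overall reduction to Proposition~\ref{prop:upper_fixed}, but the key concentration step is handled by a genuinely different technique than the paper's. The paper first normalizes and works with $\Delta_n := \Sigma_n^{-1/2}(U_n - \Sigma_n)\Sigma_n^{-1/2}$, then applies Tropp's \emph{matrix} Hoeffding inequality \citep[Theorem 1.3]{tropp2012user} to the self-adjoint summands $Z_j = \Sigma_n^{-1/2}(\int_S\Phi_j\Phi_j^\top d\meas - \Sigma^{(j)})\Sigma_n^{-1/2}$, each of which has spectral norm at most $d/(n\mineig)$; this gives $\prob[\mu_{\min}(\Delta_n) \le -a] \le d\exp(-n\mineig^2 a^2/(8d^2))$, and the stated burn-in is precisely what makes $a=1/2$ hold with probability $1-\delta$. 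From $\mu_{\min}(\Delta_n)\ge -1/2$ the paper reads off both $\mu_{\min}(U_n)\ge \tfrac12\mu_{\min}(\Sigma_n)$ and $\|U_n^{-1/2}\Sigma_n U_n^{-1/2}\|_2 \le 2$ in one stroke, which is the advantage of working with the normalized object. Your route instead bounds the unnormalized $\|U_n-\Sigma_n\|_2$ via entrywise scalar Hoeffding, a union bound over $d^2$ entries, and the crude inequality $\|A\|_2 \le d\|A\|_{\max}$, then uses Weyl to deduce $\mu_{\min}(U_n)\ge n\mineig/2$ and $\Sigma_n \preceq 2U_n$. This is more elementary (no matrix concentration inequality needed), and in fact your burn-in requirement $n \gtrsim 2d^2\log(2d^2/\delta)/\mineig^2$ is \emph{smaller} than the stated $32d^2\log(d/\delta)/\mineig^2$, so the hypothesis is comfortably sufficient. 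Your stated rationale is slightly off, though: the $32d^2/\mineig^2$ constant does not ``force'' the entrywise route --- it falls directly out of the matrix Hoeffding computation the paper actually performs, and the paper does not consider matrix Chernoff. Also, when you invoke Proposition~\ref{prop:upper_fixed}, be a bit more careful than ``$U_N$ positive definite holds almost surely on $\mathcal{E}_2$'': the cleanest way, which is what the paper does implicitly, is to let $\mathcal{E}_1$ be the event that the vector Bernstein bound on $\|W_n\|$ holds (this event does not presuppose $U_n$ positive definite) and let $\mathcal{E}_2$ be the concentration event on which $U_n \succeq \tfrac12\Sigma_n \succ 0$; on $\mathcal{E}_1\cap\mathcal{E}_2$ the inequality $\|\wh\theta-\theta_*\|_{U_n}\le \|W_n\|/\sqrt{\mu_{\min}(U_n)}$ is well-posed and the chain of inequalities goes through.
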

Moreover, for regularized estimators, we have the following result.
\begin{proposition}[Self-normalized bound in random design setting for regularized estimator] \label{prop:upper_random}
Under the same assumptions as Theorem \ref{thm:upper_random}, for any $\lambda>0$, $\delta\in(0,1/2)$, and $n\ge \frac{32d^2}{\mineig^2}\log\left(\frac{d}{\delta}\right)$, with probability at least $1-2\delta$, the estimator defined in \eqref{eq:estimator_general} satisfies 
\begin{align} \label{eq:upperbound_random}
\|\wh\theta_{\lambda}-\theta_*\|_{\Sigma_n} \le
\sqrt{2}\varepsilon_{\lambda}(n,d,\delta).
\end{align}
\end{proposition}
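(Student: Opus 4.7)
The plan is to reduce Proposition \ref{prop:upper_random} to Theorem \ref{thm:upper_fixed} by comparing the random quadratic form $U_n(\lambda)$ to its deterministic counterpart $\Sigma_n$. Since Scheme II is a special case of Scheme I (where the $x^{(j)}$ are drawn independently from $P_X^{(j)}$), Theorem \ref{thm:upper_fixed} immediately yields that with probability at least $1-\delta$,
\begin{equation*}
\|\wh\theta_{\lambda}-\theta_*\|_{U_n(\lambda)}\le \varepsilon_{\lambda}(n,d,\delta).
\end{equation*}
What remains is to convert this $U_n(\lambda)$-norm bound into a $\Sigma_n$-norm bound, and for that I need to show that $U_n(\lambda)\succeq \tfrac12 \Sigma_n$ on a high-probability event.

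First, I would note that $U_n=\sum_{j=1}^n \int_S \Phi_j\Phi_j^\top\,d\meas$ is a sum of independent random PSD matrices $Z_j:=\int_S \Phi_j\Phi_j^\top\,d\meas$ with $\E[Z_j]=\Sigma^{(j)}$, since the $x^{(j)}$ in Scheme II are independent. Because each component of $\Phi_j$ takes values in $[0,1]$ and $\meas$ is a probability measure, the trace of $Z_j$ is at most $d$, so $\|Z_j\|_2\le d$ almost surely. The hypothesis $\mu_{\min}(\Sigma^{(j)})\ge \mineig$ gives $\mu_{\min}(\Sigma_n)\ge n\mineig$, and applying a matrix Chernoff (or matrix Bernstein) bound to $\Sigma_n^{-1/2}(U_n-\Sigma_n)\Sigma_n^{-1/2}$ — exactly in the way it must already appear in the proof of Theorem \ref{thm:upper_random} — yields
\begin{equation*}
\Prob\!\left[U_n\not\succeq \tfrac12\Sigma_n\right]\le d\exp\!\left(-c\,\frac{\mu_{\min}(\Sigma_n)^2}{d\,n}\right)\le d\exp\!\left(-\tilde c\,\frac{n\,\mineig^2}{d}\right)
\end{equation*}
for explicit constants, and the burn-in condition $n\ge \tfrac{32 d^2}{\mineig^2}\log(d/\delta)$ is tuned precisely so that this probability is at most $\delta$.

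On the intersection of the two high-probability events, using $U_n(\lambda)=U_n+\lambda I_d\succeq U_n\succeq \tfrac12\Sigma_n$, I get
\begin{equation*}
\|\wh\theta_{\lambda}-\theta_*\|_{\Sigma_n}^2\le 2\,\|\wh\theta_{\lambda}-\theta_*\|_{U_n(\lambda)}^2\le 2\,\varepsilon_{\lambda}(n,d,\delta)^2,
\end{equation*}
and a union bound delivers the claim with probability at least $1-2\delta$.

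The main obstacle is the matrix concentration step: tracking the correct constants so that the prescribed burn-in $n\ge \tfrac{32d^2}{\mineig^2}\log(d/\delta)$ is indeed sufficient to guarantee $U_n\succeq\tfrac12\Sigma_n$ with probability at least $1-\delta$. Everything else is a short combination of Theorem \ref{thm:upper_fixed}, the PSD inequality $U_n(\lambda)\succeq U_n$, and a union bound, so this proof slots in as a near-corollary provided the matrix concentration lemma (already needed for Theorem \ref{thm:upper_random}) is stated at the right level of generality and reused verbatim.
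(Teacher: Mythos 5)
Your proof is correct and follows essentially the same route as the paper. The paper packages the step $U_n(\lambda)\succeq\tfrac12\Sigma_n$ as the operator-norm bound $\|U_n(\lambda)^{-1/2}\Sigma_n U_n(\lambda)^{-1/2}\|_2\le (1+\mu_{\min}(\Delta_n))^{-1}\le 2$, where $\Delta_n=\Sigma_n^{-1/2}(U_n-\Sigma_n)\Sigma_n^{-1/2}$, established via the same matrix Hoeffding argument you invoke, and combines it with Theorem \ref{thm:upper_fixed} by a union bound just as you do.
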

The proofs of Theorem \ref{thm:upper_random} and Proposition \ref{prop:upper_random} are given in Appendix \ref{sec:RUB_random}. 
As before, they convey that in the random design setting, the self-normalized errors $\|\wh\theta_{\lambda}-\theta_*\|_{\Sigma_n}$ and $\|\wh\theta-\theta_*\|_{\Sigma_n}$ scale as $\wt O\big(\sqrt{d}\big)$ with high probability in the $\ell^2$-regularized and unregularized cases. Moreover, we once again note that Theorem \ref{thm:upper_random} and Proposition \ref{prop:upper_random} imply upper bounds on the (un-normalized) error $\|\wh\theta_{\lambda}-\theta_*\|$. For example, 
{since $\mineig$ is a positive constant},
\eqref{eq:upperbound_unreg_random} implies that $\|\wh\theta-\theta_*\|\le 2\mu_{\min}(\Sigma_n)^{-1/2}\varepsilon\left(n,d,\delta,\mineig\right)= \wt O\big(\sqrt{d/n}\big)$ with high probability since $\mu_{\min}(\Sigma_n)\ge n\mineig$ by Weyl's inequality~\citep{weyl1912asymptotische}. Moreover, it is not hard to show that for general $\Sigma_n$ and $\lambda>0$, \eqref{eq:upperbound_random} can be generalized to $\|\wh\theta_{\lambda}-\theta_*\|_{\Sigma_n(\lambda)} =\wt O(\sqrt{d})$ which again implies that $\|\wh\theta_{\lambda}-\theta_*\|=\wt O(\min\{1,\sqrt{d/(\mu_{\min}(\Sigma_n)+1)}\})$.

The main idea in the proofs of Theorem \ref{thm:upper_random} and Proposition \ref{prop:upper_random} is to establish a high probability lower bound on $\mu_{\min}(\Delta_n)$, where $\Delta_n:=\Sigma_n^{-\frac{1}{2}}\left(U_n-\Sigma_n\right)\Sigma_n^{-\frac{1}{2}}$. This can be achieved using the \emph{matrix Hoeffding's inequality} \cite[Theorem 1.3]{tropp2012user}. 
Then, we show that for any $\lambda\ge 0$, $\|\wh\theta_{\lambda}-\theta_*\|_{\Sigma_n}\le (1+\mu_{\min}\left(\Delta_n\right))^{-1/2}\|\wh\theta_{\lambda}-\theta_*\|_{U_n(\lambda)}$. 
For Theorem \ref{thm:upper_random}, we prove that $\mu_{\min}(U_n)\ge \mu_{\min}(\Sigma_n)(\mu_{\min}(\Delta_n)+1)$. Then, we can lower bound $\mu_{\min}(U_n)$ in \eqref{eq:upperbound_unreg} by a multiple of $\mu_{\min}(\Sigma_n)$ with high probability. Thus, \eqref{eq:upperbound_unreg_random} follows from \eqref{eq:upperbound_unreg} and the high probability lower bound on $\mu_{\min}(\Delta_n)$.
For Proposition \ref{prop:upper_random}, \eqref{eq:upperbound_random} follows from \eqref{eq:upperbound} and the high probability lower bound on $\mu_{\min}(\Delta_n)$.

We briefly compare our results in this section with related results in the literature. 
In the (canonical, finite dimensional) adversarial linear regression setting, \citet{abbasi2011improved} {and \citet{zhou2021nearly}} show an $\wt O\big(\sqrt{d}\big)$ upper bound for the self-normalized error of the ridge least-squares estimator. 
Specifically, the upper bound in \citet{hsu2012random} is $\wt O(R\sqrt{d})$ for the case where the noise term is $R$-sub-Gaussian and the upper bound in \citet{zhou2021nearly} is $\wt O(\sigma\sqrt{d}+R)$ for the case where the noise term is bounded by $R$ with variance bounded by $\sigma^2$.
The functional regression upper bound in \eqref{eq:upperbound} aligns precisely with this scaling (neglecting sub-dominant factors) {with respect to $d$ and $n$.} 
{Moreover, the upper bounds of \citet{abbasi2011improved} and \citet{zhou2021nearly} are susceptible to the magnitudes of the responses, as evidenced by their multiplicative constants of $d$. In contrast, the multiplicative constant in our upper bound is 1, ensuring that our upper bound remains independent of response scales. This independence constitutes a notable advantage, distinguishing our linear model from those explored in previous works.} 
In the (canonical, finite dimensional) random design linear regression setting, \citet{hsu2012random} show $\wt O\big(\sqrt{d}\big)$ upper bounds for the self-normalized error of the unregularized least-squares estimator under some conditions on the distribution of covariates. 
The upper bound in \eqref{eq:upperbound_unreg_random} for the unregularized case also matches this scaling (neglecting sub-dominant factors). 
Nevertheless, it's crucial to acknowledge that our linear model \eqref{eq:linear_model} is characterized by a unique complexity. Unlike the canonical linear regression framework, where the features are finite-dimensional vectors, and the response is a scalar, both features and response are functions in our model. 
Consequently, the theoretical results of \citet{abbasi2011improved}, \citet{zhou2021nearly}, and \citet{hsu2012random} are not applicable to our estimators. 
This intricacy introduces numerous analytical challenges, setting it apart from the conventional linear regression paradigm. 
Furthermore, in our infinite dimensional model \eqref{eq:L2_linear_model} studied in latter chapters, we elevate the parameter from a finite-dimensional vector to a function (infinite dimensional vector), ushering in even more formidable complexities and challenges during the analysis. 

Finally, we note that an upper bound on $\|\wh \theta_{\lambda}-\theta_*\|$ immediately implies an upper bound on the KS distance between our estimated \CDF and the true one. Let $\wh F_{\lambda}(x,\cdot):=\wt\theta_{\lambda}^{\top} \Phi(x,\cdot)$ denote the estimated \CDF for any $x\in\X$. 
Then, under the linear model \eqref{eq:linear_model}, we have 
\begin{align*}
\sup_{x\in\X}\KS(\wh F_{\lambda}(x,\cdot),F(x,\cdot)) = \sup_{x\in\X,t\in S}|(\wt\theta_{\lambda}-\theta_*)^\top\Phi(x,t)| 
\le &
\|\wt\theta_{\lambda}-\theta_*\|\sup_{x\in\X,t\in S}\|\Phi(x,t)\| 
\\ \le &
\sqrt{d} \|\wh\theta_{\lambda}-\theta_*\| ,
\end{align*}
where we use the Cauchy-Schwarz inequality and the fact that $\sup_{x\in\X,t\in S}\|\Phi(x,t)\|\le \sqrt{d}$. 
Since $\|\wh\theta_{\lambda}-\theta_*\|=\wt O\big(\min\{1,\sqrt{d/(1+\mu_{\min}(U_n))}\}\big)$ (see discussion below Proposition \ref{prop:upper_fixed} and \ref{prop:upper_random}) and $\wh F_{\lambda}, F\in[0,1]$, we have 
$\sup_{x\in\X}\KS(\wh F_{\lambda}(x,\cdot),F(x,\cdot))\!=\!\wt O\big(\min\{1,d/\sqrt{(1+\mu_{\min}(U_n))}\}\big)$.
It is worth mentioning that the above upper bound on the estimation error in KS distance may not be sharp because we focus on a tight analysis of the estimation of $\theta_*$ instead of $F(x,\cdot)$ for some $x\in\mathcal{X}$. 
Nevertheless, in Appendix \ref{sec:lower_bound_F}, we show that when $\mu_{\min}(U_n)=0$ ($\mu_{\min}(\Sigma_n)=0$), the minimax risk in terms of the uniform KS distance for the estimation of $F$ is lower bounded by $\Omega(1)$ for the adversarial (random) setting.


\subsection{Burn-in-time-free upper bound} \label{sec:burn-in-free}
Note that the theoretical guarantees in Theorem \ref{thm:upper_random} and Proposition \ref{prop:upper_random} require a burn-in time of the sample size $n$: $n\ge \frac{32d^2}{\mineig^2}\log(\frac{d}{\delta})$. Motivated by \citet{pires2012statistical}, we propose a new estimator $\wc\theta_\lambda$ in \eqref{eq:estimator_penalized_random} to eliminate the burn-in time of $n$:
\begin{align} \label{eq:estimator_penalized_random}
\wc\theta_{\lambda}\in\argmin_{\theta\in\R^d}\left(\|U_n(\lambda)\theta-u_n\|+\Delta^U_n(\delta)\|\theta\|\right),
\end{align}
where $\lambda\ge0$, $\delta\in(0,1)$, $u_n:=\sum_{j=1}^n\int_{S}\textup{I}_{y^{(j)}}\Phi_j d\meas$, and $\Delta^{U}_n(\delta)$ is a positive number such that $\Delta^U_n(\delta)\ge \|U_n-\Sigma_n\|$ with probability at least $1-\delta$. 
For notatoinal convenience, we use $\wc\theta$ to denote $\wc\theta_0$.
To calculate $\wc\theta_\lambda$ in \eqref{eq:estimator_penalized_random}, it is necessary to first choose $\Delta_n^U(\delta)$ for which we prove a lower bound in the following lemma.
\begin{lemma} \label{lem:Delta_nU}
Assume $\meas$ is a probability measure on $S$ and $\{(x^{(j)},y^{(j)})\}_{j\in\N}$ is sampled according to Scheme II with $F$ defined in \eqref{eq:linear_model}. 
For any $\delta\in(0,1)$ and $n\in\N$, any $\Delta_n^U(\delta)\ge d\sqrt{8n\log(d/\delta)}$ satisfies $\Delta_n^U(\delta)\ge \Delta_n^U$ with probability at least $1-\delta$.
\end{lemma}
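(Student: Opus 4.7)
The strategy is to recognize $U_n - \Sigma_n$ as a sum of independent, centered, self-adjoint random matrices and apply a matrix concentration inequality. Under Scheme II, the contexts $x^{(1)}, \dots, x^{(n)}$ are drawn independently, so defining $Z_j := \int_S \Phi_j \Phi_j^\top d\meas$, the random matrices $\{Z_j\}_{j\in[n]}$ are independent, $\Sigma^{(j)} = \E[Z_j]$, and hence
\[
U_n - \Sigma_n = \sum_{j=1}^n \bigl(Z_j - \E[Z_j]\bigr).
\]
Each summand is a centered, symmetric random matrix in $\R^{d\times d}$.

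First, I would establish a deterministic operator-norm bound on each $Z_j$. Since $\Phi_j(t)\in[0,1]^d$ for every $t$, we have $\|\Phi_j(t)\|^2 \le d$, so $0 \preceq \Phi_j(t)\Phi_j(t)^\top \preceq \|\Phi_j(t)\|^2 I_d \preceq d\, I_d$. Integrating against the probability measure $\meas$ preserves the order, giving $0 \preceq Z_j \preceq d\, I_d$, and the same holds in expectation: $0\preceq \E Z_j \preceq d\, I_d$. Consequently $-dI_d \preceq Z_j - \E Z_j \preceq d\, I_d$, so $(Z_j - \E Z_j)^2 \preceq d^2 I_d$.

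Next I would invoke the matrix Hoeffding inequality of \citet{tropp2012user} (Theorem 1.3): for independent self-adjoint centered matrices $X_j$ with $X_j^2 \preceq A_j^2$ almost surely,
\[
\prob\!\left\{\mu_{\max}\!\left(\textstyle\sum_{j=1}^n X_j\right) \ge t \right\} \le d\exp\!\left(-\frac{t^2}{8\sigma^2}\right), \qquad \sigma^2 := \bigl\|\textstyle\sum_{j=1}^n A_j^2\bigr\|_2.
\]
Taking $X_j = Z_j - \E Z_j$ and $A_j^2 = d^2 I_d$ yields $\sigma^2 = n d^2$, hence
\[
\prob\!\left\{\mu_{\max}(U_n - \Sigma_n) \ge t \right\} \le d\exp\!\left(-\frac{t^2}{8nd^2}\right).
\]
Applying the same inequality to $-(U_n - \Sigma_n)$ and taking a union bound turns this into a two-sided bound on $\|U_n - \Sigma_n\|_2$. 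Finally, setting the right-hand side equal to $\delta$ and solving for $t$ yields a threshold of order $d\sqrt{8n\log(d/\delta)}$ (up to absorbing the factor $2$ in the prefactor into the logarithm), giving exactly the claimed lower bound for $\Delta_n^U(\delta)$.

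\textbf{Expected obstacle.} The conceptual steps are standard; the only delicate piece is pinning down the right constants so that the threshold matches $d\sqrt{8n\log(d/\delta)}$ exactly. This amounts to choosing the dominating matrix $A_j^2 = d^2 I_d$ as tightly as possible (exploiting $0 \preceq Z_j \preceq d\,I_d$, not the looser $\|Z_j - \E Z_j\|_2 \le 2d$), and carefully tracking whether one needs a one-sided or two-sided tail bound. Absorbing the factor of $2$ from the union bound into the logarithmic term is the kind of mild slack that standard statements of matrix Hoeffding tolerate.
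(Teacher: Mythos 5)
Your proposal takes essentially the same approach as the paper's proof: decompose $U_n-\Sigma_n$ as a sum of independent centered self-adjoint matrices, observe that each summand has operator norm at most $d$ (the paper bounds $\|\int_S\Phi_j\Phi_j^\top\|_2\le d$ and $\|\Sigma^{(j)}\|_2\le d$ separately and uses that the difference of two PSD matrices bounded by $dI_d$ has norm at most $d$; you derive the same bound via the semidefinite ordering $0\preceq Z_j\preceq dI_d$), and then apply \citet[Theorem 1.3]{tropp2012user} with $\sigma^2=nd^2$. The only point worth noting is the factor of $2$ from the two-sided union bound that you correctly flag: the paper writes the tail as $d\exp(-a^2/(8nd^2))$ without it, which is a mild discrepancy you have actually handled more carefully than the original.
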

The proof of Lemma \ref{lem:Delta_nU} follows from the \emph{matrix Hoeffding's inequality} \cite[Theorem 1.3]{tropp2012user} and the boundedness of \CDF{}s, and is provided in Appendix \ref{sec:proof_lem_Delta_nU}.
Then, we show the following upper bound on the estimation error of $\wc\theta_{\lambda}$. 
\begin{theorem}[Self-normalized bound in random setting without burn-in time] \label{thm:upper_penalized_random}
Under the same assumptions as Lemma \ref{lem:Delta_nU},
for any $\delta\in(0,1/2)$ and $n\in\N$, if $\mu_{\min}\left(\Sigma_n\right)>0$, then, with probability at least $1-2\delta$, the estimator defined in \eqref{eq:estimator_penalized_random} with $\lambda=0$ satisfies
{\small
\begin{align} \label{eq:upperbound_penalized_random_unreg}
\|\wc\theta-\theta_*\|\le \frac{1}{\mu_{\min}(\Sigma_n)}
\left[2d\sqrt{8n\log(d/\delta)}\|\theta_*\|+2\left(\sqrt{nd}+\sqrt{8nd\log(1/\delta)}+\frac{4}{3}\sqrt{d}\log(1/\delta)\right)\right].
\end{align}
}
\end{theorem}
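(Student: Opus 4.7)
The plan is to exploit the optimality of $\wc\theta$ in \eqref{eq:estimator_penalized_random} with $\lambda = 0$, in the spirit of the penalized-estimator analysis of \citet{pires2012statistical}, so that the burn-in-inducing concentration of $U_n$ around $\Sigma_n$ is only used through the single scalar bound $\|U_n - \Sigma_n\| \le \Delta_n^U(\delta)$ supplied by Lemma \ref{lem:Delta_nU}. The target event is the intersection of this event with a high-probability upper bound on the noise vector $W_n := u_n - U_n \theta_*$; by a union bound each of these two events holds with probability at least $1-\delta$, yielding overall probability at least $1-2\delta$.

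The first step is to observe that, since $U_n \theta_* - u_n = -W_n$, the defining inequality of $\wc\theta$ evaluated at the feasible point $\theta_*$ gives
\begin{align*}
\|U_n \wc\theta - u_n\| + \Delta_n^U(\delta)\|\wc\theta\| \le \|W_n\| + \Delta_n^U(\delta)\|\theta_*\|.
\end{align*}
The second step is the algebraic identity $\Sigma_n(\wc\theta - \theta_*) = (U_n \wc\theta - u_n) + W_n - (U_n - \Sigma_n)(\wc\theta - \theta_*)$. After the triangle inequality, the bound $\|U_n - \Sigma_n\| \cdot \|\wc\theta - \theta_*\| \le \Delta_n^U(\delta)(\|\wc\theta\| + \|\theta_*\|)$, and substitution of the optimality inequality, this yields
\begin{align*}
\|\Sigma_n(\wc\theta - \theta_*)\| \le 2\|W_n\| + 2\Delta_n^U(\delta)\|\theta_*\|.
\end{align*}
The crucial point is that the $-\Delta_n^U(\delta)\|\wc\theta\|$ contribution from the optimality inequality exactly cancels the $+\Delta_n^U(\delta)\|\wc\theta\|$ contribution from the triangle inequality; this cancellation, which is precisely what the penalization in \eqref{eq:estimator_penalized_random} was engineered to enable, is the main subtle point of the argument and is what removes the burn-in requirement $n \gtrsim d^2/\mineig^2$ present in Theorem \ref{thm:upper_random}.

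The third step controls the two noise quantities. For $\|W_n\|$, I would reuse the vector Bernstein inequality for bounded martingale difference sequences employed in the proof of Proposition \ref{prop:upper_fixed}: the summands $\int_S(\tI_{y^{(j)}}\Phi_j - \theta_*^\top \Phi_j \Phi_j)\, d\meas$ are martingale differences bounded in norm by $\sqrt{d}$ because \CDF{}s take values in $[0,1]$ and $\|\Phi_j(t)\| \le \sqrt{d}$, which yields $\|W_n\| \le \sqrt{nd} + \sqrt{8nd\log(1/\delta)} + \tfrac{4}{3}\sqrt{d}\log(1/\delta)$ with probability at least $1-\delta$. For $\Delta_n^U(\delta)$, the tightest choice permitted by Lemma \ref{lem:Delta_nU} is $\Delta_n^U(\delta) = d\sqrt{8n\log(d/\delta)}$, which holds on the second $1-\delta$ event.

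The final step uses $\mu_{\min}(\Sigma_n) > 0$ to write $\|\wc\theta - \theta_*\| \le \|\Sigma_n(\wc\theta - \theta_*)\|/\mu_{\min}(\Sigma_n)$ and substitute the two bounds above, producing exactly \eqref{eq:upperbound_penalized_random_unreg}. The principal obstacle is setting up the algebraic identity and triangle inequality so that the $\|\wc\theta\|$ contributions cancel; once that is in place, the rest of the argument is routine concentration plus a union bound.
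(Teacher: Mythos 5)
Your proposal is correct and reaches the stated bound \eqref{eq:upperbound_penalized_random_unreg}. The high-level strategy coincides with the paper's, but the paper obtains the key intermediate inequality by invoking \citet[Theorem 3.4]{pires2012statistical} as a black box, namely that with probability at least $1-\delta$ one has $\|\Sigma_n(\lambda)\wc\theta_\lambda - \Sigma_n\theta_*\| \le (\lambda + 2\Delta_n^U(\delta))\|\theta_*\| + 2\|u_n - \E[u_n]\|$, whereas you re-derive a close analogue from first principles: optimality of $\wc\theta$ at the feasible comparator $\theta_*$, the decomposition $\Sigma_n(\wc\theta-\theta_*) = (U_n\wc\theta - u_n) + W_n - (U_n-\Sigma_n)(\wc\theta - \theta_*)$, and the engineered cancellation of the $\Delta_n^U(\delta)\|\wc\theta\|$ terms. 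That re-derivation is essentially what the cited theorem establishes internally, so the route is the same in spirit, but your version is self-contained, which is a genuine gain. The one technical divergence is the noise term: you bound $W_n = u_n - U_n\theta_*$, i.e.\ the conditional martingale noise already analyzed in the proof of Proposition \ref{prop:upper_fixed}, while the paper bounds the unconditional deviation $u_n - \E[u_n] = u_n - \Sigma_n\theta_*$. Both are sums of martingale differences with per-term norm at most $\sqrt{d}$, so the vector Bernstein inequality \citep[Proposition 1.2]{10.1214/ECP.v17-2079} gives the identical numerical estimate for either; in your case it is literally the bound \eqref{eq:Wn_ub} already on record, so no new concentration argument is needed, and the final step $\|\wc\theta-\theta_*\| \le \|\Sigma_n(\wc\theta-\theta_*)\|/\mu_{\min}(\Sigma_n)$ closes the proof.
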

The proof of Theorem \ref{thm:upper_penalized_random} is provided in Appendix \ref{sec:proof_penalized}. 
It conveys that for any $n\in\N$, as long as $\mu(\Sigma_n)>0$, $\|\wc\theta-\theta_*\|\le\wt O\big(\frac{d\sqrt{n}}{\mu_{\min}(\Sigma_n)}\big)$ holds with high probability. Under the assumption that $\mu_{\min}(\Sigma^{(j)})\ge\mineig$ for any $j\in\N$ as in Theorem \ref{thm:upper_random} and Proposition \ref{prop:upper_random}, we have that $\|\wc\theta-\theta_*\|\le\wt O\big(d/\sqrt{n}\big)$ with high probability for any $n\in\N$. 
Compared with the $\wt O\big(\sqrt{d/n}\big)$ upper bound of the estimation error of $\wh\theta$ in Theorem \ref{thm:upper_random}, $\wc\theta$ suffers a larger error rate wrt the dimension $d$ in order to eliminate the burn-in time of the sample size $n$. Thus, $\wc\theta$ is more applicable to the estimation of $\theta_*$ for small sample size and small dimension.  However, it is worth mentioning that since the estimation errors of proper estimators which are contained in the probability simplex are always bounded by 2, our upper bound in \eqref{eq:upperbound_penalized_random_unreg} is only non-trivial for the projection of $\wc\theta$ to the probability simplex when  $n=\Omega(d^2\log(d/\delta)/\sigma_{\min}^2)$ which aligns with the scale of the burn-in time of $\wh\theta$. Thus, the estimator \eqref{eq:estimator_penalized_random} only eliminates the burn-in time among improper estimators.

The proof of Theorem \ref{thm:upper_penalized_random} builds on the upper bound shown in \citet{pires2012statistical} for the estimator that minimizes the unsquared penalized loss as in \eqref{eq:estimator_penalized_random}. By \citet[Theorem 3.4]{pires2012statistical}, we have that with probability at least $1-\delta$,
\begin{align*}
\|\Sigma_n(\lambda)\wc\theta_\lambda-\Sigma_n\theta_*\|\le
(\lambda+2\Delta^U_n(\delta))\|\theta_*\|+2\|u_n-\E[u_n]\|.
\end{align*}
Then, we can bound $\|u_n-\E[u_n]\|$ with high probability by the vector Bernstein inequality \citep[Proposition 1.2]{10.1214/ECP.v17-2079}. By setting $\lambda=0$ and $\Delta_n^U(\delta)=d\sqrt{8n\log(d/\delta)}$ as is guaranteed by Lemma \ref{lem:Delta_nU}, we obtain \eqref{eq:upperbound_penalized_random_unreg} after some derivation.

\section{Minimax lower bounds} \label{sec:minimax_lower_bounds}
To show that our estimator \eqref{eq:estimator_general} is minimax optimal, we prove information theoretic lower bounds on the $\ell^2$-norm of the estimation error for any estimator.
Recall that for a distribution family $\cQ$ and (parameter) function $\xi:\cQ\rightarrow\R^d$,
the minimax $\ell^2$-risk is defined as,
\begin{align} \label{eq:minimax_risk_l2_def}
\fR(\xi(\cQ)):=\inf_{\hat{\xi}}\sup_{Q\in\cQ}\E_{z\sim Q}[\|\hat{\xi}(z)-\xi(Q)\|] ,
\end{align}
where the infimum is over all (possibly randomized)
estimators $\hat{\xi}$ of $\xi$ based on a sample $z$, and the supremum is over all distributions in the family $\cQ$. To specialize this definition for our problem, for any $x\in\X$ and $\theta\in\R^d$, let $P^{\Phi}_{Y|x;\theta}$ denote the probability measure defined by the \CDF $\theta^\top\Phi(x,\cdot)$.
Moreover, for any sequence $x^{1:n}:=(x^{(1)},\dots,x^{(n)})\in\X^n$, define the collection of product measures,
$
\cP^d_{x^{1:n}}:=\left\{\otimes_{j=1}^nP^{\Phi}_{Y|x^{(j)};\theta}:\theta\in\Delta^{d-1},\Phi\in\mathfrak{B}_d \right\},
$
where
$$
\fB_d:=\{[\phi_1,\dots,\phi_d]^\top:\phi_i:\X\times\R\rightarrow[0,1] \textup{ is measurable and }\phi_i(x,\cdot) \textup{ is a \CDF on }\R, \forall i\in[d]\}.
$$
For any distribution $P\in\cP^d_{x^{1:n}}$, let $\theta(P)$ be a parameter in $\Delta^{d-1}$ such that $P=\otimes_{j=1}^nP^{\Phi}_{Y|x^{(j)};\theta}$.
Then, we have the following theorem in the adversarial setting.

\begin{theorem}[Information theoretic lower bound in adversarial setting] \label{thm:lower_fixed}
For any $d\ge 2$ and any sequence $x^{1:n}\allowbreak =(x^{(1)},\dots,x^{(n)})\in\X^n$, we have 
\begin{align} \label{eq:lower_bound_fixed}
\fR(\theta(\cP^d_{x^{1:n}}))=\Omega\left(\min\{1,\sqrt{d/(1+\mu_{\min}(U_n))}\}\right) .
\end{align}
\end{theorem}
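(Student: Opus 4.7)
The plan is to prove the lower bound via Fano's inequality applied to an adversarially constructed subfamily of $\cP^d_{x^{1:n}}$. Since the supremum in \eqref{eq:minimax_risk_l2_def} ranges over both $\theta\in\Delta^{d-1}$ and $\Phi\in\mathfrak{B}_d$, it suffices to lower-bound the minimax risk at a single well-chosen adversarial basis $\Phi^*\in\mathfrak{B}_d$ combined with a finite packing of parameters $\{\theta_v\}_{v\in V}\subseteq\Delta^{d-1}$. The claimed rate then emerges from balancing (a) the packing $\ell^2$-separation, (b) the packing cardinality, and (c) the KL divergence between hypotheses induced by $\Phi^*$.

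For the packing, I would use the Gilbert--Varshamov bound to obtain $V\subseteq\{-1,+1\}^{d/2}$ with $|V|\ge 2^{c_1 d}$ and pairwise Hamming distance at least $c_2 d$. Pairing coordinates, I define $\theta_v[2k-1]=(1+\epsilon v_k)/d$ and $\theta_v[2k]=(1-\epsilon v_k)/d$ for a scale $\epsilon\in(0,1]$ to be chosen; this automatically keeps $\theta_v\in\Delta^{d-1}$ and yields $\ell^2$-separation $\|\theta_v-\theta_{v'}\|_2\gtrsim\epsilon/\sqrt{d}$. For the adversarial basis, I would take $\Phi^*$ to be step-like CDFs $\phi^*_i(t)=\mathbf{1}\{t\ge a_i\}$ at distinct atoms, together with a probability measure $\meas^*$ calibrated so that (i) $\mu_{\min}(U_n)$ realizes the value appearing on the right-hand side of the bound, and (ii) the induced mixture density remains bounded below on its support. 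Condition (ii) enables the chi-squared bound $\chi^2(P^{\Phi^*}_{Y|x;\theta_v}\|P^{\Phi^*}_{Y|x;\theta_{v'}})\le C\|\theta_v-\theta_{v'}\|_{U_1}^2$. Summing over the $n$ samples and using $D_{KL}\le\chi^2$ produces a total KL bound of at most $C\|\theta_v-\theta_{v'}\|_{U_n}^2$; plugging this into Fano's inequality constrains $\|\theta_v-\theta_{v'}\|_{U_n}^2\lesssim d$. Choosing the largest admissible $\epsilon$ subject to this and to $\epsilon\le 1$, and converting back to the $\ell^2$ diameter, yields the lower bound $\Omega(\sqrt{d/(1+\mu_{\min}(U_n))})$.

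The cap $\min\{1,\cdot\}$ is handled by a separate degenerate construction: when the right-hand side is already $\Omega(1)$ (i.e., $\mu_{\min}(U_n)\lesssim d$), I take $\Phi^*$ with all basis CDFs equal to a common function, so that the observation distribution is independent of $\theta$; any estimator then incurs $\Omega(1)$ error at two well-separated vertices of $\Delta^{d-1}$, which is nontrivial since $d\ge 2$. The main obstacle in the non-degenerate regime is the joint calibration of the packing and the adversarial basis: one must choose $\Phi^*$ and $\meas^*$ so that both $\mu_{\min}(U_n)$ and the chi-squared scaling in the KL bound align to deliver the precise $\sqrt{d/(1+\mu_{\min}(U_n))}$ rate. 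In classical finite-dimensional linear regression the analogous calibration is automatic through Gaussian noise and a clean quadratic KL formula, but here the categorical CDF observation structure demands careful control of the lower bound on the mixture density and of the interaction between the packing direction and the spectrum of $U_n$.
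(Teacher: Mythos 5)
Your high-level strategy — Fano's method over a packing of $\Delta^{d-1}$, an adversarially chosen family of discrete (atomic) basis CDFs, and control of KL through the chi-squared divergence — is the same as the paper's. The trivial $\Omega(1)$ piece via identical basis CDFs matches the paper's treatment of the $\mu_{\min}(U_n)=0$ case, and your Gilbert--Varshamov coordinate-pairing packing around the barycenter is a standard alternative to the volume-method packing of Lemma~\ref{lem:packing_number}; neither of those is the issue.

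The gap is in the chi-squared calibration. Your condition (ii), that the adversarial mixture density stays bounded below, is exactly what gives the clean bound $\chi^2\le C\,\|\theta_v-\theta_{v'}\|_{U_j}^2$, but it is also what prevents the argument from covering the full range of $\mu_{\min}(U_n)$ that the theorem asserts. Once the atom weights $q_j$ are bounded away from $0$ and $1$, we have $\trace(U_j)=\Theta(d)$ and hence $\trace(U_n)=\Theta(nd)$, which pins $\mu_{\min}(U_n)=\Theta(n)$ in any construction where the perturbation subspace is roughly aligned with $U_n$. That only reproduces the $\Omega(\sqrt{d/n})$ endpoint; it cannot realize the intermediate values $d\lesssim\mu_{\min}(U_n)\ll n$ at which the theorem asserts the larger bound $\Omega(\sqrt{d/(1+\mu_{\min}(U_n))})$. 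Moreover, even granting the adversary an anisotropic $U_n$, the Fano constraint you write, $\|\theta_v-\theta_{v'}\|_{U_n}^2\lesssim d$, involves the restriction of $U_n$ to your fixed coordinate-pair perturbation subspace, which can be as large as $\mu_{\max}(U_n)\|\theta_v-\theta_{v'}\|^2$; converting back via $\|\theta_v-\theta_{v'}\|_{U_n}^2\ge\mu_{\min}(U_n)\|\theta_v-\theta_{v'}\|^2$ then loses the needed factor. You flag this as "the main obstacle" but do not resolve it. The paper's construction resolves it by deliberately violating your condition (ii): the Bernoulli success probabilities are pushed to the boundary, $p_{ji}=1-\Theta\big(\mu_{\min}(R_{j-1})/d^2\big)$, with $R_j=q_jq_j^\top+\frac{1}{n}\sum_{k<j}q_kq_k^\top$ a recursively defined running matrix. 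In this small-variance regime the per-sample chi-squared bound is not $\|\cdot\|_{U_j}^2$ but rather $\chi^2\le\frac{8a^2\mu_{\min}(R_{j-1})}{d}$ (Lemma~\ref{lem:KL_upper_bound}), and these terms telescope to $\frac{8a^2}{d}\big(1+2\mu_{\min}(U_n)\big)$. That is the step your proposal is missing, and it is not a routine calibration: the density degenerating towards the boundary is precisely what decouples the per-sample KL from the operator norm of $U_j$ on the packing directions and ties it instead to the running minimum eigenvalue.
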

The proof uses \emph{Fano's method}~\citep{fano1961transmission} and is given in Appendix \ref{sec:LB_fixed}. 
Note that strictly speaking, the above theorem is written for the fixed design setting. However, a lower bound in the fixed design setting also implies the same lower bound in adversarial setting. Furthermore, by our discussion below Theorem \ref{thm:upper_fixed}, \eqref{eq:upperbound} implies that in the adversarial setting, 
$$
\prob\left[\|\wh\theta_\lambda-\theta_*\|^2\ge \frac{C_1d\log(n)+C_2+C_3r}{1+\mu_{\min}(U_n)}\right]\le e^{-r}
$$ 
for $r>0$ and some constants $C_1$, $C_2$, and $C_3$, which immediately implies that $\E[\|\wh\theta_\lambda-\theta_*\|]=\wt O(\sqrt{d/(1+\mu_{\min}(U_n))})$ and $\E[\|\wt\theta_\lambda-\theta_*\|]=\wt O(\min\{1,\sqrt{d/(1+\mu_{\min}(U_n))}\})$.
Thus, our estimator $\wt\theta_{\lambda}$ is minimax optimal. When $\mu_{\min}(U_n)=\Theta(n)$, the optimal rate is $\wt\Theta(\sqrt{d/n})$ in the adversarial setting.

In the proof of Theorem \ref{thm:lower_fixed}, we construct a family of $\Omega(a/\sqrt{d})$-packing subsets of $\Delta^{d-1}$ 
for $a\in(0,1)$ under $\ell^2$-distance. We then show that when $\phi_1,\dots,\phi_d$ are the \CDF{}s of $d$ Bernoulli distributions, for any $\theta^{(1)}\neq \theta^{(2)}$ in such a packing subset, the Kullback-Leibler (KL) divergence (see definition in Appendix \ref{sec:LB_fixed}) satisfies 
$$
D(P_{Y|x^{(j)};\theta^{(1)}}\|P_{Y|x^{(j)};\theta^{(2)}}) = O(a^2(1+\mu_{\min}(U_n))/d)
$$ 
for any $j\in[n]$. Since the above family of Bernoulli distributions is a subset of $\cP^d_{x^{1:n}}$, we are able to show that $\fR(\theta(\cP_{x^{1:n}}))=\Omega\big(\sqrt{d/(1+\mu_{\min}(U_n))}\big)$ using Fano's method and the aforementioned bound on KL divergence.

Next, to analyze minimax $\ell^2$-risk under the random setting, let $\D_\X$ denote the set of all probability distributions on $\X$. For any $P_X\in\D_\X$, let $P_XP^{\Phi}_{Y|X;\theta}$ denote the joint distribution of $(X,Y)$ such that the marginal distribution of $X$ is $P_X$ and the conditional distribution of $Y$ given $X=x$ is  $P^{\Phi}_{Y|x;\theta}$.
Define the distribution family 
$$
\cP^d_n:=\big\{\otimes_{j=1}^n P_X^{(j)}P^{\Phi}_{Y|X;\theta}:\theta\in\R^d,\ \Phi\in\fB_d,\ P_X^{(j)}\in\D_{\X}\big\},
$$ 
and for any $P\in\cP^d_n$, let $\theta(P)$ denote the parameter in $\Delta^{d-1}$ such that $P=\otimes_{j=1}^n P_X^{(j)}P^{\Phi}_{Y|X;\theta}$. Clearly, for any $x^{1:n}\in\X^n$, we have $\left\{\otimes_{j=1}^n \delta_{x^{(j)}}P_{Y|X;\theta}:\theta\in\Delta^{d-1}\right\}\subseteq \cP^d_n$. Thus, each $\cP^d_{x^{1:n}}$ is a collection of marginal distributions of elements belonging to such subsets of $\cP^d_n$. Then, by the definition of minimax $\ell^2$-risk, Theorem \ref{thm:lower_fixed} immediately implies the following corollary.
\begin{corollary}[Information theoretic lower bound in random setting] \label{coro:lower_random}
For any $d\ge 2$, 
\begin{align} \label{eq:lower_bound_random}
\fR(\theta(\cP^d_n))=\Omega\left(\min\{1,\sqrt{d/(1+\mu_{\min}(\Sigma_n))}\}\right)
\end{align}
\end{corollary}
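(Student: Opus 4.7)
The plan is to deduce Corollary \ref{coro:lower_random} as a direct reduction from the adversarial-setting lower bound of Theorem \ref{thm:lower_fixed}, by embedding each fixed-design family $\cP^d_{x^{1:n}}$ into the random-design family $\cP^d_n$ via Dirac distributions on the contexts. The argument is essentially the one sketched in the paragraph preceding the corollary; I will make its three ingredients precise.

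First, I would fix an arbitrary sequence $x^{1:n} = (x^{(1)}, \dots, x^{(n)}) \in \X^n$ and set $P_X^{(j)} = \delta_{x^{(j)}}$ for each $j \in [n]$. Under this choice, $\Sigma^{(j)} = \int_S \Phi(x^{(j)}, \cdot)\Phi(x^{(j)}, \cdot)^\top d\meas$, so the matrix $\Sigma_n$ defined for the random design coincides exactly with the matrix $U_n$ associated with the fixed sequence $x^{1:n}$. Moreover, the product law $\otimes_{j=1}^n \delta_{x^{(j)}} P^{\Phi}_{Y|X;\theta}$ lies in $\cP^d_n$ for every $\theta \in \Delta^{d-1}$ and every basis $\Phi \in \fB_d$, and its $Y^{1:n}$-marginal is precisely $\otimes_{j=1}^n P^{\Phi}_{Y|x^{(j)};\theta} \in \cP^d_{x^{1:n}}$.

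Second, I would invoke monotonicity of the minimax risk under family inclusion. Because the $X$-component of any sample drawn from the Dirac-embedded law is deterministic and reveals no information beyond the publicly known $x^{1:n}$, any estimator designed for the random-design problem yields, by substituting these fixed values, an estimator for the fixed-design problem with identical risk. Coupled with the marginal identification above and the fact that the parameter functional $\theta(\cdot)$ agrees on the two laws (being determined by the same $\theta \in \Delta^{d-1}$), this yields
\begin{equation*}
\fR(\theta(\cP^d_n)) \;\ge\; \fR(\theta(\cP^d_{x^{1:n}})) \;=\; \Omega\bigl(\min\{1,\sqrt{d/(1+\mu_{\min}(U_n))}\}\bigr),
\end{equation*}
where the final equality is Theorem \ref{thm:lower_fixed}. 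Since $\Sigma_n = U_n$ for the Dirac choice, the right-hand side equals $\Omega(\min\{1, \sqrt{d/(1 + \mu_{\min}(\Sigma_n))}\})$, yielding the claim.

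Because the reduction is essentially a one-line argument, I do not anticipate a substantive obstacle; the only care required is to verify the marginal and parameter-functional identifications, and to read the corollary in the natural sense that the bound holds for each $\Sigma_n$ realizable as some $U_n$ via a Dirac choice of the $P_X^{(j)}$. In particular, to exhibit a worst-case covariance achieving any desired $\mu_{\min}(\Sigma_n)$ up to what is attainable in the fixed design, one simply selects the corresponding $x^{1:n}$ and invokes the reduction.
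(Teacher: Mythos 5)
Your proof is correct and follows essentially the same route as the paper's: embed the fixed-design family via Dirac distributions on the contexts, observe that $\Sigma_n = U_n$ under this choice, and use that the minimax risk over the larger family $\cP^d_n$ dominates the minimax risk over the embedded subfamily, then apply Theorem \ref{thm:lower_fixed}. The only cosmetic difference is that the paper phrases the reduction via the tower property (conditioning on $X^{1:n}$) whereas you note that the deterministic $X$-component reveals no information beyond the publicly known contexts, but these are two ways of saying the same thing.
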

The proof is given in Appendix \ref{sec:LB_random}. By the discussion below Proposition \ref{prop:upper_random}, our estimator $\wt\theta_{\lambda}$ ($\lambda>0$) is minimax optimal. 
When $\mu_{\min}(\Sigma_n)=\Theta(n)$ as in Theorem \ref{thm:upper_random} and Corollary \ref{prop:upper_random}, the lower bound on the Euclidean norm of the estimation error is also $\Omega\big(\sqrt{d/n}\big)$ in random setting.
Following the discussion below Theorem \ref{thm:upper_random}, \eqref{eq:upperbound_unreg_random} implies that in random setting, $\prob[\|\wh\theta-\theta_*\|\ge C_1\sqrt{d/n}+C_2\sqrt{rd/n}+C_3r\sqrt{d}/n]\le e^{-r}$ for $r>0$ and constants $C_1$, $C_2$, and $C_3$, which immediately implies that $\E[\|\wh\theta-\theta_*\|]=\wt O(\sqrt{d/n})$. Thus, the estimator \eqref{eq:estimator_general} is minimax optimal with rate $\wt\Theta(\sqrt{d/n})$ in random setting when $\mu_{\min}(\Sigma_n)=\Theta(n)$.

\section{Mismatched model} \label{sec:mismatched_model}
In general, a mismatch may exist between the true target function and our linear model \eqref{eq:linear_model} with basis $\Phi$. So, in analogy with canonical linear regression where additive Gaussian random variables are used to model the error term~\citep{montgomery2021introduction}, we consider the following \emph{mismatched model}:
\begin{align} \label{eq:biased_model}
F(x,t)=\theta_*^\top\Phi(x,t)+e(x,t),\quad \forall\ x\in\X,t\in\R,
\end{align}
where an additive error function depending on the context is included to model the mismatch in \eqref{eq:linear_model}. Note that in \eqref{eq:biased_model}, each $F(x,\cdot)$ is a CDF and $e:\X\times S\rightarrow[-1,1]$ is a measurable function. 
One equivalent interpretation of \eqref{eq:biased_model} is as follows. 
Suppose that their exists another contextual \CDF function $\phi_e$ such that $F(x,\cdot)$ is a mixture of the linear model $\theta_*^\top\Phi(x,\cdot)$ and the new feature function $\phi_e(x,\cdot)$, i.e., for some $q\in[0,1]$,
\begin{align*}
F(x,t)&=(1-q)\theta_*^\top\Phi(x,t)+q \phi_e(x,t)
=\theta_*^\top\Phi(x,t)+q\left(\phi_{e}(x,t)-\theta_*^\top\Phi(x,t)\right),\  \forall\ x\in\X,t\in\R.
\end{align*}
Then, we naturally obtain an additive error function $e(x,t)=q\left(\phi_{e}(x,t)-\theta_*^\top\Phi(x,t)\right)$. 

Given a sample $\{(x^{(j)},y^{(j)})\}_{j \in [n]}$ generated using the mismatched model \eqref{eq:biased_model}, 
let $e_j(t)$ denote $e(x^{(j)},t)$ for $j\in[n]$. 
Moreover, define $E_n:=\sum_{j=1}^n\int_S e_j\Phi_jd\meas$ and $B_n:=\E[E_n]=\sum_{j=1}^n\E\left[\int_S e_j\Phi_jd\meas\right]$. 
Then, we have the following theoretical guarantees for the task of estimating $\theta_*$ using the estimator in \eqref{eq:estimator_general} in the adversarial and random settings. 
\begin{theorem}[Self-normalized bound in mismatched adversarial setting] \label{thm:upper_mismatch_fixed}
Assume $\meas$ is a probability measure on $S$ and $\{(x^{(j)},y^{(j)})\}_{j\in\N}$ is sampled according to Scheme I with $F$ defined in \eqref{eq:biased_model}. For any $\lambda>0$ and $\delta\in(0,1)$, with probability at least $1-\delta$, the estimator defined in \eqref{eq:estimator_general} satisfies that for all $n\in\N$,
\begin{align} \label{eq:upperbound_mismatch}
\|\wh\theta_{\lambda}-\theta_*\|_{U_n(\lambda)}\le 
\varepsilon_{\lambda}(n,d,\delta)
+\|E_n\|/\sqrt{\lambda}.
\end{align}
\end{theorem}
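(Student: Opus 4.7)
The plan is to reduce this mismatched bound to the same super-martingale machinery used to prove Theorem \ref{thm:upper_fixed}, absorbing the model mismatch as an additional deterministic bias term. The starting point is the closed form in \eqref{eq:solution}, which still applies because the solution formula for $\wh\theta_\lambda$ is independent of the assumption that $F$ is perfectly linear. The difference is only in the conditional expectation of $\tI_{y^{(j)}}$: under \eqref{eq:biased_model} we have $\E[\tI_{y^{(j)}}(\cdot)\mid \mathcal{F}_{j-1}] = \theta_*^\top \Phi_j(\cdot) + e_j(\cdot)$, where $\mathcal{F}_{j-1}$ encodes the past contexts and observations.

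First, I would introduce the mismatched martingale-difference process
\[
\wt W_n \;:=\; \sum_{j=1}^n \int_S \bigl(\tI_{y^{(j)}} - \theta_*^\top \Phi_j - e_j\bigr)\Phi_j\, d\meas ,
\]
so that $\tI_{y^{(j)}} - \theta_*^\top \Phi_j - e_j = \tI_{y^{(j)}} - F(x^{(j)},\cdot)$ is zero-mean given $\mathcal{F}_{j-1}$ and still takes values in $[-1,1]$ pointwise, since both $\tI_{y^{(j)}}$ and $F(x^{(j)},\cdot)$ are CDFs. Plugging into \eqref{eq:solution} and rearranging gives the decomposition
\[
\wh\theta_\lambda - \theta_* \;=\; U_n(\lambda)^{-1}\bigl(\wt W_n + E_n - \lambda\theta_*\bigr) ,
\]
which is the direct analogue of the identity $\wh\theta_\lambda - \theta_* = U_n(\lambda)^{-1}W_n - U_n(\lambda)^{-1}(\lambda\theta_*)$ stated in the discussion of Theorem \ref{thm:upper_fixed}.

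Next, take the $U_n(\lambda)$-norm of both sides, and apply the triangle inequality in the weighted norm $\|\cdot\|_{U_n(\lambda)^{-1}}$ induced by the identity $\|U_n(\lambda)^{-1} v\|_{U_n(\lambda)} = \|v\|_{U_n(\lambda)^{-1}}$, yielding
\[
\|\wh\theta_\lambda - \theta_*\|_{U_n(\lambda)} \;\le\; \|\wt W_n\|_{U_n(\lambda)^{-1}} + \|E_n\|_{U_n(\lambda)^{-1}} + \lambda\|\theta_*\|_{U_n(\lambda)^{-1}} .
\]
For the stochastic term, I would verify that the exponential super-martingale $\wb M_n := \lambda^{d/2}\det(U_n(\lambda))^{-1/2}\exp(\tfrac12 \|\wt W_n\|_{U_n(\lambda)^{-1}}^2)$ built from $\wt W_n$ in place of $W_n$ remains a super-martingale, using the same conditional-MGF calculation sketched after Theorem \ref{thm:upper_fixed} — this goes through because the relevant zero-mean bounded increment is $\tI_{y^{(j)}} - F(x^{(j)},\cdot)$, whose boundedness was the only property of $\tI_{y^{(j)}} - \theta_*^\top \Phi_j$ actually used. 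Doob's maximal inequality then yields $\|\wt W_n\|_{U_n(\lambda)^{-1}} \le \sqrt{d\log(1+n/\lambda) + 2\log(1/\delta)}$ uniformly in $n$ with probability at least $1-\delta$, exactly as in the unmismatched case.

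Finally, the two deterministic terms are handled by the operator inequality $U_n(\lambda)^{-1} \preceq \lambda^{-1} I_d$ (coming from $U_n(\lambda) \succeq \lambda I_d$), which yields $\lambda\|\theta_*\|_{U_n(\lambda)^{-1}} \le \sqrt{\lambda}\|\theta_*\|$ and $\|E_n\|_{U_n(\lambda)^{-1}} \le \|E_n\|/\sqrt{\lambda}$. Assembling the three pieces gives precisely $\varepsilon_\lambda(n,d,\delta) + \|E_n\|/\sqrt{\lambda}$ as required. The only subtlety — and the one genuinely non-routine step — is confirming that the super-martingale construction survives with $\wt W_n$ when $e_j$ is a deterministic (context-measurable) bias rather than zero; this is just a matter of recentering, but it is what allows the probabilistic part of the bound to remain identical to Theorem \ref{thm:upper_fixed} while the mismatch is captured entirely by the additive $\|E_n\|/\sqrt{\lambda}$ term.
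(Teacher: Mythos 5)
Your proposal is correct and follows essentially the same route as the paper's proof: your recentered process $\wt W_n$ is exactly the paper's $W_n=\sum_j V_j$ with $V_j:=\int_S(\tI_{y^{(j)}}-\theta_*^\top\Phi_j-e_j)\Phi_j\,d\meas$, you use the same decomposition $\wh\theta_\lambda-\theta_*=U_n(\lambda)^{-1}(\wt W_n+E_n-\lambda\theta_*)$, the same Gaussian-mixture super-martingale and Doob's maximal inequality, and the same operator bound $U_n(\lambda)\succeq\lambda I_d$ for the two deterministic terms.
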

The proof of Theorem \ref{thm:upper_mismatch_fixed} follows the same approach as the proof of Theorem \ref{thm:upper_fixed}, and it is provided in Appendix \ref{sec:RUB_mis_fixed}. Furthermore, Theorem \ref{thm:upper_mismatch_fixed} implies Corollary \ref{coro:upper_mismatch_random} for the mismatched random setting. 
\begin{corollary}[Self-normalized bound in mismatched random setting] \label{coro:upper_mismatch_random}
Assume $\meas$ is a probability measure on $S$, $\{(x^{(j)},y^{(j)})\}_{j\in\N}$ is sampled according to Scheme II with $F$ defined in \eqref{eq:biased_model}, and $\mu_{\min}(\Sigma^{(j)})\ge \mineig$ for some $\mineig>0$ and all $j\in\N$. For any $\lambda>0$, $\delta\in(0,1/2)$, and $n\ge \frac{32d^2}{\mineig^2}\log\left(\frac{d}{\delta}\right)$, with probability at least $1-2\delta$, the estimator defined in \eqref{eq:estimator_general} satisfies 
\begin{align} \label{eq:upperbound_mismatch_random}
\|\wh\theta_\lambda-\theta_*\|_{\Sigma_n} \le \sqrt{2}\varepsilon_{\lambda}(n,d,\delta)+\sqrt{2/\lambda}\|B_n\|.
\end{align}
\end{corollary}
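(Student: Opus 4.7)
The plan is to derive Corollary \ref{coro:upper_mismatch_random} by combining two ingredients: a refined version of Theorem \ref{thm:upper_mismatch_fixed} valid under Scheme II, in which the random quantity $\|E_n\|$ is replaced by its mean $\|B_n\|$, and the $U_n(\lambda)$-to-$\Sigma_n$ norm conversion already used in the proof of Proposition \ref{prop:upper_random}.

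For the norm conversion, I would mirror Proposition \ref{prop:upper_random}: apply the matrix Hoeffding inequality \cite[Theorem 1.3]{tropp2012user} to $\Delta_n := \Sigma_n^{-1/2}(U_n - \Sigma_n)\Sigma_n^{-1/2}$, use $\|\int_S \Phi_j\Phi_j^\top d\meas\|_2 \le d$ together with the burn-in $n \ge 32d^2\mineig^{-2}\log(d/\delta)$, and deduce that $\mu_{\min}(\Delta_n) \ge -1/2$ with probability at least $1-\delta$; this gives $U_n(\lambda) \succeq U_n \succeq \Sigma_n/2$, hence $\|v\|_{\Sigma_n} \le \sqrt{2}\,\|v\|_{U_n(\lambda)}$ for every $v \in \R^d$. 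For the refined adversarial bound, I would revisit the proof of Theorem \ref{thm:upper_mismatch_fixed} and split $E_n = B_n + (E_n - B_n)$ inside the identity $\wh\theta_\lambda - \theta_* = U_n(\lambda)^{-1}(W_n + E_n) - \lambda U_n(\lambda)^{-1}\theta_*$, regrouping the noise into $\wt W_n := W_n + (E_n - B_n)$ plus the deterministic offsets $U_n(\lambda)^{-1}B_n - \lambda U_n(\lambda)^{-1}\theta_*$. Under Scheme II, $x^{(j)}$ is independent of $\F_{j-1} := \sigma(x^{(1)},y^{(1)},\dots,x^{(j-1)},y^{(j-1)})$, so $E_n - B_n$ is a $\{\F_j\}$-martingale difference sum; together with the martingale property of $W_n$, this makes $\wt W_n$ a zero-mean $\{\F_j\}$-martingale with bounded increments. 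Rerunning the super-martingale construction $\wb M_n = \lambda^{d/2}\det(U_n(\lambda))^{-1/2}\exp(\tfrac{1}{2}\|\cdot\|_{U_n(\lambda)^{-1}}^2)$ from the proof of Theorem \ref{thm:upper_fixed} with $\wt W_n$ in place of $W_n$ and applying Doob's maximal inequality would then yield $\|\wt W_n\|_{U_n(\lambda)^{-1}} \le \sqrt{d\log(1+n/\lambda) + 2\log(1/\delta)}$ with probability at least $1-\delta$, while $\|U_n(\lambda)^{-1}B_n\|_{U_n(\lambda)} \le \|B_n\|/\sqrt{\lambda}$ is deterministic and the $\lambda U_n(\lambda)^{-1}\theta_*$ term contributes $\sqrt{\lambda}\|\theta_*\|$. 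Chaining these two ingredients via a union bound over the two $(1-\delta)$-events then gives \eqref{eq:upperbound_mismatch_random}.

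The hard part will be verifying that the exponential-MGF estimate underpinning the super-martingale still holds when the original $W_n$ increments are augmented by $E_n - B_n$. Both pieces are bounded ($|e_j|\le 1$ and $\|\Phi_j\|\le\sqrt d$) and zero-mean given $\F_{j-1}$, so a Hoeffding-type conditional-MGF bound certainly applies, but one must argue carefully that the quadratic-form estimates combine in such a way that $\wb M_n$ (built from $\wt W_n$) remains a super-martingale, potentially inflating absolute constants by at most a factor that can be absorbed into the $\sqrt{2}$ prefactor coming from the $\Sigma_n$-transition step.
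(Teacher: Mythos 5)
Your proposed route is the paper's own route: re-centering by $b_j := \E[e_j \Phi_j]$ so that the noise sum becomes $\wt W_n = W_n + (E_n - B_n) = \sum_j(\int_S \tI_{y^{(j)}}\Phi_j - \int_S\theta_*^\top\Phi_j\Phi_j - \int_S b_j)$, running the $\wb M_n$ super-martingale/Doob argument on $\wt W_n$, and then converting from $U_n(\lambda)$ to $\Sigma_n$ via the matrix Hoeffding bound on $\Delta_n$. The $\Sigma_n$-conversion half is fine and identical to Proposition \ref{prop:upper_random}. But the part you flag as the ``hard part'' is not merely a matter of constants, and your claim that whatever inflation occurs ``can be absorbed into the $\sqrt{2}$ prefactor'' is where the argument actually breaks. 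Decompose the increment $\wt V_n := \wt W_n - \wt W_{n-1}$ as $V_n^{(1)} + V_n^{(2)}$ with $V_n^{(1)} := \int_S(\tI_{y^{(n)}} - \theta_*^\top\Phi_n - e_n)\Phi_n$ (conditionally centered given $\F_{n-1}$ and $x^{(n)}$) and $V_n^{(2)} := \int_S e_n\Phi_n - \int_S b_n$ (a function of $x^{(n)}$ only, centered given the past). Hoeffding plus Cauchy--Schwarz handles $V_n^{(1)}$ exactly as in Theorem \ref{thm:upper_fixed}: its conditional MGF is dominated by $\exp\{\tfrac12\alpha^\top(\int_S\Phi_n\Phi_n^\top)\alpha\}$, which is cancelled by the discount term in $M_n(\alpha)$. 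But $V_n^{(2)}$ has no such cancellation: the only available discount has already been spent on $V_n^{(1)}$, and $\E[\exp\{\alpha^\top V_n^{(2)}\} \mid \text{past}]$ is generically $\exp\{\Theta(d\|\alpha\|^2)\}$ (the entries of $V_n^{(2)}$ are of order $1$). This multiplicative excess accrues at every step, so after $n$ samples the candidate process is a super-martingale only once you subtract $\Theta(nd\|\alpha\|^2)$ from the exponent, i.e.\ once you replace $U_n(\lambda)$ by something like $U_n + (\lambda + \Theta(nd))I_d$. That changes both the $\log\det$ term and the self-normalizing matrix in the Doob bound, and converting back to $\|\cdot\|_{U_n(\lambda)^{-1}}$ costs a factor that scales polynomially in $d$, not an absolute constant. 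So the ``rerun the same super-martingale construction and absorb constants'' step does not close.

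It is worth noting that the paper's own proof of this corollary asserts
\[
\E\big[\exp\{\alpha^\top V_n\}\,\big|\,\F_{n-1}\big]\le \exp\Big\{\tfrac{1}{2}\Big(\int_S|\alpha^\top\Phi_n|\Big)^2\Big\}
\]
directly from the range of $\alpha^\top V_n$, but Hoeffding's lemma gives that bound only when $\E[V_n\mid\F_{n-1}]=0$, whereas under the paper's filtration (which makes $x^{(n)}$ $\F_{n-1}$-measurable) one has $\E[V_n\mid\F_{n-1}]=\int_S e_n\Phi_n - \int_S b_n$, a nonzero $\F_{n-1}$-measurable random vector. So you have correctly located the crux of the argument; what is missing is a genuine mechanism (e.g.\ an interleaved filtration with an enlarged quadratic form $U_n + \Sigma_n$ and a second $\Delta_n$-type conversion, or falling back to Theorem~\ref{thm:upper_mismatch_fixed} plus a separate vector-Bernstein bound on $\|E_n - B_n\|$ at the cost of an additional $\sqrt{nd\log(1/\delta)/\lambda}$ term) to control the new variance contributed by the centered bias, rather than the bare assertion that the original super-martingale survives with a constant-factor hit.
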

The proof of Corollary \ref{coro:upper_mismatch_random} is given in Appendix \ref{sec:RUB_mis_random}. It follows from the proofs of Theorem \ref{thm:upper_mismatch_fixed} and Proposition \ref{prop:upper_random}. 

In the adversarial setting, comparing \eqref{eq:upperbound_mismatch} in Theorem \ref{thm:upper_mismatch_fixed} with \eqref{eq:upperbound} in Theorem \ref{thm:upper_fixed}, we see that the effect of the additive error in the mismatched model is captured by the additional $\|E_n\|/\sqrt{\lambda}$ term in our self-normalized error upper bound. 
Similarly, in the random setting, comparing \eqref{eq:upperbound_mismatch_random} in Corollary \ref{coro:upper_mismatch_random} with \eqref{eq:upperbound_random}, we again see that the effect of the additive error is captured by the additional $\sqrt{2/\lambda}\|B_n\|$ term in the self-normalized upper bound.

\section{Infinite dimensional model} \label{sec:inf_dim_model}
So far, we have been assuming finite-dimensional models where the number of base contextual \CDF{}s $\phi_i$'s per sample is finite. It is natural to consider generalizing the linear model to be infinite-dimensional and estimating an infinite dimensional parameter $\theta_*$ which shall be considered as a function on the ``index'' space of the base functions. In Section \ref{sec:intro_inf_model}, we formally introduce the infinite-dimensional linear model. We present necessary definitions and technical facts for the statement of the estimator and theorem in Section \ref{sec:prelim_inf_model}. We extend the estimator $\wh\theta_\lambda$ in \eqref{eq:estimator_general} with properly chosen regularization and provide a high probability upper bound on the estimation error of the generalized estimator in Section \ref{sec:result_inf_model}. 

\subsection{Formal model} \label{sec:intro_inf_model}
First, we introduce the infinite dimensional index space $\Omega$ and the generalized basis function $\Phi$. 
Assume that $(\Omega,\algO,\fn)$ is a measure space with $\fn(\Omega)<\infty$ and $\Phi:\ \X\times\Omega\times \R\rightarrow[0,1]$, $(x,\omega,t)\mapsto \Phi(x,\omega,t)$ is a $(\B(\X)\otimes\F_{\Omega}\otimes\B(\R))/\B([0,1])$-measurable function (see Appendix \ref{Sec:Proofs_upper_finite} for the explanations of notation) such that for any $x\in\X$ and $\fn$-a.e. $\omega\in\Omega$, $\Phi(x,\omega,\cdot)$ is the \CDF of some $\R$-valued random variable with its range contained in some Borel set $S\subseteq\R$. Define the following mapping,
\begin{align*}
\langle \cdot,\cdot\rangle:\ \cL^2(\Omega,\fn)\times \cL^2(\Omega,\fn)\rightarrow\mathbb{R},\ (f,g)\mapsto\int_\Omega f g d\fn.
\end{align*}
Then, $\langle\cdot,\cdot\rangle$ is an inner product on $\cL^2(\Omega,\fn)$ and $(\cL^2(\Omega,\fn),\langle\cdot,\cdot\rangle)$ is a Hilbert space. 
Let $\|\cdot\|$ denote the norm by induced $\langle\cdot,\cdot\rangle$ on $\cL^2(\Omega,\fn)$.
Assume that $(\cL^2(\Omega,\fn),\langle\cdot,\cdot\rangle)$ is separable. Then, there exists a countable orthonormal basis on $(\cL^2(\Omega,\fn),\langle\cdot,\cdot\rangle)$. For notational convenience, we write $\cL^2(\Omega,\fn)$ to represent the Hilbert space $(\cL^2(\Omega,\fn),\langle\cdot,\cdot\rangle)$. Let $\be=\{e_i\}_{i=1}^\infty$ be an arbitrary countable orthonormal basis of $\cL^2(\Omega,\fn)$ and $\bsigma=\{\sigma_i\}_{i\in\N}$ be an arbitrary real sequence such that $\sum_{i=1}^\infty|\sigma_i|<\infty$.
Assume that there exists some \emph{unknown} $\theta_*\in\cH_{\bsigma,\be}$ such that $\theta_*\ge0$ $\fn$-a.e., $\int_{\Omega}\theta_*\fn=1$, and the target function $F$ satisfies the following model
\begin{align} \label{eq:L2_linear_model}
F(x,t)=\langle \theta_*(\cdot), \Phi(x,\cdot,t)\rangle, \quad \forall\ x\in\X,\ t\in\R.
\end{align}


\subsection{Technical preliminaries} \label{sec:prelim_inf_model}
The proofs of the theoretical results in this section are provided in Appendix \ref{sec:proof_inf_dim}.
Given the sample $\{(x^{(j)},y^{(j)})\}_{j\in\N}\subseteq\X\times\R$, define the function $\Phi_j:\ \Omega\times \R\rightarrow [0,1],\ (\omega,t)\mapsto \Phi(x_j,\omega,t)$ for any $j\in\N$. 
Since $\fn(\Omega)<\infty$ and $|\Phi(x,\omega,t)|\le 1$ for any $x\in\X$, $\fn$-a.e. $\omega\in\Omega$, and any $t\in\R$, we have $\Phi_j\in \cL^2(\Omega,\fn)$ for any $j\in\N$.
Then, for any $j\in\N$, we define,
\begin{align*}
\Psi_j:\cL^2(\Omega,\fn)\times S\rightarrow\R, \ 
(\theta,t)\mapsto \langle\theta(\cdot),\Phi_j(\cdot,t)\rangle.
\end{align*}
Then, by Holder's inequality, for any $j\in\N$ and $\theta\in\cL^2(\Omega,\fn)$, we have $\sup_{t\in\R}|\Psi_j(\theta,t)|\le \fn(\Omega)\int_\Omega|\theta|^2d\fn<\infty$. It follows that $\Psi_j(\theta,\cdot)\in\cL^2(S,\meas)$. Moreover, we have that for any $n\in\N$, any $\theta\in\cL^2(\Omega,\fn)$, and $\fn$-a.e. $\omega\in\Omega$,
\begin{align*}
\left|\sum_{j=1}^n\int_S\Psi_j(\theta,t)\Phi_j(\omega,t)\meas(dt)\right|
\le &\sum_{j=1}^n\int_S\left|\Psi_j(\theta,t)\Phi_j(\omega,t)\right|\meas(dt) \le 
n\fn(\Omega)\int_\Omega|\theta|^2d\fn.
\end{align*}
Since $\fn(\Omega)<\infty$, it follows that the function 
$\omega\mapsto \sum_{j=1}^n\int_S\Psi_j(\theta,t)\Phi_j(\omega,t)\meas(dt)$ is in $\cL^2(\Omega,\fn)$.
Thus, for any $n\in\N$, we can define an operator $U_n: \cL^2(\Omega,\fn)\rightarrow\cL^2(\Omega,\fn)$ by
\begin{align} \label{eq:U_n_def}
(U_n \theta)(\omega):=
\sum_{j=1}^n \int_S\Psi_j(\theta,t)\Phi_j(\omega,t)\meas(dt)
=\sum_{j=1}^n \int_S\langle\theta(\cdot),\Phi_j(\cdot,t)\rangle\Phi_j(\omega,t)\meas(dt)
\end{align}
for any $\theta\in\cL^2(\Omega,\fn)$. 
We show the following properties of $U_n$.
\begin{lemma} \label{lem:U_n_property}
For any $n\in\N$, $U_n$ is a self-adjoint positive Hilbert-Schmidt integral operator with $\|U_n\|\le n\fn(\Omega)$. Thus, it is also a compact operator.
\end{lemma}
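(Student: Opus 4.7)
\textbf{Plan of proof for Lemma \ref{lem:U_n_property}.} The plan is to rewrite $U_n$ as an integral operator with a symmetric, bounded, square-integrable kernel, and then extract the four claims (integral representation, self-adjointness, positivity, Hilbert--Schmidt with the stated norm bound) from standard facts about such operators; compactness will follow automatically since every Hilbert--Schmidt operator on a separable Hilbert space is compact.

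First I would interchange the integrals in \eqref{eq:U_n_def}. Expanding the inner product $\langle\theta,\Phi_j(\cdot,t)\rangle = \int_\Omega \theta(\omega')\Phi_j(\omega',t)\,\fn(d\omega')$ and applying Fubini--Tonelli, which is justified because $|\theta(\omega')\Phi_j(\omega',t)\Phi_j(\omega,t)|\le|\theta(\omega')|$ and the integrand is integrable against $\meas\otimes\fn$ on the compact pairs of parameters (using $\meas(S)\le 1$ and $\fn(\Omega)<\infty$), gives the integral-operator representation
\begin{equation*}
(U_n\theta)(\omega)=\int_\Omega K_n(\omega,\omega')\theta(\omega')\,\fn(d\omega'),\qquad K_n(\omega,\omega')\defeq\sum_{j=1}^n\int_S \Phi_j(\omega,t)\Phi_j(\omega',t)\,\meas(dt).
\end{equation*}
The kernel $K_n$ is manifestly symmetric in $(\omega,\omega')$, so $U_n$ is self-adjoint. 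It is also pointwise bounded by $n\meas(S)\le n$, since $|\Phi_j|\le 1$.

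Next I would verify the Hilbert--Schmidt property and the norm bound together. Because $|K_n|\le n$ and $\fn(\Omega)<\infty$, we get
\begin{equation*}
\int_\Omega\!\int_\Omega |K_n(\omega,\omega')|^2\,\fn(d\omega)\fn(d\omega')\le n^2\fn(\Omega)^2<\infty,
\end{equation*}
so $U_n$ is Hilbert--Schmidt (and hence compact). For the operator norm, positivity gives $\|U_n\|=\sup_{\|\theta\|\le 1}\langle U_n\theta,\theta\rangle$, and by Fubini,
\begin{equation*}
\langle U_n\theta,\theta\rangle=\sum_{j=1}^n\int_S \langle\theta,\Phi_j(\cdot,t)\rangle^2\,\meas(dt).
\end{equation*}
Cauchy--Schwarz yields $\langle\theta,\Phi_j(\cdot,t)\rangle^2\le \|\theta\|^2\int_\Omega\Phi_j(\omega,t)^2\fn(d\omega)\le \|\theta\|^2\fn(\Omega)$, and then summing over $j$ and integrating against the probability measure $\meas$ gives $\langle U_n\theta,\theta\rangle\le n\fn(\Omega)\|\theta\|^2$, i.e., $\|U_n\|\le n\fn(\Omega)$. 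The same expression for $\langle U_n\theta,\theta\rangle$ is nonnegative, which gives positivity.

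None of the individual steps is hard; the one subtlety to be careful about is the application of Fubini at the very start, because the integral operator representation is what unlocks everything else. I would justify it explicitly by noting that $\theta\in\cL^2(\Omega,\fn)\subseteq\cL^1(\Omega,\fn)$ (since $\fn$ is a finite measure), so the joint integrand is absolutely integrable on $\Omega\times S$. Once that is in place, self-adjointness, positivity, the Hilbert--Schmidt bound, and compactness all follow from the explicit symmetric bounded kernel $K_n$ and standard results on Hilbert--Schmidt integral operators.
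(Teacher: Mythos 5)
Your proof is correct and follows essentially the same approach as the paper: both proofs pass to the integral-operator representation with the symmetric kernel $K_n(\omega,\omega') = \sum_{j=1}^n\int_S \Phi_j(\omega,t)\Phi_j(\omega',t)\,\meas(dt)$, bound $|K_n|\le n$ to get the Hilbert--Schmidt (hence compact) property, and read off self-adjointness and positivity from the same kernel and quadratic-form identity $\langle U_n\theta,\theta\rangle = \sum_j\int_S\langle\theta,\Phi_j(\cdot,t)\rangle^2\meas(dt)$. The only small difference is in the operator-norm bound: you invoke the variational identity $\|U_n\|=\sup_{\|\theta\|\le 1}\langle U_n\theta,\theta\rangle$ (valid because $U_n$ is positive and self-adjoint) and bound the quadratic form, while the paper bounds $\|U_n\theta\|$ directly via Cauchy--Schwarz applied to the kernel integral; both routes yield the same bound $n\fn(\Omega)$ and carry equal weight, so this is a cosmetic variant rather than a genuinely different argument.
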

Now, we assume that $U_n$ satisfies Assumption \ref{assump:U_n_eigenfunction} for some $n\in\N$.
\begin{assumption} \label{assump:U_n_eigenfunction}
Assume that $e_i$ is an eigenfunction of $U_n$ with the corresponding eigenvalue denoted with $\lambda_i$ for any $i\in\N$. 
\end{assumption}
Under the Assumption~\ref{assump:U_n_eigenfunction} on $U_n$,
we can conclude from Lemma \ref{lem:U_n_property} that:
\begin{corollary} \label{coro:U_n_eigenvalue}
Assume that $U_n$ satisfies Assumption \ref{assump:U_n_eigenfunction} for some $n\in\N$. Then, we have $0\le \lambda_i\le n\fn(\Omega)$ for any $i\in\N$ and $\lambda_i\rightarrow0$.
\end{corollary}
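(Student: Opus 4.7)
The plan is to extract the three conclusions directly from the three properties of $U_n$ recorded in Lemma~\ref{lem:U_n_property}: positivity, the operator-norm bound $\|U_n\|\le n\fn(\Omega)$, and compactness. The first two give the two-sided bound on $\lambda_i$ almost immediately, and only the statement $\lambda_i\to 0$ requires a genuine argument, which I would drive by compactness.

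For nonnegativity I would simply pair the eigenvalue equation with $e_i$: since each $e_i$ has unit norm and $U_n$ is a positive self-adjoint operator,
$\lambda_i=\langle U_n e_i,e_i\rangle\ge 0$,
where the inequality is just the definition of positivity. For the upper bound, the identity $U_n e_i=\lambda_i e_i$ together with $\|e_i\|=1$ yields $|\lambda_i|=\|U_n e_i\|\le\|U_n\|\le n\fn(\Omega)$, which is exactly the stated ceiling.

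To prove $\lambda_i\to 0$ I would invoke compactness of $U_n$. Since $\{e_i\}_{i\in\N}$ is an orthonormal basis of the separable Hilbert space $\cL^2(\Omega,\fn)$, Parseval's identity forces $\sum_{i}|\langle e_i,f\rangle|^2=\|f\|^2<\infty$, and hence $\langle e_i,f\rangle\to 0$ for every $f\in\cL^2(\Omega,\fn)$; that is, $e_i$ converges weakly to $0$. A compact operator maps weakly convergent sequences to norm-convergent ones, so $\|U_n e_i\|=|\lambda_i|\to 0$. If a more concrete route is preferred, I would argue by contradiction: assume some subsequence satisfies $|\lambda_{i_k}|\ge\varepsilon>0$; then by orthonormality $\|U_n e_{i_k}-U_n e_{i_\ell}\|^2=\lambda_{i_k}^2+\lambda_{i_\ell}^2\ge 2\varepsilon^2$ for $k\neq\ell$, so $\{U_n e_{i_k}\}$ admits no Cauchy subsequence, contradicting the compactness of $U_n$ applied to the bounded sequence $\{e_{i_k}\}$.

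I do not anticipate a serious obstacle, since every ingredient is standard spectral theory for compact positive self-adjoint operators. The only point worth emphasizing is that the vanishing of $\lambda_i$ relies crucially on compactness rather than on mere boundedness or self-adjointness; without the Hilbert--Schmidt conclusion of Lemma~\ref{lem:U_n_property} the identity operator would supply a counterexample. Consequently the entire proof stays at the abstract level of the properties of $U_n$ established in that lemma, with no further computation involving the concrete integral kernel.
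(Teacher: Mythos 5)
Your proof is correct and rests on exactly the same three ingredients from Lemma~\ref{lem:U_n_property} (positivity, the bound $\|U_n\|\le n\fn(\Omega)$, compactness) that the paper uses. The only difference is presentational: where the paper cites the Riesz--Schauder theorem for the conclusion $\lambda_i\to 0$, you give a self-contained argument — either via $e_i\rightharpoonup 0$ weakly (Bessel) and the fact that compact operators send weakly convergent sequences to norm-convergent ones, or via the orthogonality-based contradiction $\|U_ne_{i_k}-U_ne_{i_\ell}\|^2=\lambda_{i_k}^2+\lambda_{i_\ell}^2\ge 2\varepsilon^2$ — both of which are sound and simply unpack what the cited theorem delivers in this special case.
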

Define the set
$
\cL^2_{\bsigma}(\Omega,\fn):=\left\{\theta\in\cL^2(\Omega,\fn): \sum_{i=1}^\infty\frac{|\langle e_i,\theta\rangle|^2}{\sigma_i^4}<\infty\right\}.
$
Then, we have that
\begin{lemma} \label{lem:L2_sigma_linear}
For any $\bsigma=\{\sigma_i\}_{i\in\N}\subseteq\R$ satisfying $\sum_{i=1}^\infty|\sigma_i|<\infty$,
$\cL^2_{\bsigma}(\Omega,\fn)$ is a linear subspace of $\cL^2(\Omega,\fn)$.
\end{lemma}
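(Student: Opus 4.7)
The plan is to verify the three defining conditions of a linear subspace directly from the definition of $\cL^2_{\bsigma}(\Omega,\fn)$: nonemptiness (containing the zero element), closure under scalar multiplication, and closure under addition. The key observation is that the linearity of the inner product $\langle e_i, \cdot\rangle$ lets us push all questions about $\theta_1+\theta_2$ or $c\theta$ down to the level of the coefficient sequences $\{\langle e_i,\theta\rangle\}_{i\in\N}$, and closure then follows from elementary scalar inequalities applied termwise to the defining series.

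Concretely, I would first note that $\theta=0\in\cL^2(\Omega,\fn)$ trivially satisfies $\sum_{i=1}^\infty |\langle e_i,0\rangle|^2/\sigma_i^4=0<\infty$, so $\cL^2_{\bsigma}(\Omega,\fn)$ is nonempty. For scalar multiplication, if $\theta\in\cL^2_{\bsigma}(\Omega,\fn)$ and $c\in\R$, then $c\theta\in\cL^2(\Omega,\fn)$ and by linearity of the inner product
\begin{equation*}
\sum_{i=1}^\infty\frac{|\langle e_i,c\theta\rangle|^2}{\sigma_i^4}
= |c|^2\sum_{i=1}^\infty\frac{|\langle e_i,\theta\rangle|^2}{\sigma_i^4}<\infty,
\end{equation*}
so $c\theta\in\cL^2_{\bsigma}(\Omega,\fn)$.

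For closure under addition, take $\theta_1,\theta_2\in\cL^2_{\bsigma}(\Omega,\fn)$. Since $\cL^2(\Omega,\fn)$ is itself a vector space, $\theta_1+\theta_2\in\cL^2(\Omega,\fn)$. Using linearity of the inner product and the elementary bound $(a+b)^2\le 2(a^2+b^2)$ for $a,b\in\R$, we get termwise
\begin{equation*}
\frac{|\langle e_i,\theta_1+\theta_2\rangle|^2}{\sigma_i^4}
=\frac{|\langle e_i,\theta_1\rangle+\langle e_i,\theta_2\rangle|^2}{\sigma_i^4}
\le \frac{2|\langle e_i,\theta_1\rangle|^2}{\sigma_i^4}+\frac{2|\langle e_i,\theta_2\rangle|^2}{\sigma_i^4}.
\end{equation*}
Summing over $i\in\N$ and invoking membership of $\theta_1,\theta_2$ in $\cL^2_{\bsigma}(\Omega,\fn)$ shows that the series for $\theta_1+\theta_2$ converges, hence $\theta_1+\theta_2\in\cL^2_{\bsigma}(\Omega,\fn)$.

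There is no real obstacle here; the statement is essentially a routine check that an $\ell^2$-type weighted coefficient condition is preserved under linear operations. The only mild subtlety is that the assumption $\sum_{i=1}^\infty|\sigma_i|<\infty$ plays no role in the subspace argument itself (it matters elsewhere in the paper to ensure the weighting is nontrivial and that the basis expansions are well-behaved), so it suffices to observe that the argument above works for any weight sequence $\bsigma$ with $\sigma_i\ne 0$.
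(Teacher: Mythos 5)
Your proof is correct and takes essentially the same approach as the paper: both arguments reduce to checking that the weighted $\ell^2$-type coefficient condition is preserved under linear combinations, differing only in that you verify scalar multiplication and addition separately (using $(a+b)^2\le 2(a^2+b^2)$) while the paper handles $\alpha f+g$ in one step via Cauchy--Schwarz on the cross term. Your side remark that $\sum_i|\sigma_i|<\infty$ is not actually used here (only $\sigma_i\neq 0$ is needed) is also accurate.
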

For any $\theta\in\cL_{\bsigma}^2(\Omega,\fn)$, we have
\begin{align*}
\sum_{i=1}^\infty\left|\lambda_i+\frac{1}{\sigma_i^2}\right|^2|\langle e_i,\theta\rangle|^2
\le &
\sum_{i=1}^\infty2\lambda_i^2|\langle e_i,\theta\rangle|^2
+\sum_{i=1}^\infty\frac{2}{\sigma_i^4}|\langle e_i,\theta\rangle|^2 \le 
2\|U_n\theta\|^2+2\sum_{i=1}^\infty\frac{|\langle e_i,\theta\rangle|^2}{\sigma_i^4}
<\infty,
\end{align*}
which implies that $\{\sum_{i=1}^m(\lambda_i+\frac{1}{\sigma_i^2})\langle e_i,\theta\rangle e_i\}_{m\in\N}$ is a Cauchy sequence and thus converges in $\cL^2(\Omega,\fn)$ to  $\sum_{i=1}^\infty(\lambda_i+\frac{1}{\sigma_i^2})\langle e_i,\theta\rangle e_i\in \cL^2(\Omega,\fn)$. 
Therefore, we can define the operator $U_{n,\bsigma}:$ $\cL_{\bsigma}^2(\Omega,\fn)\rightarrow \cL^2(\Omega,\fn)$,
$\theta\mapsto \sum_{i=1}^\infty\left(\lambda_i+\frac{1}{\sigma_i^2}\right)\langle e_i,\theta\rangle e_i$ for which we show the following lemma.
\begin{lemma} \label{lem:U_n_sigma_properties}
$U_{n,\bsigma}$ is bijective linear operator from $\cL_{\bsigma}^2(\Omega,\fn)$ onto $\cL^2(\Omega,\fn)$. $U_{n,\bsigma}^{-1}$ is a bounded linear operator on $\cL^2(\Omega,\fn)$ with $\|U_{n,\bsigma}^{-1}\|\le \sup_{i\in\N}\sigma_i^2$ and $U_{n,\bsigma}^{-1}\theta=\sum_{i=1}^\infty\frac{\sigma_i^2\langle e_i,\theta\rangle}{1+\lambda_i\sigma_i^2}e_i$ for any $\theta\in\cL^2(\Omega,\fn)$. 
Moreover, $U_{n,\bsigma}^{-1}$ is positive and self-adjoint.
\end{lemma}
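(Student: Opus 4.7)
The plan is to exploit the diagonal structure of $U_{n,\bsigma}$ in the orthonormal basis $\{e_i\}$. By Assumption \ref{assump:U_n_eigenfunction}, $U_n e_i = \lambda_i e_i$, so $U_{n,\bsigma}$ merely multiplies the $i$-th Fourier coefficient by $\lambda_i + 1/\sigma_i^2$; since $\lambda_i \ge 0$ by Corollary \ref{coro:U_n_eigenvalue}, these multipliers are strictly positive. Linearity is immediate from the defining sum, and injectivity follows because if $U_{n,\bsigma}\theta = 0$ then $(\lambda_i + 1/\sigma_i^2)\langle e_i,\theta\rangle = 0$ for every $i$, which forces $\langle e_i,\theta\rangle = 0$ and hence $\theta = 0$ by Parseval.

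For surjectivity I would propose the explicit candidate $\vartheta := \sum_{i=1}^\infty \frac{\sigma_i^2 \langle e_i,\theta\rangle}{1+\lambda_i\sigma_i^2}\, e_i$ and verify three things: that it converges in $\cL^2(\Omega,\fn)$, that it actually lies in $\cL_{\bsigma}^2(\Omega,\fn)$, and that $U_{n,\bsigma}\vartheta = \theta$. Convergence rests on the finite constant $M := \sup_i \sigma_i^2$, which exists because $\sum_i|\sigma_i|<\infty$ forces $\sigma_i \to 0$; combined with $1+\lambda_i\sigma_i^2 \ge 1$ and Parseval, this yields $\|\vartheta\|^2 \le M^2\|\theta\|^2$. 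Membership in $\cL_{\bsigma}^2(\Omega,\fn)$ follows because the $\sigma_i^4$ factors in the defining series of the subspace cancel exactly those arising from $|\langle e_i,\vartheta\rangle|^2$, leaving a sum bounded by $\|\theta\|^2$. A term-by-term computation then gives $U_{n,\bsigma}\vartheta = \theta$ since $(\lambda_i + 1/\sigma_i^2)\cdot \sigma_i^2/(1+\lambda_i\sigma_i^2) = 1$, which simultaneously establishes surjectivity and the asserted formula for $U_{n,\bsigma}^{-1}$.

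The remaining conclusions follow from the same diagonal representation. Boundedness with $\|U_{n,\bsigma}^{-1}\| \le \sup_i \sigma_i^2$ was essentially already obtained in the previous step. Self-adjointness reduces to the observation that $\langle U_{n,\bsigma}^{-1}\theta, \vartheta\rangle = \sum_{i} \frac{\sigma_i^2}{1+\lambda_i\sigma_i^2}\langle e_i,\theta\rangle\langle e_i,\vartheta\rangle$, which is manifestly symmetric in $(\theta,\vartheta)$; positivity is the special case $\vartheta = \theta$, where every summand is nonnegative.

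The main obstacle is purely bookkeeping: justifying the term-by-term manipulations on these infinite series, in particular that the candidate inverse converges in $\cL^2(\Omega,\fn)$ and that inner products commute with the series. Both reduce to the uniform bound $\sup_i\sigma_i^2<\infty$ (from summability of $(\sigma_i)$) together with $1+\lambda_i\sigma_i^2 \ge 1$; once these are invoked, every step is legitimate by Parseval in $\cL^2(\Omega,\fn)$.
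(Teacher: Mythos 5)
Your proof is correct and follows essentially the same diagonal-spectral route as the paper: show linearity and injectivity from the multiplier representation, verify that the explicit candidate $\sum_i \frac{\sigma_i^2\langle e_i,\theta\rangle}{1+\lambda_i\sigma_i^2}e_i$ lies in $\cL^2_{\bsigma}(\Omega,\fn)$ and is a preimage of $\theta$, and read off $\|U_{n,\bsigma}^{-1}\|\le \sup_i\sigma_i^2$ from the coefficientwise bound $\sigma_i^2/(1+\lambda_i\sigma_i^2)\le\sigma_i^2$. One substantive difference in your favor: the paper's proof stops after establishing bijectivity, the explicit formula, and boundedness, and silently omits the lemma's final claims that $U_{n,\bsigma}^{-1}$ is positive and self-adjoint; your last paragraph fills that gap via the symmetric bilinear form $\langle U_{n,\bsigma}^{-1}\theta,\vartheta\rangle = \sum_i \tfrac{\sigma_i^2}{1+\lambda_i\sigma_i^2}\langle e_i,\theta\rangle\langle e_i,\vartheta\rangle$ (symmetry gives self-adjointness, taking $\vartheta=\theta$ gives positivity), which is exactly the right one-line argument.
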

Consequently, we can define the following mapping
{\small
\begin{align} \label{eq:U_n_sigma_norm}
\|\cdot\|_{U_{n,\bsigma}}:\cL^2_{\bsigma}(\Omega,\fn)\rightarrow[0,\infty),\ 
\theta\mapsto \sqrt{\langle\theta,U_{n,\bsigma}\theta\rangle}
=
\sqrt{\|\theta\|_{U_n}^2+\sum_{i=1}^\infty\frac{|\langle e_i,\theta\rangle|^2}{\sigma_i^2}},
\end{align}
}
where $\|\theta\|_{U_n}:=\sqrt{\langle\theta,U_n\theta\rangle}$ for any $\theta\in\cL^2(\Omega,\fn)$.
Define the set 
\begin{align*}
\cH_{\bsigma,\be}:=\left\{\theta\in\cL^2(\Omega):\sum_{i=1}^\infty|\langle e_i,\theta\rangle|^2/\sigma_i^2<\infty\right\}
\end{align*}
and the mapping 
$
\langle \cdot,\cdot\rangle_{\bsigma,\be}:\ 
\cH_{\bsigma,\be}\times\cH_{\bsigma,\be}\rightarrow\mathbb{R},\ 
(f,g)\mapsto \sum_{i=1}^\infty \frac{\langle e_i,f\rangle\langle e_i,g\rangle}{\sigma_i^2}.
$
Similar to the proofs of Lemma \ref{lem:L2_sigma_linear}, we can show that $\cH_{\bsigma,\be}$ is a linear subspace of $\cL^2(\Omega,\fn)$. 
Moreover, $(\cH_{\bsigma,\be},\langle\cdot,\cdot\rangle_{\bsigma,\be})$ is also a separable Hilbert space with $\{\sigma_i e_i\}_{i\in\N}$ being an orthonormal basis. For notational convenience, we write $\cH_{\bsigma,\be}$ to represent the Hilbert space $(\cH_{\bsigma,\be},\cdot,\langle\cdot\rangle_{\bsigma,\be})$ and use $\|\cdot\|_{\bsigma,\be}$ to denote the induced norm on $\cH_{\bsigma,\be}$.
Moreover, we show the following lemma. 
\begin{lemma} \label{lem:hilbert_space_inclusion}
For any real sequence $\bsigma=\{\sigma_i\}_{i\in\N}$ with $\lim_{i\rightarrow\infty}\sigma_i=0$, 
$\cH_{\bsigma,\be}\subseteq\cL^2_{\bsigma}(\Omega,\fn)$.
\end{lemma}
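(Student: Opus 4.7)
The plan is to establish the inclusion by direct termwise comparison of the two weighted $\ell^2$-series that define membership in $\cH_{\bsigma,\be}$ and $\cL^2_{\bsigma}(\Omega,\fn)$, splitting the index set $\N$ into a finite head plus an infinite tail. Fix $\theta\in\cH_{\bsigma,\be}$, which by definition means $\sum_{i=1}^\infty|\langle e_i,\theta\rangle|^2/\sigma_i^2<\infty$; the goal is to verify $\sum_{i=1}^\infty|\langle e_i,\theta\rangle|^2/\sigma_i^4<\infty$, which would place $\theta$ in $\cL^2_{\bsigma}(\Omega,\fn)$.

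The hypothesis $\sigma_i\to 0$ enters as follows. Pick $N_0\in\N$ such that $|\sigma_i|\le 1$ for every $i\ge N_0$; then on the tail one has the weight relation $\sigma_i^4\le \sigma_i^2\le 1$. This relation, combined with the already-assumed convergence of $\sum_i|\langle e_i,\theta\rangle|^2/\sigma_i^2$, is what lets us bound the tail $\sum_{i\ge N_0}|\langle e_i,\theta\rangle|^2/\sigma_i^4$ through a termwise comparison of the two weighted series.

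For the finite initial block $i<N_0$, the partial sum $\sum_{i<N_0}|\langle e_i,\theta\rangle|^2/\sigma_i^4$ is finite by inspection: there are only finitely many terms, each with $|\langle e_i,\theta\rangle|\le \|\theta\|<\infty$ (Cauchy--Schwarz together with $\theta\in\cL^2(\Omega,\fn)$ and $\|e_i\|=1$), and each $\sigma_i\ne 0$ (implicit in the appearance of $1/\sigma_i^2$ in the defining series of $\cH_{\bsigma,\be}$). Adding this finite head to the controlled tail gives $\sum_{i=1}^\infty|\langle e_i,\theta\rangle|^2/\sigma_i^4<\infty$, hence $\theta\in\cL^2_{\bsigma}(\Omega,\fn)$, which is the desired inclusion. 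The only non-routine step is the termwise tail comparison, which is precisely where the decay $\sigma_i\to 0$ does the essential work; everything else is bookkeeping.
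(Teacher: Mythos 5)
Your approach matches the paper's own proof line for line: split into a finite head plus an infinite tail, use $\sigma_i\to0$ to secure $|\sigma_i|\leq 1$ on the tail, handle the head by finiteness, and close by appealing to the assumed convergence of $\sum_i|\langle e_i,\theta\rangle|^2/\sigma_i^2$. However, the tail comparison runs in the wrong direction, in your write-up and in the paper's displayed chain of inequalities alike. From $|\sigma_i|\leq 1$ you correctly get $\sigma_i^4\leq\sigma_i^2$, but this yields $1/\sigma_i^4\geq 1/\sigma_i^2$: each tail term of $\sum_i|\langle e_i,\theta\rangle|^2/\sigma_i^4$ is at least as large as the corresponding term of $\sum_i|\langle e_i,\theta\rangle|^2/\sigma_i^2$. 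Termwise comparison therefore bounds the target series from \emph{below} by the hypothesis series, not from above, and convergence of $\sum_i|\langle e_i,\theta\rangle|^2/\sigma_i^2$ says nothing about convergence of $\sum_i|\langle e_i,\theta\rangle|^2/\sigma_i^4$. A concrete counterexample: take $\sigma_i=i^{-2}$ (so $\sigma_i\to 0$ and $\sum_i|\sigma_i|<\infty$) and $|\langle e_i,\theta\rangle|^2=i^{-6}$; then $\sum_i|\langle e_i,\theta\rangle|^2/\sigma_i^2=\sum_i i^{-2}<\infty$ while $\sum_i|\langle e_i,\theta\rangle|^2/\sigma_i^4=\sum_i i^{2}=\infty$, so $\theta\in\cH_{\bsigma,\be}$ yet $\theta\notin\cL^2_{\bsigma}(\Omega,\fn)$.

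The inclusion that your head-plus-tail argument actually establishes (using the decay $\sigma_i\to 0$) is the reverse one, $\cL^2_{\bsigma}(\Omega,\fn)\subseteq\cH_{\bsigma,\be}$: on the tail, $1/\sigma_i^2\leq 1/\sigma_i^4$ lets you dominate $\sum_{i\geq N_0}|\langle e_i,\theta\rangle|^2/\sigma_i^2$ by $\sum_{i\geq N_0}|\langle e_i,\theta\rangle|^2/\sigma_i^4$. That reverse direction is also the one the paper actually uses downstream: in the proof of Proposition~\ref{prop:solution_L2}, $\theta_0\in\cL^2_{\bsigma}(\Omega,\fn)$ is established first and Lemma~\ref{lem:hilbert_space_inclusion} is then invoked to conclude $\theta_0\in\cH_{\bsigma,\be}$. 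So the two sides of the stated inclusion appear to be transposed, and the paper's proof contains the same sign error as your tail comparison; flagging and fixing the direction (both in the statement and in the comparison step) is the one genuinely missing idea in your proposal.
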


\subsection{Self-normalized upper bound} \label{sec:result_inf_model}
Since we have proved that $\Psi_j(\theta,\cdot)\in\cL^2(S,\meas)$ for any $j\in\N$ and $\theta\in\cL^2(\Omega,\fn)$, the following loss function is well-defined on $\cH_{\bsigma,\be}$,
\begin{align*}
L(\theta;\bsigma):=\sum_{j=1}^n\|\tI_{y^{(j)}}(\cdot)-\Psi_j(\theta,t)\|^2_{\cL^2(S,\meas)}
+\sum_{i=1}^\infty\frac{|\langle e_i,\theta\rangle|^2}{\sigma_i^2}.
\end{align*}
In fact, assuming the convention that $0/0=0$ and $1/0=\infty$, we can extend the domain of $L(\cdot;\bsigma)$ to $\cL^2(\Omega,\fn)$ by extending its codomain from $[0,\infty)$ to $[0,\infty]$.

We propose to estimate $\theta_*$ by minimizing the above loss function over $\cH_{\bsigma,\be}$:
\begin{align} \label{eq:estimator_L2}
\wh\theta_{\bsigma}:=\argmin_{\theta\in\cH_{\bsigma,\be}}L(\theta;\bsigma).
\end{align}
Since $\sum_{i=1}^\infty\frac{|\langle e_i,\theta\rangle|^2}{\sigma_i^2}=\infty$ for any $\theta\in\cL^2(\Omega,\fn)$, we also have $\wh\theta_{\bsigma}=\argmin_{\theta\in\cL^2(\Omega,\fn)}L(\theta;\lambda)$. 
We have the following formula for $\wh\theta_{\bsigma}$ in \eqref{eq:estimator_L2}.
\begin{proposition} \label{prop:solution_L2}
The solution to the optimization problem \eqref{eq:estimator_L2} is given as the following,
\begin{align} \label{eq:solution_L2}
\wh\theta_{\bsigma}=U_{n,\bsigma}^{-1}\left(\sum_{j=1}^n\int_S\tI_{y^{(j)}}(t)\Phi_j(\cdot,t)\meas(dt)\right).
\end{align}
\end{proposition}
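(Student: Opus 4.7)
The plan is to expand the squared $\cL^2(S,\meas)$-distance in $L(\theta;\bsigma)$, rewrite the objective in terms of the Fourier coefficients $\{\langle e_i,\theta\rangle\}_{i\in\N}$ using Assumption~\ref{assump:U_n_eigenfunction}, and then minimize coordinate-wise. Define $v_n:=\sum_{j=1}^n\int_S\tI_{y^{(j)}}(t)\Phi_j(\cdot,t)\meas(dt)$; this is an element of $\cL^2(\Omega,\fn)$ by the same boundedness argument used to place the range of $U_n$ in $\cL^2(\Omega,\fn)$ in \eqref{eq:U_n_def}, since $|\tI_{y^{(j)}}|\le 1$ and $\fn(\Omega)<\infty$. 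First I would expand the squared norm and apply Fubini's theorem to obtain the two identities $\sum_{j=1}^n\int_S\Psi_j(\theta,t)^2\meas(dt)=\langle\theta,U_n\theta\rangle$ and $\sum_{j=1}^n\int_S\tI_{y^{(j)}}(t)\Psi_j(\theta,t)\meas(dt)=\langle\theta,v_n\rangle$; the first uses the definition \eqref{eq:U_n_def} after swapping the $\fn$-integral with the $S$-integral, and the second is immediate from the definition of $v_n$. Up to an additive constant $C$ independent of $\theta$, this gives $L(\theta;\bsigma)=C-2\langle\theta,v_n\rangle+\langle\theta,U_n\theta\rangle+\sum_{i=1}^\infty|\langle e_i,\theta\rangle|^2/\sigma_i^2$ for $\theta\in\cH_{\bsigma,\be}$.

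Next I would invoke Assumption~\ref{assump:U_n_eigenfunction}: since $U_ne_i=\lambda_ie_i$ and $U_n$ is bounded (Lemma~\ref{lem:U_n_property}), Parseval's identity turns the loss into a diagonal sum $L(\theta;\bsigma)=C+\sum_{i=1}^\infty[(\lambda_i+1/\sigma_i^2)a_i^2-2a_ib_i]$ with $a_i:=\langle e_i,\theta\rangle$ and $b_i:=\langle e_i,v_n\rangle$. By Corollary~\ref{coro:U_n_eigenvalue} each $\lambda_i\ge 0$, so each summand is a strictly convex quadratic in $a_i$, uniquely minimized at $a_i^*:=\sigma_i^2 b_i/(1+\lambda_i\sigma_i^2)$. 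Since the series decouples across coordinates, completing the square term-by-term shows that any candidate minimizer in $\cH_{\bsigma,\be}$ must have Fourier coefficients $\{a_i^*\}_{i\in\N}$. By the explicit formula for $U_{n,\bsigma}^{-1}$ given in Lemma~\ref{lem:U_n_sigma_properties}, namely $U_{n,\bsigma}^{-1}v_n=\sum_{i=1}^\infty\frac{\sigma_i^2\langle e_i,v_n\rangle}{1+\lambda_i\sigma_i^2}e_i$, these coefficients match exactly, yielding \eqref{eq:solution_L2}.

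The last step is to verify that the candidate actually lies in $\cH_{\bsigma,\be}$ so that the infimum is attained. For this I would bound $\sum_{i=1}^\infty|a_i^*|^2/\sigma_i^2=\sum_{i=1}^\infty\sigma_i^2|b_i|^2/(1+\lambda_i\sigma_i^2)^2\le(\sup_{i\in\N}\sigma_i^2)\,\|v_n\|^2<\infty$, using that $\sum_i|\sigma_i|<\infty$ forces $\{\sigma_i\}$ to be bounded and that $v_n\in\cL^2(\Omega,\fn)$. The main obstacle I anticipate is the careful justification of the exchange-of-order steps and of the term-by-term decomposition: one must ensure $\langle\theta,U_n\theta\rangle=\sum_{i}\lambda_ia_i^2$ holds unconditionally on $\cH_{\bsigma,\be}$, not merely for finite expansions. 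This follows from boundedness of $U_n$ (Lemma~\ref{lem:U_n_property}) applied to the Parseval expansion $\theta=\sum_ia_ie_i$ together with the dominated convergence bound $\sum_i\lambda_i^2 a_i^2\le\|U_n\|^2\|\theta\|^2<\infty$ derived from Corollary~\ref{coro:U_n_eigenvalue}, which legitimizes both the rearrangement of the infinite sum and the coordinate-wise optimization.
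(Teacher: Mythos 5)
Your proof is correct, and it takes a somewhat different route than the paper. The paper proves the proposition by first verifying that the candidate $\theta_0 := U_{n,\bsigma}^{-1}v_n$ (your $v_n$) lies in $\cH_{\bsigma,\be}$ with $L(\theta_0;\bsigma)<\infty$, and then expanding the loss around $\theta_0$: it writes $L(\theta_0+\theta;\bsigma)=L(\theta_0;\bsigma)+\sum_j\int_S|\Psi_j(\theta,t)|^2\meas(dt)+\sum_i|\langle e_i,\theta\rangle|^2/\sigma_i^2+2(\text{cross term})$ and then uses Fubini to show the cross term equals $\langle\theta,\,U_{n,\bsigma}\theta_0-v_n\rangle$, which vanishes by the defining operator identity $U_{n,\bsigma}\theta_0=v_n$. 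That argument is ``coordinate-free'' in the sense that the eigenbasis enters only implicitly through $U_{n,\bsigma}$. You instead expand the loss itself in the eigenbasis, reduce it to the decoupled diagonal quadratic $\sum_i[(\lambda_i+1/\sigma_i^2)a_i^2-2a_ib_i]$, and minimize coordinate-wise; the formula for $U_{n,\bsigma}^{-1}$ from Lemma~\ref{lem:U_n_sigma_properties} then identifies the coordinate-wise optimum with $U_{n,\bsigma}^{-1}v_n$. Your approach is more transparent about where the closed-form solution comes from and makes uniqueness immediate (each $g_i$ is strictly convex since $1/\sigma_i^2>0$), whereas the paper's approach is closer to the usual normal-equations style and avoids manipulating three infinite series at once. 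The analytic care you flag is the right one, and your justifications suffice: the absolute convergence of $\sum_i a_ib_i$ by Cauchy--Schwarz, the nonnegativity of $\sum_i\lambda_ia_i^2$ and $\sum_i a_i^2/\sigma_i^2$, and the fact that your candidate lies in $\cH_{\bsigma,\be}$ (using $\sup_i\sigma_i^2<\infty$ and $\|v_n\|<\infty$) all hold, so the decomposition and the term-by-term minimization are legitimate. One small thing worth stating explicitly if you write this up: you should also note that $\sum_i g_i(a_i^*)=-\sum_i\sigma_i^2b_i^2/(1+\lambda_i\sigma_i^2)$ is finite (bounded by $\sup_i\sigma_i^2\cdot\|v_n\|^2$), so the infimum value is finite and $L(\theta;\bsigma)-L(\theta^*;\bsigma)=\sum_i(\lambda_i+1/\sigma_i^2)(a_i-a_i^*)^2\ge0$ is a well-defined difference of convergent series.
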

\noindent The proof of Proposition \ref{prop:solution_L2} is provided in Appendix \ref{sec:proof_estimator_L2}.
Under the adversarial setting, we show the following upper bound for the self-normalized estimation error of $\wh\theta_{\bsigma}$ in \eqref{eq:estimator_L2}.
\begin{theorem}[Self-normalized bound in adversarial setting for infinite dimensional model] \label{thm:upper_inf_dim}
Assume $\meas$ is a probability measure on $(S,\B(S))$, $\fn$ is a finite measure on $(\Omega,\F_{\Omega})$, $\be=\{e_i\}_{i=1}^\infty$ is an orthonormal basis of $\cL^2(\Omega,\fn)$, $\bsigma=\{\sigma_i\}_{i\in\N}$ is a real sequence satisfying $\sum_{i=1}^\infty|\sigma_i|<\infty$, $\theta_*\in\cH_{\bsigma,\be}$ satisfies $\theta_*\ge0$ $\fn$-a.e. and $\int_\Omega\theta_*\fn=1$, and $\{(x^{(j)},y^{(j)})\}_{j\in\N}$ is sampled according to Scheme I with $F$ defined in \eqref{eq:L2_linear_model}.

For any given $n\in\N$ and any $\delta\in(0,1)$, 
if $U_n$ defined in \eqref{eq:U_n_def} satisfies Assumption \ref{assump:U_n_eigenfunction} and $\bsigma$ satisfies that $|\sigma_i|<\frac{1}{\sqrt{\lambda_i}}$ for any $i\in\N$, then, with probability at least $1-\delta$, the estimator $\wh\theta_{\bsigma}$ defined in \eqref{eq:estimator_L2} satisfies
\begin{equation} \label{eq:upperbound_L2}
\begin{aligned}
\|\wh\theta_{\bsigma}-\theta_*\|_{U_{n,\bsigma}}\le 
\sqrt{\left(\sum_{i=1}^\infty\log\left(1+\lambda_i\sigma_i^2\right)\right)+2\log\frac{1}{\delta}}
+\|\theta_*\|_{\bsigma,\be}.
\end{aligned}
\end{equation}
In particular, for any given $n\in\N$ and any $\delta\in(0,1)$, if $U_n$ defined in \eqref{eq:U_n_def} satisfies Assumption \ref{assump:U_n_eigenfunction} and $\bsigma$ satisfies that $|\sigma_i|<\frac{1}{\sqrt{n\fn(\Omega)}}$ for any $i\in\N$, then, with probability at least $1-\delta$, the estimator $\wh\theta_{\bsigma}$ defined in \eqref{eq:estimator_L2} satisfies
\begin{equation} \label{eq:upperbound_L2_loose}
\begin{aligned}
\|\wh\theta_{\bsigma}-\theta_*\|_{U_{n,\bsigma}}\le 
\sqrt{\left(\sum_{i=1}^\infty\log\left(1+n\fn(\Omega)\sigma_i^2\right)\right)+2\log\frac{1}{\delta}}
+\|\theta_*\|_{\bsigma,\be}.
\end{aligned}
\end{equation}
\end{theorem}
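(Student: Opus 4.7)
The plan is to lift the super-martingale mixture argument of Theorem \ref{thm:upper_fixed} to the Hilbert-space setting of \eqref{eq:L2_linear_model}. Using Proposition \ref{prop:solution_L2} together with the decomposition $U_{n,\bsigma}\theta_* = U_n\theta_* + R(\theta_*)$, where $R(\theta_*):=\sum_{i}\sigma_i^{-2}\langle e_i,\theta_*\rangle e_i$, I would first write
\begin{equation*}
\wh\theta_{\bsigma}-\theta_* \;=\; U_{n,\bsigma}^{-1}W_n \;-\; U_{n,\bsigma}^{-1}R(\theta_*),
\end{equation*}
with $W_n:=\sum_{j=1}^n\int_S(\tI_{y^{(j)}}(t)-\langle\theta_*,\Phi_j(\cdot,t)\rangle)\Phi_j(\cdot,t)\meas(dt)\in\cL^2(\Omega,\fn)$. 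Under Scheme I the increments of $W_n$ are conditionally centered, so $W_n$ is an $\cL^2(\Omega,\fn)$-valued martingale, and the triangle inequality in $\|\cdot\|_{U_{n,\bsigma}}$ reduces the proof to bounding the two terms on the right.

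The regularization term is handled by spectral computation in the eigenbasis $\be$. Using the explicit formula for $U_{n,\bsigma}^{-1}$ from Lemma \ref{lem:U_n_sigma_properties}, one obtains
\begin{equation*}
\|U_{n,\bsigma}^{-1}R(\theta_*)\|_{U_{n,\bsigma}}^{2} \;=\; \sum_{i=1}^{\infty}\frac{|\langle e_i,\theta_*\rangle|^{2}}{\sigma_i^{2}(1+\lambda_i\sigma_i^{2})} \;\le\; \|\theta_*\|_{\bsigma,\be}^{2}.
\end{equation*}

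For the noise term I would use the method of mixtures. Set $M_n(\eta):=\exp(\langle\eta,W_n\rangle-\tfrac{1}{2}\langle\eta,U_n\eta\rangle)$ for $\eta\in\cL^2(\Omega,\fn)$. Conditional on $\mathcal F_{j-1}$, the increment $\langle\eta,W_j-W_{j-1}\rangle$ is a centered, bounded function of $y^{(j)}$ whose range has length at most $\|\langle\eta,\Phi_j(\cdot,\cdot)\rangle\|_{\cL^2(S,\meas)}$ (using $\meas(S)=1$ and $\int|f|d\meas\le\|f\|_{\cL^2(S,\meas)}$), so Hoeffding's lemma gives
\begin{equation*}
\E\!\left[e^{\langle\eta,W_j-W_{j-1}\rangle}\,\big|\,\mathcal F_{j-1}\right]\;\le\;\exp\!\left(\tfrac{1}{2}\langle\eta,(U_j-U_{j-1})\eta\rangle\right),
\end{equation*}
making each $\{M_n(\eta)\}_n$ a non-negative supermartingale with mean $1$. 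I would then mix $M_n$ against the centered Gaussian measure $\mu$ on $\cL^2(\Omega,\fn)$ whose covariance operator is diagonal in $\be$ with eigenvalues $\{\sigma_i^{2}\}$; this measure is well-defined because $\sum_i|\sigma_i|<\infty$ forces the covariance to be trace class. Computing the Gaussian integral on finite truncations $\mathrm{span}\{e_1,\ldots,e_k\}$ and passing $k\to\infty$ via Fubini and dominated convergence produces the supermartingale
\begin{equation*}
\wb M_n \;=\; \prod_{i=1}^{\infty}(1+\lambda_i\sigma_i^{2})^{-1/2}\,\exp\!\left(\tfrac{1}{2}\sum_{i=1}^{\infty}\frac{\sigma_i^{2}|\langle e_i,W_n\rangle|^{2}}{1+\lambda_i\sigma_i^{2}}\right),
\end{equation*}
whose infinite product converges because $\sum_i\lambda_i\le n\fn(\Omega)$ (Lemma \ref{lem:U_n_property}) and $\sigma_i\to 0$. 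By Lemma \ref{lem:U_n_sigma_properties} the exponent equals $\tfrac{1}{2}\|U_{n,\bsigma}^{-1}W_n\|_{U_{n,\bsigma}}^{2}$, and Doob's maximal inequality then yields, with probability at least $1-\delta$,
\begin{equation*}
\|U_{n,\bsigma}^{-1}W_n\|_{U_{n,\bsigma}}\;\le\;\sqrt{\sum_{i=1}^{\infty}\log(1+\lambda_i\sigma_i^{2})+2\log(1/\delta)}.
\end{equation*}
Combining with the regularization bound gives \eqref{eq:upperbound_L2}; the loose version \eqref{eq:upperbound_L2_loose} follows at once from $\log(1+\lambda_i\sigma_i^{2})\le\log(1+n\fn(\Omega)\sigma_i^{2})$, valid by Corollary \ref{coro:U_n_eigenvalue}.

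The main obstacle is the rigorous execution of the infinite-dimensional mixture step: constructing the Gaussian measure on $\cL^2(\Omega,\fn)$, justifying Fubini to swap the Gaussian integral with the conditional expectation so that supermartingality survives the mixing, and controlling the $k\to\infty$ limit of the truncated Gaussian integrals so that the product form of $\wb M_n$ emerges unchanged. This is precisely where the pointwise constraint $|\sigma_i|<1/\sqrt{\lambda_i}$ — equivalently $\lambda_i\sigma_i^{2}<1$ — is expected to enter, ensuring integrability of the truncated integrands and uniform control of the limit. Once these measure-theoretic details are in place, the remainder is spectral algebra in the basis $\be$.
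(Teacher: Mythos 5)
Your proposal follows the same architecture as the paper's proof (error decomposition, method of mixtures against a trace-class Gaussian, Doob's maximal inequality, truncation-plus-DCT to identify the mixed supermartingale), and your spectral computations of $\|U_{n,\bsigma}^{-1}W_n\|_{U_{n,\bsigma}}^2$ and of the regularization term are correct. You have also correctly identified that the condition $|\sigma_i|<1/\sqrt{\lambda_i}$ is needed to make the dominating function of the truncated Gaussian integrals integrable.

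There is, however, a genuine gap in the first step. You write the decomposition $U_{n,\bsigma}\theta_* = U_n\theta_* + R(\theta_*)$ with $R(\theta_*) := \sum_i \sigma_i^{-2}\langle e_i,\theta_*\rangle e_i$, and then subtract to get $\wh\theta_\bsigma - \theta_* = U_{n,\bsigma}^{-1}W_n - U_{n,\bsigma}^{-1}R(\theta_*)$. For this to be a valid identity between elements of $\cL^2(\Omega,\fn)$, you need $R(\theta_*)\in\cL^2(\Omega,\fn)$, which is equivalent to $\sum_i |\langle e_i,\theta_*\rangle|^2/\sigma_i^4 < \infty$, i.e., $\theta_*\in\cL^2_{\bsigma}(\Omega,\fn)$. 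But the theorem assumes only $\theta_*\in\cH_{\bsigma,\be}$, i.e., $\sum_i |\langle e_i,\theta_*\rangle|^2/\sigma_i^2 < \infty$, and since $\sigma_i\to 0$, $\cL^2_{\bsigma}(\Omega,\fn)$ is in general a \emph{proper} subspace of $\cH_{\bsigma,\be}$ (take, e.g., $\langle e_i,\theta_*\rangle = \sigma_i^{3/2}$: then $\sum_i\langle e_i,\theta_*\rangle^2/\sigma_i^2 = \sum_i\sigma_i<\infty$ but $\sum_i\langle e_i,\theta_*\rangle^2/\sigma_i^4 = \sum_i 1/\sigma_i=\infty$). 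So for a general $\theta_*$ covered by the theorem, $R(\theta_*)$ need not exist in $\cL^2(\Omega,\fn)$, and the decomposition as you wrote it is not defined. The paper addresses this explicitly by working with the finite-rank truncations $\varsigma_m\theta_* := \sum_{i=1}^m \sigma_i^{-2}\langle e_i,\theta_*\rangle e_i$ and the sequence $\theta_{*,m} := U_{n,\bsigma}^{-1}\bigl(U_n\theta_* + \varsigma_m\theta_*\bigr)$, proving $\|\theta_{*,m}-\theta_*\|_{U_{n,\bsigma}}\to 0$, bounding $\|\wh\theta_\bsigma - \theta_{*,m}\|_{U_{n,\bsigma}}\le \|W_n\|_{U_{n,\bsigma}^{-1}} + \|\varsigma_m\theta_*\|_{U_{n,\bsigma}^{-1}}$, and passing to the limit $m\to\infty$. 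Your final inequality for the regularization term, $\sum_i \frac{|\langle e_i,\theta_*\rangle|^2}{\sigma_i^2(1+\lambda_i\sigma_i^2)}\le\|\theta_*\|_{\bsigma,\be}^2$, is exactly the paper's limit, but it needs to be reached via this truncation rather than via the invalid intermediate object $R(\theta_*)$. You should either reformulate the decomposition in terms of $\theta_* - U_{n,\bsigma}^{-1}U_n\theta_*$ (which is always well-defined and equals $\sum_i\frac{\langle e_i,\theta_*\rangle}{1+\lambda_i\sigma_i^2}e_i$ by a direct eigenbasis computation) or reproduce the paper's truncation-and-limit argument.
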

The detailed proof of Theorem \ref{thm:upper_inf_dim} is provided in Appendix \ref{Sec:Proofs_upper_inf}.
Since $\sum_{i=1}^\infty|\sigma_i|<\infty$ and $\theta_*\in\cH_{\bsigma,\be}$, we have that $\|\theta_*\|_{\bsigma,\be}<\infty$ and $\sum_{i=1}^\infty|\sigma_i|^2<\infty$ which implies that
\[\sum_{i=1}^\infty\log\left(1+\lambda_i\sigma_i^2\right)\le \sum_{i=1}^\infty\log\left(1+n\fn(\Omega)\sigma_i^2\right)<\infty.\]
Thus, the RHS terms in \eqref{eq:upperbound_L2} and \eqref{eq:upperbound_L2_loose} are finite and $\wh\theta_{\bsigma}-\theta_*\in\cL^2_{\bsigma}(\Omega,\fn)$. 
\eqref{eq:upperbound_L2_loose} conveys that with high probability,
$$
\|\wh\theta_{\bsigma}-\theta_*\|_{U_{n,\bsigma}}\le \wt O\left(1+\sqrt{\sum_{i=1}^\infty\log(1+n\fn(\Omega)\sigma_i^2)}\right).
$$ 
When $\Omega=[d]$ for some $d\in\N$ and $\fn$ is the counting measure on $\Omega$, \eqref{eq:estimator_L2} reduces to \eqref{eq:estimator_general} after setting $e_i=\indi_{\{i\}}$ and $\sigma_i=\frac{1}{\sqrt{\lambda}}$ for any $i\in[d]$ and some $\lambda>0$. 
Then, by \eqref{eq:U_n_sigma_norm} and \eqref{eq:upperbound_L2_loose}, we have $\|\wh\theta_{\bsigma}-\theta_*\|_{U_n}\le \wt O\big(\sqrt{d}\big)$ and
\[ \|\wh\theta_{\bsigma}-\theta_*\|_{U_n}\le \wt O\big(\sqrt{d/(1+\mu_{\min}(U_n))}\big),\] 
which also recovers the result in Theorem \ref{thm:upper_fixed}. Thus, Theorem \ref{thm:upper_inf_dim} is a generalization of Theorem \ref{thm:upper_fixed} for the possibly infinite dimensional model \eqref{eq:L2_linear_model}. 

The proof of Theorem \ref{thm:upper_inf_dim} generalizes the approach used in the proof of Theorem \ref{thm:upper_fixed} to the setting of the infinite dimensional model \eqref{eq:L2_linear_model}. However, there are plenty of technical challenges in dealing with the infinite dimensional $\cL^2$ space. 
First of all, since the vectors in the proof of Theorem \ref{thm:upper_fixed} are generalized to functions and the matrices are generalized to operators, we need to ensure that these functions are well-defined in some proper spaces and figure out the domain/codomain and properties (e.g., linearity, boundedness, self-adjointness, positivity, compactness, invertibility, etc) of those operators. 
As in the proof of Theorem \ref{thm:upper_fixed}, we would like to write $\wh\theta_{\bsigma}-\theta_*=U_{n,\bsigma}^{-1}W_n-U_{n,\bsigma}^{-1}(\varsigma\theta_*)$ where,
\[W_n:=\sum_{j=1}^n\int_S\tI_{y^{(j)}}(t)\Phi_j(\omega,t)\meas(dt)-\int_S\Psi_j(\theta_*,t)\Phi_j(\omega,t)\meas(dt),\]
and $\varsigma\theta_*:=\sum_{i=1}^\infty\frac{\langle e_i,\theta_*\rangle}{\sigma_i^2}e_i$. However, this sequence $\{\sum_{i=1}^m\frac{\langle e_i,\theta_*\rangle}{\sigma_i^2}e_i\}_{m\in\N}$ only converges for $\theta_*\in\cL^2_{\bsigma}(\Omega,\fn)$ but not $\cH_{\bsigma,\be}$. Thus, for general $\theta_*\in\cH_{\bsigma,\be}$, $\varsigma\theta_*$ does not exist and we instead consider the finite-rank operator 
$
\varsigma_m:\theta\mapsto\sum_{i=1}^m\frac{\langle e_i,\theta\rangle}{\sigma_i^2}e_i
$ 
on $\cL^2(\Omega,\fn)$ and the sequence $\{\theta_{*,m}:=U_{n,\bsigma}^{-1}(U_n\theta_*+\varsigma_m\theta_*)\}_{m\in\N}$ which we show satisfies $\|\theta_{*,m}-\theta_*\|_{U_{n,\bsigma}}\rightarrow0$ as $m\rightarrow\infty$. 
Then, since 
it suffices to bound
\[\|\wh\theta_{\bsigma}-\theta_{*,m}\|_{U_{n,\bsigma}}\le \|U_{n,\bsigma}^{-1}W_n\|_{U_{n,\bsigma}}
+\|U_{n,\bsigma}^{-1}\varsigma_m\theta_*\|_{U_{n,\bsigma}}\le \|W_n\|_{U_{n,\bsigma}^{-1}}+\|\theta_*\|_{\bsigma,\be}.\]
To bound $\|W_n\|_{U_{n,\bsigma}^{-1}}$, we use the martingale approach as in the proof of Theorem \ref{thm:upper_fixed}. However, after proving that 
$
\big\{M_n(\alpha):=\exp\big(\langle\alpha, W_n\rangle-\frac{1}{2}\|\alpha\|^2_{U_n}\big)\big\}_{n\ge0}
$
is a super-martingale for any $\alpha\in\cL^2(\Omega,\fn)$ wrt the natural filtration
$
\{\F_n:=\sigma(x_1,y_1,\dots,x_n,y_n,x_{n+1})\}_{n\ge0},
$
it is difficult to pick a properly defined ``Gaussian'' random variable in $\cL^2(\Omega,\fn)$. 
Inspired by \citet[Example 2.2]{lifshits2012lectures}, we define $\beta=\sum_{i=1}^\infty\sigma_i\zeta_i e_i$ with $\{\zeta_i\}_{i\in\N}$ being a sequence of independent $N(0,1)$-random variables.
Note that $\beta\in\cL^2(\Omega,\fn)$ a.s. if $\sum_{i=1}^\infty\sigma_i^2<\infty$. 
Thus, we can define $\wb M_n:=\E[M_n(\beta)|\F_\infty]$ with $\F_{\infty}:=\sigma(\cup_{n=1}^\infty\F_n)$. Then, we prove that $\{M_n\}_{n\ge 0}$ is also a super-martingale wrt $\{\F_n\}_{n\ge 0}$ and the question remained is to calculate $M_n$. However, directly generalizing \eqref{eq:W_n_alpha_equality}, we would get 
$$
``\|W_{n}\|^2_{U_{n,\bsigma}^{-1}}-\|\beta-U_{n,\bsigma}^{-1}W_{n}\|^2_{U_{n,\bsigma}}=2\langle\beta,W_{n}\rangle-\|\beta\|^2_{U_{n,\bsigma}}"
$$ which does not make sense because $\|\beta\|_{U_{n,\bsigma}}$ could be $\infty$ with positive probability. Since it is hard to deal with this in the integration over the the law of $\beta$, we instead adopt the similar approach as we do for $\theta_*$. Define $\beta_m:=\sum_{i=1}^m \sigma_i\zeta_i e_i$ and $W_{n,m}:=\sum_{i=1}^m\langle e_i,W_n\rangle e_i$. 
Then, after some calculation, we get 
\begin{align*}
&\|W_{n,m}\|^2_{U_{n,\bsigma}^{-1}}-\|\beta_m-U_{n,\bsigma}^{-1}W_{n,m}\|^2_{U_{n,\bsigma}}=2\langle\beta_m,W_{n,m}\rangle-\|\beta_m\|^2_{U_{n,\bsigma}}
\ \textup{and} 
\\&
\E[\exp(H_m)|\F_{\infty}]=\frac{1}{\sqrt{\prod_{i=1}^m(1+\lambda_i\sigma_i^2)}}\exp\left(\frac{1}{2}\|W_{n,m}\|^2_{U_{n,\bsigma}^{-1}}\right),
\end{align*}
where $\exp(H_m):=\exp\left\{\langle\beta_m,W_{n,m}\rangle-\frac{1}{2}\|\beta_m\|^2_{U_{n}}\right\}$.
Afterwards, we use dominated convergence theorem to conclude that,
\[\lim_{m\rightarrow\infty}\E[\exp(H_m)|\F_{\infty}]=\E[M_n|\F_{\infty}]=\wb M_n,\ a.s..\]
The verification the integrability of the dominating function 
$
\exp\Big(n\sum_{i=1}^\infty|\sigma_i\zeta_i|+\frac{1}{2}\sum_{i=1}^\infty\lambda_i\sigma_i^2\zeta_i^2\Big)
$
is also quite technical, during which the condition that $\sum_{i=1}^\infty|\sigma_i|<\infty$ is used. 
Finally, we obtain that
$
\wb M_n=
\frac{1}{\sqrt{\prod_{i=1}^\infty(1+\lambda_i\sigma_i^2)}}\exp\left(\frac{1}{2}\|W_{n}\|^2_{U_{n,\bsigma}^{-1}}\right).
$
Then, by applying \emph{Doob's maximal inequality} for super-martingales, we can bound $\|W_{n}\|^2_{U_{n,\bsigma}^{-1}}$ which yields the final bound on $\|\wh\theta_{\bsigma}-\theta_{*}\|_{U_{n,\bsigma}}$ in \eqref{eq:upperbound_L2}. 
\eqref{eq:upperbound_L2_loose} immediately follows from \eqref{eq:upperbound_L2} and Corollary \ref{coro:U_n_eigenvalue}.

\section{Numerical studies} \label{Sec:Numerical simulations}
In this section, we demonstrate the scaling of estimation errors of the proposed estimator empirically in our synthetic data experiments in Section \ref{sec:synthetic_data} and illustrate the practical utility of the proposed estimator in our real data experiments in Section \ref{sec:real_data}.

\subsection{Synthetic data experiments} \label{sec:synthetic_data}
This section contains the experimental results on discrete and continuous synthetic data.

\paragraph{Bernoulli data experiments.}
To illustrate that our estimator \eqref{eq:estimator_general} achieves the $\ell^2$-error rate of $\wt\Theta(\sqrt{d/(1+\mu_{\min}(U_n))})$ in the estimation of $\theta$ under model \eqref{eq:linear_model}, we consider the Bernoulli data generated according to the hard instance used to show the lower bound in the proof of Theorem \ref{thm:lower_fixed} in Appendix \ref{sec:LB_fixed}. 
Specifically, after choosing a true parameter $\theta_*\in\Delta^{d-1}$ of dimension $d\in\N$, for any $j\in\N$, we set $\phi_i(x_j,\cdot)$ as the CDF of $\ber(p_{ji})$ for $i\in[d]$, where $p_{j}:=[p_{j1},\dots,p_{jd}]^\top\in[0,1]^d$ is defined as follows.
When $j\in[d]$, we set $p_{ji}=1-\frac{c_j}{2d^3}-\frac{c_j\indi\{i=j\}}{2d^3}$; when $j>d$, we set $p_{ji}=1-\frac{c_j\mu_{\min}(R_{j-1})}{2d^2}-\frac{c_j\mu_{\min}(R_{j-1})\indi\{i=(j\mod d)\}}{2d^2}$, where mod denotes the modulo operation, $c_j$'s are constants independent of $d$, and 
$
R_j:=q_jq_j^\top+\frac{1}{n}\sum_{k=1}^{j-1}q_kq_k^\top
$
for any $j\ge d$ with $q_j:=[1-p_{j1},\dots,1-p_{jd}]^\top$. 
Then, we sample $y_j$ independently from $\ber(\theta_*^\top p_j)$ whose \CDF is $\theta_*^\top\Phi_j$. 
Given $n$ samples, we calculate $\wh\theta_{\lambda}$ using different values of $\lambda$ according to \eqref{eq:estimator_L2} with $S=[0,1]$ and $\meas=\Leb([0,1])$. We evaluate the performance using the un-normalized $\ell^2$-error $\|\wh\theta_{\lambda}-\theta_*\|$, the self-normalized error $\|\wh\theta_{\lambda}-\theta_*\|_{U_n(\lambda)}$, and the KS distance $\KS(\wh F_{\lambda}(x,\cdot),F(x,\cdot))$ (for KS distance, we consider the family of Bernoulli distributions with parameters in $[1-\frac{1}{d^2},1-\frac{1}{2d^2}]$ to align with the setting of $p_j$ in data generation). 
We repeat the experiments 100 times to calculate means and 90\% confidence intervals of the errors. 

We first study the dependence of estimation errors of our estimator \eqref{eq:estimator_general} on sample size $n$ with the dimension $d=5$. Specifically, for $\lambda=0.001$, 0.1, and 10, we run the experiments with $n$ ranging from $10^4$ to $10^6$ and plot the means and 90\% confidence intervals of the errors against $n$ (both in logarithmic scale) in Figure \ref{fig:Ber_l2}. 
According to Figure \ref{fig:Ber_l2}, for different values of $\lambda$, the slopes of the curves of $\log\|\wh\theta_{\lambda}-\theta_*\|$, $\log\KS(\wh F_{\lambda}(x,\cdot),F(x,\cdot))$, and $\log\|\wh\theta_{\lambda}-\theta_*\|_{U_n(\lambda)}$ against $\log n$ are around $-0.5$, $-0.5$, and 0.025, which obeys the $\wt \Theta(\sqrt{d/n})$, $\wt O(d/\sqrt{n})$ (assuming $\mu_{\min}(U_n)$ grows linearly with $n$), and $O(\sqrt{d\log(1+n/\lambda)})$ upper bounds on the errors $\|\wh\theta_{\lambda}-\theta_*\|$, $\KS(\wh F_{\lambda}(x,\cdot),F(x,\cdot))$, and $\|\wh\theta_{\lambda}-\theta_*\|_{U_n(\lambda)}$ respectively according to Theorem \ref{thm:upper_fixed} and \ref{thm:lower_fixed}.

\begin{figure}
    \centering
    \begin{subfigure}{0.27\linewidth}
    \centering
    \includegraphics[width=1.1\linewidth]{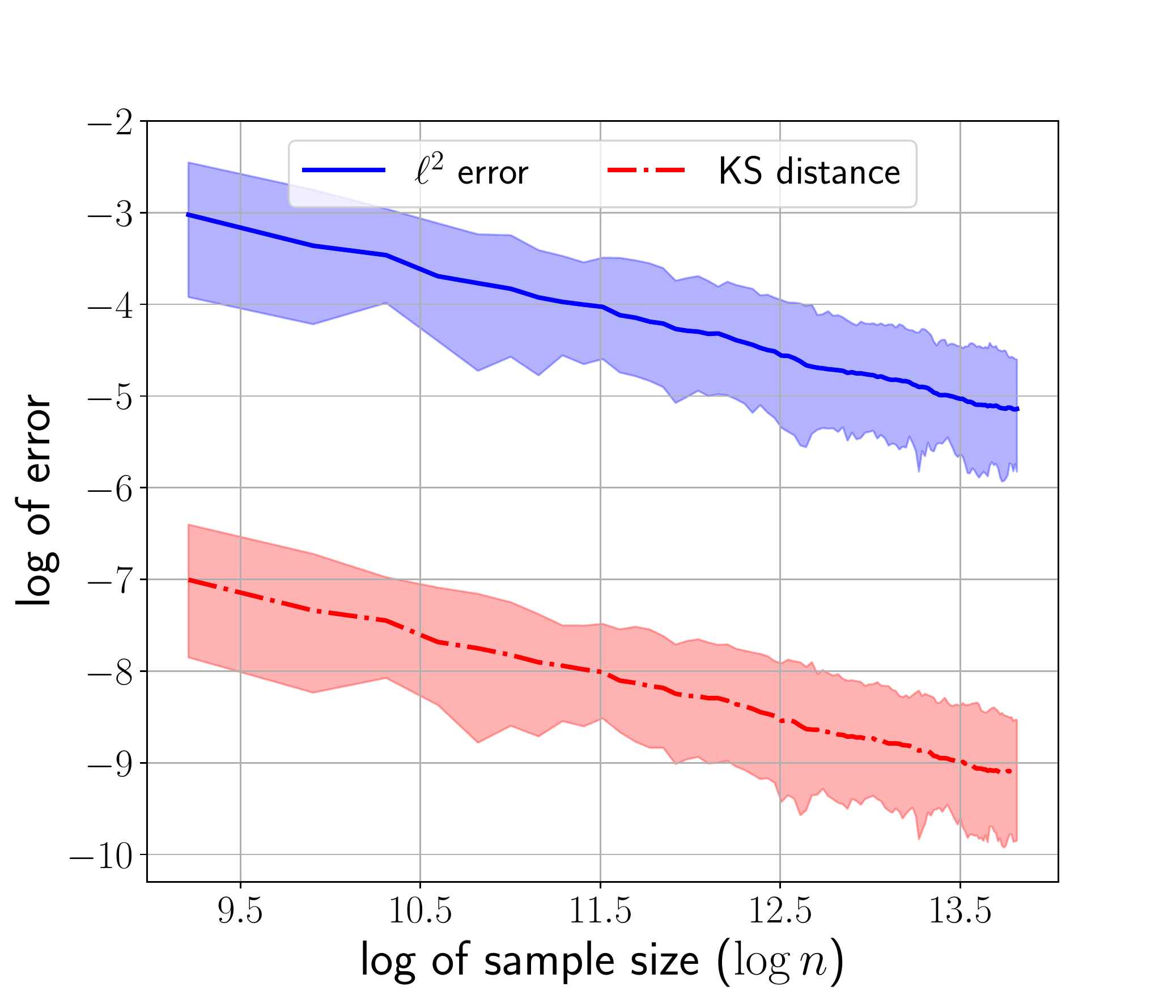}
    \caption{$\lambda=0.001$}
    \label{fig:Ber_l2_0001}
    \end{subfigure}
    \begin{subfigure}{0.27\linewidth}
    \centering
    \includegraphics[width=1.1\linewidth]{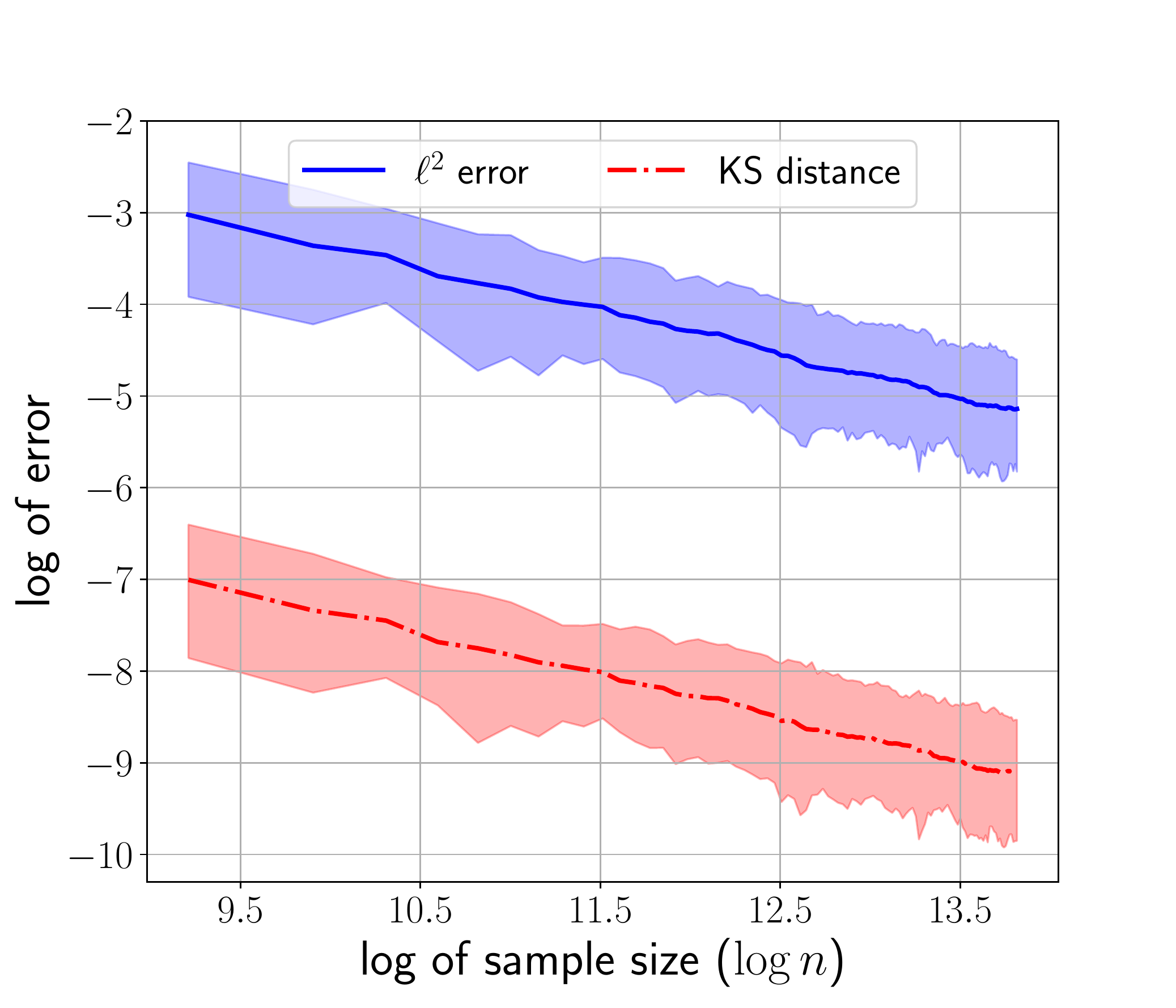}
    \caption{$\lambda=0.1$}
    \label{fig:Ber_l2_01}
    \end{subfigure}
    \begin{subfigure}{0.27\linewidth}
    \centering
    \includegraphics[width=1.1\linewidth]{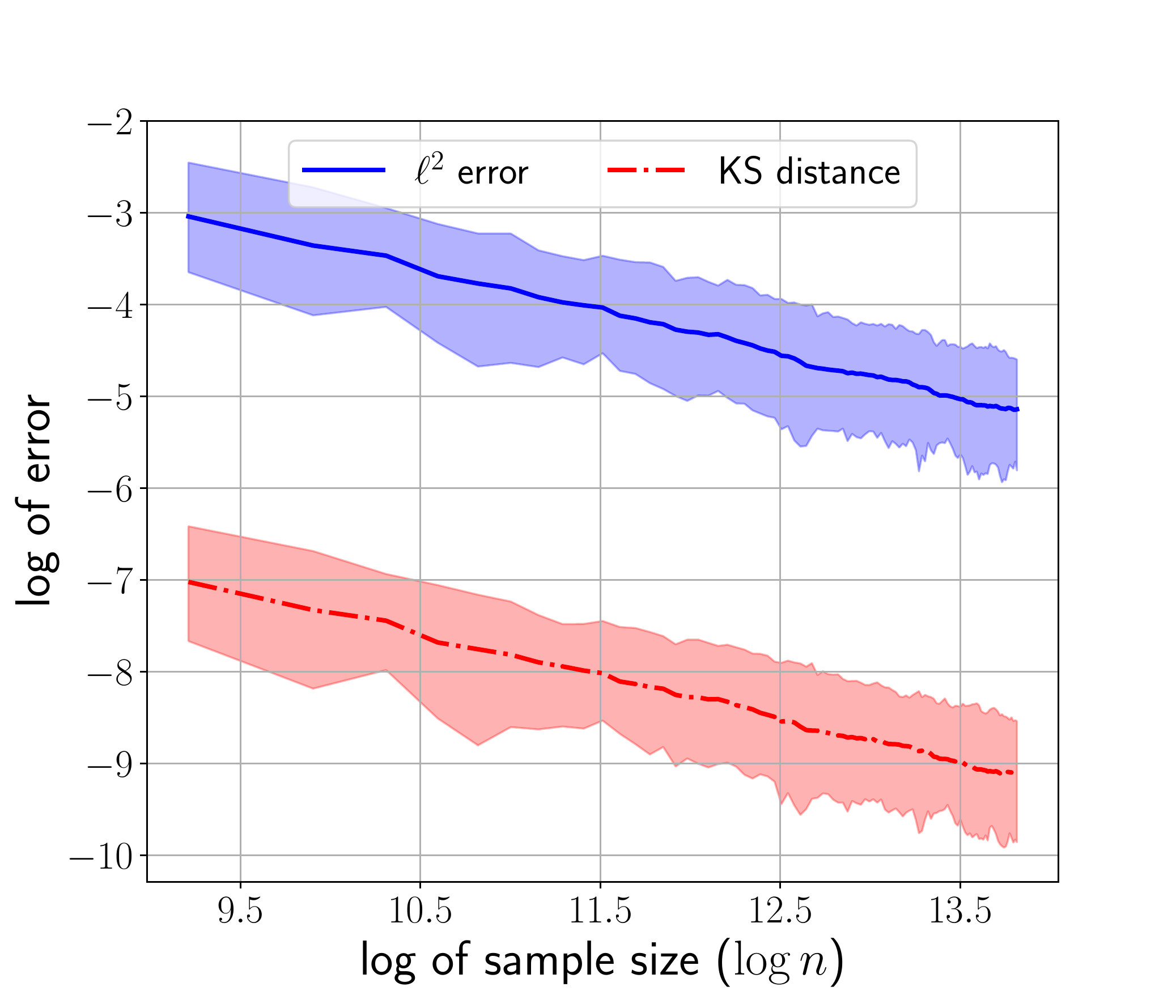}
    \caption{$\lambda=10$}
    \label{fig:Ber_l2_10}
    \end{subfigure}
    \begin{subfigure}{0.27\linewidth}
    \centering
    \includegraphics[width=1.1\linewidth]{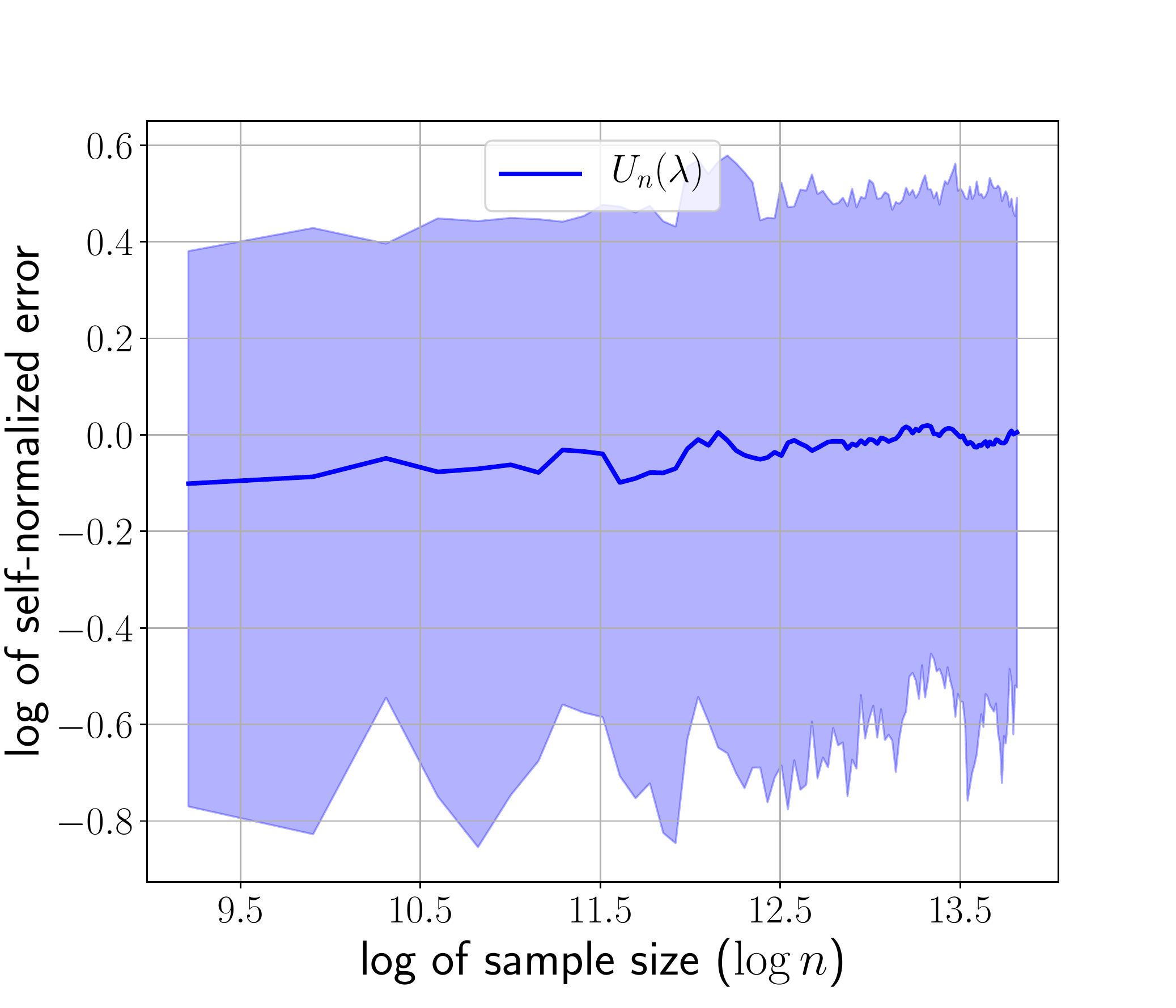}
    \caption{$\lambda=0.001$}
    \label{fig:Ber_sn_0001}
    \end{subfigure}
    \begin{subfigure}{0.27\linewidth}
    \centering
    \includegraphics[width=1.1\linewidth]{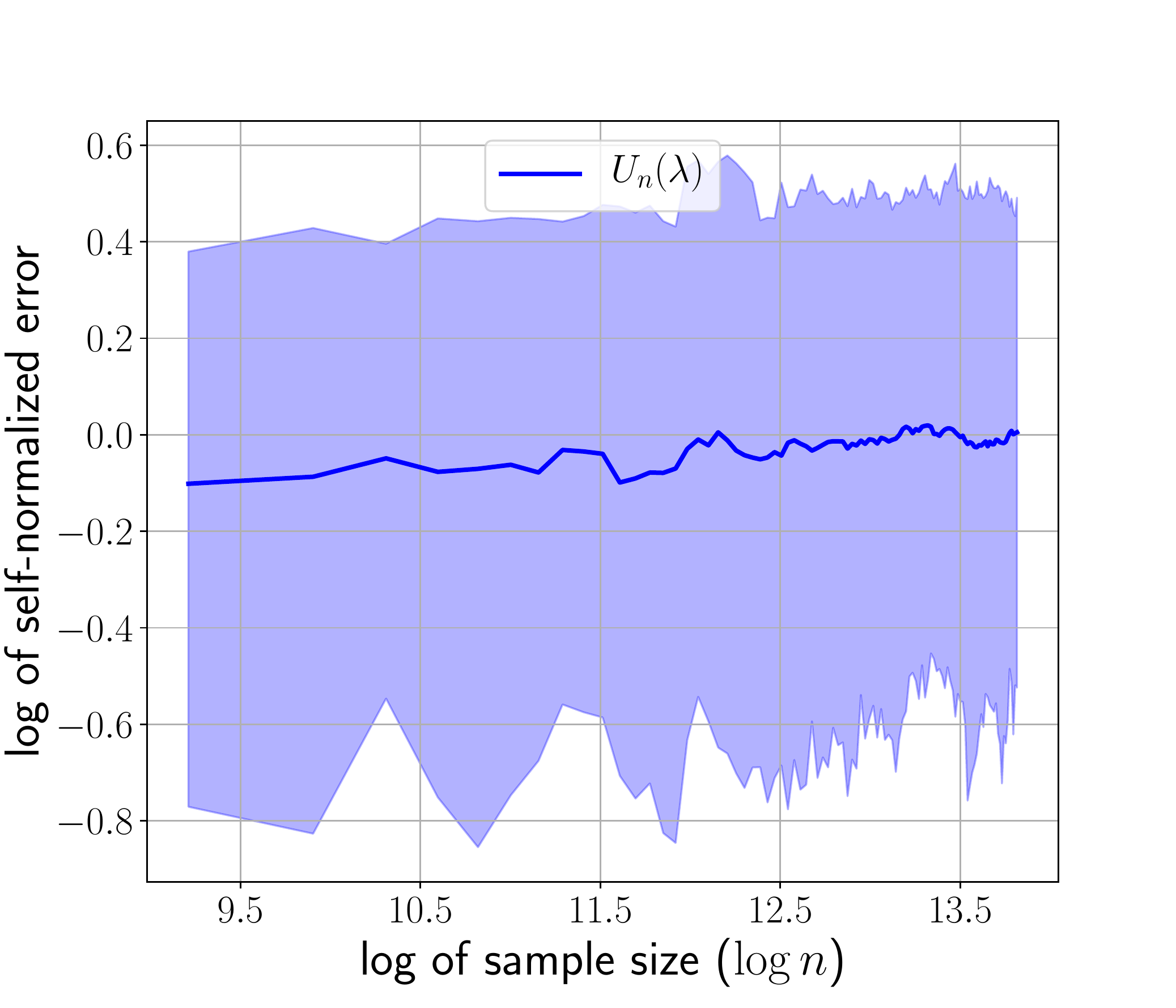}
    \caption{$\lambda=0.1$}
    \label{fig:Ber_sn_01}
    \end{subfigure}
    \begin{subfigure}{0.27\linewidth}
    \centering
    \includegraphics[width=1.1\linewidth]{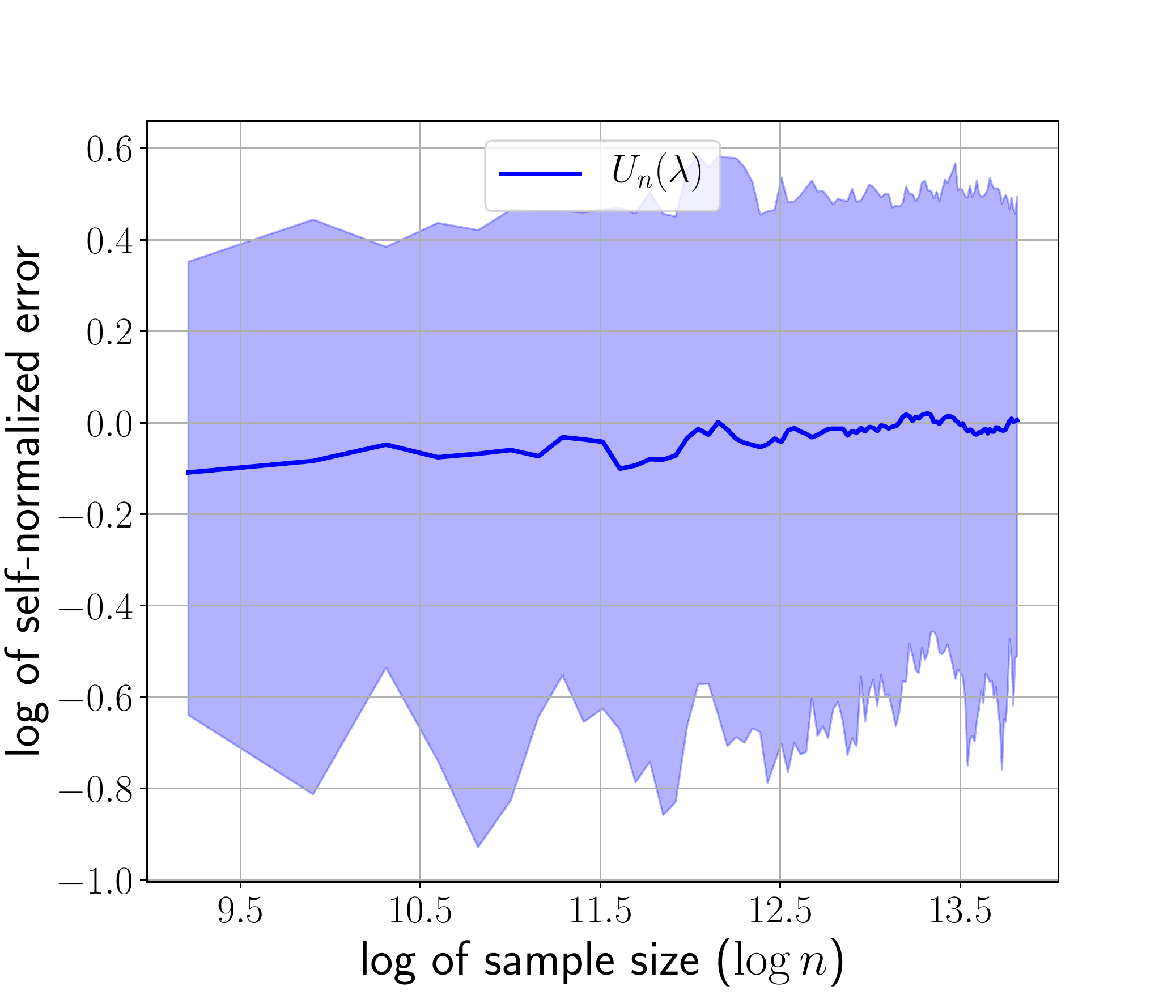}
    \caption{$\lambda=10$}
    \label{fig:Ber_sn_10}
    \end{subfigure}
\caption{Means and 90\% confidence intervals of un-normalized $\ell^2$-errors $\|\wh\theta_{\lambda}-\theta_*\|$, KS distances $\KS(\wh F_{\lambda}(x,\cdot),F(x,\cdot))$, and self-normalized errors $\|\wh\theta_{\lambda}-\theta_*\|_{U_n(\lambda)}$ against sample size $n$ in logarithmic scale in Bernoulli synthetic data experiments.}
\label{fig:Ber_l2}
\end{figure}

Then, we study the dependence of estimation errors of estimator \eqref{eq:estimator_general} on dimension $d$ with the sample size $n=10^6$. For $\lambda=0.001$, 0.1, and 10, we run the experiments with $d$ ranging from $10$ to $100$. Then, we plot the means and 90\% confidence intervals of $\log\|\wh\theta_{\lambda}-\theta_*\|$ and $\log\KS(\wh F_{\lambda}(x,\cdot),F(x,\cdot))$ against $\log d-\log\mu_{\min}(U_n(\lambda))$ as well as $\log\|\wh\theta_{\lambda}-\theta_*\|_{U_n(\lambda)}$ against $\log d$ in Figure \ref{fig:Ber_l2_d}. 
According to Figure \ref{fig:Ber_l2_d}, for different values of $\lambda$, the slopes of the curves of $\log\|\wh\theta_{\lambda}-\theta_*\|$ and $\log\KS(\wh F_{\lambda}(x,\cdot),F(x,\cdot))$ against $\log d-\log\mu_{\min}(U_n(\lambda))$ are around 0.5 and $-0.5$ respectively, and the slopes of the curves of $\log\|\wh\theta_{\lambda}-\theta_*\|_{U_n(\lambda)}$ against $\log d$ are around 0.5. These results also obey the $\wt\Theta(\sqrt{d/(1+\mu_{\min}(U_n))})$, $\wt O(d/\sqrt{(1+\mu_{\min}(U_n))}\})$ , and $O(\sqrt{d\log(1+n/\lambda)})$ upper bounds on the errors $\|\wh\theta_{\lambda}-\theta_*\|$, $\KS(\wh F_{\lambda}(x,\cdot),F(x,\cdot))$, and $\|\wh\theta_{\lambda}-\theta_*\|_{U_n(\lambda)}$ respectively according to Theorem \ref{thm:upper_fixed} and \ref{thm:lower_fixed}. 

\begin{figure}
    \centering
    \begin{subfigure}{0.27\linewidth}
    \centering
    \includegraphics[width=1.1\linewidth]{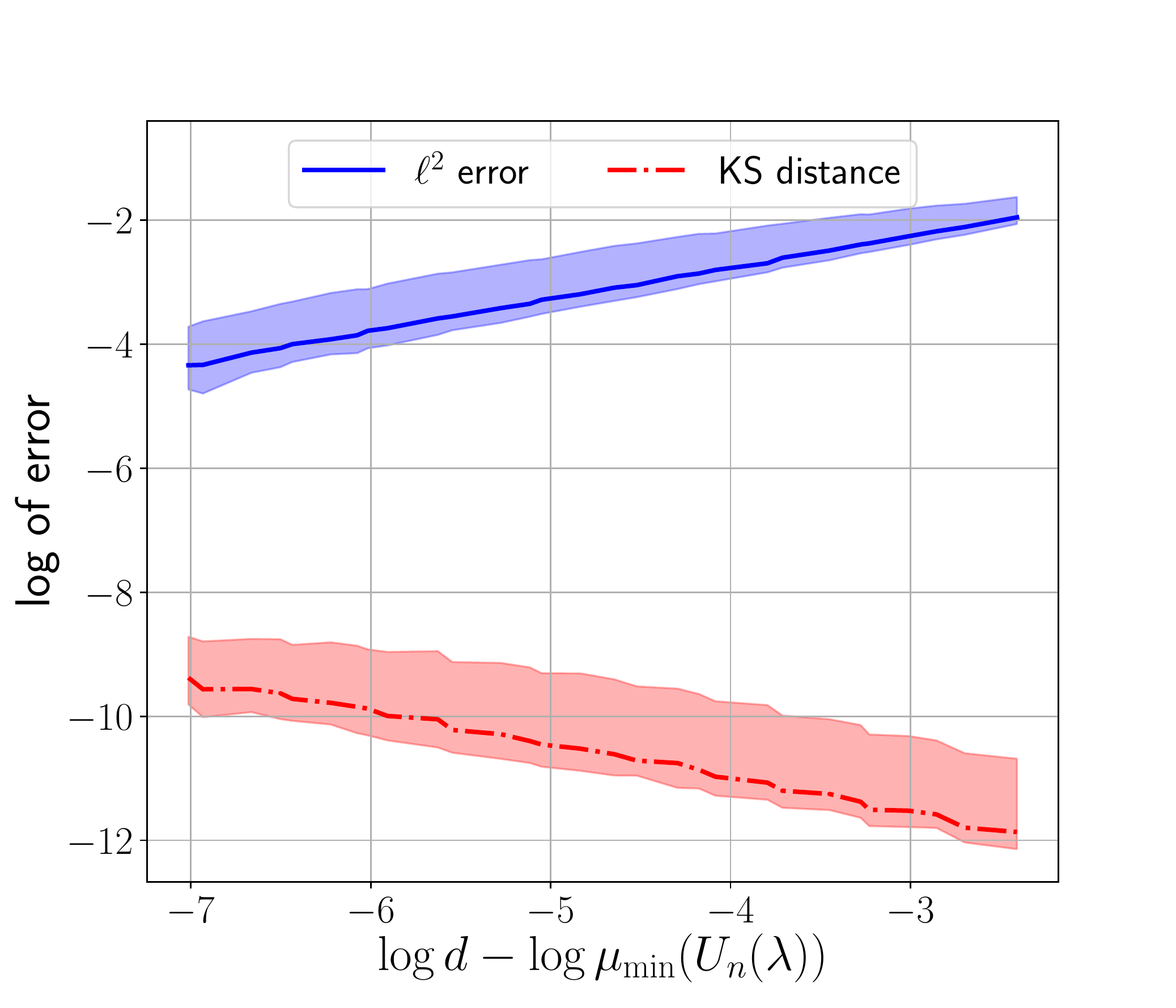}
    \caption{$\lambda=0.001$}
    \label{fig:Ber_l2_d_0001}
    \end{subfigure}
    \begin{subfigure}{0.27\linewidth}
    \centering
    \includegraphics[width=1.1\linewidth]{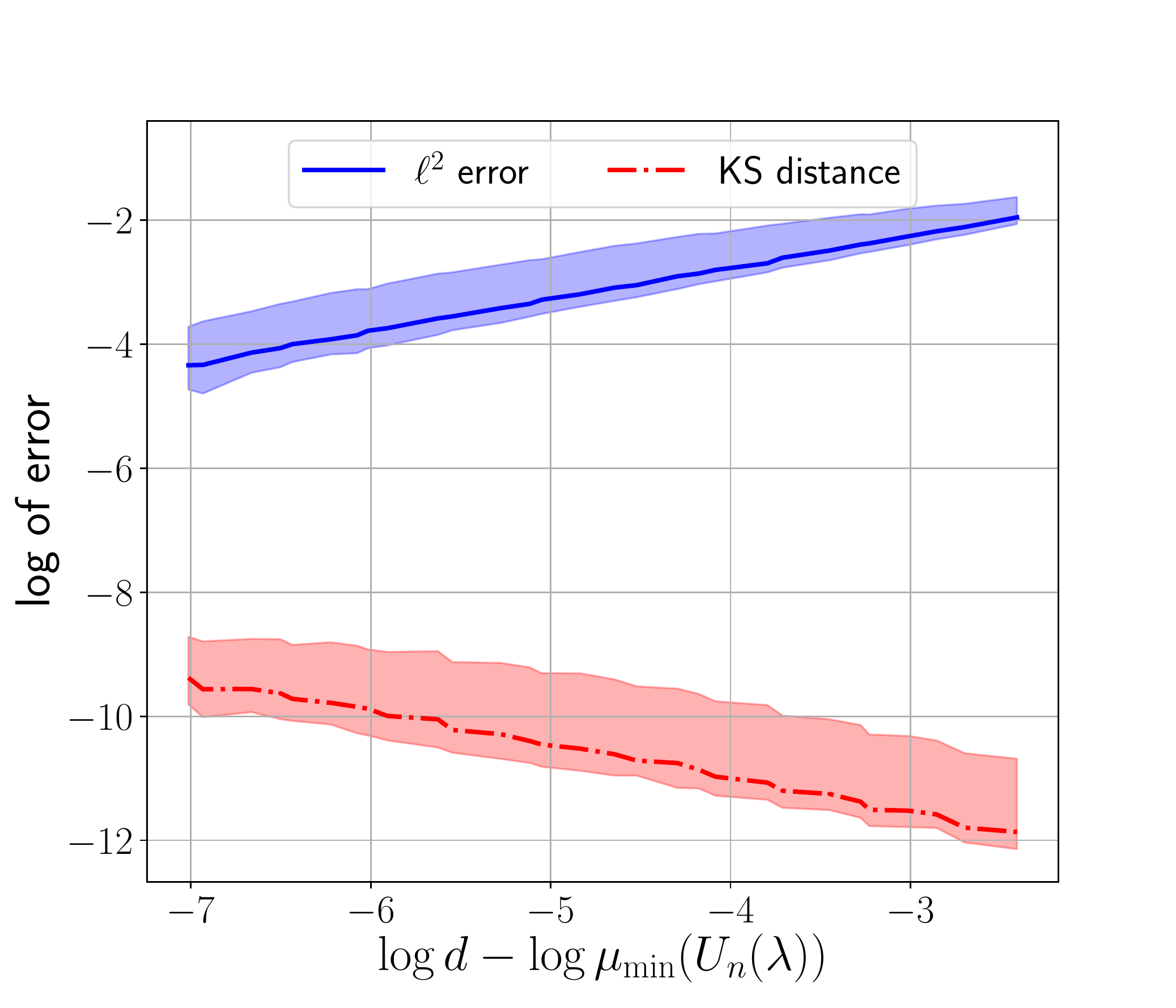}
    \caption{$\lambda=0.1$}
    \label{fig:Ber_l2_d_01}
    \end{subfigure}
    \begin{subfigure}{0.27\linewidth}
    \centering
    \includegraphics[width=1.1\linewidth]{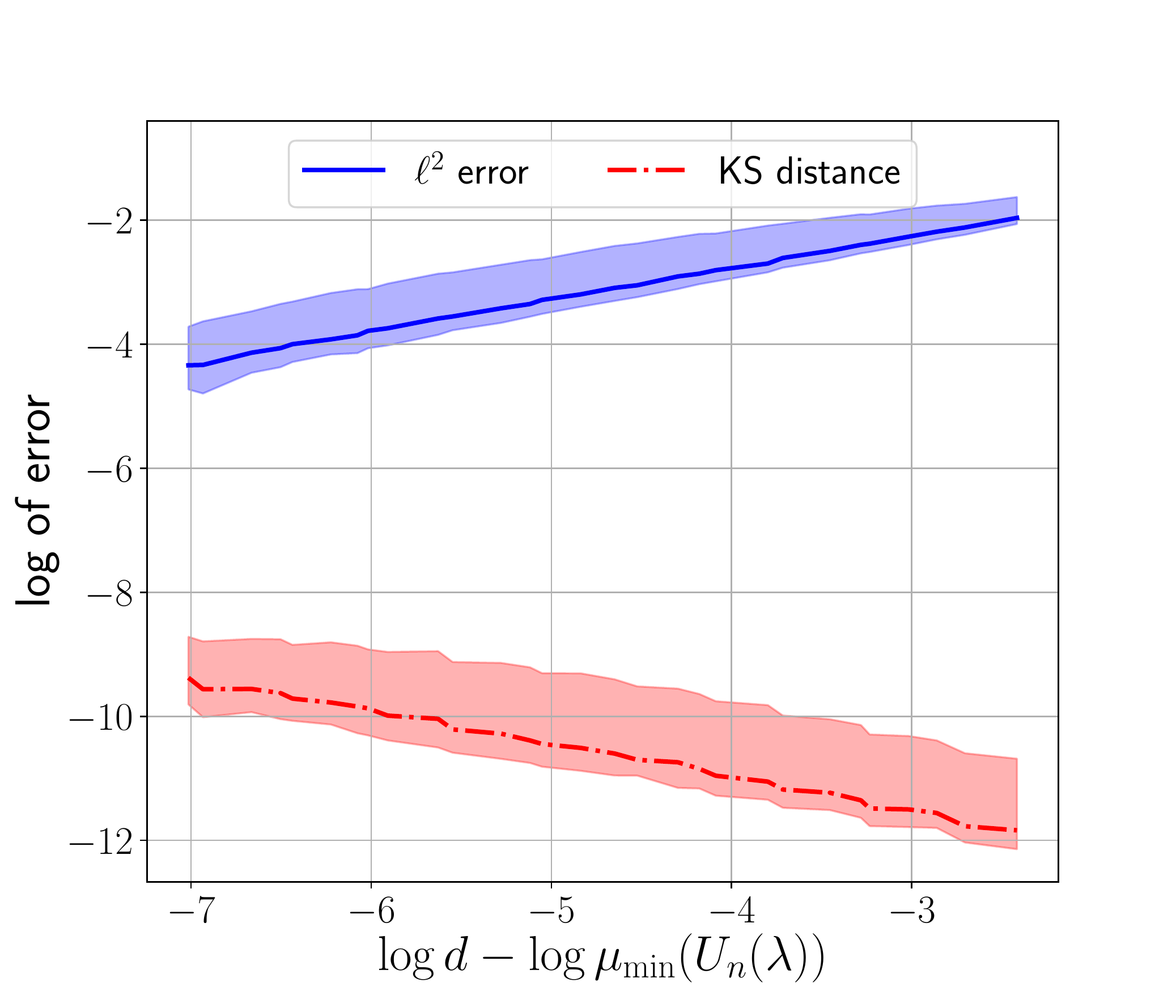}
    \caption{$\lambda=10$}
    \label{fig:Ber_l2_d_10}
    \end{subfigure}
    \begin{subfigure}{0.27\linewidth}
    \centering
    \includegraphics[width=1.1\linewidth]{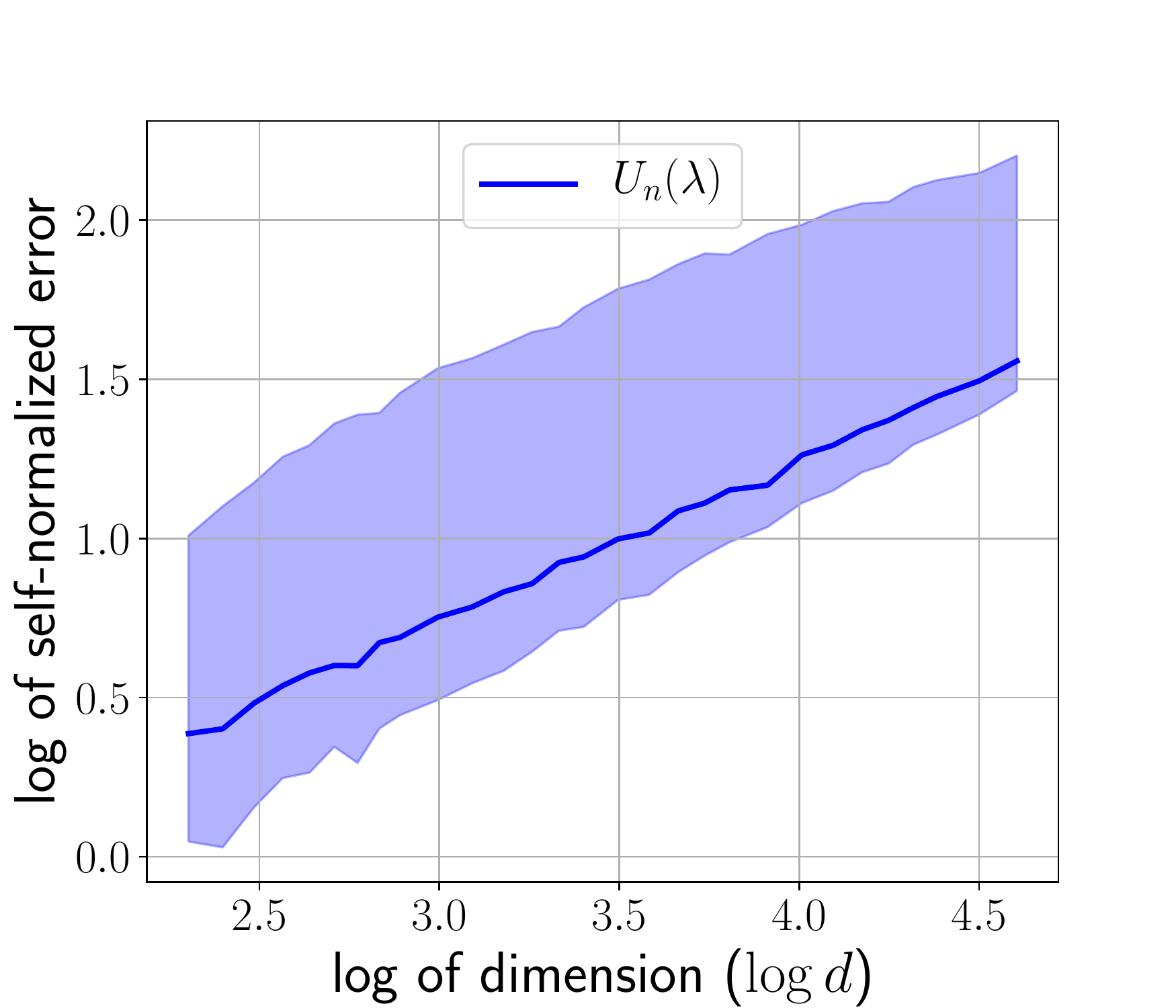}
    \caption{$\lambda=0.001$}
    \label{fig:Ber_sn_d_0001}
    \end{subfigure}
    \begin{subfigure}{0.27\linewidth}
    \centering
    \includegraphics[width=1.1\linewidth]{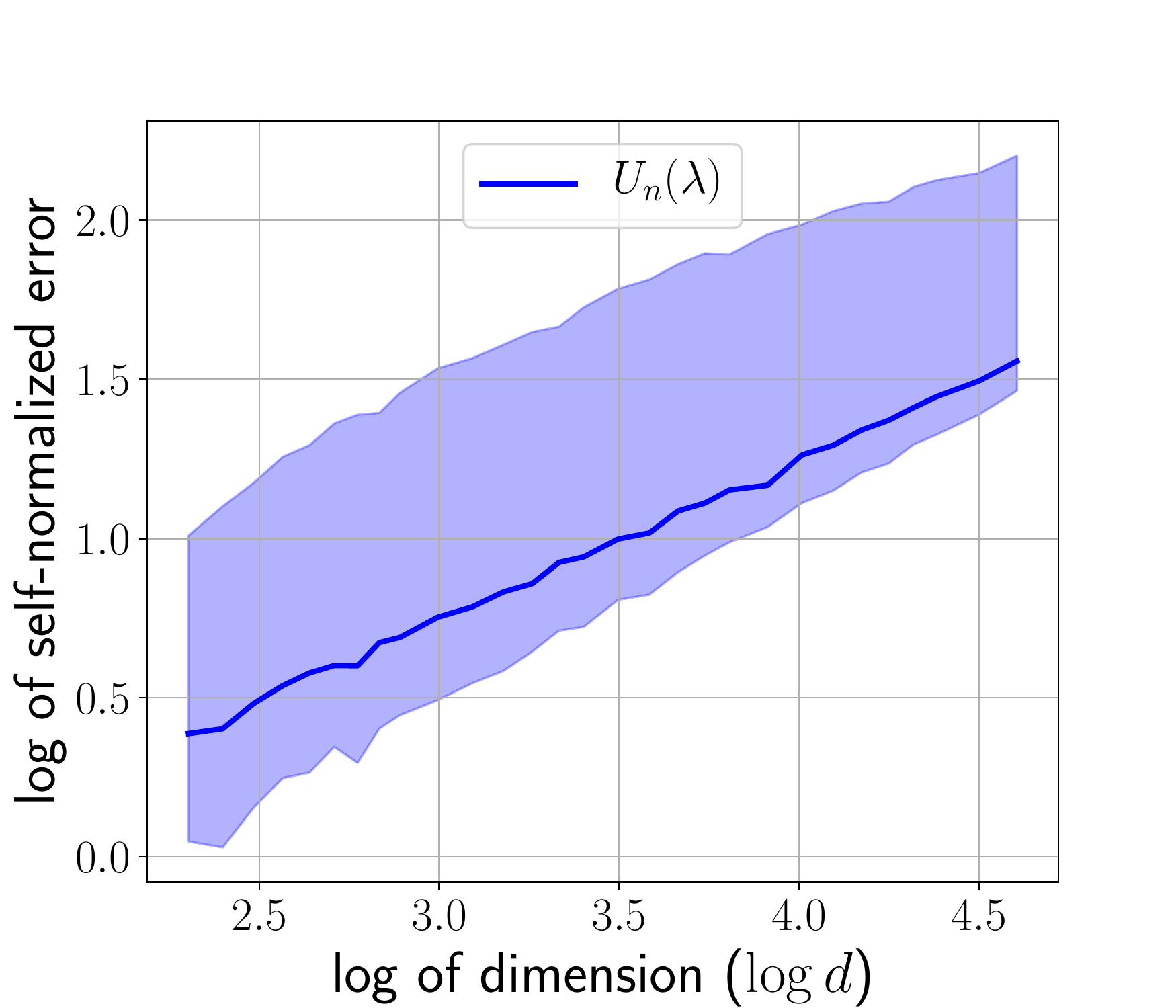}
    \caption{$\lambda=0.1$}
    \label{fig:Ber_sn_d_01}
    \end{subfigure}
    \begin{subfigure}{0.27\linewidth}
    \centering
    \includegraphics[width=1.1\linewidth]{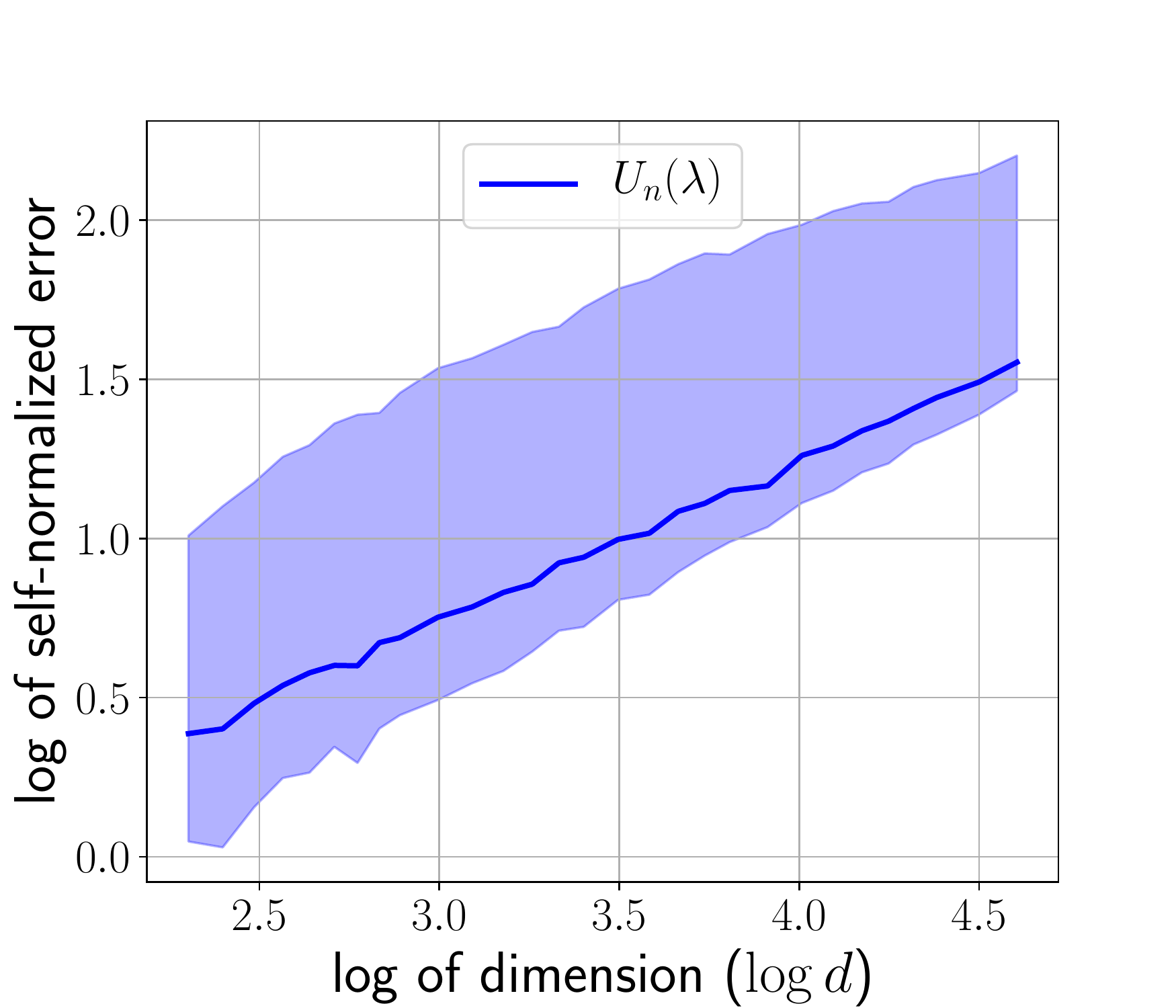}
    \caption{$\lambda=10$}
    \label{fig:Ber_sn_d_10}
    \end{subfigure}
\caption{Means and 90\% confidence intervals of un-normalized $\ell^2$-errors $\|\wh\theta_{\lambda}-\theta_*\|$, KS distances $\KS(\wh F_{\lambda}(x,\cdot),F(x,\cdot))$, and self-normalized errors $\|\wh\theta_{\lambda}-\theta_*\|_{U_n(\lambda)}$ against $d/\mu_{\min}(U_n(\lambda))$ and dimension $d$ in logarithmic scale in Bernoulli synthetic data experiments.}
\label{fig:Ber_l2_d}
\end{figure}

\paragraph{Polynomial \CDF{} data experiments.}
For $d\in\N$, $r(i):=i$ if $1\le i\le \frac{d+1}{2}$, and $r(i):=\frac{2}{2i-d+1}$ if $\frac{d+1}{2}<i\le d$,
we consider the following basis \CDF{}s:
\begin{align} \label{eq:sim_phi}
\phi_i(x,t)=
\indi\{t\in[0,1/x]\}(xt)^{r(i)}+\indi\{t>1/x\},\ i\in[d].
\end{align}
To simulate $n$ samples, we first choose a true parameter $\theta_*$. For each $j\in [n]$, $x_j$ is sampled independently from the uniform distribution on $[0.5,2]$. Then, we sample $y_j$ independently from the \CDF $\theta_*^\top\Phi(x_j,\cdot)$ using the inverse \CDF method for $j\in [n]$.
Given the simulated sample, we calculate $\wh\theta_\lambda$ using \eqref{eq:solution} with $S=[0,2]$, $\meas$ chosen as the uniformly distribution $\unif$ on $S$, and different values of $\lambda$.
We evaluate the performance by calculating $\ell^2$-error $\|\wh\theta_{\lambda}-\theta_*\|$, the self-normalized errors $\|\wh\theta_{\lambda}-\theta_*\|_{U_n(\lambda)}$ and $\|\wh\theta_{\lambda}-\theta_*\|_{\Sigma_n}$, and the KS distance $\KS(\wh F_{\lambda}(x,\cdot),F(x,\cdot))$. 
To obtain stable results, we repeat the simulation independently $100$ times in each setting to calculate $90\%$ confidence intervals and means of the errors. 

Fixing $d=5$, we study the dependence of estimation errors of our estimator \eqref{eq:estimator_general} on sample size $n$ using $\lambda=0.001$, 0.1, and 10. We run the experiments with $n$ ranging from $10^4$ to $10^6$ and plot the means and 90\% confidence intervals of the errors against $n$ (both in logarithmic scale) in Figure \ref{fig:sim_l2}. 
According to Figure \ref{fig:sim_l2}, for different values of $\lambda$, the slopes of the curves of $\log\|\wh\theta_{\lambda}-\theta_*\|_{U_n(\lambda)}$ and $\|\wh\theta_{\lambda}-\theta_*\|_{\Sigma_n}$ against $\log n$ are around 0, which obeys the $O(\sqrt{d\log(1+n/\lambda)})$ upper bounds proved in Theorem \ref{thm:upper_fixed} and Proposition \ref{prop:upper_random}. 
When $\lambda$ is negligible compared to $\mu_{\min}(U_n))$, the $\wt O(\sqrt{d/(\lambda+\mu_{\min}(U_n))})$ bound on $\ell^2$-error followed from Theorem \ref{thm:upper_fixed} implies the $\wt O(\sqrt{d/n})$ $\ell^2$-error bound if $\mu_{\min}(U_n))$ grows linearly with $n$. 
Indeed, for small $\lambda=0.001$, the slope of the curve of $\log\|\wh\theta_{\lambda}-\theta_*\|$ against $\log n$ in Figure \ref{fig:sim_l2_0001} is around $-0.5$. When $\lambda$ is comparable with $\mu_{\min}(U_n))$, as is observed in Figure \ref{fig:sim_l2_01} and \ref{fig:sim_l2_10}, the slopes of the curves of $\ell^2$-errors are larger than $-0.5$, which is expected from the $\wt O(\sqrt{d/(\lambda+\mu_{\min}(U_n))})$ bound.
The slopes of the curves of the KS distances against $\log n$ are smaller than 0.5, also obeying the $\wt O(d/\sqrt{(\lambda+\mu_{\min}(U_n))})$ bound implied by Theorem \ref{thm:upper_fixed}.

\begin{figure}
    \centering
    \begin{subfigure}{0.27\linewidth}
    \centering
    \includegraphics[width=1.1\linewidth]{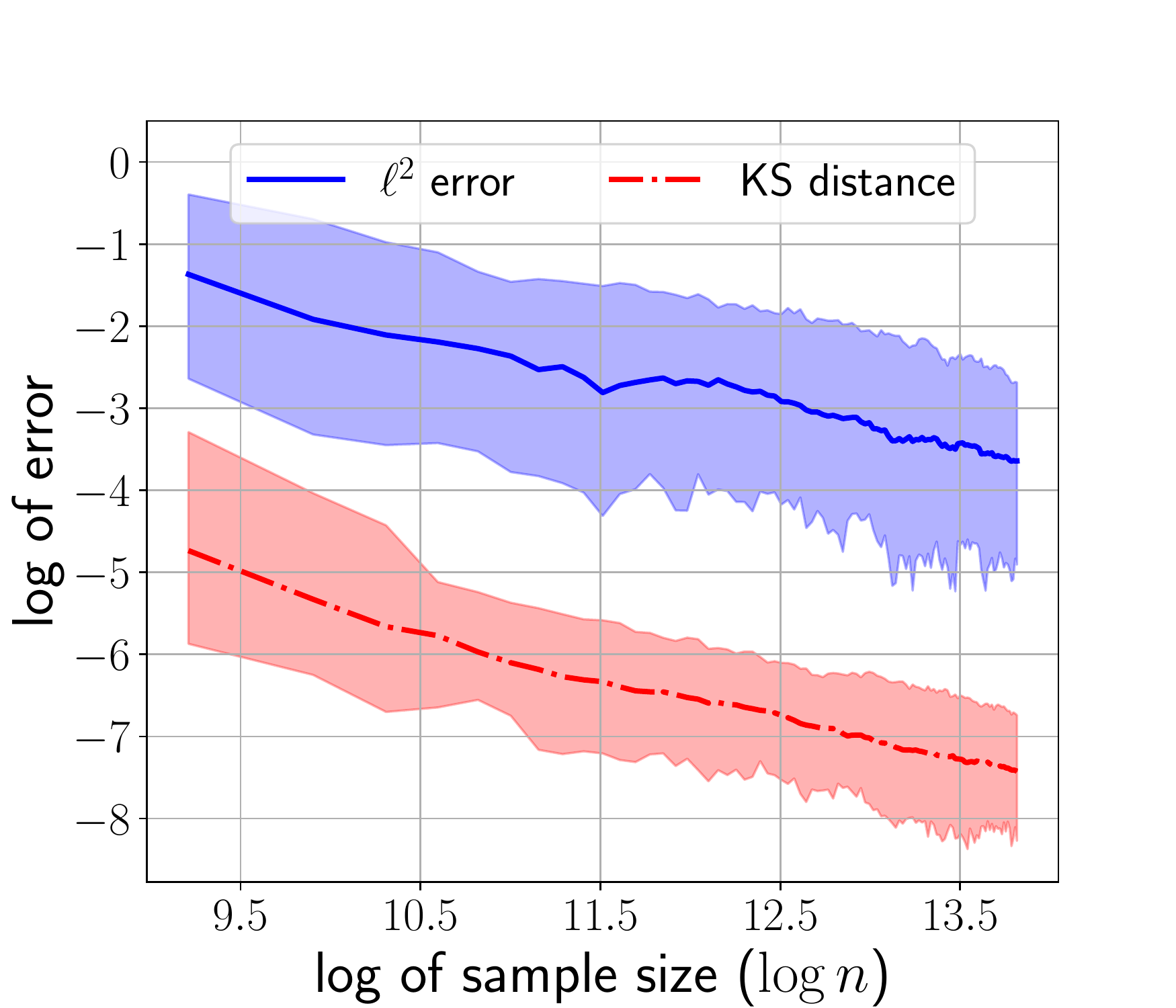}
    \caption{$\lambda=0.001$}
    \label{fig:sim_l2_0001}
    \end{subfigure}
    \begin{subfigure}{0.27\linewidth}
    \centering
    \includegraphics[width=1.1\linewidth]{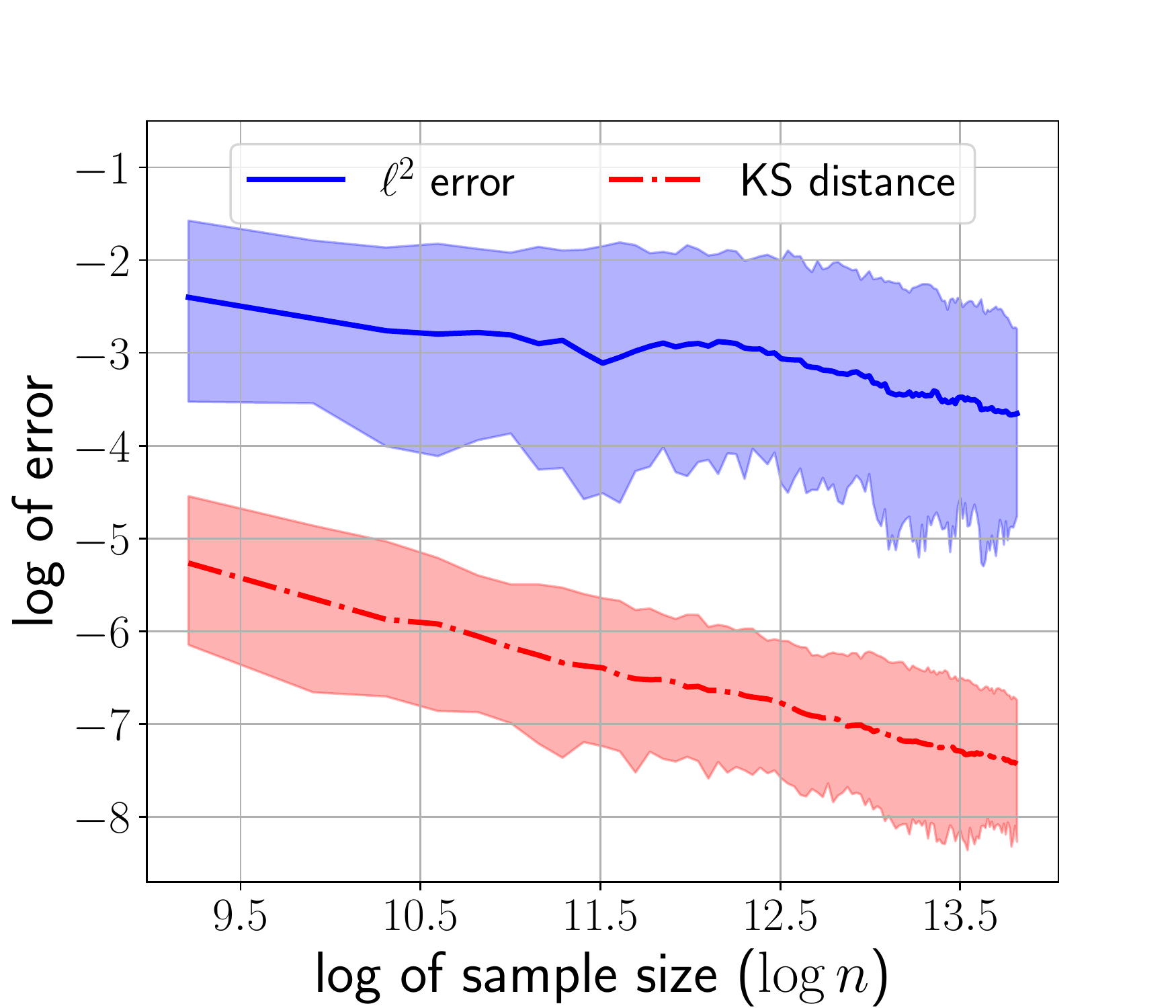}
    \caption{$\lambda=0.1$}
    \label{fig:sim_l2_01}
    \end{subfigure}
    \begin{subfigure}{0.27\linewidth}
    \centering
    \includegraphics[width=1.1\linewidth]{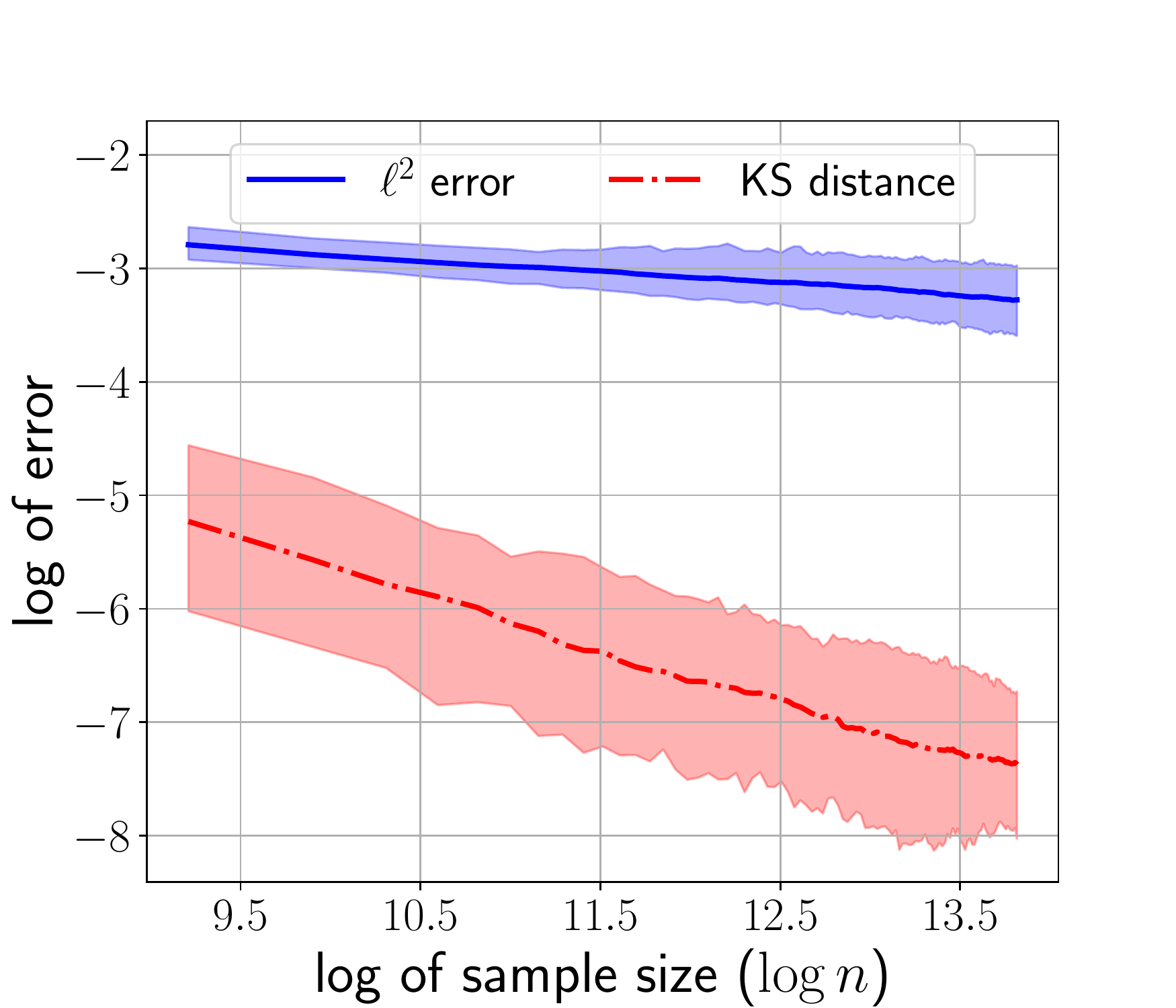}
    \caption{$\lambda=10$}
    \label{fig:sim_l2_10}
    \end{subfigure}
    \begin{subfigure}{0.27\linewidth}
    \centering
    \includegraphics[width=1.1\linewidth]{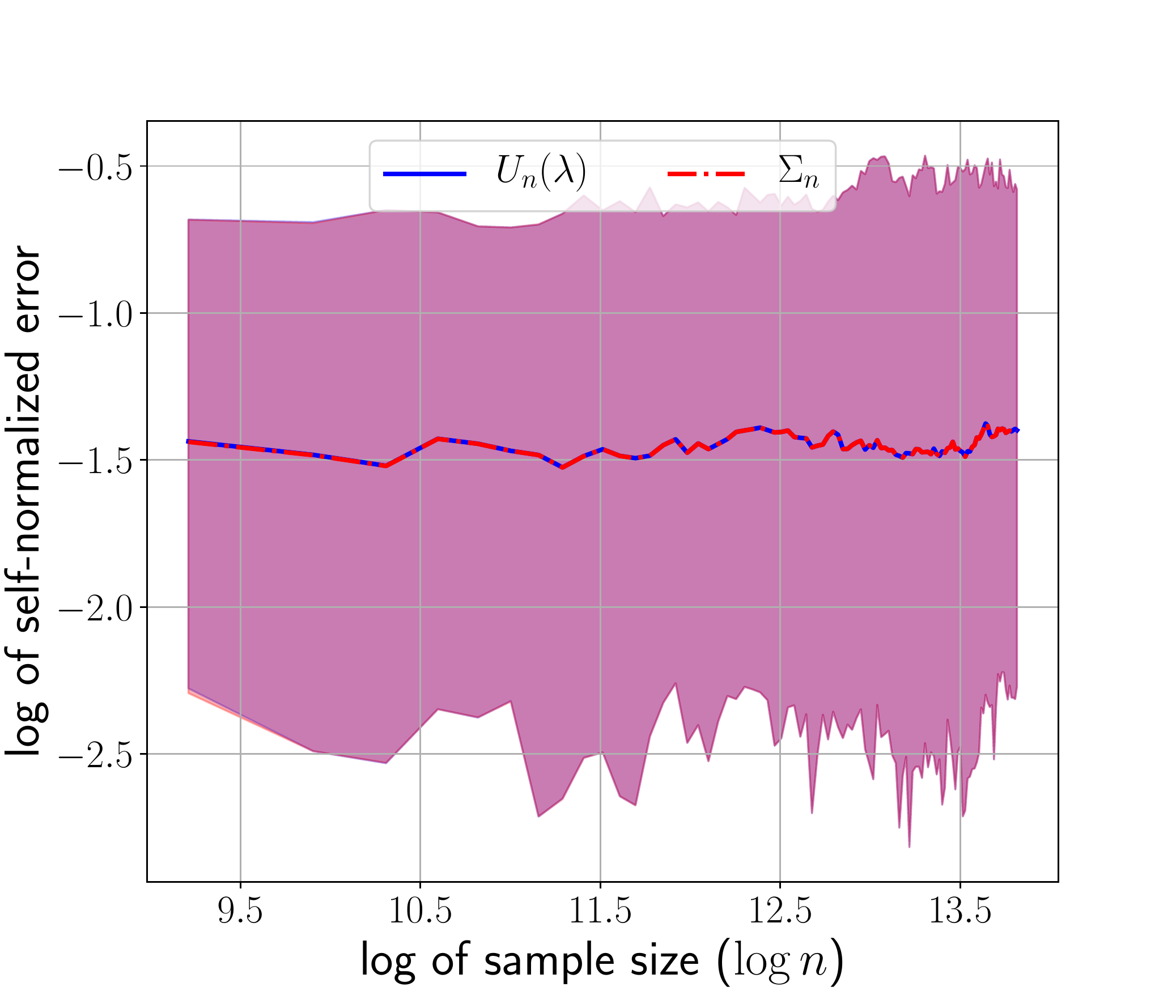}
    \caption{$\lambda=0.001$}
    \label{fig:sim_sn_0001}
    \end{subfigure}
    \begin{subfigure}{0.27\linewidth}
    \centering
    \includegraphics[width=1.1\linewidth]{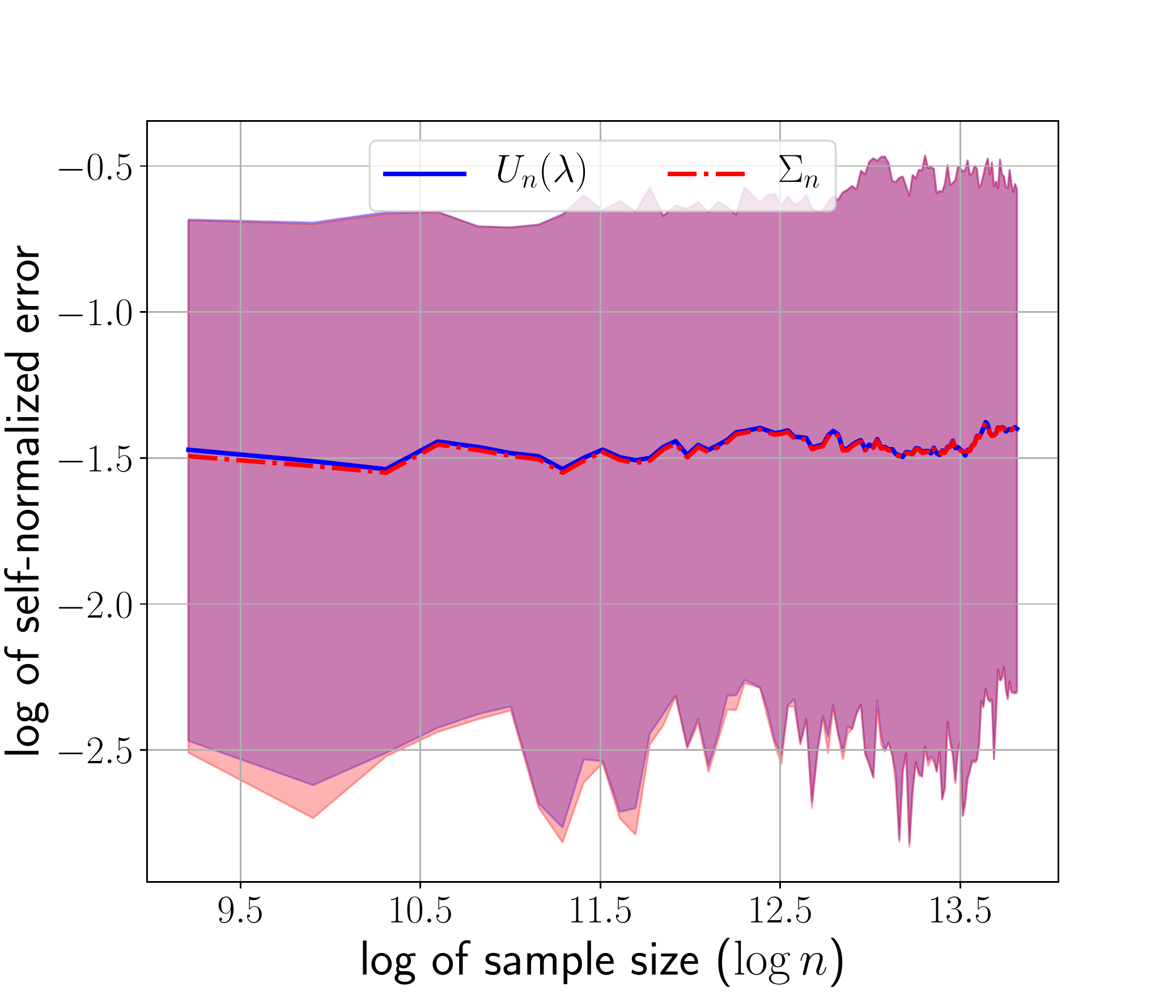}
    \caption{$\lambda=0.1$}
    \label{fig:sim_sn_01}
    \end{subfigure}
    \begin{subfigure}{0.27\linewidth}
    \centering
    \includegraphics[width=1.1\linewidth]{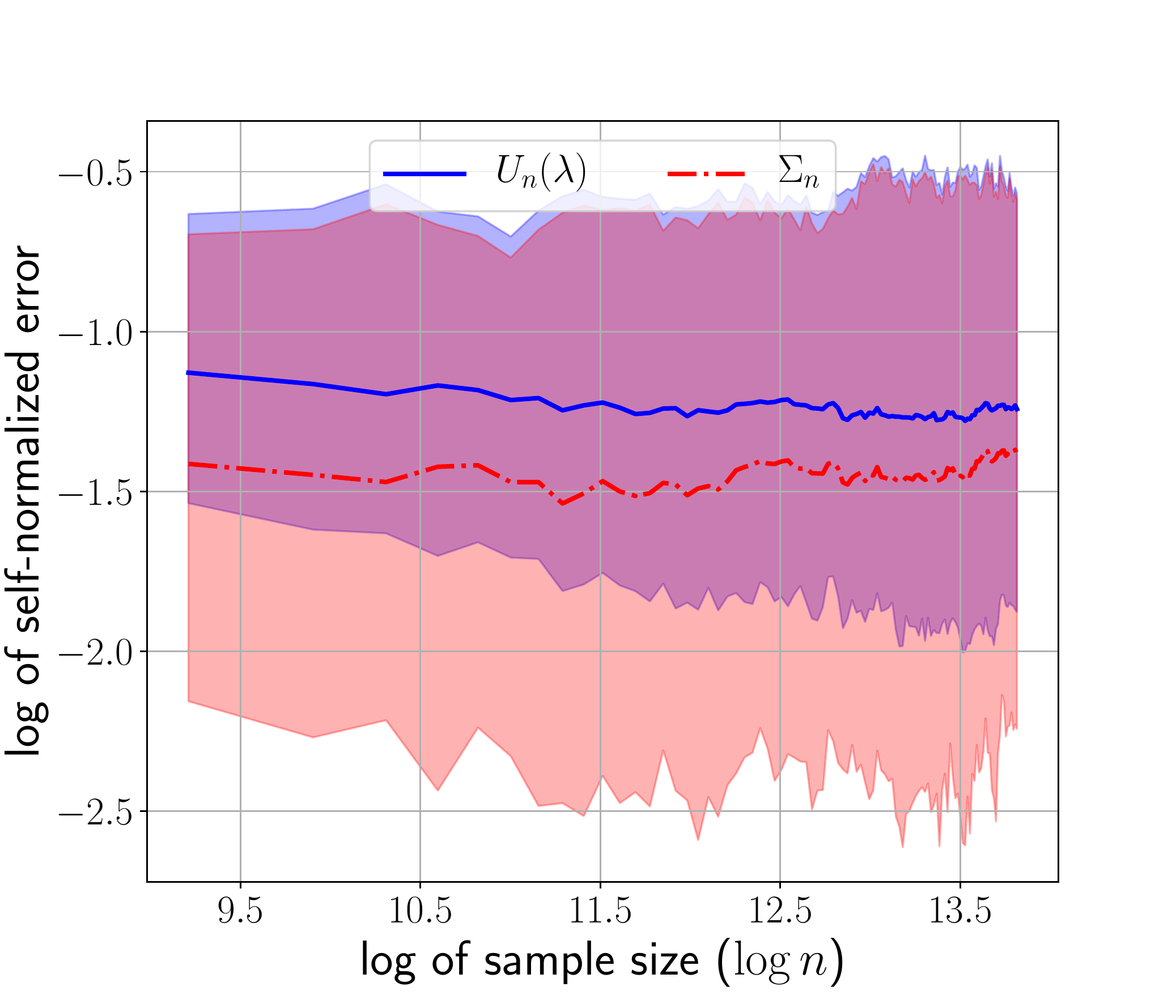}
    \caption{$\lambda=10$}
    \label{fig:sim_sn_10}
    \end{subfigure}
\caption{Means and 90\% confidence intervals of un-normalized $\ell^2$-errors $\|\wh\theta_{\lambda}-\theta_*\|$, KS distances $\KS(\wh F_{\lambda}(x,\cdot),F(x,\cdot))$, and self-normalized errors $\|\wh\theta_{\lambda}-\theta_*\|_{U_n(\lambda)}$ and $\|\wh\theta_{\lambda}-\theta_*\|_{\Sigma_n}$ against sample size $n$ in logarithmic scale in polynomial CDF synthetic data experiments.}
\label{fig:sim_l2}
\end{figure}

Next, fixing $n=10^5$, we run the experiments with $d$ ranging from 10 to 100 using $\lambda=0.001$, 0.1, and 10. We plot the means and 90\% confidence intervals of the errors against $d$ (both in logarithmic scale) in Figure \ref{fig:sim_l2_d}. 
According to Figure \ref{fig:sim_l2_d}, for different values of $\lambda$, the slopes of the curves of $\log\|\wh\theta_{\lambda}-\theta_*\|$, $\log\|\wh\theta_{\lambda}-\theta_*\|_{U_n(\lambda)}$, and $\log\|\wh\theta_{\lambda}-\theta_*\|_{\Sigma_n}$ against $\log d$ are around 0, obeying the respective $\wt O(\sqrt{d/(\lambda+\mu_{\min}(U_n))})$, $O(\sqrt{d\log(1+n/\lambda)})$, and $O(\sqrt{d\log(1+n/\lambda)})$ bounds proved in Theorem \ref{thm:upper_fixed} and Proposition \ref{prop:upper_random}. 
The slopes of the curves of the KS distances are smaller than 1, which also obeys the $\wt O(d/\sqrt{(\lambda+\mu_{\min}(U_n))})$ bound implied by Theorem \ref{thm:upper_fixed}. 
Since the lower bounds are proved for the worst case of any estimator, the results above do not violate our theoretical results on lower bound.

\begin{figure}
    \centering
    \begin{subfigure}{0.27\linewidth}
    \centering
    \includegraphics[width=1.1\linewidth]{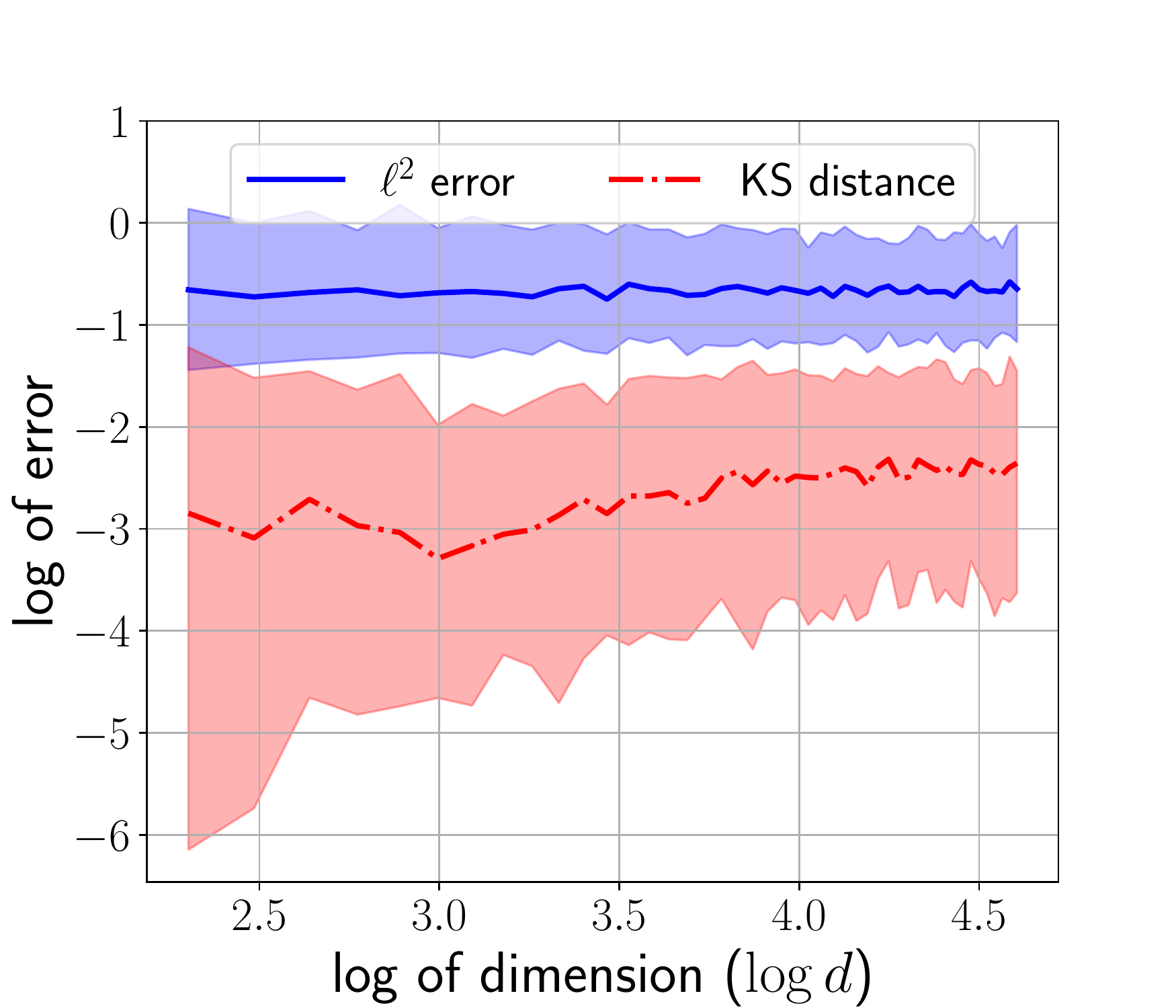}
    \caption{$\lambda=0.001$}
    \label{fig:sim_l2_d_0001}
    \end{subfigure}
    \begin{subfigure}{0.27\linewidth}
    \centering
    \includegraphics[width=1.1\linewidth]{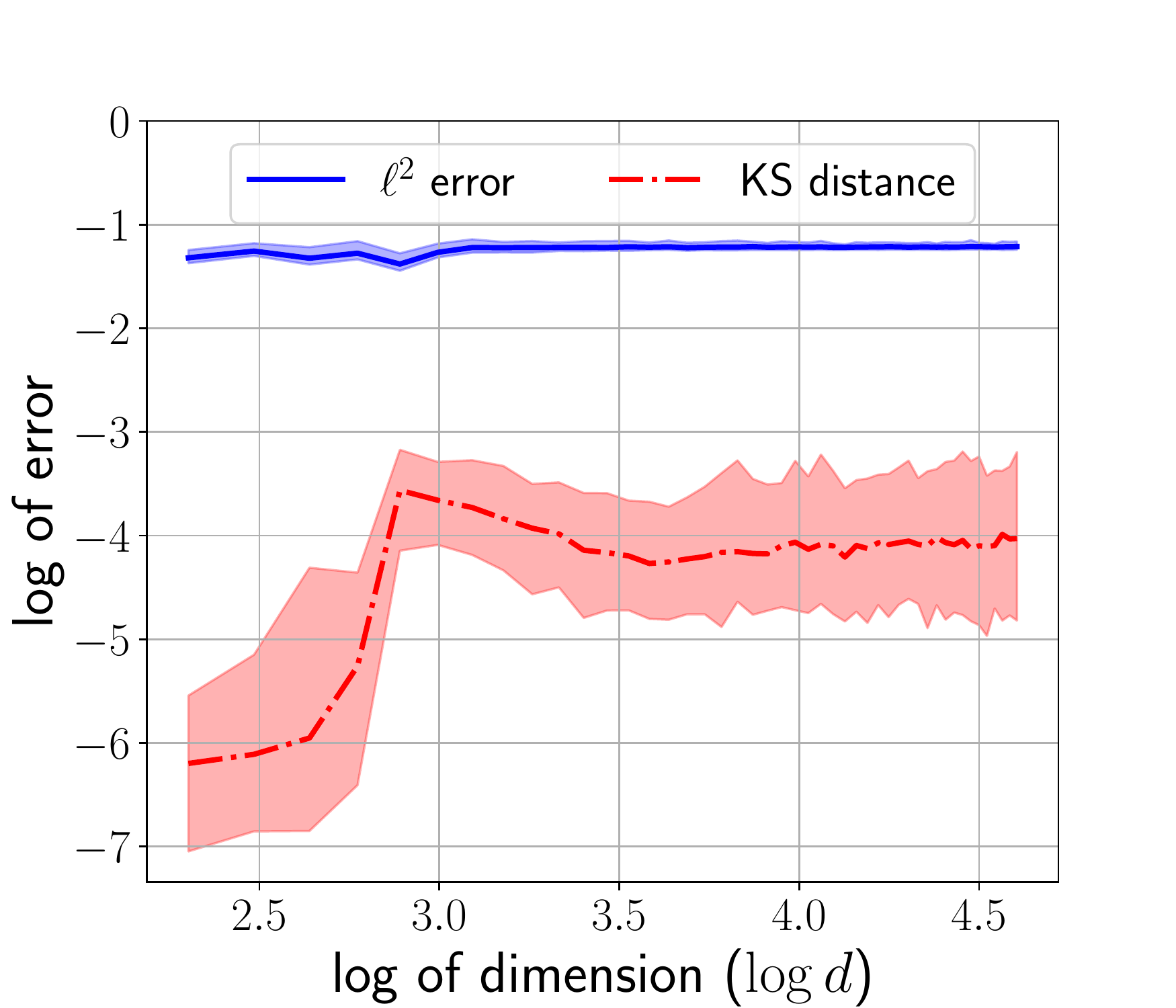}
    \caption{$\lambda=0.1$}
    \label{fig:sim_l2_d_01}
    \end{subfigure}
    \begin{subfigure}{0.27\linewidth}
    \centering
    \includegraphics[width=1.1\linewidth]{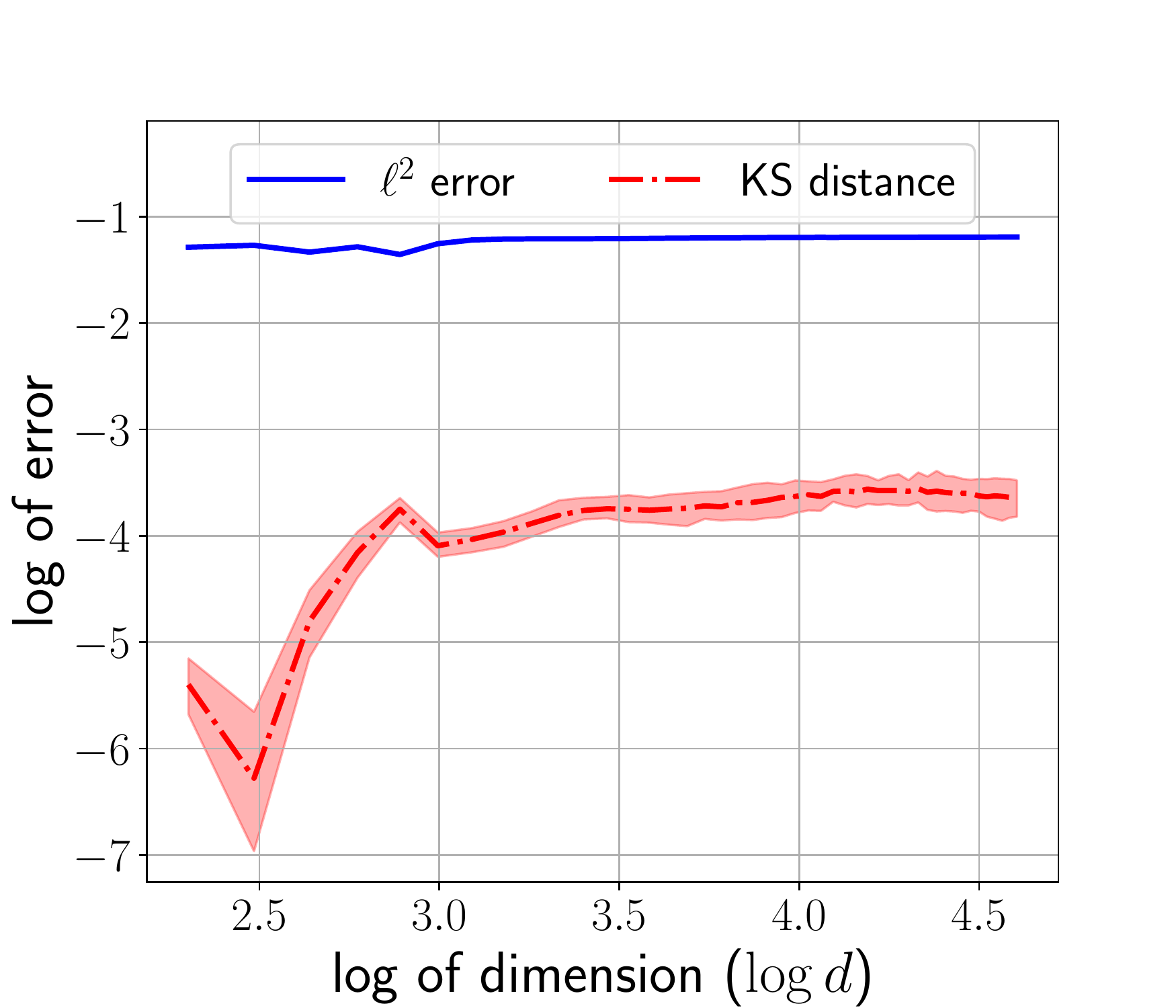}
    \caption{$\lambda=10$}
    \label{fig:sim_l2_d_10}
    \end{subfigure}
    \begin{subfigure}{0.27\linewidth}
    \centering
    \includegraphics[width=1.1\linewidth]{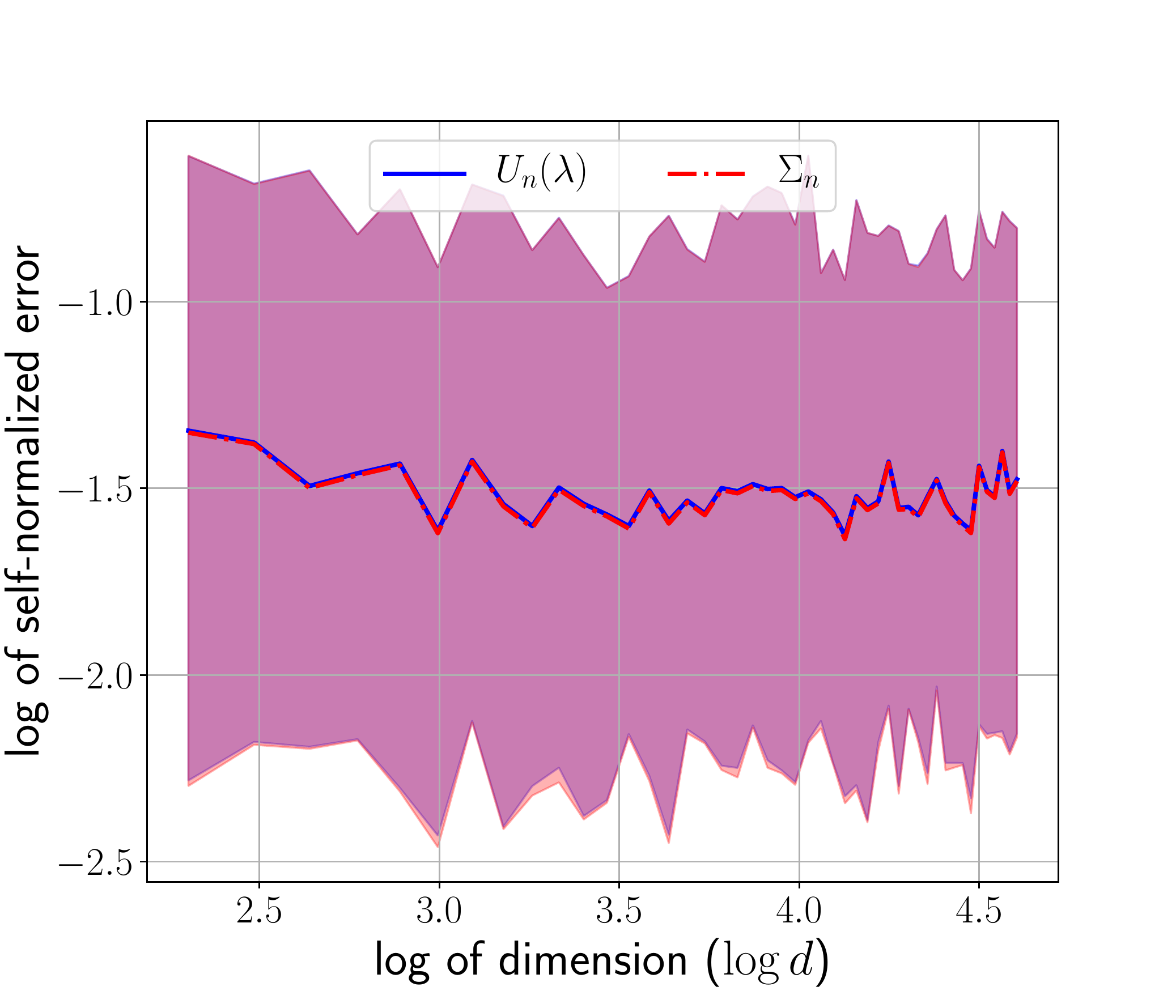}
    \caption{$\lambda=0.001$}
    \label{fig:sim_sn_d_0001}
    \end{subfigure}
    \begin{subfigure}{0.27\linewidth}
    \centering
    \includegraphics[width=1.1\linewidth]{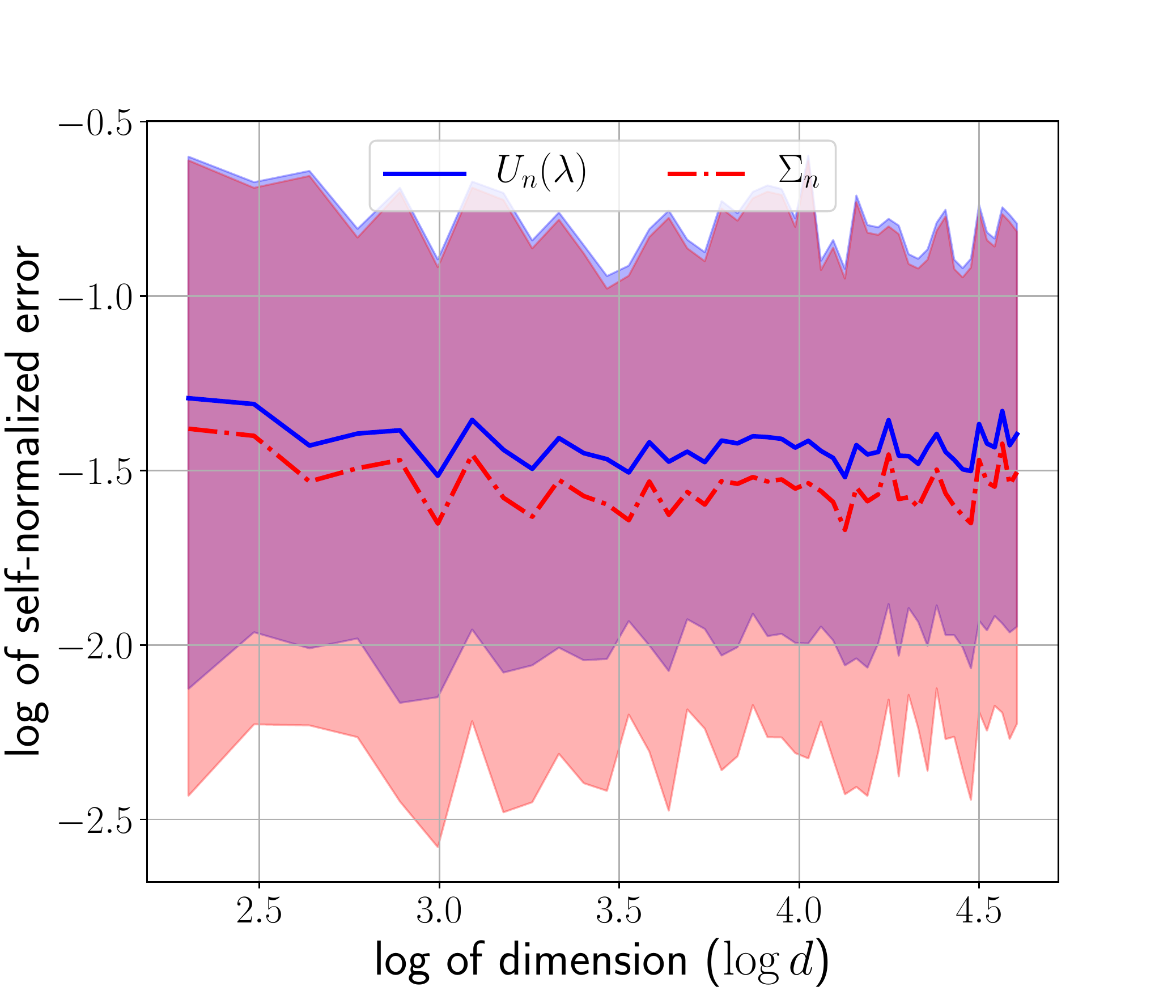}
    \caption{$\lambda=0.1$}
    \label{fig:sim_sn_d_01}
    \end{subfigure}
    \begin{subfigure}{0.27\linewidth}
    \centering
    \includegraphics[width=1.1\linewidth]{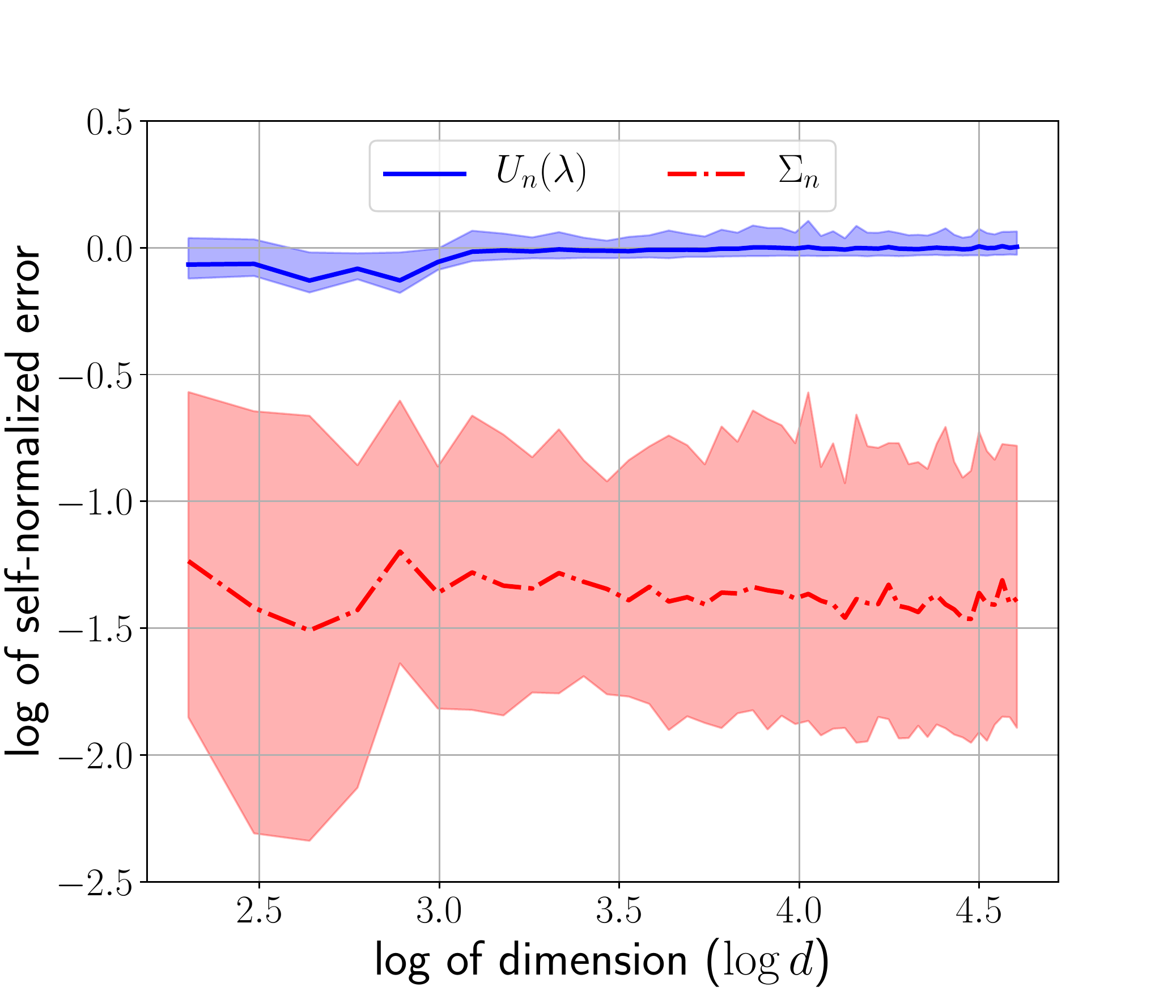}
    \caption{$\lambda=10$}
    \label{fig:sim_sn_d_10}
    \end{subfigure}
\caption{Means and 90\% confidence intervals of un-normalized $\ell^2$-errors $\|\wh\theta_{\lambda}-\theta_*\|$, KS distances $\KS(\wh F_{\lambda}(x,\cdot),F(x,\cdot))$, and self-normalized errors $\|\wh\theta_{\lambda}-\theta_*\|_{U_n(\lambda)}$ and $\|\wh\theta_{\lambda}-\theta_*\|_{\Sigma_n}$ against dimension $d$ in logarithmic scale in polynomial CDF synthetic data experiments.}
\label{fig:sim_l2_d}
\end{figure}

\subsection{Real data experiments} \label{sec:real_data}
We compare the empirical performance of our estimator \eqref{eq:estimator_general} and other methods on two real-world datasets: the California house price dataset and the adult income dataset.

\paragraph{California house price dataset.}
We evaluate the performance of estimator \eqref{eq:estimator_general} on the California house price dataset \citep{housing} of size $n=20,640$ from Kaggle. 
There are 10 attributes among which we use median house value as the samples $y$ from target \CDF{}s and all other attributes as the contexts $x$ ($d=9$). We standardize all the ordinal variables. 

We apply the proposed estimator \eqref{eq:estimator_general} and three other methods, MLE, empirical \CDF{} (ECDF), and kernel density estimation (KDE) to estimate contextual \CDF{}s for this dataset. 
Specifically, for ECDF, given samples $y^{(1)},\dots,y^{(n)}$, the empirical \CDF is as follows,
\begin{align} \label{eq:eCDF}
\wh F_E(t):=\frac{1}{n}\sum_{j=1}^n\tI_{y^{(j)}}(t)=\frac{1}{n}\sum_{j=1}^n\indi\{y^{(j)}\le t\}.
\end{align}
For KDE, we apply the function ``density'' in the R package ``stats'' with Gaussian, rectangular, and triangular kernels. Note that only the samples $y$ are used to estimate one \CDF without considering the contexts $x$ in ECDF and KDE.
For the proposed estimator and MLE, we assume the linear model \eqref{eq:linear_model}. We consider the following family of basis \CDF{}s:
\begin{align} \label{eq:phi_housing}
\phi_i(x,t)=(1-w)F_N(t;\beta_{N,i}^{(1)}x_i+\beta_{N,i}^{(0)},\sigma_i^2)+wF_L(t;\beta_{L,i}^{(1)}x_i+\beta_{L,i}^{(0)},b_i),\ t\in\R,\ i\in[d],
\end{align}
where $x=(x_1,\dots,x_d)$ is the context, $F_N(\cdot;\mu,\sigma^2)$ is the \CDF of the Gaussian distribution $N(\mu,\sigma^2)$, $F_L(\cdot;\mu,b)$ is the \CDF of the Laplace distribution $\Lap(\mu,b)$, $w$ is the weight of Laplace distributions, $\beta_{N,i}^{(1)}$ ($\beta_{N,i}^{(0)}$) is the coefficient (intercept) in the Gaussian linear model of $x_i$, and $\beta_{L,i}^{(1)}$ ($\beta_{L,i}^{(0)}$) is the coefficient (intercept) in the Laplace linear model of $x_i$.

We split the whole dataset into subsets of fractions $1/3$, $1/2$, and $1/6$. $1/3$ data points are used to estimate the coefficients and intercepts under Gaussian or Laplace linear models separately by maximizing log likelihood. For Laplace linear model, it corresponds to the least absolute residual regression which we solve using the function ``lad'' in the R package ``L1pack'' \citep{l1pack}. Afterwards, we estimate $\sigma_i^2$'s and $b_i$'s using the sample variance and the mean absolute deviation of the their corresponding residuals respectively. 
Then, we apply different methods on the second subset (training dataset) of $1/2$ data points. 
For the proposed estimator, we calculate $\wh\theta_{\lambda}$ using \eqref{eq:solution} with $S=\R$, $\meas=\gamma_{0,100}$, and $\lambda=0.1,1,5$.
For MLE, we can formulate the likelihood function of the parameter $\theta$ in \eqref{eq:linear_model} with $\Phi$ specified in \eqref{eq:phi_housing}. Let $\wh \theta_{MLE}$ denote a maximizer of likelihood function.
Since under \eqref{eq:linear_model}, MLE corresponds to solving a convex minimization problem in a convex set (the probability simplex $\Delta^{d-1}$), we use the solver ``SCS'' in the R package ``CVXR'' \citep{CVXR} to calculate $\wh \theta_{MLE}$. 
Let $\wh F_E$ denote the ECDF calculated by \eqref{eq:eCDF} using the training dataset. Let $\wh F_{KG}$, $\wh F_{KR}$, and $\wh F_{KT}$ denote the \CDF calculated by KDE with Gaussian, rectangular, and triangular kernels respectively using the training dataset. 
Given samples $y^{(1)},\dots,y^{(n)}$, we define the $\L^2$-error of an estimated \CDF $\wh F$ as 
\begin{align} \label{eq:WEL2}
\frac{1}{n}\sum_{j=1}^n\|\textup{I}_{y^{(j)}}-\wh F\|_{\cL^2(S,\meas)}^2,
\end{align}
where we also set $S=\R$ and $\meas=\gamma_{0,100}$. 
Note that when $S=\R$ and $\meas=\Leb(\R)$, the $\L^2$-error in \eqref{eq:WEL2} corresponds to the renowned Continuous Ranked Probability Score (CRPS) \citep{crps} used to assess the performance of a \CDF in approximating data distribution. 
We calculate $\L^2$-errors on the third subset (test dataset) of $1/6$ data points for the four methods described previously. 
For ECDF and KDE, we plug $\wh F_E$, $\wh F_{KG}$, $\wh F_{KR}$, and $\wh F_{KT}$ in \eqref{eq:WEL2}. 
For MLE and the proposed estimator, we calculate $\L^2$-errors using $\frac{1}{n}\sum_{j=1}^n\|\textup{I}_{y^{(j)}}-\wh\theta^{\top}_{MLE}\Phi_j\|_{\cL^2(S,\meas)}^2$ and $\frac{1}{n}\sum_{j=1}^n\|\textup{I}_{y^{(j)}}-\wh\theta^{\top}_{\lambda}\Phi_j\|_{\cL^2(S,\meas)}^2$ with different values of $\lambda$. 

We run the experiments with $w=0,\ 0.5$, and 1 in \eqref{eq:phi_housing}. 
To get stable results, we permute the dataset uniformly at random independently and repeat the experiments $100$ times to calculate $\L^2$-errors. We draw the box plots of the $\L^2$-errors of different methods with different values of $w$ in Figure \ref{fig:exp_housing}. 
As is shown in the figure, ECDF and KDE have comparable $\L^2$-errors which are much larger than the other two methods. For all choices of $w$ and $\lambda$, our estimator \eqref{eq:estimator_general} achieves the smallest $\L^2$-error than any other method, indicating that its performance is very robust in the choices of basis \CDF{}s and regularization level. Also, $\L^2$-error of our estimator decreases with the value of $\lambda$ as expected.
Thus, with different basis contextual \CDF{}s, our estimator \eqref{eq:estimator_general} has better performance in approximating target data distributions and the performance is stable wrt the value of $\lambda$ in \eqref{eq:estimator_general}.

\begin{figure}
    \centering
    \begin{subfigure}{0.32\linewidth}
    \centering
    \includegraphics[width=1.1\linewidth]{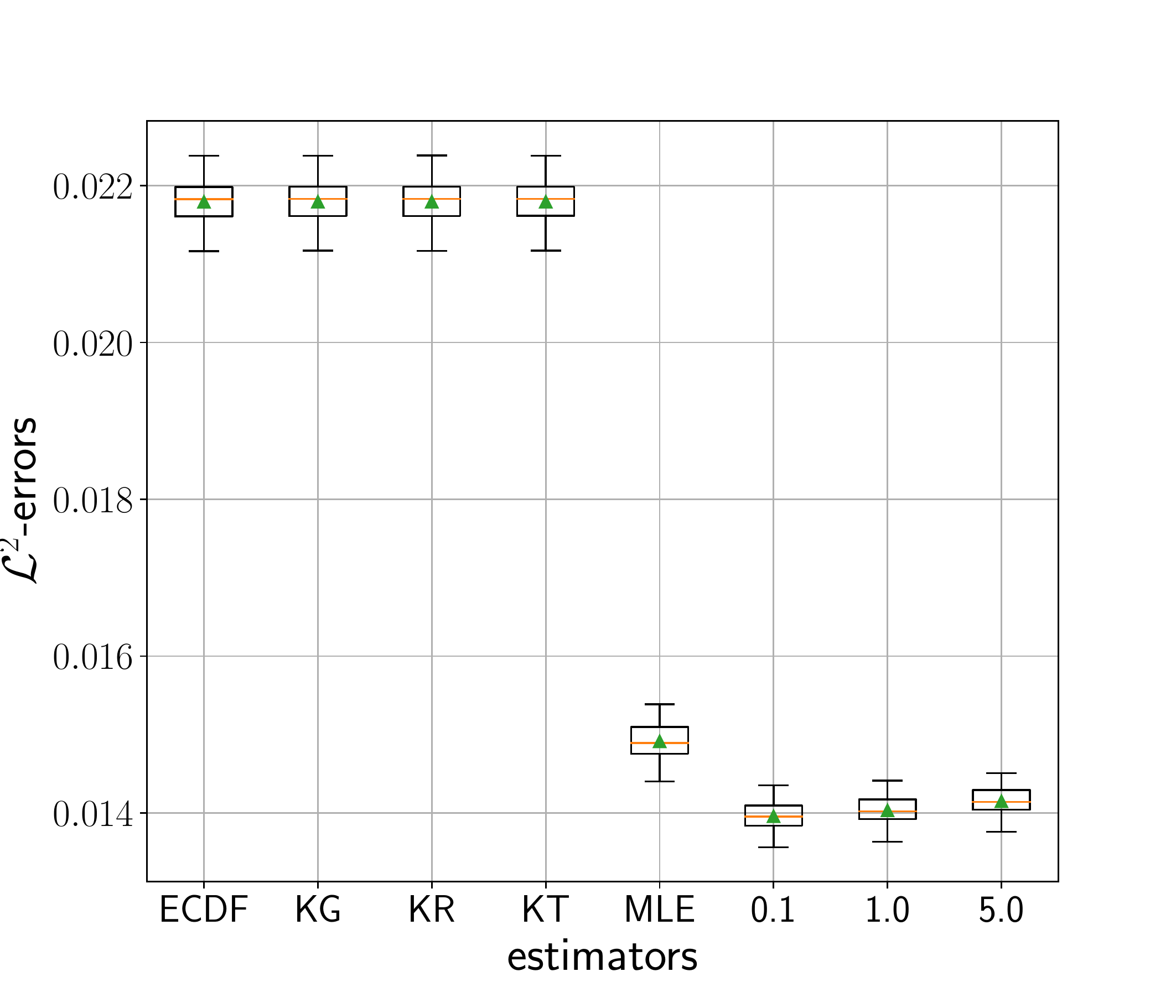}
    \caption{$w=0$}
    \label{fig:housing_lap0}
    \end{subfigure}
    \begin{subfigure}{0.32\linewidth}
    \centering
    \includegraphics[width=1.1\linewidth]{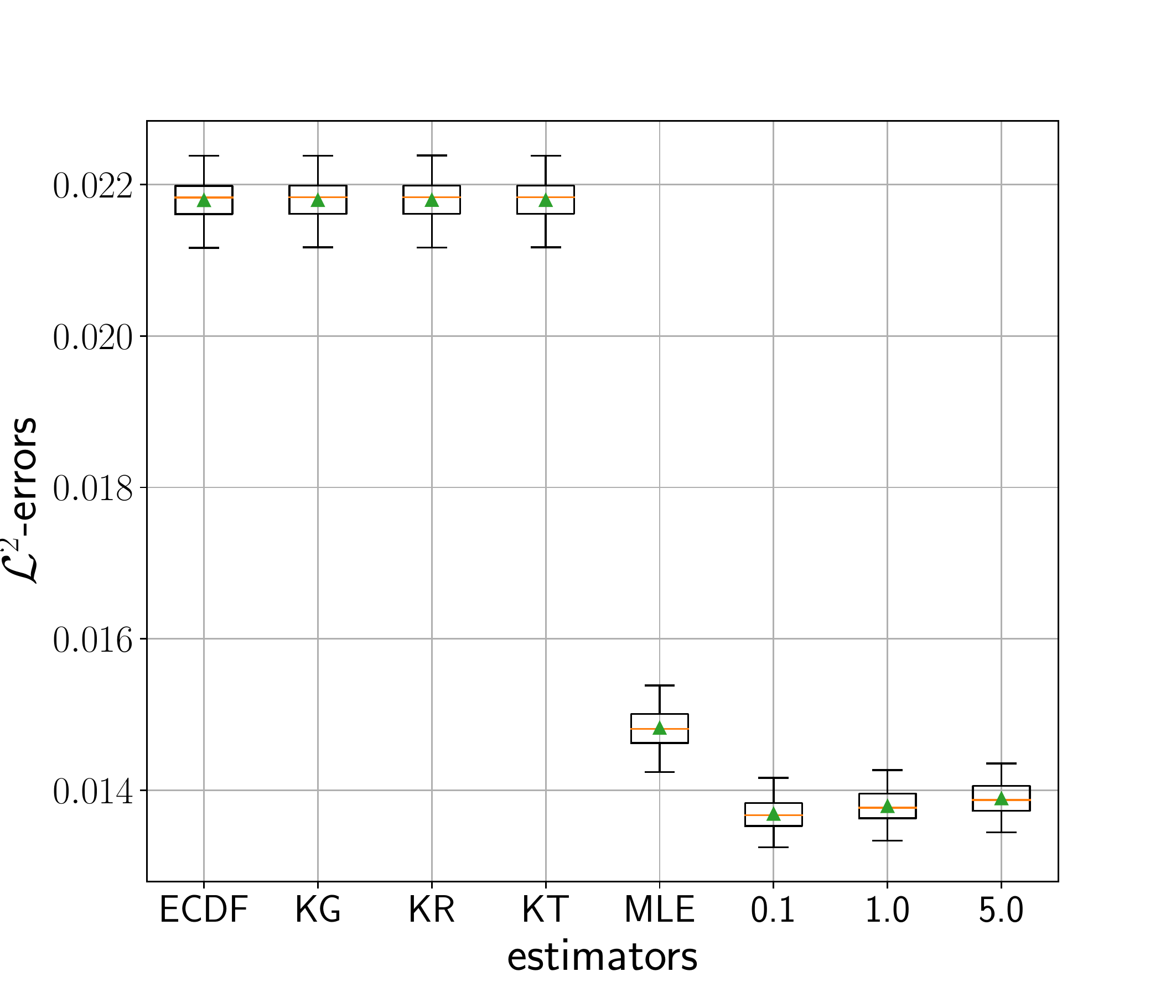}
    \caption{$w=0.5$}
    \label{fig:housing_lap0.5}
    \end{subfigure}
    \begin{subfigure}{0.32\linewidth}
    \centering
    \includegraphics[width=1.1\linewidth]{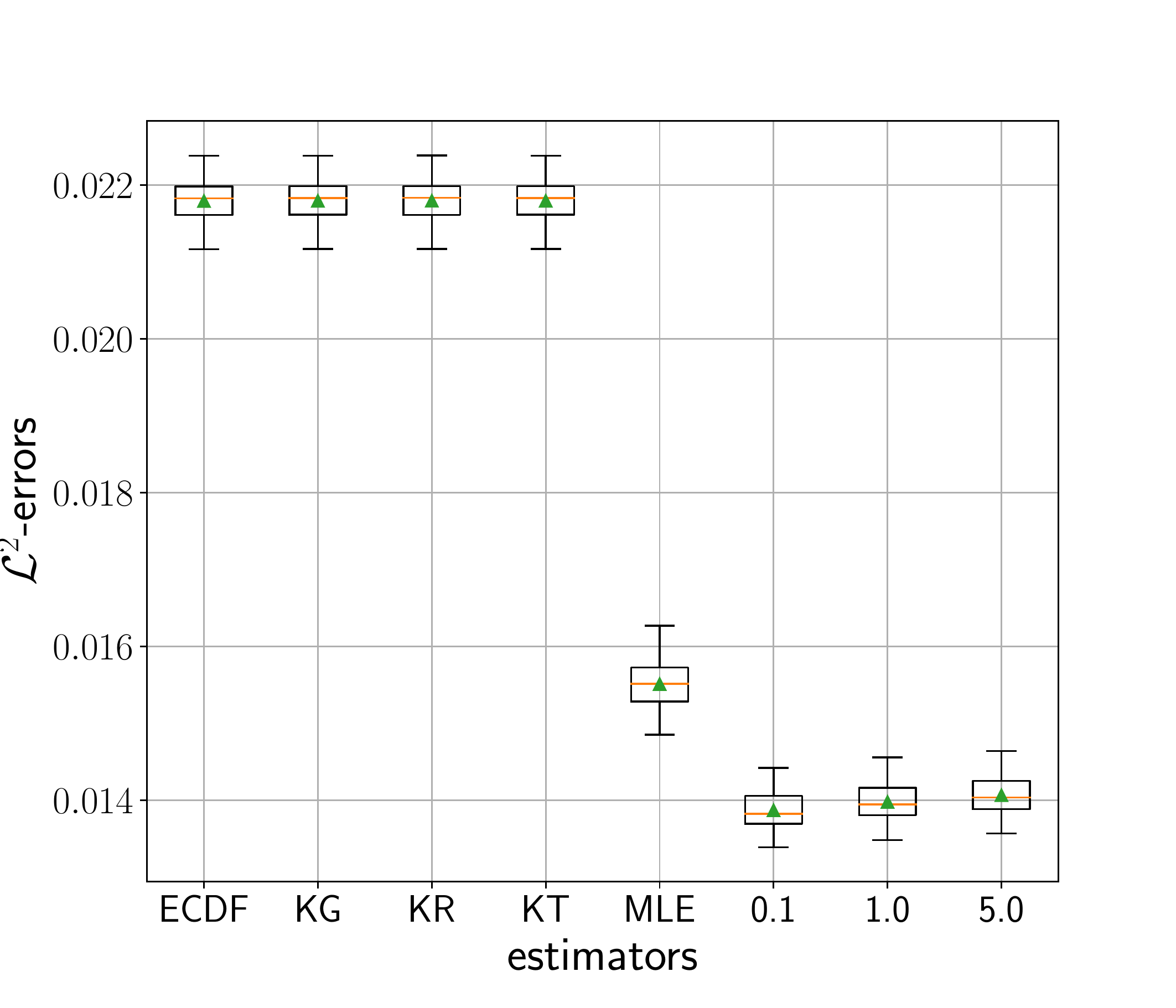}
    \caption{$w=1$}
    \label{fig:housing_lap1}
    \end{subfigure}
\caption{Box plots of $\L^2$-errors in the California house price data experiment. ``ECDF'' refers to the empirical CDF defined in \eqref{eq:eCDF}. ``KG'',  ``KR'', and ``KT'' refer to the kernel density estimation method using Gaussian, rectangular, and triangular kernels respectively. ``0.1'', ``1.0'', and ``5.0'' refer to our estimator $\wh\theta_{\lambda}$  in \eqref{eq:estimator_general} with $\lambda=0.1$, 1.0, and 5.0 respectively.}
\label{fig:exp_housing}
\end{figure}

\paragraph{Adult income dataset.} 
The adult income dataset \citep{misc_adult_2} was extracted from the 1994 census bureau database. 
The typical learning task is to predict whether income exceeds \$50K/yr based on other attributes in the census data. Thus, we use the attributes age, workclass, education, marital-status, occupation, relationship, race, sex, capital-gain, capital-loss, hours-per-week, and native-country as the contexts $x$ ($d=12$), and use income (i.e., whether income exceeds \$50K/yr) as the samples $y$ from target \CDF{}s. We standardize all of the ordinal attributes. The total number of samples is $n=48,842$. 

Since the samples follow Bernoulli distributions, KDE is not considered. We apply our estimator \eqref{eq:estimator_general}, MLE, and ECDF on this dataset. 
For our estimator and MLE, we assume model \eqref{eq:linear_model} and use the following mixtures of logistic and probit models as basis \CDF{}s:
\begin{align} \label{eq:adult_model}
\phi_i(x,t)=w F_B(t;f_{\logistic}(\beta_{L,i}^{(1)}x_i+\beta_{L,i}^{(0)}))
+(1-w) F_B(t;F_{N}(\beta_{P,i}^{(1)}x_i+\beta_{P,i}^{(0)};0,1)),
\ i\in[d],
\end{align}
where $t\in\R$, $x=(x_1,\dots,x_d)$ denotes the context, $F_B(\cdot;p)$ denotes the \CDF of the Bernoulli distribution with parameter $p$, $f_{\logistic}(a):=1/(1+e^{-a})$ for any $a\in\R$, $w$ is the weight of the logistic model, $\beta_{L,i}^{(1)}$ ($\beta_{L,i}^{(0)}$) denotes the coefficient (intercept) in the logistic model of $x_i$, and $\beta_{P,i}^{(1)}$ ($\beta_{P,i}^{(0)}$) denotes the coefficient (intercept) in the probit model of $x_i$. 
We split the whole dataset into subsets of fractions $1/3$, $1/2$, and $1/6$. $1/3$ data points are used to estimate the coefficients and intercepts in \eqref{eq:adult_model} with the function ``glm'' in the R package ``stats''.

We apply all methods on the second subset (training dataset) of $1/2$ data points. For our estimator, we calculate $\wh\theta_{\lambda}$ using \eqref{eq:solution} with $S=[0,1]$, $\meas=\Leb([0,1])$, and $\lambda=0.1,1,5$. 
For MLE, we also use the solver ``SCS'' in the R package ``CVXR'' \citep{CVXR} to calculate $\wh \theta_{MLE}$ as in the previous example. We use $\wh F_E$ to denote the ECDF calculated by \eqref{eq:eCDF} using the training dataset. 
Then, we calculate $\L^2$-errors \eqref{eq:WEL2} with $S=[0,1]$ and $\meas=\Leb([0,1])$ for the three methods on the third subset (test dataset) of 1/6 data points. 

We run the experiments described above using $w=0$, 0.5, and 1 in \eqref{eq:adult_model} 100 times with the dataset permuted randomly in each run to get stable results. In Figure \ref{fig:exp_adult}, we report the calculated $\L^2$-errors in box plots. 
According to the figure, our estimator \eqref{eq:estimator_general} achieves the smallest $\L^2$-errors for all choices of $\lambda$ and weight $w$ and ECDF has the largest $\L^2$-error for all choices of $w$. Moreover, the performance of our estimator is very robust wrt $\lambda$ and $w$.
Thus, with a wide range of the basis contextual \CDF{}s, our estimator \eqref{eq:estimator_general} achieves good and robust performance in approximating target data distributions. 

\begin{figure}
    \centering
    \begin{subfigure}{0.32\linewidth}
    \centering
    \includegraphics[width=1.1\linewidth]{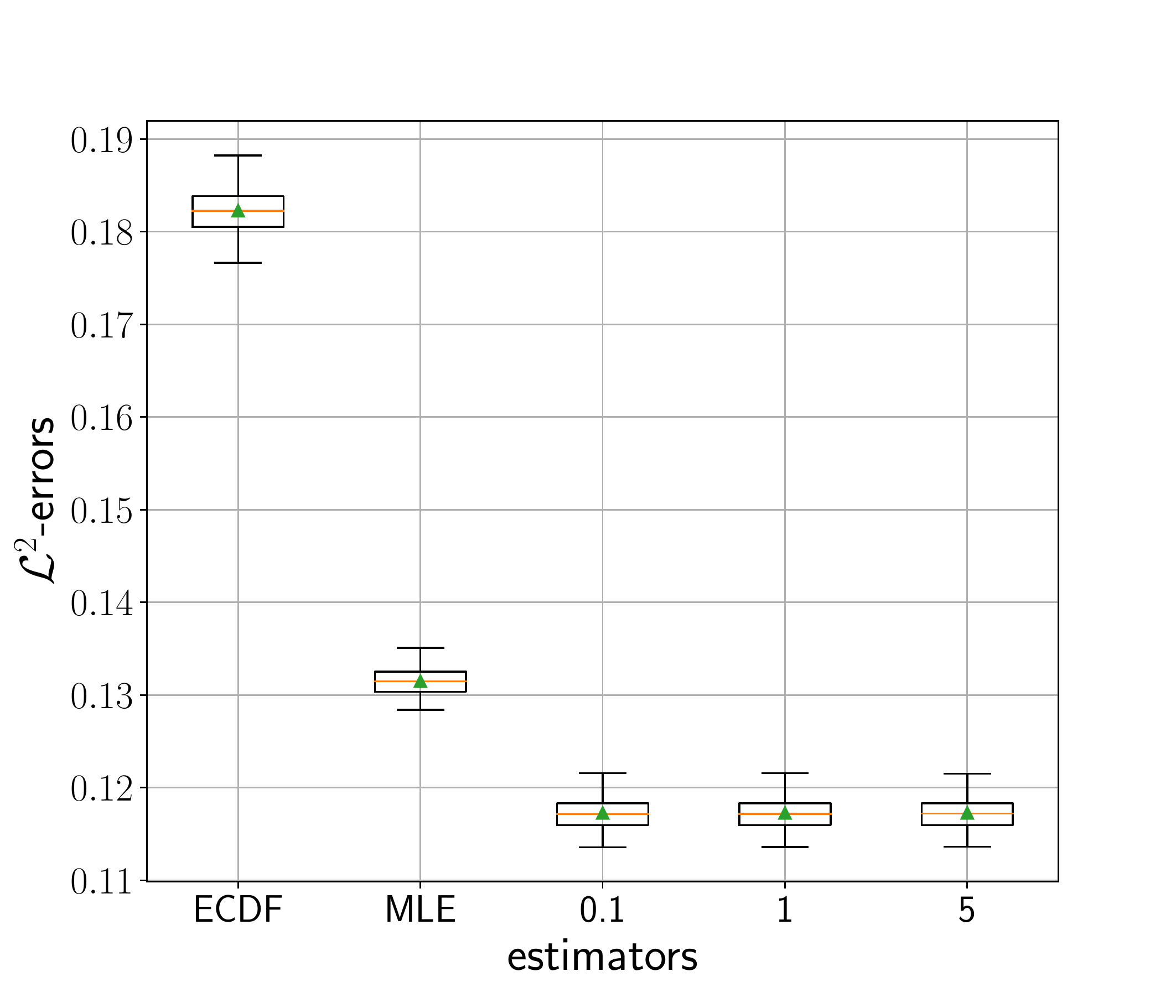}
    \caption{$w=0$}
    \label{fig:adult_w0}
    \end{subfigure}
    \begin{subfigure}{0.32\linewidth}
    \centering
    \includegraphics[width=1.1\linewidth]{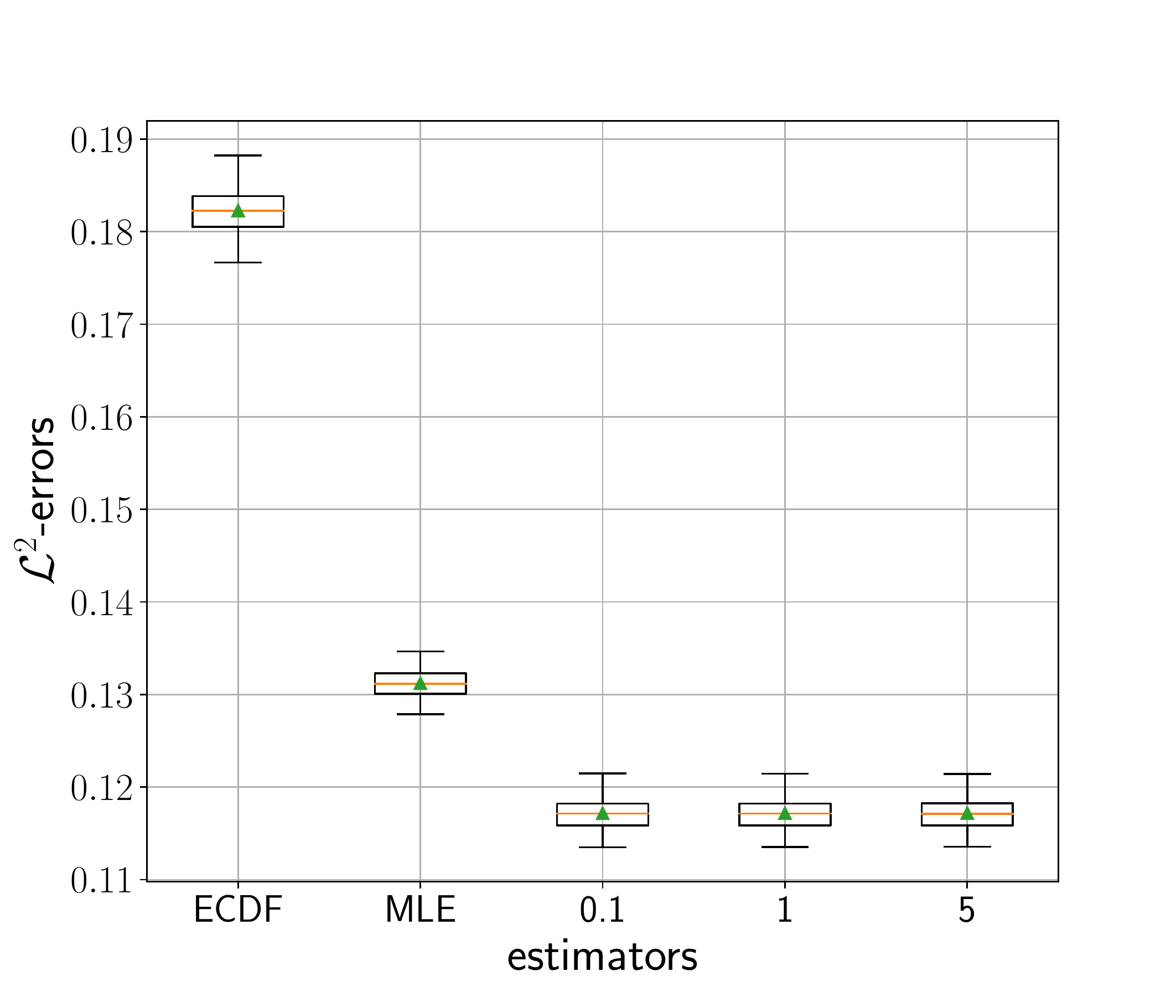}
    \caption{$w=0.5$}
    \label{fig:adult_w0.5}
    \end{subfigure}
    \begin{subfigure}{0.32\linewidth}
    \centering
    \includegraphics[width=1.1\linewidth]{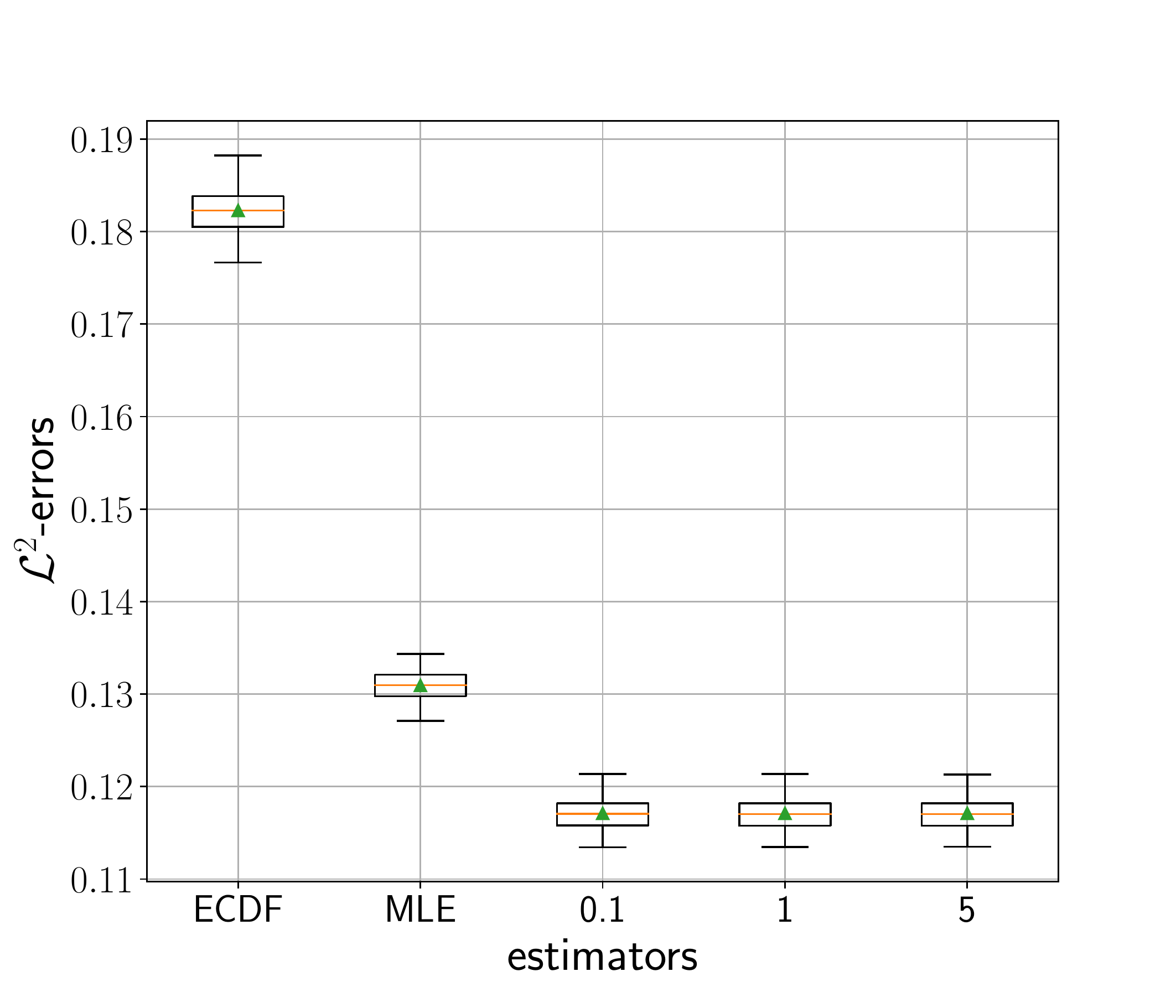}
    \caption{$w=1$}
    \label{fig:adult_w1}
    \end{subfigure}
\caption{Box plots of $\L^2$-errors in adult income data experiments. ``ECDF'' refers to the empirical CDF defined in \eqref{eq:eCDF}. ``0.1'', ``1.0'', and ``5.0'' refer to the estimator $\wh\theta_{\lambda}$ \eqref{eq:estimator_general} with $\lambda=0.1$, 1.0, and 5.0 respectively.}
\vskip -0.1in
\label{fig:exp_adult}
\end{figure}

\section{Conclusion} \label{Sec:Conc}
In this paper, we propose a linear model for contextual \CDF{}s and estimators for the coefficient parameter in this model. We prove $\wt O(\sqrt{d/n})$ upper bounds on the estimation error of our estimator under the adversarial and random settings, and show that the upper bounds are tight up to logarithmic factors by proving $\Omega(\sqrt{d/n})$ information theoretic lower bounds. 
Additionally, when a mismatch exists in the linear model, we prove that the estimation error of our estimator only increases by an amount commensurate with the mismatch error. 
Furthermore, we increase the generality of our linear model by expanding the parameter space into an infinite dimensional Hilbert space. Within this framework, we generalize our estimator and subsequently establish self-normalized upper bounds for this general estimator. 
Moreover, we elucidate the scaling of the estimation error of our estimator empirically and showcase its practical utility on real-world datasets. 

Our current work assumes that the bases are known a priori. So, a fruitful future research direction would be to focus on the basis selection problem for \CDF regression with possibly infinitely many base functions. 
More generally, it is a promising future direction to consider the adaptive setting where the learner seeks to learn both the basis $\Phi$ and the weight parameter $\theta_*$ in \eqref{eq:linear_model} by querying samples from $\phi_i$'s and $F=\theta_*^\top\Phi$. 
A closely related challenge is presented in multi-distribution learning where 
the learner strategically queries samples from different distributions with the objective of minimizing the expected risk uniformly across all distributions \citep{haghtalab2022demand,awasthi2023open}. 
Leveraging our existing results on estimating $\theta_*$ based on $\Phi$, we can decompose this learning problem into two subproblems: (i) the estimation of the basis \CDF{}s $\phi_i$'s from their respective samples, and (ii) the planning of the queries to samples from different distributions. 
As discussed in related works, numerous existing results address the estimation of a single contextual CDF based on certain distributional assumptions, placing the primary challenge in solving (ii). A natural idea is to initiate from the offline algorithm, akin to \citet{NEURIPS2022_3c2fd72a}: for each $i\in[d]$, the learner queries a pre-specified number of samples from each distribution to estimate $\phi_i$, and then queries samples from $F$ to estimate $\theta_*$ with the learned $\Phi$, where the allocation of queries to each distribution is determined by minimizing the upper bound of the estimation error of $\theta_*$ under the constraint of a constant sum. Though this approach is straightforward, we can conjecture whether the offline algorithm is optimal, considering that the learning of $\theta_*$ may not contribute much to the learning of each individual basis \CDF. Moving forward, we can explore online algorithms which adaptively determine the next oracle to query. The design of the online algorithms in \citet{NEURIPS2022_3c2fd72a} and \citet{awasthi2023open} are anticipated to offer valuable guidance in this direction.

\bibliography{TMLR_ref}
\bibliographystyle{tmlr}

\newpage

\appendix
\section{Discussion on the minimax lower bound for the estimation of \CDF{}s} \label{sec:lower_bound_F}
First, for any contextual \CDF{}s $F_1$ and $F_2$, define the uniform KS distance by
$$
\KS(F_1,F_2):=\sup_{x\in\X}\KS(F_1(x,\cdot),F_2(x,\cdot)).
$$
Similar to the minimax $\ell^2$-risk defined in \eqref{eq:minimax_risk_l2_def}, we can define the minimax risk in terms of the uniform KS distance for the estimation of the contextual \CDF $F$. For any distribution family $\cQ$ and the contextual CDF function $\Xi:\cQ\rightarrow [0,1]^{\X\times \R}$, the minimax risk in terms of the uniform KS distance is defined as 
\begin{align*}
\fR(\Xi(\cQ);\KS):=\inf_{\hat{\Xi}}\sup_{Q\in\cQ}\E_{z\sim Q}[\KS(\hat{\Xi}(z),\Xi(Q))].
\end{align*}
We follow the notation in Section \ref{sec:minimax_lower_bounds}. 
With a slight abuse of notation, let $F(P)=\theta(P)^\top\Phi$. 
For the random setting, define the distribution family $\cP_{0}:=\big\{\otimes_{j=1}^n P_X^{(j)}P_{Y|X;\theta}:\theta\in\R^d,\ P_X^{(j)}\in\D_{\X} \textup{ such that } \mu_{\min}(\Sigma_n)=0\big\}$.
Then, we have the following results.
\begin{proposition} \label{prop:lower_bound_F}
For any sequence $x^{1:n}\allowbreak =(x^{(1)},\dots,x^{(n)})\in\X^n$ such that $\mu_{\min}(U_n)=0$, we have
\begin{align} \label{eq:lower_bound_F_adversarial}
\fR(F(\cP_{x^{1:n}});\KS)=\Omega\left(1\right).
\end{align}
For the random setting (Scheme II), we have
\begin{align} \label{eq:lower_bound_F_random}
\fR(F(\cP_{0});\KS)=\Omega\left(1\right).
\end{align}
\end{proposition}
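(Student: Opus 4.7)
The plan is to apply Le Cam's two-point testing method. I would exhibit two contextual CDFs $F^{(1)}, F^{(2)}$ from the family under consideration that induce \emph{identical} joint laws on the sample $\{(x^{(j)}, y^{(j)})\}_{j \in [n]}$ yet satisfy $\KS(F^{(1)}, F^{(2)}) = 1$. Once such a pair is constructed, since the observation distributions then coincide exactly and $\KS$ is a metric, the triangle inequality yields, for any estimator $\hat F$,
\begin{align*}
2\max_{i \in \{1,2\}} \E\!\left[\KS(\hat F, F^{(i)})\right] \ge \E\!\left[\KS(\hat F, F^{(1)}) + \KS(\hat F, F^{(2)})\right] \ge \KS(F^{(1)}, F^{(2)}) = 1 ,
\end{align*}
where both expectations on the left collapse to a single expectation under the common observation law. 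This directly delivers the $\Omega(1)$ minimax lower bound.

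For the adversarial statement, I would pick any auxiliary context $x^* \in \X \setminus \{x^{(j)}\}_{j \in [n]}$ and any two points $a < b$ in $S$, fix an arbitrary CDF $G$ with range contained in $S$, and define $\Phi = [\phi_1, \ldots, \phi_d]^\top$ by setting $\phi_i(x, \cdot) = G$ whenever $x \neq x^*$, $\phi_1(x^*, t) = \indi\{t \ge a\}$, $\phi_d(x^*, t) = \indi\{t \ge b\}$, and letting $\phi_i(x^*, \cdot)$ equal any valid CDF with range in $S$ for the remaining coordinates $1 < i < d$. Under this construction, $\Phi(x^{(j)}, \cdot) = G \cdot \mathbf{1}$ for every $j \in [n]$, so $U_n = \big(\sum_{j} \int_S G^2\, d\meas\big) \mathbf{1}\mathbf{1}^\top$ has rank one and $\mu_{\min}(U_n) = 0$, matching the hypothesis (assuming $d \ge 2$). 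Taking $\theta^{(1)} = e_1$ and $\theta^{(2)} = e_d$ in $\Delta^{d-1}$ then gives $F^{(1)}(x^{(j)}, \cdot) = F^{(2)}(x^{(j)}, \cdot) = G$ at every sample context while $\KS(F^{(1)}(x^*, \cdot), F^{(2)}(x^*, \cdot)) = 1$, which are exactly the two properties required above.

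For the random-design statement, I would reduce to the adversarial construction by taking $P_X^{(j)} = \delta_{x^{(j)}}$ along the same sequence of contexts with the same $\Phi$. Then $\Sigma_n$ coincides with $U_n$ and therefore satisfies $\mu_{\min}(\Sigma_n) = 0$, placing the resulting product measure in $\cP_0$. Because the contexts are almost-surely fixed, the indistinguishability of the observation laws under $\theta^{(1)}$ and $\theta^{(2)}$ is preserved, and the two-point argument carries through unchanged.

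The main technical obstacles are mild bookkeeping rather than a conceptual difficulty: one must verify that $\Phi$ lies in $\fB_d$, i.e., each $\phi_i$ is jointly Borel measurable on $\X \times \R$ and $\phi_i(x, \cdot)$ is a genuine CDF with range contained in $S$ for every $x \in \X$. Both requirements are immediate because the singleton $\{x^*\}$ is Borel in the Polish space $\X$ and every piece of the piecewise definition is a bona fide CDF. It also must be checked that $\KS(F^{(1)}, F^{(2)}) = 1$ holds \emph{uniformly} over $x \in \X$, which is automatic since the two contextual CDFs agree at every context other than $x^*$.
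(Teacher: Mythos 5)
Your proposal is correct and follows essentially the same approach as the paper's proof: both exhibit a degenerate basis $\Phi$ whose coordinates all coincide at the observed contexts (giving $\mu_{\min}(U_n)=0$ and observationally indistinguishable laws for any $\theta\in\Delta^{d-1}$), place two basis CDFs as point masses at distinct locations under an unobserved context so that $\theta^{(1)}$ and $\theta^{(2)}$ induce contextual CDFs with uniform KS distance $1$, and then deduce the $\Omega(1)$ lower bound (the paper does so by a case analysis on the estimator's value at the unseen context, you by the equivalent triangle-inequality form of Le Cam's two-point method), with the random-design case reduced to the adversarial one via Dirac context laws in both arguments.
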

\begin{proof}[Proof of Proposition \ref{prop:lower_bound_F}]
According to the discussion below Theorem \ref{thm:lower_fixed}, the discussion above Corollary \ref{coro:lower_random}, and Appendix \ref{sec:LB_random}, it suffices to show \eqref{eq:lower_bound_F_adversarial} under the fixed design setting.

Let us consider the fixed design setting where $\phi_i(x,\cdot)$ are the CDFs of Bernoulli distributions for $i\in[d]$, $d\ge 2$. 
Let $q_{ji}$ denote the zero probability of the Bernoulli distribution with CDF $\phi_i(x^{(j)},\cdot)=\Phi_{ji}(\cdot)$. 
We set $S=[0,1]$ and $\meas=\Leb$. 
Then, we have $U_n=\sum_{j=1}^nq_jq_j^{\top}$ where $q_j=[q_{j1},\dots,q_{jd}]^\top$. 

For any $\theta_{*} \in \Delta^{d-1}$, we have $F(x^{(j)},t)=\theta_{*}^{\top} q_{j} $ under model \eqref{eq:linear_model} for any $t\in[0,1)$.
Suppose that $q_{ji}=q_{11}\in[0,1]$ for any $i\in [d]$ and $j\in[n]$. Then for any $\theta_*\in[0,1]$, the samples $y^{(j)}$'s for $j\in[n]$ are generated from the same distribution which is the Bernoulli distribution with success probability $1-q_{11}$.
We have $\mu_{\min}(U_n)=0$. Thus, the condition of the proposition is satisfied. 

For $n+1$, suppose that $q_{n+1,1}=1$ and $q_{n+1,2}=0$. Then, for any estimate $\check F_{n}$ of $F$, we have $\check F_{n}(x^{(n+1)},1/2)\in[0,1]$. 
If $\check F_{n}(x^{(n+1)},1/2)\in[0,1/2]$, consider the case where $\theta_{*}=\theta^{(1)}=[1,0,\dots,0]^\top$. Then, we have $|\check F_{n}(x^{(n+1)},1/2)-F(x^{(n+1)},1/2)|\ge 1/2$.
If $\check F_{n}(x^{(n+1)},1/2)\in(1/2,1]$, consider the case where $\theta_{*}=\theta^{(2)}=[0,1,\dots,0]^\top$. 
Then, we also have $|\check F_{n}(x^{(n+1)},1/2)-F(x^{(n+1)},1/2)|\ge 1/2$. 
Thus, we have
$$
\fR(F(\cP_{x^{1:n}});\KS)=\inf_{\check F_{n}}\sup_{P\in \mathcal P_{x^{1:(n)}}}\KS(\check F_n,F)=\Omega(1).
$$ 
Thus, the minimax risk in terms of the uniform KS distance of any estimate of $F$ is $\Omega(1)$.
\end{proof}
Recall that according to the discussion at the end of Section \ref{sec:upper_bounds}, for the plug-in estimate $\wh F_{\lambda}$ of $F$ using our projected estimator $\widetilde \theta_{\lambda}$, we have the $\wt O(\min\{1,d/\sqrt{1+\mu_{\min}(U_n)}\})$ upper bound in terms of the uniform KS distance.
Proposition \ref{prop:lower_bound_F} implies that this plug-in estimate $\wh F_{\lambda}$ is minimax optimal when $\mu_{\min}(U_n)=0$. 

It is worth noting that with the assumption that $\mu_{\min}(U_n)=\Theta(n)$, the 
$$
\wt O(\min\{1,d/\sqrt{1+\mu_{\min}(U_n)}\})=\wt O(d/\sqrt{n})
$$ 
upper bound of $\wh F_{\lambda}$ implies that the minimax lower bound in estimating $F$ is improved. Thus, we can see that $\mu_{\min}(U_n)$ or $\mu_{\min}(\Sigma_n)$ plays an important role in the estimation of $F$.

\section{Proofs of upper bounds for the finite dimensional model} \label{Sec:Proofs_upper_finite}

We first briefly expand on the notation for the proofs of our theoretical results.
For any topological space $A$, let $\B(A)$ denote the Borel $\sigma$-algebra of $A$. 
For any two measurable spaces, $(A_1,\mathcal{A}_1)$ and $(A_2,\mathcal{A}_2)$, a function $f:A_1\rightarrow A_2$ is $\mathcal{A}_1/\mathcal{A}_2$-measurable if for any $E\in\mathcal{A}_2$, we have $f^{-1}(E)\in\mathcal{A}_1$. 
When $\mathcal{A}_2$ is the Borel $\sigma$-algebra on $A_2$, we sometimes write $f$ is $\mathcal{A}_1$-measurable to mean that $f$ is $\mathcal{A}_1/\mathcal{A}_2$-measurable for brevity. 
When $\mathcal{A}_1$ is the Borel $\sigma$-algebra on $A_1$ and $\mathcal{A}_2$ is the Borel $\sigma$-algebra on $A_2$, we sometimes simply write $f$ is measurable to mean that $f$ is $\mathcal{A}_1/\mathcal{A}_2$-measurable for brevity. 
For any two $\sigma$-finite measure spaces $(A_1,\mathcal{A}_1,\nu_1)$ and $(A_2,\mathcal{A}_2,\nu_2)$, let $A_1\times A_1:=\{(a_1,a_2):a_1\in A_1,a_2\in A_2\}$ denote the product space, let $\mathcal{A}_1\otimes\mathcal{A}_2:=\sigma(\{E_1\times E_2:E_1\in\mathcal{A}_1,E_2\in\mathcal{A}_2\})$ denote the product $\sigma$-algebra of $\mathcal{A}_1$ and $\mathcal{A}_2$ on $A_1\times A_2$, and let $\nu_1\otimes \nu_2$ denote the product measure of $\nu_1$ and $\nu_2$ on $(A_1\times A_2,\mathcal{A}_1\otimes\mathcal{A}_2)$ (i.e., $\nu_1\otimes \nu_2(E_1\times E_2)=\nu_1(E_1)\nu_2(E_2)$ for any $E_1\in\mathcal{A}_1$ and $E_2\in\mathcal{A}_2$) whose existence is guaranteed by Carath\'{e}odory's extension theorem. Then, $(A_1\times A_2,\mathcal{A}_1\otimes\mathcal{A}_2,\nu_1\otimes\nu_2)$ is the product measure space of $(A_1,\mathcal{A}_1,\nu_1)$ and $(A_2,\mathcal{A}_2,\nu_2)$. When $A_1=A_2$ and $\mathcal{A}_1=\mathcal{A}_2$, we will write $\mathcal{A}_1^2$ to represent $\mathcal{A}_1\otimes\mathcal{A}_1$. When $A_1=A_2$, $\mathcal{A}_1=\mathcal{A}_2$, and $\nu_1=\nu_2$, we will write $\nu_1^2$ to represent $\nu_1\otimes\nu_1$.

Note that according to our assumptions,  $\X$ is a Polish space equipped with the Borel $\sigma$-algebra $\B(\X)$, $\phi_i: \X\times S\rightarrow[0,1]$ is $(\B(\X)\otimes\B(S))/\B([0,1])$-measurable for each $i\in[d]$, and $e: \X\times S\rightarrow[-1,1]$ is $(\B(\X)\otimes\B(S))/\B([-1,1])$-measurable.

In the proofs of the main results,
we consider an arbitrary probability measure $\meas$ on $(S,\B(S))$. Since there is no ambiguity, for brevity, we omit ``$d\meas$'' in the notation for integrals.
Note that some quantities defined below depend on the chosen probability measure $\meas$.

\subsection{Proofs of Theorem \ref{thm:upper_fixed} and Proposition \ref{prop:upper_fixed}} \label{sec:RUB_fixed}
In the proofs of Theorem \ref{thm:upper_fixed} (Appendix \ref{subsec:RUB_fixed_reg}) and Proposition \ref{prop:upper_fixed} (Appendix \ref{subsec:RUB_fixed_unreg}), we use the following measure-theoretic treatment of probability spaces. 
(The notation we use can be found at the beginning of Appendix \ref{Sec:Proofs_upper_finite}.)
The underlying probability space for the sample $\{(x^{(j)},y^{(j)})\}_{j\in\N}$ is $([0,1]^\N,\B([0,1])^\N,\prob)$, where $[0,1]^{\N}=\{(\xi^{(1)},\xi^{(2)},\dots):\xi^{(j)}\in [0,1]\}$, and,
\begin{align*}
\B([0,1])^\N:=\sigma(\{B_1\times\cdots\times B_n:B_1,\dots,B_n\in\B([0,1]),n\in\N\})
\end{align*}
is the $\sigma$-algebra generated by all finite products of Borel sets on $[0,1]$, and $\prob|_{[0,1]^n}=\Leb^n=\otimes_{j=1}^n \Leb$ with $\Leb$ being the Lebesgue measure on $([0,1],\B([0,1]))$.
The existence of the above probability space is guaranteed by Kolmogorov’s extension theorem. 
Define the random vector $\Xi=(\Xi^{(j)})_{j\in\N}$ on $([0,1]^\N,\B([0,1])^\N)$ to be the identity mapping, i.e.,
$\Xi:[0,1]^\N\rightarrow [0,1]^\N$, $(\xi^{(j)})_{j\in\N}\mapsto (\xi^{(j)})_{j\in\N}$. Then, $\prob$ is also the probability measure on $([0,1]^\N,\B([0,1]^\N))$ induced by $\Xi$, and $\Xi$ follows the uniform distribution on $[0,1]^\N$.
Suppose $\{(x^{(j)},y^{(j)})\}_{j\in\N}$ is sampled according to Scheme I with $F$ defined in \eqref{eq:linear_model}. 
Then, according to \citet[Proposition 10.7.6]{measure2007Bogachev}, for each $j\in\N$, there exist some $(\B(\X)\otimes\B(S))^{j-1}\otimes\B([0,1])/\B(\X)$-measurable function,
\begin{align*}
    h_{X}^{(j)}:\ (\X\times S)^{j-1}\times[0,1]\rightarrow\X,
\end{align*}
and $(\B(\X)\otimes\B(S))^{j-1}\otimes\B(\X)\otimes\B([0,1])/\B(S)$-measurable function 
\begin{align*}
 h_{Y}^{(j)}:\ (\X\times S)^{j-1}\times\X\times[0,1]\rightarrow S   
\end{align*}
such that
\begin{align*}
    x^{(j)}&=h_X^{(j)}(x^{(1)},y^{(1)},\dots,x^{(j-1)},y^{(j-1)},\Xi^{(2j-1)}),\\
    y^{(j)}&=h_Y^{(j)}(x^{(1)},y^{(1)},\dots,x^{(j-1)},y^{(j-1)},x^{(j)},\Xi^{(2j)}),
\end{align*}
and,
\begin{align} \label{eq:unbiased_cond}
\E\left[\indi\left\{h_Y^{(j)}(x^{(1)},y^{(1)},\dots,x^{(j-1)},y^{(j-1)},x^{(j)},\Xi^{(2j)})\le t\right\} \big|\F_{j-1}\right]=\theta_*^\top\Phi(x^{(j)},t)
\end{align} 
for any $t\in S$ and $j\in \N$, where 
$\F_j:=\sigma\left(\big\{\Xi^{(k)}:k\in[2j+1]\big\}\right)$ is the sub $\sigma$-algebra of $\B([0,1])^{\N}$ generated by the random variables $\Xi^{(1)},\dots,\Xi^{(2j+1)}$. 
By definition, we have that $y^{(j)}$ is $\F_j/\B(S)$-measurable for each $j\in\N$ and $\{\F_j\}_{j=0}^{\infty}$ forms a filtration of $([0,1]^\N,\B([0,1])^\N,\prob)$. Therefore, $\left\{y^{(j)}\right\}_{j\in\N}$ is $\left\{\F_j\right\}_{j\in\N}$-adapted.

By the above construction, for each $j\in\N$, $x^{(j)}:\ [0,1]^\N\rightarrow \X$, $\xi\mapsto x^{(j)}(\xi)$ is a $\F_{j-1}/\B(\X)$-measurable function. Thus, for each $j\in\N$, the function $\wt h_X:$ $[0,1]^\N\times S\rightarrow \X\times S$, $(\xi,t)\mapsto (x^{(j)}(\xi),t)$ is $(\F_{j-1}\otimes\B(S))/(\B(\X)\otimes\B(S))$-measurable. 
Since $\phi_i:$ $\X\times S\rightarrow [0,1]$, $(x,t)\mapsto \phi_i(x,t)$ is $\B(\X)\otimes\B(S)/\B([0,1])$-measurable, we know that $\wt \phi_i^{(j)}:$ $[0,1]^\N\times S\rightarrow[0,1]$, $(\xi,t)\mapsto \phi_i(x^{(j)}(\xi),t)$ is $(\F_{j-1}\otimes\B(S))/\B([0,1])$-measurable. 
Therefore, the vector-valued function $\Phi_j:$ $[0,1]^\N\times S\rightarrow[0,1]^d$, $(\xi,t)\mapsto [\phi_1(x^{(j)}(\xi),t),\dots,\phi_d(x^{(j)}(\xi),t)]=[\wt\phi_1^{(j)}(\xi,t),\dots,\wt\phi_d^{(j)}(\xi,t)]$ is $(\F_{j-1}\otimes\B(S))/\B([0,1]^d)$-measurable for each $j\in\N$.

\subsubsection{Proof of Theorem \ref{thm:upper_fixed}} \label{subsec:RUB_fixed_reg}
\begin{proof}
Define $V_j:=\int_S \tI_{y^{(j)}}\Phi_j-\int_S \theta_*^{\top}\Phi_j\Phi_j$. 
Since we have proved above that for each $j\in\N$, $y^{(j)}$ is $\F_j$-measurable and the function 
$S \times S \ni (y,t)\mapsto \indi\{y\le t\} \in [0,1]$ is $\B(S)^2$-measurable, we have that $\tI_{y^{(j)}}:$ $[0,1]^\N\times S\rightarrow [0,1]$, $\tI_{y^{(j)}}(\xi,t) = \indi\{y^{(j)}(\xi)\le t\}$
is $\F_j\otimes \B(S)$-measurable. 
Since we have also proved above that for each $j\in\N$, $\Phi_j$ is $\F_{j-1}\otimes\B(S)$-measurable, 
by Fubini's theorem and \eqref{eq:unbiased_cond}, we have that $\int_S\theta_*^\top\Phi_j\Phi_j$ is $\F_{j-1}$-measurable, $V_j$ is $\F_j$-measurable, and
\begin{align}
\notag
\E[V_j|\F_{j-1}]=&\E\left[\int_S\tI_{y^{(j)}}\Phi_j\big|\F_{j-1}\right]-\int_S \theta_*^{\top}\Phi_j\Phi_j
\\=&\int_S\E\left[\tI_{y^{(j)}}\big|\F_{j-1}\right]\Phi_j-\int_S \theta_*^{\top}\Phi_j\Phi_j \notag
\\=&\int_S \theta_*^{\top}\Phi_j\Phi_j-\int_S \theta_*^{\top}\Phi_j\Phi_j \notag
\\=&0.    \label{eq:Vj_unbiased}
\end{align}
For any $\alpha\in\R^d$, define $M_0(\alpha)=1$. Then, $M_0(\alpha)$ is $\F_0$-measurable for any $\alpha\in\R^d$.
For $n\in\N$,  define $M_n(\alpha):=\exp\left\{\alpha^\top W_n-\frac{1}{2}\|\alpha\|^2_{U_n}\right\}$ with $W_n:=\sum_{j=1}^n V_j$ and $U_n=\sum_{j=1}^n\int_S\Phi_j\Phi_j^\top$.
Since $\Phi_j$ is $\F_{j-1}\otimes\B(S)$-measurable and $V_j$ is $\F_{j}$-measurable, by Fubini's theorem, $U_n$ is $\F_{n-1}$-measurable and $W_n$ is $\F_n$-measurable for each $n\in\N$. 
Thus, $M_n(\alpha)$ is also $\F_n$-measurable for any $\alpha\in\R^d$ and $n\in\N$. Moreover, note that the function $\R^d\times \R^d\times \R^{d\times d}\rightarrow (0,\infty)$, $(\alpha, W, U)\mapsto \exp\left\{\alpha^\top W-\frac{1}{2}\|\alpha\|^2_{U}\right\}$ is measurable. Hence, $M_n:$ $[0,1]^\N\times \R^d\rightarrow (0,\infty)$, $(\xi,\alpha)\mapsto \exp\big\{\alpha^\top W_n(\xi)-\frac{1}{2}\|\alpha\|^2_{U_n(\xi)}\big\}$ is $\F_{n}\otimes \B(\R^d)$-measurable.
Thus, for any $\alpha\in\R^d$, $\{M_n(\alpha)\}_{n\ge 0}$ is $\{\F_n\}_{n\ge0}$-adapted.
Besides, for any $\alpha\in\R^d$ and $n\in \N$, we have
\begin{align}
\notag
\E[M_n(\alpha)|\mathcal{F}_{n-1}]=&M_{n-1}(\alpha)\E\left[\exp\left\{\alpha^\top V_n-\frac{1}{2}\alpha^{\top}\left(\int_S \Phi_n\Phi_n^\top\right)\alpha\right\}\middle|\F_{n-1}\right]
\\ \label{eq:Mn_super0} =&
M_{n-1}(\alpha)\frac{\E\left[\exp\left\{\alpha^\top V_n\right\}|\F_{n-1}\right]}{\exp\left\{\frac{1}{2}\int_S\left(\alpha^\top \Phi_n\right)^2\right\}}.
\end{align}
Since $-\int_S|\alpha^\top\Phi_n|\le \alpha^{\top}V_n\le \int_S|\alpha^\top \Phi_n|$ almost surely (a.s.), 
we have
\begin{align}
\label{eq:Vn_Hoeffding}
\E\left[\exp\left\{\alpha^\top V_n\right\}|\F_{n-1}\right]&\le \exp\left\{\frac{4}{8}\left(\int_S|\alpha^\top\Phi_n|\right)^2\right\}\\ \label{eq:Vn_cs} &\le
\exp\left\{\frac{1}{2}\int_S\left(\alpha^\top\Phi_n\right)^2\right\},
\end{align}
where \eqref{eq:Vn_Hoeffding} follows from Hoeffding's lemma~\citep{hoeffding1963probability}, and \eqref{eq:Vn_cs} follows from the Cauchy-Schwarz inequality and the fact that $\int_S 1 = \meas(S)=1$.
Then, by \eqref{eq:Mn_super0} and \eqref{eq:Vn_cs}, we have
\begin{equation} \label{eq:Mn_super}
\begin{aligned}
\E[M_n(\alpha)|\mathcal{F}_{n-1}]\le &
M_{n-1}(\alpha)\frac{\exp\left\{\frac{1}{2}\int_S\left(\alpha^\top \Phi_n\right)^2\right\}}
{\exp\left\{\frac{1}{2}\int_S\left(\alpha^\top \Phi_n\right)^2\right\}}=
M_{n-1}(\alpha).
\end{aligned}
\end{equation}
Since $M_0(\alpha)=1$ and $M_n(\alpha)\ge 0$, for any $\alpha \in \R^d$, $\{M_n(\alpha)\}_{n\ge 0}$ is a super-martingale.

Now for any $n\ge 0$, define $\bM_n:=\int_{\R^d}M_n(\alpha)h(\alpha)d\alpha$, with $d\alpha$ denoting $\Leb(d\alpha)$ where the Lebesgue measure is on $(\R^d,\B(\R^d))$ and
\begin{equation}
h(\alpha)=\left(\frac{\lambda}{2\pi}\right)^{\frac{d}{2}}\exp\left\{-\frac{\lambda}{2}\alpha^{\top}\alpha\right\}
=\left(\frac{\lambda}{2\pi}\right)^{\frac{d}{2}}\exp\left\{-\frac{1}{2}\|\alpha\|^2_{\lambda I_d}\right\}.
\end{equation}
Recall that $U_n(\lambda)=U_n+\lambda I_d$. Then, for $n\ge1$, we have
\begin{align}
\bM_n=&\left(\frac{\lambda}{2\pi}\right)^{\frac{d}{2}}\int_{\R^d}\exp\left\{\alpha^\top W_n-\frac{1}{2}\|\alpha\|^2_{U_n}-\frac{1}{2}\|\alpha\|^2_{\lambda I_d}\right\}d\alpha \notag
\\ \label{eq:Mn_bar}
=&\left(\frac{\lambda}{2\pi}\right)^{\frac{d}{2}}\int_{\R^d}\exp\left\{\frac{1}{2}\|W_n\|_{U_n(\lambda)^{-1}}^2-
\frac{1}{2}\|\alpha-U_n(\lambda)^{-1}W_n\|^2_{U_n(\lambda)}\right\}d\alpha
\\ \notag
=&\frac{\lambda^{\frac{d}{2}}}{\det(U_n(\lambda))^{\frac{1}{2}}} \exp\left(\frac{1}{2}\|W_n\|_{U_n(\lambda)^{-1}}^2\right) \cdot \\
\nonumber &\quad\quad\frac{1}{(2\pi)^{\frac{d}{2}}\det(U_n(\lambda))^{-\frac{1}{2}}}
\int_{\R^d}\exp\left\{-\frac{1}{2}\|\alpha-U_n(\lambda)^{-1}W_n\|^2_{U_n(\lambda)}\right\}d\alpha\\=&
\label{eq:Mn_bar_val}
\frac{\lambda^{\frac{d}{2}}}{\det(U_n(\lambda))^{\frac{1}{2}}}
\exp\left(\frac{1}{2}\|W_n\|_{U_n(\lambda)^{-1}}^2\right),
\end{align}
where \eqref{eq:Mn_bar} follows from the calculation below:
\begin{align}
\notag
&\|W_n\|_{U_n(\lambda)^{-1}}^2-\|\alpha-U_n(\lambda)^{-1}W_n\|^2_{U_n(\lambda)}
\\ \notag
&=\|W_n\|_{U_n(\lambda)^{-1}}^2-
\left(\alpha^{\top}-W_n^{\top}U_n(\lambda)^{-1}\right)U_n(\lambda)\left(\alpha-U_n(\lambda)^{-1}W_n\right)
\\ \notag
&=
\|W_n\|_{U_n(\lambda)^{-1}}^2-
\|\alpha\|_{U_n(\lambda)}-\|W_n\|_{U_n(\lambda)^{-1}}^2+2\alpha^\top W_n
\\
&= \label{eq:W_n_alpha_equality}
2\alpha^\top W_n-\|\alpha\|_{\lambda I_d}-\|\alpha\|_{U_n}.
\end{align}
For $n=0$, $\bM_0=\int_{\R^d}M_0(\alpha)h(\alpha)d\alpha=\int_{\R^d}h(\alpha)d\alpha=1$.

Moreover, since we have shown that $M_n$ is $\F_n\otimes\B(\R^d)$-measurable, by Fubini's theorem and \eqref{eq:Mn_super}, $\wb M_n$ is $\F_n$-measurable for any $n\ge0$ and for any $n\in\N$,
\begin{align}
\notag 
\E\left[\bM_n|\f_{n-1}\right]=&
\E\left[\int_{\R^d}M_n(\alpha)h(\alpha)d\alpha\middle|\f_{n-1}\right]
\\ \notag =&
\int_{\R^d}\E\left[M_n(\alpha)|\f_{n-1}\right]h(\alpha)d\alpha
\\ \notag \le &
\int_{\R^d}M_{n-1}(\alpha)h(\alpha)d\alpha
\\=&
\bM_{n-1}. \label{eq:Mn_bar_super}
\end{align}
Thus, $\{\bM_n\}_{n\ge0}$ is also a super-martingale. By Doob's maximal inequality for super-martingales,
\begin{equation*}
\prob\left[\sup_{n\in\N}\bM_n\ge \delta\right]\le \frac{\E[\bM_0]}{\delta}=\frac{1}{\delta}
\end{equation*}
which, together with \eqref{eq:Mn_bar_val}, implies that
\begin{equation} \label{eq:Wn_bound}
\prob\left[\exists n\in\N \text{ s.t. } \|W_n\|_{U_n(\lambda)^{-1}}\ge \sqrt{\log\frac{\det(U_n(\lambda))}{\lambda^d}+2\log\frac{1}{\delta}}\right]\le \delta.
\end{equation}

Since 
\begin{align} \label{eq:theta_true}
\theta_*=\left(\sum_{j=1}^n\int_{S}\Phi_j\Phi_j^\top + \lambda I_d\right)^{-1}
\left(\sum_{j=1}^n\int_{S}\Phi_j\Phi_j^\top\theta_* +\lambda \theta_*\right) ,
\end{align}
by \eqref{eq:solution}, we have
\begin{equation*}
\wh\theta_{\lambda}-\theta_*=U_n(\lambda)^{-1}\left(\sum_{j=1}^n V_j-\lambda\theta_*\right)
=U_n(\lambda)^{-1}W_n-U_n(\lambda)^{-1}(\lambda\theta_*).
\end{equation*}
Thus, by the triangle inequality,
\begin{align}
\notag
\|\wh\theta_\lambda-\theta_*\|_{U_n(\lambda)}
&\le \|U_n(\lambda)^{-1}W_n\|_{U_n(\lambda)}
+\lambda\|U_n(\lambda)^{-1}\theta_*\|_{U_n(\lambda)}
\\ \notag &=
\|W_n\|_{U_n(\lambda)^{-1}}
+\|\lambda\theta_*\|_{U_n(\lambda)^{-1}}
\\ \label{eq:error_bound0} &\le 
\|W_n\|_{U_n(\lambda)^{-1}}+\sqrt{\lambda}\|\theta_*\| ,
\end{align}
where the last inequality follows from the facts that $U_n(\lambda)^{-1}=\frac{1}{\lambda}\left(I-U_n(\lambda)^{-1}U_n\right)$ and $\|I-U_n(\lambda)^{-1}U_n\|_2\le 1$.

By \eqref{eq:Wn_bound} and \eqref{eq:error_bound0}, with probability at least $1-\delta$, for all $n\in\N$, we have
\begin{equation} \label{eq:error_bound1}
\begin{aligned}
\|\wh\theta_\lambda-\theta_*\|_{U_n(\lambda)}\le 
\sqrt{\log\frac{\det(U_n(\lambda))}{\lambda^d}+2\log\frac{1}{\delta}}+\sqrt{\lambda}\|\theta_*\|.
\end{aligned}
\end{equation}

By the arithmetic mean-geometric mean (AM–GM) inequality, we have
\begin{align*}
\log\det(U_n(\lambda))&\le d\log\left(
\frac{\trace\left(U_n(\lambda)\right)}{d}\right)\\&=
d\log\left(\frac{1}{d} \trace\left(\sum_{j=1}^n\int_S\Phi_j\Phi_j^\top+\lambda I_d\right)\right).
\end{align*}
Since 
\begin{align*}
\trace\left(\sum_{j=1}^n\int_S\Phi_j\Phi_j^\top+\lambda I_d\right)&=
d\lambda+\sum_{j=1}^n\int_S\trace\left(\Phi_j\Phi_j^\top\right)\\&=
d\lambda+\sum_{j=1}^n\int_S\|\Phi_j\|_2^2\\&\le
d\lambda+nd,
\end{align*}
we have
\begin{equation} \label{eq:logdet_bound}
\begin{aligned}
\log\det(U_n(\lambda))\le
d\log\left(\frac{1}{d} \left(d\lambda+nd\right)\right)=
d\log\left(\lambda+n\right).
\end{aligned}
\end{equation}
By \eqref{eq:error_bound1} and \eqref{eq:logdet_bound}, for any $\lambda>0$, $\delta\in(0,1)$, with probability at least $1-\delta$, for all $n\in\N$, we have
\begin{equation} \label{eq:error_bound2}
\begin{aligned}
\|\wh\theta_\lambda-\theta_*\|_{U_n(\lambda)}\le 
\sqrt{d\log\left(1+\frac{n}{\lambda}\right)+2\log\frac{1}{\delta}}+\sqrt{\lambda}\|\theta_*\|.
\end{aligned}
\end{equation}
Thus, Theorem \ref{thm:upper_fixed} is proved for any probability measure $\meas$ on $(S,\B(S))$.
\end{proof}

\subsubsection{Proof of Proposition \ref{prop:upper_fixed}} \label{subsec:RUB_fixed_unreg}
\begin{proof}
When $U_N$ is non-singular for some fixed $N\in\N$, since $\int_S\Phi_j\Phi_j^\top$ is positive semi-definite for any $j\in\N$, it immediately follows that $U_n$ are non-singular for any $n\ge N$. Then, $\wh\theta$ is unique and is given by \eqref{eq:solution} with $\lambda=0$ for any $n\ge N$, i.e., 
$$
\wh\theta=\left(\sum_{j=1}^n\int_{S}\Phi_j\Phi_j^\top \right)^{-1}\left(\sum_{j=1}^n\int_{S}\textup{I}_{y^{(j)}}\Phi_j\right)
$$
for any $n\ge N$. Since
\begin{align} \label{eq:theta_true_unreg}
\theta_*=\left(\sum_{j=1}^n\int_{S}\Phi_j\Phi_j^\top \right)^{-1}\left(\sum_{j=1}^n\int_{S}\Phi_j\Phi_j^\top\theta_*\right),
\end{align}
we have
\begin{equation} \label{eq:unreg_error_def}
\wh\theta-\theta_*=U_n^{-1}W_n.
\end{equation}

By definition and the triangle inequality for integrals, we have
\begin{align} \label{eq:Vj_ub}
\|V_j\|\le \int_S|\tI_{y^{(j)}}-\theta_*^{\top}\Phi_j|\|\Phi_j\|\le 
\int_S \sqrt{d}=\sqrt{d},
\end{align}
which also implies that 
\begin{align} \label{eq:Vj2_ub}
\sum_{j=1}^n\E[\|V_j\|^2|\F_{j-1}]\le \sum_{j=1}^n d
=nd .
\end{align}
Since $W_n=\sum_{j=1}^n V_j$, by \eqref{eq:Vj_unbiased}, \eqref{eq:Vj_ub}, \eqref{eq:Vj2_ub}, and \citet[Proposition 1.2]{10.1214/ECP.v17-2079},
we have
\begin{align*}
\prob[\|W_n\|\ge \sqrt{nd}+\sqrt{8nda}+(4/3)\sqrt{d}a] \le e^{-a}
\end{align*}
for any $a>0$. Thus, for any $\delta\in (0,1)$ and $n\in\N$, with probability at least $1-\delta$, we have
\begin{align} \label{eq:Wn_ub}
\|W_n\|\le \sqrt{nd}+\sqrt{8nd\log\frac{1}{\delta}}+\frac{4}{3}\sqrt{d}\log\frac{1}{\delta}.
\end{align}
Since $U_n$ is positive definite, by \eqref{eq:Wn_ub}, we have
\begin{align} \label{eq:unreg_ub1}
\|W_n\|_{U_n^{-1}}=
\sqrt{W_n^\top U_n^{-1} W_n}
\le \frac{\|W_n\|}{\sqrt{\mu_{\min}(U_n)}}
\le \frac{\sqrt{nd}+\sqrt{8nd\log\frac{1}{\delta}}+\frac{4}{3}\sqrt{d}\log\frac{1}{\delta}}{\sqrt{\mu_{\min}(U_n)}}
\end{align}
with probability at least $1-\delta$. 
Hence, by \eqref{eq:unreg_error_def}, and \eqref{eq:unreg_ub1}, we have that for any $n\ge N$,
\begin{align*}
\|\wh \theta-\theta_*\|_{U_n}
= \|U_n^{-1}W_n\|_{U_n}=
\|W_n\|_{U_n^{-1}}
\le \frac{\sqrt{nd}+\sqrt{8nd\log\frac{1}{\delta}}+\frac{4}{3}\sqrt{d}\log\frac{1}{\delta}}{\sqrt{\mu_{\min}(U_n)}}
\end{align*}
with probability at least $1-\delta$. 
In conclusion, Proposition \ref{prop:upper_fixed} is proved for any probability measure $\meas$ on $(S,\B(S))$.

\end{proof}


\subsection{Proofs of Theorem \ref{thm:upper_random} and Proposition \ref{prop:upper_random}}
\label{sec:RUB_random}
In this section, we follow the same construction of the probability space as in Appendix \ref{sec:RUB_fixed}. In particular, noting that Scheme II is a special case of Scheme I, we consider the underlying probability space for the sample $\{(x^{(j)},y^{(j)})\}_{j\in\N}$ to be $([0,1]^\N,\B([0,1])^\N,\prob)$. Define the random vector $\Xi$ to be the identity mapping from $[0,1]^\N$ onto itself as in Appendix \ref{sec:RUB_fixed}. Then, $\Xi$ follows the uniform distribution on $[0,1]^\N$.
Suppose $\{(x^{(j)},y^{(j)})\}_{j\in\N}$ is sampled according to Scheme II with $F$ defined in \eqref{eq:linear_model}. Then, according to \citet[Proposition 10.7.6]{measure2007Bogachev}, for each $j\in\N$, there exist some $\B([0,1])/\B(\X)$-measurable function $h_{X}^{(j)}:\ [0,1]\rightarrow\X$ and $\B(\X)\otimes\B([0,1])/\B(S)$-measurable function $h_{Y}^{(j)}:\ \X\times[0,1]\rightarrow S$ 
such that $x^{(j)}=h_X^{(j)}(\Xi^{(2j-1)})$, $y^{(j)}=h_Y^{(j)}(x^{(j)},\Xi^{(2j)})$, and 
\begin{align*}
\E\left[\indi\left\{h_Y^{(j)}(x^{(j)},\Xi^{(2j)})\le t\right\} \middle|\F_{j-1}\right]=\theta_*^\top\Phi(x^{(j)},t)
\end{align*}
for any $t\in S$ and $j\in \N$, where 
$\F_j:=\sigma\left(\big\{\Xi^{(k)}:k\in[2j+1]\big\}\right)$ is the sub $\sigma$-algebra of $\B([0,1])^{\N}$ generated by the random variables $\Xi^{(1)},\dots,\Xi^{(2j+1)}$. 
With the same proof provided at the beginning of Appendix \ref{sec:RUB_fixed}, $\left\{y^{(j)}\right\}_{j\in\N}$ is $\left\{\F_j\right\}_{j\in\N}$-adapted and $\Phi_j$ is $(\F_{j-1}\otimes\B(S))/\B([0,1]^d)$-measurable for each $j\in\N$.
Moreover, $\{x^{(j)}\}_{j\in\N}$ is independent, which implies that $\{\Phi_j(t)\}_{j\in\N}$ is independent for any $t\in S$ and $\{y^{(j)}\}_{j\in\N}$ is independent.

\subsubsection{Proof of Theorem \ref{thm:upper_random}} \label{subsec:RUB_random_unreg}
\begin{proof}
By definition and Fubini's theorem, we have $\Sigma^{(j)}=\E\left[\int_{S}\Phi_j\Phi_j^\top \right]=\int_{S}\E\left[\Phi_j\Phi_j^\top \right]$ for each $j\in[n]$, $\Sigma_n=\sum_{j=1}^n\Sigma^{(j)}=\E\left[U_n\right]$.

For the proof, we need to define
$\Delta_n:=\Sigma_n^{-\frac{1}{2}}\left(U_n-\Sigma_n\right)\Sigma_n^{-\frac{1}{2}}$, 
$\wt \Sigma_n^{(j)}:=\Sigma_n^{-\frac{1}{2}}\Sigma^{(j)}\Sigma_n^{-\frac{1}{2}}$, and 
$\wt\Phi_j(t):=\Sigma_n^{-\frac{1}{2}}\Phi_j(t)$ for any $t\in\R$ and $j\in[n]$.
For any $j\in \N$, we have
\begin{align} \label{eq:simga_j_bd0}
\|\Sigma^{(j)}\|_2=\mu_{\max}\left(\Sigma^{(j)}\right)=\mu_{\max}\left(\E\left[\int_S\Phi_j\Phi_j^\top\right]\right)\le \E\left[\int_S \|\Phi_j\|_2^2\right]\le d.
\end{align}
By the assumption that $\mu_{\min}(\Sigma^{(j)})\ge \mineig$ for all $j\in\N$ and Weyl's inequality~\citep{weyl1912asymptotische}, we have
\begin{equation} \label{eq:sigma_n_bd}
\mu_{\min}\left(\Sigma_n\right)\ge n\mineig.
\end{equation}
By \eqref{eq:simga_j_bd0} and \eqref{eq:sigma_n_bd}, for each $j\in[n]$, we have
\begin{equation} \label{eq:Sigma_j_bd}
\mu_{\max}\left(\wt\Sigma_n^{(j)}\right)\le \frac{\mu_{\max}\left(\Sigma^{(j)}\right)}{\mu_{\min}\left(\Sigma_n\right)}\le 
\frac{d}{n\mineig}.
\end{equation}

Consider the following random matrix for $j\in[n]$:
\begin{align*}
Z_j:=\int_S \wt\Phi_j\wt\Phi_j^{\top}
-\wt\Sigma^{(j)}_n=
\Sigma_n^{-\frac{1}{2}}\left(
\int_{S}\Phi_j\Phi_j^\top-\Sigma^{(j)}\right)\Sigma_n^{-\frac{1}{2}}.
\end{align*}
We have that
\begin{align}
\label{eq:Zj_Delta}
&\Delta_n=\sum_{j=1}^n Z_j,
\end{align}
and for any $j\in[n]$, we have,
\begin{align}
\label{eq:Zj_unbiased}
&\E[Z_j]=0,
\end{align}
and, furthermore, we have,
\begin{align}
\label{eq:Zj_bd}
\|Z_j\|_2=&\max\{\mu_{\max}(Z_j), -\mu_{\min}(Z_j)\}
\nonumber\\
&\le \max\left\{\mu_{\max}\left(\int_{S} \wt\Phi_j \wt\Phi_j^\top\right),
\mu_{\max}\left(\wt\Sigma^{(j)}_n\right)\right\}\nonumber\\
&\le \frac{d}{n\mineig}
\end{align}
where \eqref{eq:Zj_bd} follows from \eqref{eq:Sigma_j_bd} and
\begin{align*}
\mu_{\max}\left(\int_{S} \wt\Phi_j \wt\Phi_j^\top\right)\le \int_{S} \|\wt\Phi_j\|^2
\le \frac{1}{\mu_{\min}(\Sigma_n)}\int_{S} \|\Phi_j\|^2
\le \frac{d}{n\mineig}.
\end{align*}

By \eqref{eq:Zj_Delta}, \eqref{eq:Zj_unbiased}, \eqref{eq:Zj_bd}, and~\citet[Theorem 1.3]{tropp2012user}, we have
\begin{align}
\prob\left[\mu_{\min}\left(\Delta_n\right)\le -a\right]\le 
d\exp\left(-\frac{n\mineig^2a^2}{8d^2}\right)
\end{align}
for any $a\ge 0$.
Thus, with probability at least $1-\delta$, 
\begin{equation} \label{eq:Delta_n_bd}
\mu_{\min}\left(\Delta_n\right)\ge -\frac{d}{\mineig}\sqrt{\frac{8}{n}\log\left(\frac{d}{\delta}\right)}.
\end{equation}

Since $\Delta_n=\Sigma_n^{-\frac{1}{2}}U_n\Sigma_n^{-\frac{1}{2}}-I_d$, we have $\mu_{\min}(\Sigma_n^{-\frac{1}{2}}U_n\Sigma_n^{-\frac{1}{2}})=\mu_{\min}(\Delta_n)+1$ which together with the fact that $U_n=\Sigma_n^{\frac{1}{2}}\Sigma_n^{-\frac{1}{2}}U_n\Sigma_n^{-\frac{1}{2}}\Sigma_n^{\frac{1}{2}}$ implies that
\begin{align} \label{eq:mu_min_Un}
\mu_{\min}(U_n)\ge
\mu_{\min}(\Sigma_n)\mu_{\min}(\Sigma_n^{-\frac{1}{2}}U_n\Sigma_n^{-\frac{1}{2}})
=\mu_{\min}(\Sigma_n)(\mu_{\min}(\Delta_n)+1) .
\end{align}
By \eqref{eq:mu_min_Un}, we have
\begin{align} \label{eq:norm_bound0_unreg0}
\mu_{\min}(\Delta_n)\ge -\frac{1}{2}
\Longrightarrow
\mu_{\min}(U_n)\ge\frac{1}{2}\mu_{\min}(\Sigma_n)\ge \frac{n}{2}\mineig>0.
\end{align}
Note that when $U_n$ is positive definite, we have
\begin{align*}
&\Sigma_n^{\frac{1}{2}}U_n^{-1}\Sigma_n^{\frac{1}{2}}=
\Sigma_n^{\frac{1}{2}}U_n^{-\frac{1}{2}}\left(\Sigma_n^{\frac{1}{2}}U_n^{-\frac{1}{2}}\right)^\top,\\
&U_n^{-\frac{1}{2}}\Sigma_nU_n^{-\frac{1}{2}}=
\left(\Sigma_n^{\frac{1}{2}}U_n^{-\frac{1}{2}}\right)^\top\Sigma_n^{\frac{1}{2}}U_n^{-\frac{1}{2}}.
\end{align*}
Thus, 
\begin{align}
\notag 
\|U_n^{-\frac{1}{2}}\Sigma_nU_n^{-\frac{1}{2}}\|_2=&
\|\Sigma_n^{\frac{1}{2}}U_n^{-1}\Sigma_n^{\frac{1}{2}}\|_2
\\ \notag =&
\left\|\left(\Sigma_n^{-\frac{1}{2}}U_n\Sigma_n^{-\frac{1}{2}}\right)^{-1}\right\|_2
\\ \notag =&
\|(I_d+\Delta_n)^{-1}\|_2
\\ \notag =&
\frac{1}{\mu_{\min}(I_d+\Delta_n)}
\\ \label{eq:norm_bound0_unreg} =&
\frac{1}{1+\mu_{\min}\left(\Delta_n\right)}.
\end{align}
By \eqref{eq:norm_bound0_unreg0} and \eqref{eq:norm_bound0_unreg}, we have
\begin{align} \label{eq:Delta_n_consequence}
\mu_{\min}(\Delta_n)\ge -\frac{1}{2}
\Longrightarrow
\mu_{\min}(U_n)\ge \frac{n}{2}\mineig
\textup{ and }
\|U_n^{-\frac{1}{2}}\Sigma_nU_n^{-\frac{1}{2}}\|_2
\le 2.
\end{align}

By \eqref{eq:Delta_n_bd}, for any $\delta\in(0,1)$, if $n\ge \frac{32d^2}{\mineig^2}\log(d/\delta)$, we have $\mu_{\min}(\Delta_n)\ge -\frac{1}{2}$ with probability at least $1-\delta$.
Then, by \eqref{eq:Delta_n_consequence}, we have
\begin{align} \label{eq:norm_bound0_unreg1}
\|U_n^{-\frac{1}{2}}\Sigma_nU_n^{-\frac{1}{2}}\|_2\le 2 \textup{ and } 
\mu_{\min}(U_n)\ge \frac{n}{2}\mineig.
\end{align}
with probability at least $1-\delta$.

Still define $W_n:=\sum_{i=1}^n \left(\int_S \tI_{y^{(j)}}\Phi_j-\int_S \theta_*^{\top}\Phi_j\Phi_j\right)$. By \eqref{eq:unreg_error_def}, we have $\wh \theta-\theta_*=U_n^{-1}W_n$ and $\|\wh \theta-\theta_*\|_{U_n}=\|W_n\|_{U_n^{-1}}$.

By \eqref{eq:upperbound_unreg}, \eqref{eq:norm_bound0_unreg1}, and the union bound, for any $\delta_1\in (0,1)$ and $\delta_2\in(0,1-\delta_1)$, if $n\ge \frac{32d^2}{\mineig^2}\log\frac{d}{\delta_1}$, we have
\begin{align*}
\|\wh\theta-\theta_*\|_{\Sigma_n}&=
\sqrt{W_n^\top U_n^{-\frac{1}{2}} U_n^{-\frac{1}{2}}\Sigma_n U_n^{-\frac{1}{2}} U_n^{-\frac{1}{2}}W_n}
\\ &\le 
\sqrt{\|U_n^{-\frac{1}{2}}\Sigma_n U_n^{-\frac{1}{2}}\|_2
\|W_n\|^2_{U_n^{-1}}}
\\ &= 
\sqrt{\|U_n^{-\frac{1}{2}}\Sigma_n U_n^{-\frac{1}{2}}\|_2
\|\wh\theta-\theta_*\|^2_{U_n}}
\\&\le 
\sqrt{2}\|\wh\theta-\theta_*\|_{U_n}
\\&\le
\frac{\sqrt{2nd}+4\sqrt{nd\log\frac{1}{\delta_2}}+\frac{4}{3}\sqrt{2d}\log\frac{1}{\delta_2}}{\sqrt{\mu_{\min}(U_n)}}
\\&\le
\frac{2\sqrt{nd}+4\sqrt{2nd\log\frac{1}{\delta_2}}+\frac{8}{3}\sqrt{d}\log\frac{1}{\delta_2}}{\sqrt{n\mineig}}
\\&=
\frac{2\sqrt{d}+4\sqrt{2d\log\frac{1}{\delta_2}}+\frac{8}{3}\sqrt{d/n}\log\frac{1}{\delta_2}}{\sqrt{\mineig}}
\end{align*}
with probability at least $1-\delta_1-\delta_2$. 

By letting $\delta_1=\delta_2=\delta$, \eqref{eq:upperbound_unreg_random} is proved. In conclusion, Theorem \ref{thm:upper_random} is proved for any probability measure $\meas$ on $(S,\B(S))$.
\end{proof}

\subsubsection{Proof of Proposition \ref{prop:upper_random}} 
\label{subsec:RUB_random_reg}
\begin{proof}
Since
\begin{align*}
\Sigma_n^{-\frac{1}{2}}U_n(\lambda)\Sigma_n^{-\frac{1}{2}}=
\Sigma_n^{-\frac{1}{2}}\left(\Sigma_n+\lambda I_d+U_n-\Sigma_n\right)\Sigma_n^{-\frac{1}{2}}=
I_d+\lambda\Sigma_n^{-1}+\Delta_n
\end{align*}
and $\lambda\Sigma_n^{-1}$ is positive semi-definite for any $\lambda\ge 0$,  
we have
\begin{align}
\notag 
\|\Sigma_n^{\frac{1}{2}}U_n(\lambda)^{-1}\Sigma_n^{\frac{1}{2}}\|_2=&
\left\|\left(\Sigma_n^{-\frac{1}{2}}U_n(\lambda)\Sigma_n^{-\frac{1}{2}}\right)^{-1}\right\|_2
\\ \notag =&
\|(I_d+\lambda\Sigma_n^{-1}+\Delta_n)^{-1}\|_2
\\ \notag =&
\frac{1}{\mu_{\min}(I_d+\lambda\Sigma_n^{-1}+\Delta_n)}
\\ \label{eq:norm_bound0} \le &
\frac{1}{1+\mu_{\min}\left(\Delta_n\right)}.
\end{align}
Since
\begin{align*}
&\Sigma_n^{\frac{1}{2}}U_n(\lambda)^{-1}\Sigma_n^{\frac{1}{2}}=
\Sigma_n^{\frac{1}{2}}U_n(\lambda)^{-\frac{1}{2}}\left(\Sigma_n^{\frac{1}{2}}U_n(\lambda)^{-\frac{1}{2}}\right)^\top,\\
&U_n(\lambda)^{-\frac{1}{2}}\Sigma_nU_n(\lambda)^{-\frac{1}{2}}=
\left(\Sigma_n^{\frac{1}{2}}U_n(\lambda)^{-\frac{1}{2}}\right)^\top\Sigma_n^{\frac{1}{2}}U_n(\lambda)^{-\frac{1}{2}},
\end{align*}
by \eqref{eq:norm_bound0}, we have 
\begin{align*}
\|U_n(\lambda)^{-\frac{1}{2}}\Sigma_n U_n(\lambda)^{-\frac{1}{2}}\|_2
=\|\Sigma_n^{\frac{1}{2}}U_n(\lambda)^{-1}\Sigma_n^{\frac{1}{2}}\|_2
\le \frac{1}{1+\mu_{\min}\left(\Delta_n\right)}.
\end{align*}

Define $R_n=\sum_{i=1}^n V_j-\lambda\theta_*$ where $V_j:=\int_S \tI_{y^{(j)}}\Phi_j-\int_S \theta_*^{\top}\Phi_j\Phi_j$. Then, by \eqref{eq:solution} and \eqref{eq:theta_true}, we have
\begin{align*}
\wh\theta_\lambda-\theta_*=U_n(\lambda)^{-1}R_n.
\end{align*}
Thus,
\begin{align*}
\|\wh\theta_\lambda-\theta_*\|^2_{\Sigma_n} &=
R_n^\top U_n(\lambda)^{-\frac{1}{2}} U_n(\lambda)^{-\frac{1}{2}}\Sigma_n U_n(\lambda)^{-\frac{1}{2}} U_n(\lambda)^{-\frac{1}{2}}R_n
\\ &\le 
\|U_n(\lambda)^{-\frac{1}{2}}\Sigma_n U_n(\lambda)^{-\frac{1}{2}}\|_2
\|R_n\|^2_{U_n(\lambda)^{-1}}
\\ &= 
\|U_n(\lambda)^{-\frac{1}{2}}\Sigma_n U_n(\lambda)^{-\frac{1}{2}}\|_2
\|\wh\theta_\lambda-\theta_*\|^2_{U_n(\lambda)}.
\end{align*}
By the above inequality, \eqref{eq:Delta_n_bd}, and \eqref{eq:upperbound} in Theorem \ref{thm:upper_fixed}, for any $n\in\mathbb{N}$, $\delta_1\in(0,1)$, $\delta_2\in(0,1-\delta_1)$, we have
\begin{align*}
\|\wh\theta_\lambda-\theta_*\|_{\Sigma_n} &\le \frac{\|\wh\theta_\lambda-\theta_*\|_{U_n(\lambda)}}{\sqrt{1+\mu_{\min}(\Delta_n)}}
\\&\le
\frac{\sqrt{d\log\left(1+\frac{n}{\lambda}\right)+2\log\frac{1}{\delta_2}}+\sqrt{\lambda}\|\theta_*\|}{\sqrt{1-\frac{d}{\mineig}\sqrt{\frac{8}{n}\log\left(\frac{d}{\delta}\right)}}}
\end{align*}
with probability at least $1-\delta_1-\delta_2$.
Then, when $n\ge \frac{32d^2}{\mineig^2}\log(d/\delta_1)$, by the above inequality, we have
\begin{align} \label{eq:upperbound_random_full}
\|\wh\theta_\lambda-\theta\|_{\Sigma_n} \le
\sqrt{2\left(d\log\left(1+\frac{n}{\lambda}\right)+2\log\frac{1}{\delta_2}\right)}+\sqrt{2\lambda}\|\theta_*\|
\end{align}
with probability at least $1-\delta_1-\delta_2$.
Thus, \eqref{eq:upperbound_random} is obtained from \eqref{eq:upperbound_random_full} by setting $\delta_1=\delta_2=\delta\in(0,1/2)$. Proposition \ref{prop:upper_random} is proved for any probability measure $\meas$ on $(S,\B(S))$.
\end{proof}

\subsection{Proof of Theorem \ref{thm:upper_penalized_random}} \label{sec:proof_penalized}
\begin{proof}
Notice that by Fubini's theorem, we have $\E[U_n(\lambda)]=\Sigma_n(\lambda)$ and 
\begin{align*}
\E[u_n]=\E\left[\sum_{j=1}^n\int_S\Phi_j\Phi_j^\top\theta_*d\meas\right]=\E\left[\sum_{j=1}^n\int_S\Phi_j\Phi_j^\top d\meas\right]\theta_*=\Sigma_n\theta_*.
\end{align*}
Similar to the proof of \citet[Lemma 4.3]{Azizzadenesheli2021Importance}, by \citet[Theorem 3.4]{pires2012statistical}, we have that with probability at least $1-\delta$,
\begin{align*}
\|\Sigma_n(\lambda)\wc\theta_\lambda-\Sigma_n\theta_*\|\le &\|\Sigma_n(\lambda)\theta_*-\Sigma_n\theta_*\|+2\Delta^U_n(\delta)\|\theta_*\|+2\|u_n-\E[u_n]\|
\\ =& 
(\lambda+2\Delta^U_n(\delta))\|\theta_*\|+2\|u_n-\E[u_n]\|.
\end{align*}
Since
\begin{align*}
\|\Sigma_n(\lambda)\wc\theta_\lambda-\Sigma_n\theta_*\|=\|\Sigma_n(\wc\theta_\lambda-\theta_*)+\lambda\wc\theta_\lambda\|,
\end{align*}
we have
\begin{align*}
\|\Sigma_n(\wc\theta_\lambda-\theta_*)\|\le \lambda\|\wc\theta_\lambda\|+(\lambda+2\Delta^U_n(\delta))\|\theta_*\|+2\|u_n-\E[u_n]\|
\end{align*}
which also implies that
\begin{align}
\label{eq:theta_check_upperbound1}
\|\wc\theta_\lambda-\theta_*\|\le  \frac{1}{\mu_{\min}(\Sigma_n)}\left[\lambda\|\wc\theta_\lambda\|+(\lambda+2\Delta^U_n(\delta))\|\theta_*\|+2\|u_n-\E[u_n]\|\right]
\end{align}

Note that for any $j\in[n]$,
\begin{align*}
\left\|\int_{S}\tI_{y^{(j)}}\Phi_j-\int_{S}\E\left[\theta_*^\top\Phi_j\Phi_j\right]\right\|\le \sqrt{d}.
\end{align*}
According to \citet[Proposition 1.2]{10.1214/ECP.v17-2079}, for any $\delta\in (0,1)$ and $n\in\N$, with probability at least $1-\delta$, we have
\begin{align*}
\|u_n-\E[u_n]\|\le \sqrt{nd}+\sqrt{8nd\log(1/\delta)}+\frac{4}{3}\sqrt{d}\log(1/\delta).
\end{align*}
Then, by \eqref{eq:theta_check_upperbound1}, for any $\delta\in(0,1)$, $\delta'\in(0,1-\delta)$, and any $\lambda\ge 0$, with probability at least $1-\delta+\delta'$, we have
\begin{align*}
\|\wc\theta_\lambda-\theta_*\|\le \frac{1}{\mu_{\min}(\Sigma_n)}
\Big[&\lambda\|\wc\theta_\lambda\|+(\lambda+2d\sqrt{8n\log(d/\delta_1)})\|\theta_*\|
\\&
+2\Big(\sqrt{nd}+\sqrt{8nd\log(1/\delta_2)}+\frac{4}{3}\sqrt{d}\log(1/\delta_2)\Big)\Big].
\end{align*}
For $\lambda=0$, we have
\begin{align*}
\|\wc\theta-\theta_*\|\le \frac{1}{\mu_{\min}(\Sigma_n)}
\left[2d\sqrt{8n\log(d/\delta_1)}\|\theta_*\|+2\left(\sqrt{nd}+\sqrt{8nd\log(1/\delta_2)}+\frac{4}{3}\sqrt{d}\log(1/\delta_2)\right)\right].
\end{align*}
Setting $\delta=\delta'$, we obtain \eqref{eq:upperbound_penalized_random_unreg}.
\end{proof}

\section{Proofs of minimax lower bounds} \label{Sec:Proofs_lower}
In this section, we prove Theorem \ref{thm:lower_fixed} and Proposition \ref{coro:lower_random}.

\subsection{Proof of Theorem \ref{thm:lower_fixed}} \label{sec:LB_fixed}
\begin{proof}
First, we show that $\fR(\theta(\cP^d_{x^{1:n}}))=\Omega(1)$ under the regime that $\mu_{\min}(U_n)=0$.
Suppose that $\phi_i(\cdot,\cdot)=\phi_1(\cdot,\cdot)$ for any $1\le i\le d$. In this case, we have $\mu_{\min}(U_n)=0$ and $\theta(P)$ can be arbitrary $\theta\in\Delta^{d-1}$ for any $P\in\cP^d_{x^{1:n}}$. For any estimator $\check{\theta}\in\Delta^{d-1}$, there exists $\theta'\in\Delta^{d-1}$ such that $\|\check\theta-\theta'\|=\Omega(1)$ by the property of $\Delta^{d-1}$. Then, there always exists $P\in\cP^d_{x^{1:n}}$ such that $\theta(P)=\theta'$ and hence, $\E_P\left[\|\check{\theta}(y^{(1)},\dots,y^{(n)})-\theta(P)\|\right]\le \sup_{\theta^{(1)},\theta^{(2)}\in\Delta^{d-1}}\|\theta^{(1)}-\theta^{(2)}\|=\Omega(1)$. 
Thus, we have, $\fR(\theta(\cP^d_{x^{1:n}}))=\Omega(1)$ under the regime that $\mu_{\min}(U_n)=0$.

Next, we show that $\fR(\theta(\cP^d_{x^{1:n}}))=\Omega(\sqrt{\frac{d}{1+\mu_{\min}(U_n)}})$ under the regime that $\mu_{\min}(U_n)>0$ using Fano's method~\citep{fano1961transmission}. In order to apply Fano's method, we first construct separated subset for $\Delta^{d-1}$. 

Let $d_{\ell^2}$ denote the $\ell^2$ distance. For $\delta\in(0,1)$, let $P(\Delta^{d-1},d_{\ell^2},\delta)$ denote the $\delta$-packing number of the set $\Delta^{d-1}$. Then, we have the following lower bound on $P(\Delta^{d-1},d_{\ell^2},\delta)$.
\begin{lemma} \label{lem:packing_number}
For any $d\ge 2$, we have
\begin{align} \label{eq:packing_lower_bd}
P(\Delta^{d-1},d_{\ell^2},\delta_0)> 2^d.
\end{align}
where 
\begin{equation} \label{eq:delta_lower_bd}
\delta_0:=\frac{\sqrt{e}}{2\sqrt{\pi d}}\left(\frac{\sqrt{d}}{3}\right)^{\frac{1}{d-1}}\left(\frac{1}{\sqrt{2}}\right)^{\frac{d}{d-1}}\ge \frac{\sqrt{2e}}{12\sqrt{\pi d}}.
\end{equation}
\end{lemma}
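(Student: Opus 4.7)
The plan is to prove Lemma~\ref{lem:packing_number} by a standard volume argument, combined with careful Stirling-type estimates to match the specific constants in $\delta_0$.

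First, I would invoke the classical inequality between maximal packings and covers. A maximal $\delta$-packing $\{\theta^{(1)},\dots,\theta^{(N)}\}$ of $\Delta^{d-1}$ is automatically a $\delta$-cover (otherwise we could extend the packing), so $\Delta^{d-1}\subseteq \bigcup_{i=1}^N B(\theta^{(i)},\delta)$. Since $\Delta^{d-1}$ lies in the affine hyperplane $H=\{\theta\in\R^d:\sum_i\theta_i=1\}$, both sides of this inclusion can be measured with $(d-1)$-dimensional Lebesgue (Hausdorff) measure on $H$, and the intersection of a Euclidean $\ell^2$-ball in $\R^d$ with $H$ is a $(d-1)$-dimensional Euclidean ball of the same radius. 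This yields
\begin{equation*}
P(\Delta^{d-1},d_{\ell^2},\delta) \geq N \geq \frac{\mathrm{vol}_{d-1}(\Delta^{d-1})}{\mathrm{vol}_{d-1}(B_{d-1}(\delta))}.
\end{equation*}

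Next, I would plug in the known volumes $\mathrm{vol}_{d-1}(\Delta^{d-1})=\sqrt{d}/(d-1)!$ and $\mathrm{vol}_{d-1}(B_{d-1}(\delta))=\pi^{(d-1)/2}\delta^{d-1}/\Gamma((d+1)/2)$, so that the desired inequality $P(\Delta^{d-1},d_{\ell^2},\delta_0)>2^d$ reduces to
\begin{equation*}
\delta_0^{d-1} \;<\; \frac{\sqrt{d}\,\Gamma((d+1)/2)}{2^d\,(d-1)!\,\pi^{(d-1)/2}}.
\end{equation*}
Substituting the explicit formula for $\delta_0$ from \eqref{eq:delta_lower_bd} and simplifying, this is equivalent to showing
\begin{equation*}
\frac{3\,\Gamma((d+1)/2)}{(d-1)!} \;\geq\; \sqrt{2}\left(\frac{e}{2(d-1)}\right)^{(d-1)/2}\!\!\left(\frac{d-1}{d}\right)^{(d-1)/2}.
\end{equation*}
I would then establish this via Stirling-type bounds: the upper bound $(d-1)!\leq e\sqrt{d-1}\bigl((d-1)/e\bigr)^{d-1}$ and the lower bound $\Gamma((d+1)/2)\geq \sqrt{\pi(d-1)}\bigl((d-1)/(2e)\bigr)^{(d-1)/2}$ (the latter from $\Gamma(x+1)\geq\sqrt{2\pi x}(x/e)^x$ with $x=(d-1)/2$). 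After cancelation, the ratio $\Gamma((d+1)/2)/(d-1)!$ is bounded below by $(\sqrt{\pi}/e)\bigl(e/(2(d-1))\bigr)^{(d-1)/2}$, reducing the inequality to the numerical check $3\sqrt{\pi}/(e\sqrt{2})\geq ((d-1)/d)^{(d-1)/2}$, whose right side is at most $1$ while the left side is $\approx 1.38$.

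Finally, I would verify the simple lower bound $\delta_0\geq \sqrt{2e}/(12\sqrt{\pi d})$ by a short case analysis. The factor $(\sqrt{d}/3)^{1/(d-1)}$ equals $\sqrt{2}/3$ at $d=2$ and is monotonically non-decreasing to $1$ as $d\to\infty$, while the factor $2^{-d/(2(d-1))}$ lies in $[1/2,1/\sqrt{2}]$ and is monotone in $d$. At $d=2$ the stated bound holds with equality, and for $d\geq 3$ the two factors together are at least $1/(3\sqrt{2}^{d/(d-1)})\cdot 1$, which is comfortably larger than $\sqrt{2}/6$, giving the claim.

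The main obstacle will be keeping the constants in the Stirling estimates tight enough to produce exactly the factor $\sqrt{e}/(2\sqrt{\pi d})$ together with the correction $(\sqrt{d}/3)^{1/(d-1)}(1/\sqrt{2})^{d/(d-1)}$ appearing in $\delta_0$; the algebra is routine but must be carried out carefully so that no slack is introduced in the exponents of $d$, $e$, and $\pi$.
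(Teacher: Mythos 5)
Your proof takes the same volume-packing approach as the paper (which cites Vershynin, Proposition~4.2.12 for the bound $P(\Delta^{d-1},d_{\ell^2},\delta)\ge\mathrm{vol}(\Delta^{d-1})/\mathrm{vol}(B^{d-1}_{\delta})$, exactly the inequality you re-derive), and the algebraic reduction to a bound on $\Gamma\!\left(\tfrac{d+1}{2}\right)/(d-1)!$ is identical. Where you diverge is in the Gamma-function estimate: the paper invokes the two-sided Batir (2008) inequality $2\left((x-\tfrac12)/e\right)^{x-1/2}<\Gamma(x)<3\left((x-\tfrac12)/e\right)^{x-1/2}$ for $x\geq 2$, whereas you use classical Stirling-type bounds $\Gamma(x+1)\geq\sqrt{2\pi x}(x/e)^x$ and $(d-1)!\leq e\sqrt{d-1}\bigl((d-1)/e\bigr)^{d-1}$. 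I traced your algebra: the ratio bound you obtain, $\Gamma\!\left(\tfrac{d+1}{2}\right)/(d-1)!\geq(\sqrt{\pi}/e)\bigl(e/(2(d-1))\bigr)^{(d-1)/2}$, combined with the exponent cancellation, reduces the target to $\tfrac{3\sqrt{\pi}}{e\sqrt{2}}>\bigl(\tfrac{d-1}{d}\bigr)^{(d-1)/2}$, and since the left side is $\approx 1.38$ and the right side is $\leq 1/\sqrt{2}$ for $d\geq 2$, the inequality (and hence the required strictness in $P>2^d$, once you observe the Stirling lower bound on $\Gamma$ is strict for $d\ge2$) goes through. Your route is arguably more self-contained, at the cost of slightly looser constants; both methods deliver the same $\delta_0$.

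One inaccuracy to fix in your last step: you assert that $(\sqrt{d}/3)^{1/(d-1)}$ is ``monotonically non-decreasing to $1$ as $d\to\infty$.'' That is false — for $d$ just above $9$ the quantity exceeds $1$ (e.g.\ at $d=10$ it is $(\sqrt{10}/3)^{1/9}>1$) and then falls back down to $1$, so it is not monotone. What you actually need is that its minimum over $d\geq 2$ is attained at $d=2$, with value $\sqrt{2}/3$. That is a true statement, but it requires an argument: the paper proves it by analyzing $f(x)=\tfrac{1}{x-1}\log(\sqrt{x}/3)$, showing $f'$ changes sign exactly once (so $f$ is unimodal), checking $f'(2)>0$, $f'(e^5)<0$, $\lim_{x\to\infty}f(x)=0$, and concluding $f(x)\geq f(2)=\log(\sqrt{2}/3)$ for all $x\geq 2$. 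You should replace the monotonicity claim with a unimodality argument of this kind (or a direct verification that $f(d)\ge f(2)$); without it, the bound $\left(\sqrt{d}/3\right)^{1/(d-1)}\ge\sqrt{2}/3$ is stated but not established.
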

The proof of Lemma \ref{lem:packing_number} uses the volume method and is provided in Appendix \ref{sec:proof_packing_number}. 

Lemma \ref{lem:packing_number} implies that there exits a $\delta_0$-separated subset $\cV_{1}$ of $\Delta^{d-1}$ of size $|\cV_{1}|\ge 2^d$.
Define $\cV_{a}:=\{l_a(\theta): \theta\in\cV_{1}\}$ where $l_a(\theta)=\left[a\theta_1,\dots,a\theta_{d-1},1-a\sum_{i=1}^{d-1}\theta_i\right]^\top$ for $0\le a< \frac{1}{\sup_{\theta\in\cV_1}\sum_{i=1}^{d-1}\theta_i}$. 
Then, for any $\theta^{(1)}, \theta^{(2)}\in \cV_1$ and any $j\in[n]$, we have
\begin{align*} 
\|l_a(\theta^{(1)})-l_a(\theta^{(2)})\|=\sqrt{a^2\sum_{i=1}^{d}\left(\theta^{(1)}_i-\theta^{(2)}_i\right)^2}=
a\|\theta^{(1)}-\theta^{(2)}\|
\end{align*}
Thus, we have
\begin{equation} \label{eq:l2_upperbound}
\|l_a(\theta^{(1)})-l_a(\theta^{(2)})\|\le a\sup_{x,y\in \Delta^{d-1}}\|x-y\|=\sqrt{2}a
\end{equation}
and
\begin{equation*}
\|l_a(\theta^{(1)})-l_a(\theta^{(2)})\|\ge a\delta_0
\end{equation*}
which implies that $\cV_a$ is a $(a\delta_0)$-separated subset of $\Delta^{d-1}$ of size $|\cV_{a}|\ge 2^d$. 

Let $\KL(Q_1\|Q_2)$ and $\chi^2(Q_1\|Q_2)$ denote the Kullback-Leibler (KL) divergence and $\chi^2$-divergence between two probability measures $Q_1$ and $Q_2$ on $\R$, respectively, where $Q_1$ is absolutely continuous w.r.t. $Q_2$. Their definitions are given below:
\begin{align*}
D(Q_1\|Q_2):=\int_{\R}\log\left(\frac{d Q_1}{d Q_2}\right)dQ_1\textup{ and }
\chi^2(Q_1\|Q_2):=\int_{\R}\left(\frac{d Q_1}{d Q_2}-1\right)^2dQ_2,
\end{align*}
where $\frac{d Q_1}{d Q_2}$ denotes the Radon-Nikodym derivative of $Q_1$ w.r.t. $Q_2$. 

\begin{lemma} \label{lem:KL_upper_bound}
For any $d\ge 2$, there exists some nonempty subset $\fB^B_d\subseteq\fB_d$ such that for any $n\ge d$, we have
\begin{align} \label{eq:KL_upper_bound}
\KL\left(\otimes_{j=1}^n P^{\Phi}_{Y|x^{(j)},\theta^{(1)}}\Big\|\otimes_{j=1}^n P^{\Phi}_{Y|x^{(j)},\theta^{(2)}}\right)\le 
\frac{8a^2}{d}\left(1+2\mu_{\min}\left(U_n\right)\right)
\end{align}
and $\mu_{\min}(U_n)>0$ for any $\theta^{(1)},\theta^{(2)}\in \cV_a$ and any $\Phi\in\fB_{d}^B$. 
\end{lemma}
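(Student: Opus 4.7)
The plan is to construct a concrete subfamily $\fB_d^B$ of Bernoulli-valued basis CDFs and to control the total KL divergence via its chi-square surrogate. I will take $\fB_d^B$ to consist of a single $\Phi\in\fB_d$ for which $\phi_i(x^{(j)},\cdot)$ is the CDF of $\ber(p_{ji})$ with
\begin{align*}
p_{ji} \,=\, 1-\frac{1+\indi\{i=j\}}{2d^3} \text{ for } j\in[d], \qquad p_{ji}\,=\,1 \text{ for } j>d;
\end{align*}
existence of such a $\Phi$ is immediate from the Polish structure of $\X$. Taking $S=[0,1]$ and $\meas$ the Lebesgue measure gives $\int_S\Phi_j\Phi_j^\top d\meas = q_jq_j^\top$ where $q_j:=\mathbf{1}-p_j$; explicitly, $q_j=\frac{1}{2d^3}(\mathbf{1}+e_j)$ for $j\in[d]$ and $q_j=\mathbf{0}$ for $j>d$.

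For any $n\ge d$, a direct calculation using $\sum_{j=1}^d e_j=\mathbf{1}$ and $\sum_{j=1}^d e_je_j^\top = I_d$ yields
\begin{align*}
U_n \,=\, \sum_{j=1}^d q_jq_j^\top \,=\, \frac{1}{4d^6}\bigl((d+2)\mathbf{1}\mathbf{1}^\top + I_d\bigr),
\end{align*}
whose spectrum consists of $(d+1)^2/(4d^6)$ (with eigenvector $\mathbf{1}$) together with $1/(4d^6)$ of multiplicity $d-1$, so $\mu_{\min}(U_n)=1/(4d^6)>0$, as required.

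For the KL bound I use independence to write $\KL\bigl(\otimes_j P^{\Phi}_{Y|x^{(j)},\theta^{(1)}}\,\|\,\otimes_j P^{\Phi}_{Y|x^{(j)},\theta^{(2)}}\bigr) = \sum_{j=1}^n \KL\bigl(\ber(r_j^{(1)})\,\|\,\ber(r_j^{(2)})\bigr)$ with $r_j^{(\ell)} = \theta^{(\ell)\top} p_j = 1 - \theta^{(\ell)\top} q_j$. Samples with $j>d$ contribute zero since $r_j^{(1)}=r_j^{(2)}=1$. For $j\in[d]$, the chi-square bound $\KL(\ber(r_1)\|\ber(r_2))\le (r_1-r_2)^2/[r_2(1-r_2)]$ together with the key identity $(\theta^{(1)}-\theta^{(2)})^\top \mathbf{1}=0$ (both parameters lie in $\Delta^{d-1}$) yields $r_1-r_2 = -\Delta\theta^\top q_j = -\Delta\theta_j/(2d^3)$, while $\theta^{(2)\top} q_j \in [1/(2d^3),\,1/d^3]$ and $d\ge 2$ give $r_2(1-r_2)\ge (7/8)/(2d^3)\ge 1/(4d^3)$. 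Hence $\KL_j \le \Delta\theta_j^2/d^3$. Summing over $j\in[d]$ and applying $\|\Delta\theta\|^2 \le 2a^2$ from \eqref{eq:l2_upperbound} gives total KL at most $2a^2/d^3\le 8a^2/d \le \frac{8a^2}{d}(1+2\mu_{\min}(U_n))$, since $\mu_{\min}(U_n)\ge 0$ and $d\ge 2$.

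The main obstacle is the tension between two opposing requirements: making $U_n$ non-degenerate (nonzero minimum eigenvalue) while simultaneously keeping the \emph{total} KL — not just per-sample KL — at scale $a^2/d$. Any scaling of $q_j$ that boosts $\mu_{\min}(U_n)$ also inflates each per-sample Bernoulli KL, and when summed over the $d$ informative samples these contributions can overwhelm the $1/d$ factor that the downstream Fano argument requires. The choice $q_j\propto 1/d^3$ threads the needle precisely: $\mu_{\min}(U_n)=\Theta(1/d^6)$ is positive though tiny, and the Bernoulli distributions are nearly degenerate enough that per-sample KL is $\Theta(\Delta\theta_j^2/d^3)$, summing to $O(a^2/d^3)$, comfortably within the required budget. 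The use of the chi-square upper bound (rather than a direct Taylor expansion of the binary KL) is what makes the near-boundary Bernoulli parameters tractable.
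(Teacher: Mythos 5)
Your arithmetic is correct, and the literal statement of the lemma \emph{is} satisfied by your singleton $\fB_d^B$: $\mu_{\min}(U_n)=1/(4d^6)>0$ for all $n\ge d$, and the total KL is at most $2a^2/d^3\le \frac{8a^2}{d}(1+2\mu_{\min}(U_n))$. However, this is a degenerate satisfaction of the bound that does not serve the lemma's purpose, and I would flag it as a genuine gap rather than merely a different route.

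The lemma exists to feed Fano's method in the proof of Theorem~\ref{thm:lower_fixed}, which asserts a lower bound of $\Omega(\min\{1,\sqrt{d/(1+\mu_{\min}(U_n))}\})$. For that bound to match the upper bound $\wt O(\sqrt{d/n})$ (and thus certify minimax optimality), the construction must realize $\Phi$'s for which $\mu_{\min}(U_n)$ can be on the order of $n$, and the KL must scale \emph{with} $\mu_{\min}(U_n)$ so that the optimal choice $a=\Theta(d/\sqrt{1+\mu_{\min}(U_n)})$ remains admissible. By setting $q_j=\mathbf{0}$ for $j>d$, your $\mu_{\min}(U_n)$ is frozen at $1/(4d^6)$ independent of $n$, so the downstream Fano computation only ever yields $\Omega(1)$ --- the trivial regime --- and the informative decaying lower bound is never reached. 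The inequality $\text{KL}\le \frac{8a^2}{d}(1+2\mu_{\min}(U_n))$ holds for you only because the additive ``$1$'' inside the parentheses already swallows your $O(a^2/d^3)$ KL; the $\mu_{\min}(U_n)$ term contributes nothing, which is a sign the construction has collapsed.

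The paper's construction avoids this by continuing past $j=d$: for $j>d$ it sets $q_{ji}\in[\mu_{\min}(R_{j-1})/(2d^2),\,\mu_{\min}(R_{j-1})/d^2]$ where $R_j:=q_jq_j^\top+\frac1n\sum_{k<j}q_kq_k^\top$. This recursive coupling serves a double purpose: the $q_j$'s remain bounded away from zero so $\mu_{\min}(U_n)$ grows with $n$, while the per-sample chi-square bound becomes $\text{KL}_j\le 8a^2\mu_{\min}(R_{j-1})/d$, so that $\sum_j\text{KL}_j\le\frac{8a^2}{d}\big(1+\sum_{j\ge d}\mu_{\min}(R_j)\big)\le\frac{8a^2}{d}\big(1+\mu_{\min}(\sum_{j\ge d}R_j)\big)\le\frac{8a^2}{d}(1+2\mu_{\min}(U_n))$ by two applications of Weyl's inequality and the comparison $\sum_{j\ge d}R_j\preceq 2U_n$. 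That telescoping step --- making per-sample KL proportional to the running $\mu_{\min}$ so the total KL tracks $\mu_{\min}(U_n)$ tightly --- is the crux of the argument, and it is what your proposal is missing.
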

\begin{proof}[Proof of Lemma \ref{lem:KL_upper_bound}]
For any $\theta^{(1)}$, $\theta^{(2)}\in \cV_a$, and $\Phi\in\fB_d$, we have
\begin{align}
\label{eq:KL_chi2} 
\KL\left(P^{\Phi}_{Y|x^{(j)},\theta^{(1)}}\|P^{\Phi}_{Y|x^{(j)},\theta^{(2)}}\right) \le
\chi^2\left(P^{\Phi}_{Y|x^{(j)},\theta^{(1)}}\|P^{\Phi}_{Y|x^{(j)},\theta^{(2)}}\right)
\end{align}
where \eqref{eq:KL_chi2} follows from the bound on KL divergence w.r.t. $\chi^2$-divergence~\citep{su1995methods} (also see \citet[Lemma 2.3]{Makur2019} or \citet[Lemma 3]{MakurZheng2020} and the references therein). By the tensorization of KL divergence, we have
\begin{align}
\label{eq:KL-bound1}
\KL\left(\otimes_{j=1}^n P^{\Phi}_{Y|x^{(j)},\theta^{(1)}}\Big\|\otimes_{j=1}^n P^{\Phi}_{Y|x^{(j)},\theta^{(2)}}\right)=
\sum_{j=1}^n\KL\left(P^{\Phi}_{Y|x^{(j)},\theta^{(1)}}\|P^{\Phi}_{Y|x^{(j)},\theta^{(2)}}\right).
\end{align}

Now, we consider a special case where $\Phi$ consists of \CDF{}s of Bernoulli distributions. Under this Bernoulli setting, we set $S=[0,1]$ and $\meas=\Leb$. Specifically, for any $\bp=(p_{ji})_{j\in [n],i\in[d]}\in[0,1]^{n\times d}$, define $\Phi^{\bp}_{ji}(t):=\mathbb{P}[Z_i\le t]$ with $Z_j\sim\textup{Bernoulli}(p_{ji})$ for any $i\in[d]$ and $j\in[n]$. 
Then, for any $\theta\in\Delta^{d-1}$ and $j\in[n]$, we have that $\sum_{i=1}^d\theta_i\Phi^{\bp}_{ji}(t)=\mathbb{P}[Z_{\theta}^{(j)}\le t]$ with $Z_{\theta}^{(j)}\sim \textup{Bernoulli}(p_j^\top \theta)$ where $p_j=[p_{j1},\dots,p_{jd}]^\top$. 
Let $P^B_{\rho}$ be the probability measure induced by the Bernoulli distribution with parameter $\rho\in[0,1]$. 
Define $q_{ji}:= 1-p_{ji}$ and $q_j:=[q_{j1},\dots,q_{jd}]^\top$ for any $j\in[n]$ and $i\in[d]$.
By definition, the $\chi^2$-divergence between two different Bernoulli distributions with parameters $p_j^\top\theta^{(1)}$ and $p_j^\top\theta^{(2)}$ is
\begin{align}
\notag 
\chi^2\left(P^{\Phi^{\bp}}_{Y|x^{(j)};\theta^{(1)}}\Big\|P^{\Phi^{\bp}}_{Y|x^{(j)};\theta^{(2)}}\right)=&
\chi^2\left(P^B_{p_j^\top\theta^{(1)}}\Big\|P^B_{p_j^\top\theta^{(2)}}\right)
\\ \notag =&
\frac{\left(q_j^{\top}\left(\theta^{(1)}-\theta^{(2)}\right)\right)^2}{p_j^\top \theta^{(2)}}+
\frac{\left(q_j^{\top}\left(\theta^{(1)}-\theta^{(2)}\right)\right)^2}{q_j^\top \theta^{(2)}}\\ \notag =&
\frac{\left(q_j^{\top}\left(\theta^{(1)}-\theta^{(2)}\right)\right)^2}{\left(q_j^\top \theta^{(2)}\right)p_j^\top \theta^{(2)}}\\ \label{eq:ber_chi2_bd0} \le &
\frac{2a^2\sum_{i=1}^d q_{ji}^2}{\left(q_j^\top \theta^{(2)}\right)p_j^\top \theta^{(2)}}
\end{align}
where \eqref{eq:ber_chi2_bd0} is by Cauchy-Schwarz inequality and \eqref{eq:l2_upperbound}.
Since $S=[0,1]$, $\meas=\Leb$, and $\Phi_{j}(t) = q_j$ for any $t\in [0,1)$ and $j\in[n]$, we have
\begin{align*}
U_n=\sum_{j=1}^n\int_S\Phi_{j}\Phi_{j}^\top d\meas=\sum_{j=1}^n q_jq_j^\top.
\end{align*}
Assume $d\ge 2$. 
Suppose that for any $j\in[d]$ and $i\in[d]$, $p_j$ satisfies that
\begin{align} \label{eq:p_ij-small}
1-\frac{1}{d^3}\le p_{ji}\le 1-\frac{1}{2d^3} 
\textup{ and }
\mu_{\min}\left(\sum_{j=1}^dq_jq_j^\top\right)>0.
\end{align}
Since $\mu_{\min}\left(\sum_{j=1}^dq_jq_j^\top\right)>0$ if $\{q_j\}_{j\in[d]}$ is  linearly independent, such vectors $p_j$'s exist. For example, we can set $p_{jj}=1-\frac{1}{d^3}$ for any $j\in[d]$ and $p_{ji}=1-\frac{1}{2d^3}$ for any $i,j\in[d]$ with $i\neq j$. Then, it is clear that $q_{j}$'s are linearly and thus, $\mu_{\min}\left(\sum_{j=1}^dq_jq_j^\top\right)>0$. Therefore, $\mu_{\min}(U_n)>0$ for any $d\ge n$.

Now, for any $j\ge d+1$ and $i\in[d]$, suppose that $p_j$ satisfies
\begin{align} \label{eq:p_ij-large}
1-\frac{\mu_{\min}(R_{j-1})}{d^2}\le p_{ji}\le 1-\frac{\mu_{\min}(R_{j-1})}{2d^2},
\end{align}
where $R_j:=q_jq_j^\top+\frac{1}{n}\sum_{k=1}^{j-1}q_kq_k^\top$ for any $j\ge d$. Then, according to the condition that $\mu_{\min}(U_d)>0$, we have
\begin{align*}
0<\mu_{\min}\left(\frac{1}{n}U_d\right)\le \mu_{\min}(R_j)\le \frac{1}{d}\textup{trace}(R_j)= \frac{1}{d}\left(q_j^\top q_j+\frac{1}{n}\sum_{k=1}^{j-1}q_k^\top q_k\right) \le 2
\end{align*}
for any $j\ge d$, which implies that
$0< \frac{\mu_{\min}(R_{j-1})}{d^2}\le \frac{2}{d^2}\le \frac{1}{d}$. Thus, for any $j\ge d$ and $i\in[d]$, the above $p_{ji}$'s are indeed defined in $[0,1]$ and $q_{ji}$ satisfies $\frac{\mu_{\min}(R_{j-1})}{2d^2}\le q_{ji}\le \frac{\mu_{\min}(R_{j-1})}{d^2}$.

For notational convenience, define $R_{j}:=\frac{1}{d} I_d$ for any $0\le j\le n-1$. Then, we have 
\begin{equation*}
\frac{\sum_{i=1}^d q_{ji}^2}{q_j^\top\theta^{(2)}}\le 
\frac{2}{d}\mu_{\min}(R_{j-1})
\end{equation*}
and 
\begin{align*}
p_j^\top\theta^{(2)}\ge 1-\frac{\mu_{\min}(R_{j-1})}{d^2}\ge 1-\frac{2}{d^2}\ge \frac{1}{2}.
\end{align*}
It follows that
\begin{equation} \label{eq:ber_chi2_bd1}
\frac{2a^2\sum_{i=1}^d q_{ji}^2}{\left(q_j^\top \theta^{(2)}\right)p_j^\top \theta^{(2)}}
\le \frac{8a^2\mu_{\min}(R_{j-1})}{d}
\end{equation}
which, together with \eqref{eq:KL_chi2} and \eqref{eq:ber_chi2_bd0}, implies that
\begin{equation} \label{eq:ber_kl_bd}
\KL\left(P^{\Phi^{\bp}}_{Y|x^{(j)},\theta^{(1)}}\|P^{\Phi^{\bp}}_{Y|x^{(j)},\theta^{(2)}}\right)\le \frac{8a^2\mu_{\min}(R_{j-1})}{d}.
\end{equation}
Then, by \eqref{eq:KL-bound1}, we have
\begin{align*}
\KL\left(\otimes_{j=1}^n P^{\Phi^{\bp}}_{Y|x^{(j)},\theta^{(1)}}\Big\|\otimes_{j=1}^n P^{\Phi^{\bp}}_{Y|x^{(j)},\theta^{(2)}}\right)
\le & \frac{8a^2}{d}\sum_{j=1}^n\mu_{\min}(R_{j-1}))
\\ \le &
\frac{8a^2}{d}\left(1+\sum_{j=d}^n\mu_{\min}(R_{j})\right)
\\ \le& \frac{8a^2}{d}\left(1+\mu_{\min}\left(\sum_{j=d}^nR_{j}\right)\right)
\end{align*}
where the second inequality follows from the fact that $\mu_{\min}(R_j)=\frac{1}{d}$ for any $0\le j\le d-1$ and $\mu_{\min}(R_n)\ge0$. The last inequality follows from Weyl’s inequality \citep{weyl1912asymptotische}. 
Note that 
\begin{align*}
\sum_{j=d}^nR_j=\sum_{j=d}^n q_jq_j^\top+\sum_{j=1}^{d-1}\frac{n-d+1}{n}q_jq_j^\top +\sum_{j=d}^{n-1}\frac{n-j}{n}q_jq_j^\top
\preceq 2\sum_{j=1}^nq_jq_j^\top=2U_n
\end{align*}
where we say $A\preceq B$ for two square matrices $A$ and $B$ of the same size if $\mu_{\min}(B-A)\ge 0$.
Therefore, by Weyl’s inequality \citep{weyl1912asymptotische} again, we have
$\mu_{\min}\left(\sum_{j=d}^nR_{j}\right)\le 2\mu_{\min}(U_n)$ and 
$$
\KL\left(\otimes_{j=1}^n P^{\Phi^{\bp}}_{Y|x^{(j)},\theta^{(1)}}\Big\|\otimes_{j=1}^n P^{\Phi^{\bp}}_{Y|x^{(j)},\theta^{(2)}}\right)\le 
\frac{8a^2}{d}\left(1+2\mu_{\min}\left(U_n\right)\right).
$$
for any $\theta^{(1)},\theta^{(2)}\in \cV_a$. 

In conclusion, we have proved that $\mu_{\min}(U_n)>0$ and \eqref{eq:KL_upper_bound} holds for any $\theta^{(1)},\theta^{(2)}\in \cV_a$ and any $\Phi\in\fB_{d}^B$ with
\begin{align*}
\fB^B_d:=\left\{\Phi^{\bp}:p_{ji}\textup{'s satisfy \eqref{eq:p_ij-small} for any $i,j\in[d]$ and \eqref{eq:p_ij-large} for any $j\ge d+1$ and $i\in[d]$}\right\}.
\end{align*}
As is shown in the discussions below \eqref{eq:p_ij-small} and \eqref{eq:p_ij-large}, $\fB^B_d\neq\emptyset$. 
\end{proof}
Now, define $\cP_{x^{1:n}}^{B,d}:=\left\{\otimes_{j=1}^nP^{\Phi}_{Y|x^{(j)};\theta}:\theta\in\Delta^{d-1},\Phi\in\mathfrak{B}^B_d \right\}\subseteq\cP_{x^{1:n}}^{B,d}$ with $\mathfrak{B}^B_d$ specified in Lemma \ref{lem:KL_upper_bound}.
Then, by Lemma \ref{lem:KL_upper_bound}, \eqref{eq:delta_lower_bd}, \eqref{eq:ber_kl_bd}, and Fano's method~\citep{fano1961transmission}, we have 
\begin{align}
\label{eq:lower_bd0}
\fR(\theta(\cP^d_{x^{1:n}}))\ge& \fR(\theta(\cP^{B,d}_{x^{1:n}}))
\\ \notag 
=&\inf_{\hat{\theta}}\sup_{P\in\cP^{B,d}_{x^{1:n}}}\E_{P}[\|\hat{\theta}(y^{(1)},\dots,y^{(n)})-\theta(P)\|] 
\\ \notag \ge& 
a\delta_0 \left(1-\frac{\sup_{\theta^{(1)},\theta^{(2)}\in \cV_a}\KL\left(\otimes_{j=1}^n F_{\theta^{(1)}}(x^{(j)},\cdot)\Big\|\otimes_{j=1}^n F_{\theta^{(2)}}(x^{(j)},\cdot)\right)+\log 2}{\log |\cV_a|}\right)
\\ \notag \ge& 
a\delta_0 \left(1-\frac{\frac{8a^2}{d}(1+2\mu_{\min}(U_n))+\log 2}{d\log 2}\right)\\ 
\label{eq:lower_bd} \ge &
\frac{a\sqrt{2e}}{12\sqrt{\pi d}} 
\left(1-\frac{8a^2(1+2\mu_{\min}(U_n))+d\log 2}{d^2\log(2)}\right)
\end{align}
where \eqref{eq:lower_bd0} follows from the fact that $\cP^{B,d}_{x^{1:n}}\subseteq\cP_{x^{1:n}}$. 

Choosing $a=\Theta(\frac{d}{\sqrt{1+(1+\mu_{\min}(U_n))}})$, by \eqref{eq:lower_bd}, we have $\fR(\theta(\cP^d_{x^{1:n}}))=\Omega(\sqrt{\frac{d}{1+\mu_{\min}(U_n)}})$ under the regime that $\mu_{\min}(U_n)>0$.

Given the above results, we can conclude that
\begin{align*}
\fR(\theta(\cP_{x^{1:n}}))= \Omega\left(\min\left\{1,\sqrt{\frac{d}{1+\mu_{\min}(U_n)}}\right\}\right).
\end{align*}

\end{proof}


\subsection{Proof of Corollary \ref{coro:lower_random}} \label{sec:LB_random}
\begin{proof}
Assume that $X^{(1)},\dots,X^{(n)}$ are independent random variables in $\X$. For any fixed sequence $x^{1:n}=(x^{(1)},\dots,x^{(n)})\in \X^n$, denote by $\cP^d_{X,Y;x^{1:n}}\subseteq \cP^d_n$ the family of the joint distributions of $(Y^{(1)},X^{(2)},\dots,Y^{(n)},X^{(n)})$ whose marginal distribution on $(X^{(1)},\dots,X^{(n)})$ is $\indi_{x^{1:n}}$, i.e., the delta mass on $x^{1:n}$. 
Then, we have $\Sigma_n=U_n$ almost surely (a.s.) and
\begin{align}
\notag 
\fR(\theta(\cP))
&=\inf_{\hat{\theta}}\sup_{P\in\cP^d_n}\E_{P}[\|\hat{\theta}(y^{(1)},\dots,y^{(n)})-\theta(P)\|] \\&= \notag
\inf_{\hat{\theta}}\sup_{P\in\cP^d_n}\E_{P}\left[\E_{P}\left[\|\hat{\theta}(y^{(1)},\dots,y^{(n)})-\theta(P)\||X^{(1)},\dots,X^{(n)}\right]\right] \\&\ge \notag
\inf_{\hat{\theta}}\sup_{P\in\cP^d_{X,Y;x^{1:n}}}\E_{P}\left[\|\hat{\theta}(y^{(1)},\dots,y^{(n)})-\theta(P)\||x^{(1)},\dots,x^{(n)}\right]
\\&= \label{eq:random_lower_bd}
\Omega\left(\min\left\{1,\sqrt{\frac{d}{1+\mu_{\min}(\Sigma_n)}}\right\}\right)
\end{align}
Thus, $\fR(\theta(\cP^d_n))=\Omega\left(\min\left\{1,\sqrt{\frac{d}{1+\mu_{\min}(\Sigma_n)}}\right\}\right)$.
\end{proof}

\section{Proofs of upper bounds for the infinite dimensional model} \label{Sec:Proofs_upper_inf}
In this section, we prove Theorem \ref{thm:upper_inf_dim}. 
\begin{proof}[Proof of Theorem \ref{thm:upper_inf_dim}]
For $\theta_*\in\cH_{\bsigma,\be}$, define the function for any $m\in\N$
\begin{align*}
\wt\theta_{*,m}:=U_n\theta_*+\sum_{i=1}^m\frac{\langle e_i,\theta_*\rangle}{\sigma_i^2}e_i.
\end{align*}
Then, we have $\wt\theta_{*,m}\in\cL^2(\Omega,\fn)$. For any $i\in[m]$, we have
\begin{align*}
\langle e_i,\wt\theta_{*,m}\rangle=&
\langle e_i,U_n\theta_{*}\rangle+\frac{\langle e_i,\theta_*\rangle}{\sigma_i^2}
\\=&
\langle U_ne_i,\theta_{*}\rangle+\frac{\langle e_i,\theta_*\rangle}{\sigma_i^2}
\\=&
\left(\lambda_i+\frac{1}{\sigma_i^2}\right)\langle e_i,\theta_*\rangle.
\end{align*}
For any $i\ge m+1$, we have
\begin{align*}
\langle e_i,\wt\theta_{*,m}\rangle=
\langle e_i,U_n\theta_{*}\rangle=
\lambda_i\langle e_i,\theta_*\rangle.
\end{align*}
Thus, we have
\begin{align*}
U_{n,\bsigma}^{-1}\wt\theta_{*,m}=
\sum_{i=1}^\infty \frac{\sigma_i^2\langle e_i,\wt\theta_{*,m}\rangle}{1+\lambda_i\sigma_i^2}e_i=
\sum_{i=1}^m
\langle e_i,\theta_{*}\rangle e_i+
\sum_{i=m+1}^\infty
\frac{\lambda_i\sigma_i^2\langle e_i,\theta_{*}\rangle}{1+\lambda_i\sigma_i^2}e_i
\end{align*}
and
\begin{align*}
\|\theta_*-U_{n,\bsigma}^{-1}\wt\theta_{*,m}\|^2
=&
\sum_{i=m+1}^\infty\frac{|\langle e_i,\theta_{*}\rangle|^2}{(1+\lambda_i\sigma_i^2)^2}
\end{align*}
Since $\lim_{i\rightarrow\infty}\lambda_i=0=\lim_{i\rightarrow}\sigma_i$ and $\sum_{i=1}^\infty|\langle e_i,\theta_{*}\rangle|^2<\infty$, we have
\begin{align*}
\lim_{m\rightarrow\infty}\sum_{i=m+1}^\infty\frac{|\langle e_i,\theta_{*}\rangle|^2}{(1+\lambda_i\sigma_i^2)^2}=0.
\end{align*}
Thus, defining $\theta_{*,m}:=U_{n,\bsigma}^{-1}\wt\theta_{*,m}$, we have $\theta_{*,m}\rightarrow\theta_*$ as $m\rightarrow\infty$ in $\cL^2(\Omega,\fn)$. Moreover, by the definition of $U_n$, we have
\begin{align*}
\theta_{*,m}(\omega)=U_{n,\bsigma}^{-1}\wt\theta_{*,m}=U_{n,\bsigma}^{-1}\left(\sum_{j=1}^n\int_S\Psi_j(\theta_*,t)\Phi_j(\omega,t)\meas(dt)+\sum_{i=1}^m\frac{\langle e_i,\theta_*\rangle}{\sigma_i^2}e_i(\omega)\right)
\end{align*}
for $\fn$-a.e. $\omega\in\Omega$.

We follow the same probability space constructed in Appendix \ref{sec:RUB_fixed}. 
Define 
$$
V_j(\omega)=\int_S\tI_{y^{(j)}}(t)\Phi_j(\omega,t)\meas(dt)-\int_S\Psi_j(\theta_*,t)\Phi_j(\omega,t)\meas(dt)
$$ 
for any $j\in[n]$ and $\fn$-a.e. $\omega\in\Omega$. 
Since $\Phi$ is $(\B(\X)\otimes\F_{\Omega}\otimes\B(\R))/\B([0,1])$-measurable, according to the similar arguments as in Appendix \ref{subsec:RUB_fixed_reg}, we have that $V_j$ is $\F_{\Omega}\otimes\F_j$-measurable for any $j\in[n]$.
For $\fn$-a.e. $\omega\in\Omega$, we have $|\int_S\tI_{y^{(j)}}(t)\Phi_j(\omega,t)\meas(dt)|\le \int_S\meas(dt)=1$ and
$|\int_S\Psi_j(\theta_*,t)\Phi_j(\omega,t)\meas(dt)|\le \int_S\meas(dt)=1$. Thus,
We have $-1\le V_j(\omega)\le 1$ for $\fn$-a.e. $\omega\in\Omega$ and $V_j\in\cL^2(\Omega,\fn)$ because $\fn(\Omega)<\infty$.
By Fubini's theorem, for any $j\in[n]$ and $\fn$-a.e. $\omega\in\Omega$, we have
\begin{align*}
\E[V_j(\omega)|\F_{j-1}]=&\int_S\E[\tI_{y^{(j)}}(t)|\F_{j-1}]\Phi_j(\omega,t)\meas(dt)-\int_S\Psi_j(\theta_*,t)\Phi_j(\omega,t)\meas(dt)
\\=&\int_S\Psi_j(\theta_*,t)\Phi_j(\omega,t)\meas(dt)-\int_S\Psi_j(\theta_*,t)\Phi_j(\omega,t)\meas(dt)
\\=&0.
\end{align*}
For any $\alpha\in\cL^2(\Omega,\fn)$, define $M_0(\alpha):=1$.
For any $n\in\N$ and $\alpha\in\cL^2(\Omega,\fn)$, define $W_n:=\sum_{j=1}^n V_j$ and $M_n(\alpha):=\exp\left\{\langle\alpha, W_n\rangle-\frac{1}{2}\|\alpha\|^2_{U_n}\right\}$ with $\|\alpha\|^2_{U_n}=\langle\alpha,U_n\alpha\rangle$.
We have that
\begin{lemma} \label{lem:Mn_alpha_super_mart_inf}
For any $\alpha\in\cL^2(\Omega,\fn)$, $M_n(\alpha)_{n\ge 0}$ is a non-negative super-martingale.
\end{lemma}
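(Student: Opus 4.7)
The plan is to mirror the finite-dimensional argument used for $\{M_n(\alpha)\}_{n\ge 0}$ in the proof of Theorem \ref{thm:upper_fixed} (see \eqref{eq:Mn_super0}--\eqref{eq:Mn_super}), lifting every scalar/matrix manipulation to the $\cL^2(\Omega,\fn)$ setting. Non-negativity is immediate since $M_n(\alpha)$ is an exponential. For the adapted property, I would first argue measurability of all the ingredients: $V_j$ has already been shown to be $\F_{\Omega}\otimes\F_j$-measurable with $|V_j|\le 1$ $\fn$-a.e., so by Fubini's theorem $\langle\alpha, V_j\rangle = \int_\Omega \alpha V_j\, d\fn$ is a well-defined $\F_j$-measurable real random variable (finite since $\alpha\in\cL^2(\Omega,\fn)$ and $\fn(\Omega)<\infty$). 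A similar argument, using that $\Phi_j$ is $(\F_{j-1}\otimes\B(S))$-measurable, shows that the operator action $U_n\alpha\in\cL^2(\Omega,\fn)$ is $\F_{n-1}$-measurable $\fn$-a.e., and hence $\|\alpha\|_{U_n}^2=\langle\alpha,U_n\alpha\rangle$ is $\F_{n-1}$-measurable. Combining these, $M_n(\alpha)$ is $\F_n$-measurable.

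Next, for the super-martingale inequality, I would mimic \eqref{eq:Mn_super0} by conditioning on $\F_{n-1}$:
\begin{equation*}
\E\!\left[M_n(\alpha)\mid \F_{n-1}\right]
= M_{n-1}(\alpha)\cdot\frac{\E\!\left[\exp\langle \alpha,V_n\rangle\mid \F_{n-1}\right]}{\exp\!\left(\tfrac{1}{2}(\|\alpha\|_{U_n}^2-\|\alpha\|_{U_{n-1}}^2)\right)}.
\end{equation*}
The key computation is to re-express
\begin{equation*}
\|\alpha\|_{U_n}^2-\|\alpha\|_{U_{n-1}}^2
=\int_S \langle \alpha(\cdot),\Phi_n(\cdot,t)\rangle^{2}\,\meas(dt),
\end{equation*}
which follows directly from the definition \eqref{eq:U_n_def} of $U_n$ and Fubini's theorem. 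Then by Fubini again, $\langle\alpha,V_n\rangle = \int_S (\tI_{y^{(n)}}(t)-\Psi_n(\theta_*,t))\langle\alpha(\cdot),\Phi_n(\cdot,t)\rangle\,\meas(dt)$, which is conditionally zero-mean given $\F_{n-1}$ (because $V_n$ itself is) and, because $|\tI_{y^{(n)}}(t)-\Psi_n(\theta_*,t)|\le 1$, is bounded in absolute value by $b:=\int_S |\langle\alpha(\cdot),\Phi_n(\cdot,t)\rangle|\,\meas(dt)$. Hoeffding's lemma then yields
\begin{equation*}
\E\!\left[\exp\langle\alpha,V_n\rangle\mid \F_{n-1}\right]\le \exp\!\left(\tfrac{1}{2}b^2\right)
\le \exp\!\left(\tfrac{1}{2}\int_S \langle\alpha(\cdot),\Phi_n(\cdot,t)\rangle^2\,\meas(dt)\right),
\end{equation*}
where the last step uses Cauchy--Schwarz together with $\meas(S)=1$. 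Plugging this bound back gives $\E[M_n(\alpha)\mid \F_{n-1}]\le M_{n-1}(\alpha)$, completing the super-martingale verification.

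The routine obstacles are essentially bookkeeping: carefully checking that $U_n\alpha$ is well-defined in $\cL^2(\Omega,\fn)$ (which follows from $\fn(\Omega)<\infty$ and $|\Phi_j|\le 1$, as already argued in the passage preceding \eqref{eq:U_n_def}) and that the two applications of Fubini's theorem are justified (which holds since the integrands are bounded and the measures finite on the relevant product spaces). The one subtle point, which is the main conceptual step, is the identity $\|\alpha\|_{U_n}^2-\|\alpha\|_{U_{n-1}}^2=\int_S \langle\alpha,\Phi_n(\cdot,t)\rangle^{2}\meas(dt)$; this is the $\cL^2$-analog of the scalar identity $\alpha^{\top}(\int_S\Phi_n\Phi_n^{\top})\alpha=\int_S(\alpha^{\top}\Phi_n)^2$ and allows the Hoeffding bound to exactly cancel the normalizing factor, giving the super-martingale property without any loss. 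No additional integrability assumption on $\alpha$ beyond membership in $\cL^2(\Omega,\fn)$ is needed, since the bound on $\langle\alpha,V_n\rangle$ is already deterministic given $\F_{n-1}$.
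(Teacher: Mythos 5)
Your proposal follows essentially the same route as the paper's proof: verify $\F_n$-measurability via Fubini, factor the conditional expectation using the operator identity $\|\alpha\|_{U_n}^2-\|\alpha\|_{U_{n-1}}^2=\int_S\Psi_n(\alpha,t)^2\meas(dt)$, and then bound $\E[\exp\langle\alpha,V_n\rangle\mid\F_{n-1}]$ with Hoeffding's lemma followed by Cauchy--Schwarz (using $\meas(S)=1$) to exactly cancel the normalizer. The only difference is cosmetic: you make the operator identity explicit as its own step, whereas the paper uses it implicitly when rewriting $M_n(\alpha)=M_{n-1}(\alpha)\exp\{\langle\alpha,V_n\rangle-\tfrac12\int_S\Psi_n(\alpha,t)^2\meas(dt)\}$.
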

The proof of Lemma \ref{lem:Mn_alpha_super_mart_inf} is similar to that in Appendix \ref{subsec:RUB_fixed_reg} and is provided in Appendix \ref{sec:proofs_lemmas_inf}. 
By Lemma \ref{lem:Mn_alpha_super_mart_inf}, we have
\begin{align*}
\E[M_n(\alpha)]\le\E[M_0(\alpha)]=1.
\end{align*}

Since $U_{n,\bsigma}^{-1}$ is a bounded linear operator on $\cL^2(\Omega,\fn)$, we have $U_{n,\bsigma}^{-1}W_n\in \cL^2(\Omega,\fn)$. There exists $(w_i)_{i\in\N}\in\R^\N$ such that $U_{n,\bsigma}^{-1}W_n=\sum_{i=1}^\infty w_ie_i$.

Let $\{\zeta_i\}_{i\ge1}$ be a sequence of independent normal distributed random variables such that $\zeta_i\sim N(0,1)$ for any $i\in\N$ and $\F^{\zeta}:=\sigma\left(\zeta_1,\zeta_2,\dots\right)$ is independent of $\F_{\infty}:=\sigma\left(\cup_{j=1}\F_j\right)$.

Define $\beta_i:=\sigma_i\zeta_i$ for any $i\in\N$.
By the monotone convergence theorem, we have 
\begin{align*}
\E\left[\sum_{i=1}^\infty\beta_i^2\right]=\E\left[\sum_{i=1}^\infty\sigma_i^2\zeta_i^2\right]=\sum_{i=1}^\infty\sigma_i^2\E\left[\zeta_i^2\right]=\sum_{i=1}^\infty\sigma_i^2<\infty.
\end{align*}
Thus, we have $\sum_{i=1}^\infty\beta_i^2<\infty$ a.s., which implies that
$\{\sum_{i=1}^m\beta_i e_i\}_{m\ge 1}$ and $\{\sum_{i=1}^m\sigma_i\zeta_ie_i\}_{m\ge 1}$ converges in $\cL^2(\Omega,\fn)$ a.s.. 
In particular, we have  $\beta:=\sum_{i=1}^\infty\sigma_i\beta_ie_i\in\cL^2(\Omega,\fn)$ with $\|\beta\|^2<\infty$ a.s.. 

Define $\wb M_n:=\E[M_n(\beta)|\F_\infty]$. 
Then, $\wb M_n\ge0$.
Since $M_n(\alpha)$ is $\F_{n}$-measurable for any fixed $\alpha\in\cL^2(\Omega,\fn)$ and $\F^\beta$ and $\F_{\infty}$ are independent, we have that $\wb M_n$ is $\F_n$-measurable.  
Then, we have 
\begin{align*}
\E[\wb M_n|\F_{n-1}]=&
\E[M_n(\beta)|\F_{n-1}]
\\=&
\E[\E[M_n(\beta)|\F^\zeta,\F_{n-1}]|\F_{n-1}]
\\ \le &
\E[\E[M_{n-1}(\beta)|\F^\zeta,\F_{n-1}]|\F_{n-1}]
\\=&
\E[\E[M_{n-1}(\beta)|\F^\zeta,\F_\infty]|\F_{n-1}]
\\=&
\E[\E[M_{n-1}(\beta)|\F_\infty]|\F_{n-1}]
\\=& \wb M_{n-1}
\end{align*}
and
\begin{align*}
\E[|\wb M_{n}|]=\E[\wb M_{n}]=\E[M_{n}(\beta)]= \E[\E[M_n(\beta)|\F^\beta]]\le 1.
\end{align*}
Thus, $\{\wb M_n\}_{n\ge0}$ is a non-negative super-martingale.

Since $\|U_n\|\le n\fn(\Omega)$ and $U_n$ is positive, we have $0\le \lambda_i=\langle e_i,U_ne_i\rangle\le n\fn(\Omega)$ for any $i\in\N$.
Define $w'_i:=\langle e_i,W_n\rangle$ for any $i\in\N$.
Define $H_m:=\sum_{i=1}^m\beta_iw'_i-\frac{1}{2}\sum_{i=1}^m\lambda_i\beta_i^2$ for any $m\in\N$ and $H_\infty:=\langle\beta,W_n\rangle-\frac{1}{2}\|\beta\|_{U_n}^2=\sum_{i=1}^\infty\beta_iw'_i-\frac{1}{2}\sum_{i=1}^\infty\lambda_i\beta_i^2$. 
Then, we have $M_n(\beta)=\exp(H_\infty)$.
Moreover, we prove the following convergence result.
\begin{lemma} \label{lem:Hm_DCT}
\begin{align*}
\E[\exp(H_m)|\F_{\infty}]\rightarrow
\E[M_n(\beta)|\F_{\infty}]=\wb M_n
\end{align*} 
as $m\rightarrow\infty$ a.s..
\end{lemma}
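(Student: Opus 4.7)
The plan is to invoke the conditional dominated convergence theorem on the sequence $\{\exp(H_m)\}_{m\ge 1}$. To that end, I will establish (i) pointwise a.s.\ convergence $\exp(H_m)\to \exp(H_\infty)=M_n(\beta)$, and (ii) a random dominator $g$ that is $\sigma(\{\zeta_i\}_{i\in\N})$-measurable, hence independent of $\F_\infty$, with $\exp(H_m)\le g$ uniformly in $m$ and $\E[g]<\infty$. For (i) I split $H_m=S_m-\tfrac12 Q_m$ with $S_m:=\sum_{i=1}^m\beta_iw'_i$ and $Q_m:=\sum_{i=1}^m\lambda_i\beta_i^2$. The quadratic piece $Q_m$ is non-decreasing, and monotone convergence applied via $\E[\sum_i\lambda_i\sigma_i^2\zeta_i^2]=\sum_i\lambda_i\sigma_i^2\le n\fn(\Omega)\sum_i\sigma_i^2<\infty$ gives $Q_m\uparrow\|\beta\|_{U_n}^2$ a.s. Conditional on $\F_\infty$ the coordinates $w'_i=\langle e_i,W_n\rangle$ are constants with $\sum_i(w'_i)^2=\|W_n\|^2\le n^2\fn(\Omega)<\infty$, so $S_m$ is a sum of independent centered Gaussians with variances $\sigma_i^2(w'_i)^2\le (\sup_i\sigma_i^2)\|W_n\|^2$, summable; Kolmogorov's two-series theorem delivers a.s.\ convergence $S_m\to\langle\beta,W_n\rangle$. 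Continuity of $\exp$ closes (i).

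For (ii), since $-\tfrac12 Q_m\le 0$ and $|w'_i|\le\|e_i\|\|W_n\|\le n\sqrt{\fn(\Omega)}$, uniformly in $m$
\begin{equation*}
\exp(H_m)\le \exp\!\left(n\sqrt{\fn(\Omega)}\sum_{i=1}^\infty|\sigma_i\zeta_i|\right)
\le g:=\exp\!\left(n\sqrt{\fn(\Omega)}\sum_{i=1}^\infty|\sigma_i\zeta_i|+\tfrac12\sum_{i=1}^\infty\lambda_i\sigma_i^2\zeta_i^2\right).
\end{equation*}
Independence of $\{\zeta_i\}$ and Tonelli factorize $\E[g]=\prod_{i=1}^\infty I_i$ with $I_i:=\E\!\left[\exp(c|\sigma_i\zeta_i|+\tfrac12\lambda_i\sigma_i^2\zeta_i^2)\right]$ and $c:=n\sqrt{\fn(\Omega)}$. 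Under the hypothesis $|\sigma_i|<1/\sqrt{\lambda_i}$, setting $a_i:=1-\lambda_i\sigma_i^2\in(0,1]$ and completing the square inside the Gaussian density yields the explicit closed form $I_i=2a_i^{-1/2}\exp(c^2\sigma_i^2/(2a_i))\,\Phi(c|\sigma_i|/\sqrt{a_i})$, which is finite for every $i$. Applying $2\Phi(x)\le \exp(2x/\sqrt{2\pi})$ for $x\ge 0$ and $-\log(1-t)\le 2t$ for small $t$ (legitimate since $\lambda_i\sigma_i^2\to 0$ by Corollary~\ref{coro:U_n_eigenvalue}, so $a_i$ is bounded away from $0$ after a finite truncation), one obtains
\begin{equation*}
\sum_{i=1}^\infty\log I_i\le C_1\sum_i\lambda_i\sigma_i^2+C_2\sum_i\sigma_i^2+C_3\sum_i|\sigma_i|<\infty,
\end{equation*}
where the first two sums are finite since $\lambda_i\le n\fn(\Omega)$ and $\sum_i|\sigma_i|<\infty$ forces $\sum_i\sigma_i^2<\infty$, and the third is finite by the assumption $\sum_i|\sigma_i|<\infty$ itself.

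Because $g$ is a measurable function of $\{\zeta_i\}$ and $\sigma(\{\zeta_i\})$ is independent of $\F_\infty$, $\E[g\mid\F_\infty]=\E[g]<\infty$ a.s. The conditional dominated convergence theorem then gives $\E[\exp(H_m)\mid\F_\infty]\to \E[\exp(H_\infty)\mid\F_\infty]=\E[M_n(\beta)\mid\F_\infty]=\wb M_n$ a.s., which is the claim. The principal technical obstacle is verifying $\E[g]<\infty$: the spectral condition $\lambda_i\sigma_i^2<1$ is essential to keep each factor $I_i$ finite, while the stronger summability $\sum_i|\sigma_i|<\infty$ (strictly stronger than $\sum_i\sigma_i^2<\infty$) is what tames the linear-in-$|\zeta_i|$ contribution that appears when bounding $|S_m|$ uniformly in $m$.
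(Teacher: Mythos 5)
Your proof is correct and follows essentially the same route as the paper: conditional dominated convergence against the same exponential dominator, whose integrability is verified via independence of the $\zeta_i$'s, the explicit Gaussian mgf under the spectral condition $\lambda_i\sigma_i^2<1$, and the summability hypothesis $\sum_i|\sigma_i|<\infty$. The only cosmetic deviation is in the pointwise convergence step, where you invoke Kolmogorov's two-series theorem for $S_m$ conditionally on $\F_\infty$ instead of the paper's direct Cauchy--Schwarz tail estimate $|H_\infty-H_m|\le\sqrt{\sum_{i>m}\beta_i^2}\,\sqrt{\sum_{i>m}(w_i')^2}+n\fn(\Omega)\sum_{i>m}\beta_i^2$, and you track the $\sqrt{\fn(\Omega)}$ constant in the bound $|w_i'|\le n\sqrt{\fn(\Omega)}$ more carefully than the paper does.
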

The proof of Lemma \ref{lem:Hm_DCT} uses the conditional dominated convergence theorem and is provided in Appendix \ref{sec:proofs_lemmas_inf}. 
Specifically, we first show that $\lim_{m\rightarrow\infty}|H_m-H_\infty|=0$ a.s.. Then, we verify that the dominating function of $\exp(H_m)$,
$$
\exp\Big(n\sum_{i=1}^\infty|\sigma_i\zeta_i|+\frac{1}{2}\sum_{i=1}^\infty\lambda_i\sigma_i^2\zeta_i^2\Big),
$$
is integrable. Since $\zeta_1,\zeta_2,\dots$ are independent and $N(0,1)$-distributed random variables, we can use the monotone convergence theorem to calculate $\E[\exp\left(n\sum_{i=1}^\infty|\sigma_i\zeta_i|+\frac{1}{2}\sum_{i=1}^\infty\lambda_i\sigma_i^2\zeta_i^2\right)]$. Then, it suffices to verify the convergence of the resulting series; e.g., the conditions that $|\sigma_i|<\frac{1}{\sqrt{\lambda_i}}$, $\forall i\in\N$ and $\sum_{i=1}^\infty|\sigma_i|<\infty$ are needed to show that 
$$
\prod_{i=1}^\infty \left(2\Phi_{N(0,1)}\Big(n|\sigma_i|/\sqrt{1-\lambda_i\sigma_i^2}\Big)\right)
$$ 
exists as a non-negative real number, where $\Phi_{N(0,1)}$ denotes the \CDF of the $N(0,1)$ distribution. 

For any $m\in\N$, define $\beta_m=\sum_{i=1}^m\beta_ie_i$ and $W_{n,m}=\sum_{i=1}^mw'_ie_i$. Then, we have $\beta_m,W_{n,m}\in\cL^2_{\bsigma}(\Omega,\fn)$ and
\begin{align*}
\|W_{n,m}\|^2_{U_{n,\bsigma}^{-1}}-\|\beta_m-U_{n,\bsigma}^{-1}W_{n,m}\|^2_{U_{n,\bsigma}}
=2\langle\beta_m,W_{n,m}\rangle-\|\beta_m\|^2_{U_{n,\bsigma}}
\end{align*}
For any $m\in\N$, we have
\begin{align*}
\E[\exp(H_m)|\F_{\infty}]=&
\E\left[\exp\left\{\langle\beta_m,W_{n,m}\rangle-\frac{1}{2}\|\beta_m\|^2_{U_{n}}\right\}\Big|\F_{\infty}\right]
\\=&
\E\left[\exp\left\{\langle\beta_m,W_{n,m}\rangle-\frac{1}{2}\|\beta_m\|_{U_{n,\bsigma}}^2+\frac{1}{2}\sum_{i=1}^m\zeta_i^2\right\}\Big|\F_{\infty}\right]
\\=&
\exp\left(\frac{1}{2}\|W_{n,m}\|^2_{U_{n,\bsigma}^{-1}}\right)
\E\left[\exp\left\{\frac{1}{2}\sum_{j=1}^m\zeta_i^2-\frac{1}{2}\|\beta_m-U_{n,\bsigma}^{-1} W_{n,m}\|^2_{U_{n,\bsigma}}\right\}\Big|\F_{\infty}\right]
\\=&
\exp\left(\frac{1}{2}\|W_{n,m}\|^2_{U_{n,\bsigma}^{-1}}\right)\int_{\R^{m}}\exp\left\{-\frac{1}{2}\|\beta_m-U_{n,\bsigma}^{-1}W_{n,m}\|^2_{U_{n,\bsigma}}\right\}\frac{1}{\sqrt{2\pi}}d\zeta_1\cdots \frac{1}{\sqrt{2\pi}}d\zeta_m
\\=&
\exp\left(\frac{1}{2}\|W_{n,m}\|^2_{U_{n,\bsigma}^{-1}}\right)\int_{\R^{m}}\exp\left\{-\frac{1}{2}\sum_{i=1}^m\left(\lambda_i+\frac{1}{\sigma_i^2}\right)\beta_i^2 \right\}\frac{1}{\sqrt{2\pi}}d\zeta_1\cdots \frac{1}{\sqrt{2\pi}}d\zeta_m
\\=&
\exp\left(\frac{1}{2}\|W_{n,m}\|^2_{U_{n,\bsigma}^{-1}}\right)\int_{\R^{m}}\exp\left\{-\frac{1}{2}\sum_{i=1}^m\left(1+\lambda_i\sigma_i^2\right)\zeta_i^2 \right\}\frac{1}{\sqrt{2\pi}}d\zeta_1\cdots \frac{1}{\sqrt{2\pi}}d\zeta_m
\\=&
\frac{1}{\sqrt{\prod_{i=1}^m(1+\lambda_i\sigma_i^2)}}\exp\left(\frac{1}{2}\|W_{n,m}\|^2_{U_{n,\bsigma}^{-1}}\right)
\end{align*}
Since $\lambda_i\sigma_i^2\ge0$ for any $i\in\N$ and  $\sum_{i=1}^\infty\lambda_i\sigma_i^2<\infty$,
we have 
\begin{align*}
1\le \prod_{i=1}^\infty(1+\lambda_i\sigma_i^2)<\infty.
\end{align*}
Since 
\begin{align*}
\|W_{n,m}\|^2_{U_{n,\bsigma}^{-1}}
=\sum_{i=1}^m\left(\lambda_i+\frac{1}{\sigma_i^2}\right)^{-1}(w'_i)^2
=\sum_{i=1}^m\frac{\sigma_i^2 (w'_i)^2}{1+\lambda_i\sigma_i^2}
\end{align*}
and 
\begin{align*}
\sum_{i=1}^\infty\frac{\sigma_i^2 (w'_i)^2}{1+\lambda_i\sigma_i^2}\le 
\sup_{k\in\N}\sigma_k^2\sum_{i=1}^\infty(w'_i)^2=\sup_{k\in\N}\sigma_k^2\|W_n\|^2<\infty,
\end{align*}
we have
\begin{align*}
\lim_{m\rightarrow\infty}\|W_{n,m}\|^2_{U_{n,\bsigma}^{-1}}=
\sum_{i=1}^\infty\frac{\sigma_i^2 (w'_i)^2}{1+\lambda_i\sigma_i^2}
=\|W_n\|^2_{U_{n,\bsigma}^{-1}}<\infty.
\end{align*}
In conclusion, we have
\begin{align*}
\wb M_n=\lim_{m\rightarrow\infty}\E[\exp(H_m)|\F_\infty]=
\frac{1}{\sqrt{\prod_{i=1}^\infty(1+\lambda_i\sigma_i^2)}}\exp\left(\frac{1}{2}\|W_{n}\|^2_{U_{n,\bsigma}^{-1}}\right)
\end{align*}

Since $\{\bM_n\}_{n\ge0}$ is a super-martingale. By Doob's maximal inequality for super-martingales,
\begin{equation*}
\prob\left[\sup_{n\in\N}\bM_n\ge \delta\right]\le \frac{\E[\bM_0]}{\delta}=\frac{1}{\delta}
\end{equation*}
which, implies that, 
\begin{equation}
\prob\left[\exists n\in\N \text{ s.t. } \|W_n\|_{U_{n,\bsigma}^{-1}}\ge \sqrt{\log\left(\prod_{i=1}^\infty(1+\lambda_i\sigma_i^2)\right)+2\log\frac{1}{\delta}}\right]\le \delta.
\end{equation}
Define the finite rank operator $\varsigma_m:\cH_{\bsigma,\be}\rightarrow\cL^2(\Omega,\fn)$, $\theta\mapsto\sum_{i=1}^m\frac{\langle e_i,\theta\rangle}{\sigma_i^2}e_i$.
Since
\begin{align*}
\wh\theta_{\bsigma}-\theta_{*,m}=U_{n,\bsigma}^{-1}\left(W_n-\varsigma_m\theta_*\right)
=U_{n,\bsigma}^{-1}W_n-U_{n,\bsigma}^{-1}\varsigma_m\theta_*,
\end{align*}
by the triangle inequality, we have
\begin{align*}
\|\wh\theta_{\bsigma}-\theta_{*,m}\|_{U_{n,\bsigma}}
&\le \|U_{n,\bsigma}^{-1}W_n\|_{U_{n,\bsigma}}
+\|U_{n,\bsigma}^{-1}\varsigma_m\theta_*\|_{U_{n,\bsigma}}
\\ &=
\|W_n\|_{U_{n,\bsigma}^{-1}}
+\|\varsigma_m\theta_*\|_{U_{n,\bsigma}^{-1}}
\\ &=
\|W_n\|_{U_{n,\bsigma}^{-1}}+\sqrt{\sum_{i=1}^m\frac{|\theta_{*,i}|^2}{(1+\lambda_i\sigma_i^2)\sigma_i^2}}
\\ &\le
\|W_n\|_{U_{n,\bsigma}^{-1}}+\sqrt{\sum_{i=1}^m\frac{|\theta_{*,i}|^2}{\sigma_i^2}}
\end{align*}
Besides, we have
\begin{align*}
\|\theta_{*,m}-\theta_*\|^2_{U_{n,\bsigma}}
=\sum_{i=m+1}^\infty\frac{\left(\lambda_i+\frac{1}{\sigma_i^2}\right)|\langle e_i,\theta_{*}\rangle|^2}{(1+\lambda_i\sigma_i^2)^2}
=\sum_{i=m+1}^\infty\frac{|\langle e_i,\theta_{*}\rangle|^2}{\sigma_i^2(1+\lambda_i\sigma_i^2)}
\end{align*}
Since $\lim_{i\rightarrow\infty}\lambda_i=0=\lim_{i\rightarrow}\sigma_i$ and $\sum_{i=1}^\infty\frac{|\langle e_i,\theta_{*}\rangle|^2}{\sigma_i^2}<\infty$, we have
\begin{align*}
\lim_{m\rightarrow\infty}\|\theta_{*,m}-\theta_*\|^2_{U_{n,\bsigma}}
=\lim_{m\rightarrow\infty}\sum_{i=m+1}^\infty\frac{|\langle e_i,\theta_{*}\rangle|^2}{\sigma_i^2(1+\lambda_i\sigma_i^2)^2}=0.
\end{align*}
Thus, 
\begin{align*}
\|\wh\theta_{\bsigma}-\theta_{*}\|_{U_{n,\bsigma}}\le 
\limsup_{m\rightarrow\infty}\|\wh\theta_{\bsigma}-\theta_{*,m}\|_{U_{n,\bsigma}}
\le \|W_n\|_{U_{n,\bsigma}^{-1}}+\sqrt{\sum_{i=1}^\infty\frac{|\theta_{*,i}|^2}{\sigma_i^2}}
=\|W_n\|_{U_{n,\bsigma}^{-1}}+\|\theta_*\|_{\bsigma,\be}.
\end{align*}

With probability at least $1-\delta$, for all $n\in\N$, we have
\begin{equation*}
\begin{aligned}
\|\wh\theta_{\bsigma}-\theta_*\|_{U_{n,\bsigma}}\le 
\sqrt{\left(\sum_{i=1}^\infty\log\left(1+\lambda_i\sigma_i^2\right)\right)+2\log\frac{1}{\delta}}
+\|\theta_*\|_{\bsigma,\be}.
\end{aligned}
\end{equation*}
Since $0\le \lambda_i=\langle e_i,U_ne_i\rangle\le n\fn(\Omega)$ for any $i\in\N$, the above inequality implies that
\begin{equation*}
\begin{aligned}
\|\wh\theta_{\bsigma}-\theta_*\|_{U_{n,\bsigma}}\le 
\sqrt{\left(\sum_{i=1}^\infty\log\left(1+n\fn(\Omega)\sigma_i^2\right)\right)+2\log\frac{1}{\delta}}
+\|\theta_*\|_{\bsigma,\be}.
\end{aligned}
\end{equation*}
\end{proof}

\section{Proof of Lemma \ref{lem:Delta_nU}} \label{sec:proof_lem_Delta_nU}
\begin{proof}
For any $n\in\N$, define $\Delta^{U}_{n}:=\|U_n-\Sigma_n\|$ and $Z_j:=\int_S \Phi_j\Phi_j^{\top}
-\Sigma_n$ for $j\in[n]$. We have $\E[Z_j]=0$ and 
\begin{align*}
&\left\|\int_S\Phi_j\Phi_j^\top\right\|_2=\mu_{\max}\left(\int_S\Phi_j\Phi_j^\top\right)\le \int_S \|\Phi_j\|_2^2\le d,\\
&\|\Sigma^{(j)}\|_2=\mu_{\max}\left(\Sigma^{(j)}\right)=\mu_{\max}\left(\E\left[\int_S\Phi_j\Phi_j^\top\right]\right)\le \E\left[\int_S \|\Phi_j\|_2^2\right]\le d.
\end{align*}
Thus, for each $j\in[n]$, we have
\begin{align*}
\|Z_j\|\le \max\left\{\left\|\int_S\Phi_j\Phi_j^\top\right\|_2,\ \|\Sigma^{(j)}\|_2\right\}\le d.
\end{align*}
By \cite[Theorem 1.3]{tropp2012user}, for any $a\ge0$, we have
\begin{align*}
\prob[\Delta_n^U\ge a]\le d\exp\left(-\frac{a^2}{8nd^2}\right).
\end{align*}
In other words, for any $\delta\in(0,1)$, with probability at least $1-\delta$, we have
\begin{align*}
\Delta_n^U\le d\sqrt{8n\log(d/\delta)}.
\end{align*}
Thus, we can set $\Delta_n^U(\delta)\ge d\sqrt{8n\log(d/\delta)}$. 
\end{proof}

\section{Proofs of theoretical results in Section \ref{sec:prelim_inf_model}} \label{sec:proof_inf_dim}
In this section, we provide the proofs of the stated theoretical results in Section \ref{sec:prelim_inf_model}.
\begin{proof}[Proof of Lemma \ref{lem:U_n_property}]
By Fubini's theorem, for any $\theta\in\cL^2(\Omega,\fn)$, we have
\begin{align*}
(U_n \theta)(\omega)=\int_{\Omega}\theta(\omega')\sum_{j=1}^n \int_S\Phi_j(\omega,t)\Phi_j(\omega',t)\meas(dt)\fn(d\omega').
\end{align*}
Define the function 
$$
u_n:\Omega^2\rightarrow \R,\ (\omega,\omega')\mapsto \sum_{j=1}^n \int_S\Phi_j(\omega,t)\Phi_j(\omega',t)\meas(dt).
$$
Then, we have $(U_n\theta)(\omega)=\int_\Omega u_n(\omega,\omega')\theta(\omega')\fn(d\omega')$. 
Since $\Phi_j(\omega,t)\in[0,1]$ for any $\omega\in\Omega$, $t\in S$ and $\meas$ is a probability measure on $S$, we have $u_n(\omega,\omega')\in[0,n]$ for any $\omega,\omega'\in\Omega$ and
\begin{align*}
\int_{\Omega}\int_{\Omega}|u_n(\omega,\omega')|^2\fn(d\omega)\fn(d\omega')\le n^2\fn(\Omega)^2.
\end{align*}
Thus, $u_n\in\cL^2(\Omega^2,\fn^2)$ and $U_n$ is Hilbert-Schmidt integral operator for any $n\in\N$. Thus, it is also a compact operator. 

Because $u_n(\omega,\omega')=\sum_{j=1}^n \int_S\Phi_j(\omega,t)\Phi_j(\omega',t)\meas(dt)=u_n(\omega',\omega)\in\R$, $U_n$ is self-adjoint.
For any $\theta\in\cL^2(\Omega,\fn)$, we have
\begin{align*}
\|U_n\theta\|^2=&\int_{\Omega}\left|\int_{\Omega}\theta(\omega')u_n(\omega,\omega')\fn(d\omega')\right|^2\fn(d\omega)
\\ \le &
\int_{\Omega}\int_{\Omega}|\theta(\omega')|^2\fn(d\omega')\int_{\Omega}|u_n(\omega,\omega')|^2\fn(d\omega')\fn(d\omega)
\\ \le &
n^2\fn(\Omega)^2\|\theta\|^2
\end{align*}
and
\begin{align*}
\langle U_n\theta,\theta\rangle=&
\int_\Omega\int_\Omega u_n(\omega,\omega')\theta(\omega')\theta(\omega)\fn(\omega')\fn(\omega)
\\=&
\sum_{j=1}^n\int_S\langle\theta(\cdot),\Phi_j(\cdot,t)\rangle^2\meas(dt)
\\ \ge& 0.
\end{align*}
Thus, $U_n$ is a positive operator with $\|U_n\|\le n\fn(\Omega)$.
Note that $\langle U_n\theta,\theta\rangle=0$ iff $\langle\theta(\cdot),\Phi_j(\cdot,t)\rangle =0$ for $\meas$-a.e. $t\in S$ for all $j\in[n]$.
Since $U_n$ is compact, if $\dim(\cL^2(\Omega,\fn))=\infty$, $U_n$ is not invertible.
\end{proof}

\begin{proof}[Proof of Corollary \ref{coro:U_n_eigenvalue}]
By Lemma \ref{lem:U_n_property}, since $\|U_n\|\le n\fn(\Omega)$ and $U_n$ is positive, we have $0\le \lambda_i=\langle e_i,U_ne_i\rangle\le n\fn(\Omega)$ for any $i\in\N$.
Since $U_n$ is a compact operator and $\{e_i\}_{i\in\N}$ is an orthonormal basis consisting of eigenfunctions of $U_n$, by the Riesz-Schauder theorem \citep[see e.g.,][]{MICHAELREED1972V-BO}, we have that $\lambda_i\rightarrow0$.
\end{proof}

\begin{proof}[Proof of Lemma \ref{lem:L2_sigma_linear}]
For any $f,g\in \cL^2_{\bsigma}(\Omega,\fn)$ and $\alpha\in \R$, we have
\begin{align*}
\sum_{i=1}^\infty \frac{|\alpha \langle e_i,f\rangle+\langle e_i,g\rangle|^2}{\sigma_i^4}
\le &
\sum_{i=1}^\infty\frac{|\alpha|^2|\langle e_i,f\rangle|^2+|\langle e_i,g\rangle|^2+2|\alpha||\langle e_i,f\rangle||\langle e_i,g\rangle|}{\sigma_i^4}
\\ \le &
|\alpha|^2\sum_{i=1}^\infty\frac{|\langle e_i,f\rangle|^2}{\sigma_i^4}+
\sum_{i=1}^\infty\frac{|\langle e_i,g\rangle|^2}{\sigma_i^4}+
2|\alpha|\sqrt{\sum_{i=1}^\infty\frac{|\langle e_i,f\rangle|^2}{\sigma_i^4}
\sum_{i=1}^\infty\frac{|\langle e_i,g\rangle|^2}{\sigma_i^4}}
\\ < &\infty
\end{align*}
Then, we have $\alpha f+g\in\cL^2_{\bsigma}(\Omega,\fn)$ and $\cL^2_{\bsigma}(\Omega,\fn)$ is a linear subspace of $\cL^2(\Omega,\fn)$. 
\end{proof}

\begin{proof}[Proof of Lemma \ref{lem:U_n_sigma_properties}]
For any $\alpha\in\R$ and $f,g\in\cL_{\bsigma}^2(\Omega,\fn)$, we have $\alpha f+g\in\cL_{\bsigma}^2(\Omega,\fn)$,
\begin{align*}
U_{n,\bsigma}(\alpha f+g)=&\lim_{m\rightarrow\infty}\sum_{i=1}^m\left(\lambda_i+\frac{1}{\sigma_i^2}\right)\langle e_i,\alpha f+g\rangle e_i
\\=&
\lim_{m\rightarrow\infty}\left[\alpha\sum_{i=1}^m\left(\lambda_i+\frac{1}{\sigma_i^2}\right)\langle e_i,f\rangle e_i
+\sum_{i=1}^m\left(\lambda_i+\frac{1}{\sigma_i^2}\right)\langle e_i,g\rangle e_i\right]
\\=&
\alpha U_{n,\bsigma}f+U_{n,\bsigma}g,
\end{align*}
and
\begin{align*}
\|U_{n,\bsigma}f\|^2=&\sum_{i=1}^\infty\left(\lambda_i+\frac{1}{\sigma_i^2}\right)^2|\langle e_i,f\rangle|^2
\\ \ge &\inf_{k\in\N}\left(\lambda_k+\frac{1}{\sigma_k^2}\right)^2\sum_{i=1}^\infty|\langle e_i,f\rangle|^2
\\ \ge &
\frac{1}{\sup_{k\in\N}\sigma_k^4}\|f\|^2
\end{align*}
Thus, $U_{n,\bsigma}$ is a linear operator. Since $0\le \sup_{i\in\N}\sigma_i^2<\infty$, we can conclude that $\|U_{n,\bsigma}f\|=0$ iff $f=0$. Therefore, $U_{n,\bsigma}$ is injective. 

For any $\theta=\sum_{i=1}^{\infty}\langle e_i,\theta\rangle e_i\in\cL^2(\Omega,\fn)$, we have
\begin{align*}
\sum_{i=1}^\infty\frac{\sigma_i^4|\langle e_i,\theta\rangle|^2}{(1+\lambda_i\sigma_i^2)^2}
\le \sup_{k\in\N}\sigma_k^4\sum_{i=1}^\infty |\langle e_i,\theta\rangle|^2<\infty.
\end{align*}
and 
\begin{align*}
\sum_{i=1}^\infty\frac{|\langle e_i,\theta\rangle|^2}{(1+\lambda_i\sigma_i^2)^2}
\le \sum_{i=1}^\infty |\langle e_i,\theta\rangle|^2<\infty.
\end{align*}
Thus, we know that $\check{\theta}:=\sum_{i=1}^\infty\frac{\sigma_i^2\langle e_i,\theta\rangle}{1+\lambda_i\sigma_i^2}e_i\in\cL_{\bsigma}^2(\Omega,\fn)$. 
Since 
\begin{align*}
U_{n,\bsigma}\check{\theta}
=\sum_{i=1}^\infty\left(\lambda_i+\frac{1}{\sigma_i^2}\right)\frac{\sigma_i^2\langle e_i,\theta\rangle}{1+\lambda_i\sigma_i^2}e_i=
\sum_{i=1}^\infty\langle e_i,\theta\rangle e_i=\theta,
\end{align*}
we can conclude that 
$$
U_{n,\bsigma}^{-1}\theta=\check{\theta}=\sum_{i=1}^\infty\frac{\sigma_i^2\langle e_i,\theta\rangle}{1+\lambda_i\sigma_i^2}e_i
$$ 
and $U_{n,\bsigma}$ is a bijective linear operator from $\cL_{\bsigma}^2(\Omega,\fn)$ onto $\cL^2(\Omega,\fn)$. 
Then, $U_{n,\bsigma}^{-1}$ exists as a bijective linear operator from $\cL^2(\Omega,\fn)$ onto $\cL_{\bsigma}^2(\Omega,\fn)$.
Since for any $f\in\cL_{\bsigma}^2(\Omega,\fn)$, we have proved that $\|f\|\le \sup_{k\in\N}\sigma_k^2\|U_{n,\bsigma}f\|$, we have $\|U_{n,\bsigma}^{-1}\|\le \sup_{i\in\N}\sigma_i^2$ and $U_{n,\bsigma}^{-1}$ is a bounded linear operator on $\cL^2(\Omega,\fn)$.
\end{proof}

\begin{proof}[Proof of Lemma \ref{lem:hilbert_space_inclusion}]
Since $\lim_{i\rightarrow\infty}\sigma_i=0$, there exists some $N\in\N$ such that $\sigma_i\le 1$ for any $i\ge N$. Then, for any $\theta\in\cH_{\bsigma,\be}$, we have
\begin{align*}
\sum_{i=1}^\infty\frac{|\langle e_i,\theta\rangle|^2}{\sigma_i^4} \le &
\sum_{i=1}^{N}\frac{|\langle e_i,\theta\rangle|^2}{\sigma_i^4}
+\sum_{i=N+1}^{\infty}\frac{|\langle e_i,\theta\rangle|^2}{\sigma_i^2}
\le \sum_{i=1}^{N}\frac{|\langle e_i,\theta\rangle|^2}{\sigma_i^4}
+\sum_{i=1}^{\infty}\frac{|\langle e_i,\theta\rangle|^2}{\sigma_i^2}
<\infty.
\end{align*}
Thus, we have $\theta\in\cL^2_{\bsigma}(\Omega,\fn)$. 
\end{proof}

\section{Proof of Proposition \ref{prop:solution_L2}} \label{sec:proof_estimator_L2}
\begin{proof}
First, we show that $\sum_{j=1}^n\int_S\tI_{y^{(j)}}(t)\Phi_j(\cdot,t)\meas(dt)\in\cL^2(\Omega,\fn)$. Indeed, we have
\begin{align*}
\|\sum_{j=1}^n\int_S\tI_{y^{(j)}}(t)\Phi_j(\cdot,t)\meas(dt)\|^2\le&
\sum_{j=1}^n\|\int_S\tI_{y^{(j)}}(t)\Phi_j(\cdot,t)\meas(dt)\|^2
\\ \le &
\sum_{j=1}^n\|\int_S|\tI_{y^{(j)}}(t)\Phi_j(\cdot,t)|\meas(dt)\|^2
\\ \le & n\fn(\Omega)
\\\le&\infty.
\end{align*}
Define $\theta_0:=U_{n,\bsigma}^{-1}\left(\sum_{j=1}^n\int_S\tI_{y^{(j)}}(t)\Phi_j(\cdot,t)\meas(dt)\right)$.
We show that $\theta_0\in\cL^2_{\bsigma}(\Omega,\fn)$. Then, by Lemma \ref{lem:hilbert_space_inclusion}, we have $\theta_0\in\cH_{\bsigma,\be}$. Since $\lambda_i\ge0$ for any $i\in\N$, we have
\begin{align*}
\sum_{i=1}^\infty\frac{1}{\sigma_i^4}|\langle e_i,\theta\rangle|^2
&\le 
\sum_{i=1}^\infty\left(\lambda_i+\frac{1}{\sigma_i^2}\right)^2|\langle e_i,\theta_0\rangle|^2
\\ &\le
\|U_{n,\bsigma}\theta_0\|^2
\\&=
\|\sum_{j=1}^n\int_S\tI_{y^{(j)}}(t)\Phi_j(\cdot,t)\meas(dt)\|^2<\infty.
\end{align*}
Thus, $\theta_0\in\cL^2_{\bsigma}(\Omega,\fn)$. Then, for any $j\in[n]$, we have
\begin{align*}
|\Psi_j(\theta_0,t)|=&|\langle\theta_0(\cdot),\Phi_j(\cdot,t)\rangle|
\\ \le&
\int_{\Omega}|\theta_0(\omega)\Phi_j(\omega,t)|\fn(d\omega)
\\ \le &
\|\theta_0\|\|\Phi_j(\cdot,t)\|
\\ \le &
\sqrt{\fn(\Omega)}\|\theta_0\|
\\< &\infty,
\end{align*}
which implies that 
\begin{align*}
L(\theta_0;\bsigma)=\sum_{j=1}^n\|\tI_{y^{(j)}}(\cdot)-\Psi_j(\theta_0,t)\|_{\cL^2(S,\meas)}
+\|\theta_0\|_{\bsigma,\be}<\infty
\end{align*}
since $\meas(S)=1$ and $|\tI_{y^{(j)}}(t)|\le 1$ for any $j\in[n]$ and $t\in S$. 

For any $\theta\in\cH_{\bsigma,\be}$, we have 
\begin{align*}
L(\theta_{0}+\theta;\bsigma)=&
L(\theta_0)+\sum_{j=1}^n\int_S|\Psi_j(\theta,t)|^2\meas(dt)+\sum_{i=1}^\infty\frac{|\langle e_i,\theta\rangle|^2}{\sigma_i^2}
\\&+
\sum_{j=1}^n\int_{S}\Psi_j(\theta,t)(\Psi_j(\theta_0,t)-\tI_{y^{(j)}}(t))\meas(dt)+\sum_{i=1}^\infty\frac{\langle e_i,\theta\rangle\langle e_i,\theta_0\rangle}{\sigma_i^2}.
\end{align*}
Notice that by Fubini's theorem, we have
\begin{align*}
\sum_{j=1}^n\int_{S}\Psi_j(\theta,t)\Psi_j(\theta_0,t)\meas(dt)
=&
\sum_{j=1}^n\int_{S}\int_{\Omega}\int_{\Omega}\theta(\omega)\Phi_j(\omega,t)\theta_0(\omega')\Phi_j(\omega',t)\fn(d\omega')\fn(d\omega)\meas(dt)
\\=&
\int_{\Omega}\theta(\omega)\int_{\Omega}\theta_0(\omega')\left(\sum_{j=1}^n\int_{S}\Phi_j(\omega,t)\Phi_j(\omega',t)\meas(dt)\right)\fn(d\omega')\fn(d\omega)
\\=&
\int_{\Omega}\theta(\omega)\int_{\Omega}\theta_0(\omega')u_n(\omega,\omega')\fn(d\omega')\fn(d\omega)
\\=&
\langle\theta,U_n\theta_0\rangle,
\end{align*}
\begin{align*}
\sum_{j=1}^n\int_{S}\Psi_j(\theta,t)\tI_{y^{(j)}}(t)\meas(dt)
=&
\sum_{j=1}^n\int_{S}\int_{\Omega}\theta(\omega)\Phi_j(\omega,t)\tI_{y^{(j)}}(t)\fn(d\omega)\meas(dt)
\\=&
\int_{\Omega}\theta(\omega)\left(\sum_{j=1}^n\int_{S}\tI_{y^{(j)}}(t)\Phi_j(\omega,t)\meas(dt)\right)\fn(d\omega)
\\=&
\langle\theta(\cdot),\sum_{j=1}^n\int_{S}\tI_{y^{(j)}}(t)\Phi_j(\cdot,t)\meas(dt)\rangle,
\end{align*}
and
\begin{align*}
\sum_{i=1}^\infty\frac{\langle e_i,\theta\rangle\langle e_i,\theta_0\rangle}{\sigma_i^2}=&
\sum_{i=1}^\infty\langle\theta,\frac{\langle e_i,\theta_0\rangle}{\sigma_i^2}e_i\rangle
\end{align*}
Since we have proved that
\begin{align*}
\sum_{i=1}^\infty\frac{|\langle e_i,\theta_0\rangle|^2}{\sigma_i^4}<\infty,
\end{align*}
we can conclude that $\sum_{i=1}^\infty\langle\theta,\frac{\langle e_i,\theta_0\rangle}{\sigma_i^2}e_i\rangle=\langle\theta,\sum_{i=1}^\infty\frac{\langle e_i,\theta_0\rangle}{\sigma_i^2}e_i\rangle$. Thus,
\begin{align*}
&\sum_{j=1}^n\int_{S}\Psi_j(\theta,t)(\Psi_j(\theta_0,t)-\tI_{y^{(j)}}(t))\meas(dt)+\sum_{i=1}^\infty\frac{\langle e_i,\theta\rangle\langle e_i,\theta_0\rangle}{\sigma_i^2}
\\=&
\langle\theta(\cdot),(U_n\theta_0)(\cdot)+\sum_{i=1}^\infty\frac{\langle e_i,\theta_0\rangle}{\sigma_i^2}e_i(\cdot)-\sum_{j=1}^n\int_{S}\tI_{y^{(j)}}(t)\Phi_j(\cdot,t)\meas(dt)\rangle
\\=&
\langle\theta(\cdot),(U_{n,\bsigma}\theta_0)(\cdot)-\sum_{j=1}^n\int_{S}\tI_{y^{(j)}}(t)\Phi_j(\cdot,t)\meas(dt)\rangle
\\=&
\langle\theta(\cdot),\sum_{j=1}^n\int_{S}\tI_{y^{(j)}}(t)\Phi_j(\cdot,t)\meas(dt)-\sum_{j=1}^n\int_{S}\tI_{y^{(j)}}(t)\Phi_j(\cdot,t)\meas(dt)\rangle
\\=&0.
\end{align*}
Then, we have
\begin{align*}
L(\theta_{0}+\theta;\bsigma)=&
L(\theta_0)+\sum_{j=1}^n\int_S|\Psi_j(\theta,t)|^2\meas(dt)+\sum_{i=1}^\infty\frac{|\langle e_i,\theta\rangle|^2}{\sigma_i^2}\ge L(\theta_0;\bsigma).
\end{align*}
Since $\frac{1}{\sigma_i^2}>0$ for all $i\in\N$, we have that $\sum_{i=1}^\infty\frac{|\langle e_i,\theta\rangle|^2}{\sigma_i^2}>0$ for any $\theta\in\cH_{\bsigma,\be}$ with $\theta\neq 0$. Since $L(\theta_0;\bsigma)<\infty$, we can conclude that $L(\theta;\bsigma)>L(\theta_0;\bsigma)$ for any $\theta\in \cH_{\bsigma,\be}\backslash\{\theta_0\}$ and $\wh\theta_{\bsigma}=\theta_0$.
\end{proof}

\section{Proof of Lemma \ref{lem:packing_number}} \label{sec:proof_packing_number}
\begin{proof}
By \citet[Proposition 4.2.12]{vershynin_2018}, we have
\begin{equation}
P(\Delta^{d-1},d_{\ell^2},\delta)\ge \frac{\vol(\Delta^{d-1})}{\vol(B_{\delta}^{d-1}(0))}
\end{equation}
where $B_{\delta}^{d-1}(0):=\{x\in \R^{d-1}:\|x\|_2\le \delta\}$ and for any $E\subseteq \R^{d-1}$, $\vol(E)$ is the volume of $E$ under the Lebesgue measure in $\R^{d-1}$.
According to \citet{10.2307/2315353,NIST:DLMF},
We have
\begin{align}
& \vol(\Delta^{d-1})=\frac{\sqrt{d}}{(d-1)!},\\
& \vol(B_{\delta}^{d-1}(0))=\frac{(\sqrt{\pi}\delta)^{d-1}}{\Gamma(\frac{d+1}{2})}.
\end{align}
Thus,
\begin{align}
P(\Delta^{d-1},d_{\ell^2},\delta)&\ge 
\frac{\Gamma(\frac{d+1}{2})\sqrt{d}}{(d-1)!(\sqrt{\pi}\delta)^{d-1}}\notag
\\ &=
\frac{\Gamma(\frac{d+1}{2})\sqrt{d}}{\Gamma(d)(\sqrt{\pi}\delta)^{d-1}}.
\end{align}
When $d\ge 3$, we have $\frac{d+1}{2}\ge 2$ and $d\ge 2$. According to \citet[Theorem 1.5]{Batir2008}, we have $2((x-1/2)/e)^{x-1/2}< \Gamma(x)< 3((x-1/2)/e)^{x-1/2}$ for any $x\ge 2$. Thus, for $d\ge 3$, we have
\begin{align*}
\frac{\Gamma(\frac{d+1}{2})}{\Gamma(d)}> \frac{2}{3}\frac{\left(\frac{d}{2e}\right)^{d/2}}{\left(\frac{d-1/2}{e}\right)^{d-1/2}}.
\end{align*}
We verify that the above inequality also holds when $d=2$. Therefore, for any $d\ge 2$, we have
\begin{align*}
\frac{\Gamma(\frac{d+1}{2})}{\Gamma(d)}> \frac{2}{3}\frac{\left(\frac{d}{2e}\right)^{d/2}}{\left(\frac{d-1/2}{e}\right)^{d-1/2}}
\end{align*}
which implies that
\begin{align} 
\notag
P(\Delta^{d-1},d_{\ell^2},\delta)&>
\frac{2\sqrt{d}}{3(\sqrt{\pi}\delta)^{d-1}}\frac{\left(\frac{d}{2e}\right)^{d/2}}{\left(\frac{d-1/2}{e}\right)^{d-1/2}}
\\&\ge \notag
\frac{2\sqrt{d}}{3(\sqrt{\pi}\delta)^{d-1}}\frac{\left(\frac{d}{2e}\right)^{d/2}}{\left(\frac{d}{e}\right)^{d-1/2}}
\\&= \label{eq:packing-bound}
\frac{2\sqrt{d}}{3(\sqrt{\pi}\delta)^{d-1}}\frac{1}{2^{d/2}}e^{\frac{d-1}{2}}d^{-\frac{d-1}{2}}.
\end{align}

Let $\delta=\delta_0=\frac{\sqrt{e}}{2\sqrt{\pi d}}\left(\frac{\sqrt{d}}{3}\right)^{\frac{1}{d-1}}\left(\frac{1}{\sqrt{2}}\right)^{\frac{d}{d-1}}$. Then, by \eqref{eq:packing-bound}, we have
\begin{align*}
P(\Delta^{d-1},d_{\ell^2},\delta_0)> 2^d
\end{align*}
which is exactly \eqref{eq:packing_lower_bd}.
For $d\ge 2$, we have $\delta_0\ge \frac{\sqrt{e}}{4\sqrt{\pi d}}\left(\frac{\sqrt{d}}{3}\right)^{\frac{1}{d-1}}$.
Consider the function $f(x)=\frac{1}{x-1}\log\left(\frac{\sqrt{x}}{3}\right)$ with $x\ge 2$. We have that 
\begin{align*}
f'(x)=\frac{1-\frac{1}{x}-\log x+2\log3}{2(x-1)^2}.
\end{align*}
Since the function $g:\ x\mapsto -\frac{1}{x}-\log x$ is a decreasing function when $x\ge 2$ and $f'(2)>0$, $f'(e^5)<0$, we have that $f$ first increases and then decreases when $x$ increases from 2 to infinity.
Since $\lim_{x\rightarrow\infty}f(x)=0$, $f(2)=\log(\sqrt{2}/3)$, we have that $f(x)\ge f(2)=\log(\sqrt{2}/3)$. Therefore, for any $d\ge 2$,  we have $\left(\frac{\sqrt{d}}{3}\right)^{\frac{1}{d-1}}\ge \sqrt{2}/3$ and
\begin{equation*}
\delta_0\ge \frac{\sqrt{2e}}{12\sqrt{\pi d}}
\end{equation*}
which gives \eqref{eq:delta_lower_bd}. 
\end{proof}

\section{Proofs of upper bounds for the mismatched model} \label{Sec:Proofs_mismatch}

In this Section, we prove Theorem \ref{thm:upper_mismatch_fixed} in Appendix \ref{sec:RUB_mis_fixed} and Corollary \ref{coro:upper_mismatch_random} in Appendix \ref{sec:RUB_mis_random}.

\subsection{Proof of Theorem \ref{thm:upper_mismatch_fixed}} \label{sec:RUB_mis_fixed}
\begin{proof}
In the setting of Theorem \ref{thm:upper_mismatch_fixed}, the sample $\{(x^{(j)},y^{(j)})\}_{j\in\N}$ is generated according to Scheme I, and similar to setting of Appendix~\ref{sec:RUB_fixed}, we consider the underlying probability space for the sample to be $([0,1]^\N,\B([0,1])^\N,\prob)$
which is already defined at the beginning of Appendix \ref{sec:RUB_fixed}. 
Define the random vector $\Xi$ to be the identity mapping from $[0,1]^\N$ onto itself as in Appendix \ref{sec:RUB_fixed}. Then, $\Xi$ follows the uniform distribution on $[0,1]^\N$. 
Suppose $\{(x^{(j)},y^{(j)})\}_{j\in\N}$ is sampled according to Scheme I with $F$ defined in \eqref{eq:biased_model}. 
Then, according to \citet[Proposition 10.7.6]{measure2007Bogachev}, for each $j\in\N$, there exist some $(\B(\X)\otimes\B(S))^{j-1}\otimes\B([0,1])/\B(\X)$-measurable function $h_{X}^{(j)}:\ (\X\times S)^{j-1}\times[0,1]\rightarrow\X$ 
and $(\B(\X)\otimes\B(S))^{j-1}\otimes\B(\X)\otimes\B([0,1])/\B(S)$-measurable function $h_{Y}^{(j)}:\ (\X\times S)^{j-1}\times\X\times[0,1]\rightarrow S$ 
such that $x^{(j)}=h_X^{(j)}(x^{(1)},y^{(1)},\dots,x^{(j-1)},y^{(j-1)},\Xi^{(2j-1)})$, $y^{(j)}=h_Y^{(j)}(x^{(1)},y^{(1)},\dots,x^{(j-1)},y^{(j-1)},x^{(j)},\Xi^{(2j)})$, and 
\begin{align} \label{eq:unbiased_cond_mismatch}
\E\left[\indi\left\{h_Y^{(j)}(x^{(1)},y^{(1)},\dots,x^{(j-1)},y^{(j-1)},x^{(j)},\Xi^{(2j)})\le t\right\} \big|\F_{j-1}\right]=\theta_*^\top\Phi(x^{(j)},t)+e(x^{(j)},t)
\end{align}
for any $t\in S$ and $j\in \N$, where 
$\F_j:=\sigma\left(\big\{\Xi^{(k)}:k\in[2j+1]\big\}\right)$. 
With the same proof provided at the beginning of Appendix \ref{sec:RUB_fixed}, $\left\{y^{(j)}\right\}_{j\in\N}$ is $\left\{\F_j\right\}_{j\in\N}$-adapted, $x^{(j)}$ is $\F_{j-1}/\B(\X)$-measurable, and $\Phi_j$ is $(\F_{j-1}\otimes\B(S))/\B([0,1]^d)$-measurable for each $j\in\N$. 
Since $e:$ $\X\times S\rightarrow[-1,1]$, $(x,t)\mapsto e(x,t)$ is $\B(\X)\otimes\B(S)$-measurable and $x^{(j)}$ is $\F_{j-1}/\B(\X)$-measurable, we have that $e_j:$ $[0,1]^\N\times S\rightarrow[-1,1]$, $(\xi,t)\mapsto e(x^{(j)}(\xi),t)$ is $\F_{j-1}\otimes \B(S)$-measurable.

Define $V_j:=\int_S \tI_{y^{(j)}}\Phi_j-\int_S (\theta_*^{\top}\Phi_j+e_j)\Phi_j$. Since  $\tI_{y^{(j)}}$ is $\F_j\otimes \B(S)$-measurable and $e_j$ and $\Phi_j$ are $\F_{j-1}\otimes\B(S)$-measurable, by Fubini's theorem and \eqref{eq:unbiased_cond_mismatch}, We have
\begin{align*}
\E[V_j|\F_{j-1}]=&\E\left[\int_S\tI_{y^{(j)}}\Phi_j\Big|\F_{j-1}\right]-\int_S (\theta_*^{\top}\Phi_j+e_j)\Phi_j
\\=&\int_S\E\left[\tI_{y^{(j)}}|\F_{j-1}\right]\Phi_j-\int_S (\theta_*^{\top}\Phi_j+e_j)\Phi_j
\\=&\int_S (\theta_*^{\top}\Phi_j+e_j)\Phi_j-\int_S (\theta_*^{\top}\Phi_j+e_j)\Phi_j
\\=&0.
\end{align*}
For any $\alpha\in\R^d$, if $n=0$, define $M_n(\alpha)=1$. If $n\ge 1$, define $M_n(\alpha):=\exp\left\{\alpha^\top W_n-\frac{1}{2}\|\alpha\|^2_{U_n}\right\}$ for $W_n:=\sum_{j=1}^n V_j$ and $U_n=\sum_{j=1}^n\int_S\Phi_j\Phi_j^\top$. Then, with the similar proof as in Appendix \ref{subsec:RUB_fixed_reg}, we can show that $W_n$ is $\F_n$-measurable, $U_n$ is $\F_{n-1}$-measurable, and $M_n$ is $\F_n\otimes\B(\R^d)$-measurable for any $n\in\N$. Thus, for any $\alpha\in\R^d$, $\{M_n(\alpha)\}_{n\ge 0}$ is $\{\F_n\}_{n\ge0}$-adapted. Moreover, for any $\alpha\in\R^d$ and $n\in \N$, we have
\begin{align}
\notag
\E[M_n(\alpha)|\mathcal{F}_{n-1}]=&M_{n-1}(\alpha)\E\left[\exp\left\{\alpha^\top V_n-\frac{1}{2}\alpha^{\top}\left(\int_S \Phi_n\Phi_n^\top\right)\alpha\right\}\Big| \F_{n-1}\right]
\\ \label{eq:Mn_super0_mismatch} =&
M_{n-1}(\alpha)\frac{\E\left[\exp\left\{\alpha^\top V_n\right\}|\F_{n-1}\right]}{\exp\left\{\frac{1}{2}\int_S\left(\alpha^\top \Phi_n\right)^2\right\}}.
\end{align}
Since $-\int_S|\alpha^\top \Phi_n|\le \alpha^{\top}V_n\le \int_S|\alpha^\top \Phi_n|$ a.s., we have
\begin{align}
\notag \E\left[\exp\left\{\alpha^\top V_n\right\}|\F_{n-1}\right]&\le \exp\left\{\frac{4}{8}\left(\int_S|\alpha^\top\Phi_n|\right)^2\right\}\\ \label{eq:Vn_cs_mismatch} &\le
\exp\left\{\frac{1}{2}\int_S\left(\alpha^\top\Phi_n\right)^2\right\}
\\ \label{eq:Vn_subG_mismatch} &\le 
\exp\left\{\frac{1}{2}\int_S\left(\alpha^\top\Phi_n\right)^2\right\}
\end{align}
where \eqref{eq:Vn_cs_mismatch} is by Cauchy-Schwarz inequality and $\int_{S}1=\meas(S)=1$. 
Then, by \eqref{eq:Mn_super0_mismatch} and \eqref{eq:Vn_subG_mismatch}, we have
\begin{equation*}
\begin{aligned}
\E[M_n(\alpha)|\mathcal{F}_{n-1}]\le &
M_{n-1}(\alpha)\frac{\exp\left\{\frac{1}{2}\int_S\left(\alpha^\top \Phi_n\right)^2\right\}}
{\exp\left\{\frac{1}{2}\int_S\left(\alpha^\top \Phi_n\right)^2\right\}}=
M_{n-1}(\alpha).
\end{aligned}
\end{equation*}
Thus, for any $\alpha \in \R^d$, $\{M_n(\alpha)\}_{n\ge 0}$ is a super-martingale.

Now define $\bM_n:=\int_{\R^d}M_n(\alpha)h(\alpha)d\alpha$ for 
\begin{equation*}
h(\alpha)=\left(\frac{\lambda}{2\pi}\right)^{\frac{d}{2}}\exp\left\{-\frac{\lambda}{2}\alpha^{\top}\alpha\right\}
=\left(\frac{\lambda}{2\pi}\right)^{\frac{d}{2}}\exp\left\{-\frac{1}{2}\|\alpha\|^2_{\lambda I_d}\right\}.
\end{equation*}
Then, with the same calculation as \eqref{eq:Mn_bar_val} in Appendix \ref{subsec:RUB_fixed_reg}, we have
$\wb M_n=\frac{\lambda^{d/2}}{\det(U_n(\lambda))^{1/2}} \allowbreak \exp\left(\frac{1}{2}\|W_n\|_{U_n(\lambda)^{-1}}^2\right)$.
By Fubini's theorem, $M_n$ is $\F_n\otimes\B(\R^d)$-measurable implies that $\bM_n$ is $\F_n$-measurable for any $n\ge 0$. With the same analysis as \eqref{eq:Mn_bar_super}, $\{\bM_n\}_{n\ge0}$ is a super-martingale. By Doob's maximal inequality for super-martingales, we have that 
\begin{equation*}
\prob\left[\sup_{n\in\N}\bM_n\ge \delta\right]\le \frac{\E[\bM_0]}{\delta}=\frac{1}{\delta}
\end{equation*}
which implies that for any $N\in\N$, 
\begin{equation*}
\prob\left[\exists n\in[N] \text{ s.t. } \|W_n\|_{U_n(\lambda)^{-1}}\ge \sqrt{\log\frac{\det(U_n(\lambda))}{\lambda^d}+2\log\frac{1}{\delta}}\right]\le \delta.
\end{equation*}
According to \eqref{eq:logdet_bound}, we have
\begin{equation} \label{eq:Wn_bound_mismatch}
\|W_n\|_{U_n(\lambda)^{-1}}\le \sqrt{d\log\left(1+\frac{n}{\lambda}\right)+2\log\frac{1}{\delta}}
\end{equation}
for all $n\in\N$ with probability at least $1-\delta$.

By \eqref{eq:solution}, \eqref{eq:theta_true}, and the definition of $V_j$, we have
\begin{equation}
\wh\theta_\lambda-\theta_*=
U_n(\lambda)^{-1}\left(\sum_{j=1}^n V_j+ E_n-\lambda\theta_*\right)
=U_n(\lambda)^{-1}W_n+ U_n(\lambda)^{-1}(E_n-\lambda\theta_*).
\end{equation}
where $E_n=\sum_{j=1}^n\int_Se_j\Phi_j$ by definition.
Thus,
\begin{align}
\notag
\|\wh\theta_\lambda-\theta_*\|_{U_n(\lambda)}
&\le \|U_n(\lambda)^{-1}W_n\|_{U_n(\lambda)}
+\|U_n(\lambda)^{-1}(E_n-\lambda\theta_*)\|_{U_n(\lambda)}
\\ \notag &\le 
\|W_n\|_{U_n(\lambda)^{-1}}
+\|\lambda\theta_*\|_{U_n(\lambda)^{-1}}
+\|E_n\|_{U_n(\lambda)^{-1}}
\\ \label{eq:error_bound0_mismatch} &\le 
\|W_n\|_{U_n(\lambda)^{-1}}+\sqrt{\lambda}\|\theta_*\| +\|E_n\|_{U_n(\lambda)^{-1}}
\end{align}
where \eqref{eq:error_bound0_mismatch} is because of $U_n(\lambda)^{-1}=\frac{1}{\lambda}\left(I-U_n(\lambda)^{-1}U_n\right)$ and $\|I-U_n(\lambda)^{-1}U_n\|_2\le 1$.

By \eqref{eq:Wn_bound_mismatch} and \eqref{eq:error_bound0_mismatch}, for any $\delta\in(0,1)$, with probability at least $1-\delta$, we have
\begin{equation} \label{eq:error_bound2_mismatch}
\begin{aligned}
\|\wh\theta_\lambda-\theta_*\|_{U_n(\lambda)}\le 
\sqrt{d\log\left(1+\frac{n}{\lambda}\right)+2\log\frac{1}{\delta}}+\sqrt{\lambda}\|\theta_*\|+\|E_n\|_{U_n(\lambda)^{-1}}.
\end{aligned}
\end{equation}
for all $n\in\N$. 
Since $U_n(\lambda)-\lambda I_d$ is positive semi-definite, \eqref{eq:error_bound2_mismatch} immediately implies that
\begin{align}
\|\wh\theta_\lambda-\theta\|_{U_n(\lambda)}\le 
\sqrt{d\log\left(1+\frac{n}{\lambda}\right)+2\log\frac{1}{\delta}}+\sqrt{\lambda}\|\theta_*\| + \frac{1}{\sqrt{\lambda}}\|E_n\|
\end{align}
which is exactly \eqref{eq:upperbound_mismatch}.

\end{proof}


\subsection{Proof of Corollary \ref{coro:upper_mismatch_random}} \label{sec:RUB_mis_random}
\begin{proof}
In the setting of Corollary \ref{coro:upper_mismatch_random}, the sample $\{(x^{(j)},y^{(j)})\}_{j\in\N}$ is generated according to Scheme II. In the following proof, we consider the underlying probability space for the sample $\{(x^{(j)},y^{(j)})\}_{j\in\N}$ to be $([0,1]^\N,\B([0,1])^\N,\prob)$ which has already been defined at the beginning of Appendix \ref{sec:RUB_fixed}.
Define the random vector $\Xi$ to be the identity mapping from $[0,1]^\N$ onto itself as in Appendix \ref{sec:RUB_fixed}. Then, $\Xi$ follows the uniform distribution on $[0,1]^\N$.
Suppose $\{(x^{(j)},y^{(j)})\}_{j\in\N}$ is sampled according to Scheme II with $F$ defined in \eqref{eq:biased_model}. Then, according to \citet[Proposition 10.7.6]{measure2007Bogachev}, for each $j\in\N$, there exist some $\B([0,1])/\B(\X)$-measurable function $h_{X}^{(j)}:\ [0,1]\rightarrow\X$ and $\B(\X)\otimes\B([0,1])/\B(S)$-measurable function $h_{Y}^{(j)}:\ \X\times[0,1]\rightarrow S$ 
such that $x^{(j)}=h_X^{(j)}(\Xi^{(2j-1)})$, $y^{(j)}=h_Y^{(j)}(x^{(j)},\Xi^{(2j)})$, and 
\begin{align} \label{eq:unbiased_cond_mismatch_random}
\E\left[\indi\left\{h_Y^{(j)}(x^{(j)},\Xi^{(2j)})\le t\right\} \big|\F_{j-1}\right]=\theta_*^\top\Phi(x^{(j)},t)+e(x^{(j)},t)
\end{align}
for any $t\in S$ and $j\in \N$, where 
$\F_j:=\sigma\left(\big\{\Xi^{(k)}:k\in[2j+1]\big\}\right)$. 
With the same proof provided at the beginning of Appendix \ref{sec:RUB_fixed}, $\left\{y^{(j)}\right\}_{j\in\N}$ is $\left\{\F_j\right\}_{j\in\N}$-adapted and $\Phi_j$ is $(\F_{j-1}\otimes\B(S))/\B([0,1]^d)$-measurable for each $j\in\N$.
Moreover, $\{x^{(j)}\}_{j\in\N}$ is independent, which implies that $\{\Phi_j(t)\}_{j\in\N}$ is independent for any $t\in S$, $\{e_j(t)\}_{j\in\N}$ is independent for any $t\in S$, and $\{y^{(j)}\}_{j\in\N}$ is independent.

Let $b_j(t):=\E[e_j(t)\Phi_j(t)]$ for $t\in S$ and $j\in[n]$. Then, by Fubini's theorem, $b_j$ is measurable with $b_{ji}(t)\in[-1,1]$ for $t\in S$, $j\in\N$ and $i\in[d]$. By definition and Fubini's theorem, we have $B_n=\sum_{j=1}^n\int_S b_j$.

Define $V_j:=\int_S \tI_{y^{(j)}}\Phi_j-\int_S \theta_*^{\top}\Phi_j\Phi_j-\int_S b_j$. By Fubini's theorem and \eqref{eq:unbiased_cond_mismatch_random}, we have
\begin{align*}
\E[V_j]=&
\E\left[\int_S(\tI_{y^{(j)}}- \theta_*^{\top}\Phi_j)\Phi_j\right]-\int_S b_j
\\=&\int_S\E\left[(\tI_{y^{(j)}}-\theta_*^{\top}\Phi_j)\Phi_j\right]-\int_S b_j
\\=&\int_S\E\left[\E[\tI_{y^{(j)}}-\theta_*^{\top}\Phi_j|\F_{j-1}]\Phi_j\right]-\int_S b_j
\\=&\int_S\E\left[e_j\Phi_j\right]-\int_S b_j
\\=&\int_S b_j-\int_S b_j
\\=&0.
\end{align*}
For any $\alpha\in\R^d$, if $n=0$, define $M_n(\alpha)=1$. If $n\ge 1$, define $M_n(\alpha):=\exp\left\{\alpha^\top W_n-\frac{1}{2}\|\alpha\|^2_{U_n}\right\}$ for $W_n:=\sum_{j=1}^n V_j$ and $U_n=\sum_{j=1}^n\int_S\Phi_j\Phi_j^\top$. Similar to Appendix \ref{sec:RUB_mis_fixed}, we can show that $M_n$ is $\F_n\otimes\B(\R^d)$-measurable for any $n\ge 0$. Moreover, for any $n\in\N$,
\begin{align}
\notag
\E[M_n(\alpha)|\mathcal{F}_{n-1}]=&M_{n-1}(\alpha)\E\left[\exp\left\{\alpha^\top V_n-\frac{1}{2}\alpha^{\top}\left(\int_S \Phi_n\Phi_n^\top\right)\alpha\right\}\Big|\F_{n-1}\right]
\\ \label{eq:Mn_super0_mismatch_random} =&
M_{n-1}(\alpha)\frac{\E\left[\exp\left\{\alpha^\top V_n\right\}|\F_{n-1}\right]}{\exp\left\{\frac{1}{2}\int_S\left(\alpha^\top \Phi_n\right)^2\right\}}
\end{align}
with $-\int_S|\alpha^\top \Phi_n|-\int_S\alpha^\top b_n\le \alpha^{\top}V_n\le \int_S|\alpha^\top \Phi_n|-\int_S\alpha^\top b_n$ a.s.. Thus,
\begin{align}
\notag \E\left[\exp\left\{\alpha^\top V_n\right\}|\F_{n-1}\right]&\le \exp\left\{\frac{4}{8}\left(\int_S|\alpha^\top\Phi_n|\right)^2\right\}
\\ \notag &\le
\exp\left\{\frac{1}{2}\int_S\left(\alpha^\top\Phi_n\right)^2\right\}
\\ \label{eq:Vn_subG_mismatch_random} &\le 
\exp\left\{\frac{1}{2}\int_S\left(\alpha^\top\Phi_n\right)^2\right\}
\end{align}
Then, by \eqref{eq:Mn_super0_mismatch_random} and \eqref{eq:Vn_subG_mismatch_random}, we have
\begin{equation*}
\begin{aligned}
\E[M_n(\alpha)|\mathcal{F}_{n-1}]\le &
M_{n-1}(\alpha)\frac{\exp\left\{\frac{1}{2}\int_S\left(\alpha^\top \Phi_n\right)^2\right\}}
{\exp\left\{\frac{1}{2}\int_S\left(\alpha^\top \Phi_n\right)^2\right\}}=
M_{n-1}(\alpha).
\end{aligned}
\end{equation*}
Thus, for any $\alpha \in \R^d$, $\{M_n(\alpha)\}_{n\ge 0}$ is a super-martingale. With the same approach as in Appendix \ref{sec:RUB_mis_fixed}, for any $\lambda\in(0,\infty)$, we can show that
\begin{align}
\|\wh\theta_\lambda-\theta_*\|_{U_n(\lambda)}\le 
\sqrt{d\log\left(1+\frac{n}{\lambda}\right)+2\log\frac{1}{\delta}}+\sqrt{\lambda}\|\theta_*\| + \frac{1}{\sqrt{\lambda}}\|B_n\|
\end{align}
for all $n\in\N$ with probability at least $1-\delta$.
Then, using the same analysis as in Appendix \ref{subsec:RUB_random_reg}, 
we can show that for any $\delta_1\in(0,1)$, $\delta_2\in(0,1-\delta_1)$, and $n\ge \frac{32d^2}{\mineig^2}\log(d/\delta_1)$, we have
\begin{align*}
\|\wh\theta_\lambda-\theta_*\|_{\Sigma_n} \le
\sqrt{2\left(d\log\left(1+\frac{1}{\lambda}\right)+2\log\frac{1}{\delta_2}\right)}+\sqrt{2\lambda}\|\theta_*\|+\sqrt{\frac{2}{\lambda}}\|B_n\|
\end{align*}
with probability at least $1-\delta_1-\delta_2$.
Then, \eqref{eq:upperbound_mismatch_random} is obtained by setting $\delta_1=\delta_2=\delta\in(0,1/2)$.
\end{proof}

\section{Proofs of the lemmas in Appendix \ref{Sec:Proofs_upper_inf}} \label{sec:proofs_lemmas_inf}
In this section, we provide the proofs of the technical lemmas in Appendix \ref{Sec:Proofs_upper_inf}.

\begin{proof}[Proof of Lemma \ref{lem:Mn_alpha_super_mart_inf}]
For any $n\in\N$, since $V_j$ is $\F_{\Omega}\otimes\F_j$-measurable for any $j\in[n]$ and $W_n=\sum_{j=1}^n V_j$, we have that $W_n$ is also $\F_{\Omega}\otimes\F_j$-measurable. 
According to the similar arguments as in Appendix \ref{subsec:RUB_fixed_reg}, we know that $U_n$ is $\F_{\Omega}\otimes\F_{n-1}$-measurable. Thus, by Fubini's theorem, for any $\alpha\in\cL^2(\Omega,\fn)$, $M_n(\alpha)=\exp\left\{\langle\alpha, W_n\rangle-\frac{1}{2}\|\alpha\|^2_{U_n}\right\}$ is $\F_{n}$-measurable, which implies that $\{M_n(\alpha)\}_{n\ge 0}$ is $\{\F_n\}_{n\ge 0}$-adapted.
For any $n\in\N$, we have that
\begin{align*}
\E[M_n(\alpha)|\mathcal{F}_{n-1}]=&M_{n-1}(\alpha)\E\left[\exp\left\{\langle\alpha, V_n\rangle-\frac{1}{2}
\int_S\Psi_n(\alpha,t)^2\meas(dt)\right\}\Big|\F_{n-1}\right]
\\ =&
M_{n-1}(\alpha)\frac{\E\left[\exp\left\{\langle\alpha,V_n\rangle\right\}|\F_{n-1}\right]}{\exp\left\{\frac{1}{2}\int_S\Psi_n(\alpha,t)^2\meas(dt)\right\}}.
\end{align*}
Since $-\int_S|\Psi_n(\alpha,t)|\meas(dt)\le \langle\alpha,V_n\rangle\le \int_S|\Psi_n(\alpha,t)|\meas(dt)$, according to Hoeffding's lemma~\citep{hoeffding1963probability} and Cauchy-Schwarz inequality, we have
\begin{align*}
\E\left[\exp\left\{\langle\alpha,V_n\rangle\right\}|\F_{n-1}\right]&\le \exp\left\{\frac{4}{8}\left(\int_S|\Psi_n(\alpha,t)|\meas(dt)\right)^2\right\}
\\ &\le
\exp\left\{\frac{1}{2}\int_S\Psi_n(\alpha,t)^2\meas(dt)\right\}
\\ &\le 
\exp\left\{\frac{1}{2}\int_S\Psi_n(\alpha,t)^2\meas(dt)\right\}.
\end{align*}
Then, we have
\begin{align*}
\E[M_n(\alpha)|\mathcal{F}_{n-1}]\le &
M_{n-1}(\alpha)\frac{\exp\left\{\frac{1}{2}\int_S\Psi_n(\alpha,t)^2\meas(dt)\right\}}{\exp\left\{\frac{1}{2}\int_S\Psi_n(\alpha,t)^2\meas(dt)\right\}}=
M_{n-1}(\alpha).
\end{align*}
Since $M_0(\alpha)=1$ and $M_n(\alpha)\ge 0$, for any $\alpha \in \cL^2(\Omega,\fn)$, $\{M_n(\alpha)\}_{n\ge 0}$ is a non-negative super-martingale. 
\end{proof}


\begin{proof}[Proof of Lemma \ref{lem:Hm_DCT}]
For any $m\in\N$, we have
\begin{align*}
|H_\infty-H_m|=&\left|\sum_{i=m+1}^\infty (\beta_iw'_i-\frac{1}{2}\lambda_i\beta_i^2)\right|
\\ \le &
\sqrt{\sum_{i=m+1}^\infty \beta_i^2}\sqrt{\sum_{i=m+1}^\infty (w'_i)^2}+n\fn(\Omega)\sum_{i=m+1}^\infty\beta_i^2.
\end{align*}
Since $\sum_{i=1}^\infty (w_i')^2=\|W_n\|^2<\infty$ and $\sum_{i=1}^\infty\beta_i^2<\infty$ a.s., we have that $\lim_{m\rightarrow\infty}|H_m-H_\infty|=0$ a.s.. 
Thus, 
\begin{align*}
\lim_{m\rightarrow\infty}\left|\exp(H_m)-M_n(\beta)\right|
=\lim_{m\rightarrow\infty}\left|\exp(H_m)-\exp(H_\infty)\right|=0.
\end{align*}
Since $|W_n|\le n$ a.s., we have
\begin{align*}
|\exp(H_m)|\le &
\exp\left(|\langle\beta,W_n\rangle|+\frac{1}{2}\|\beta\|_{U_n}^2\right)
\\ \le &
\exp\left(n\sum_{i=1}^\infty|\sigma_i\zeta_i|+\frac{1}{2}\sum_{i=1}^\infty\lambda_i\sigma_i^2\zeta_i^2\right)
\end{align*}
for all $m\in\N$ a.s.. 

Moreover, for any $m\in \N$, by the independence of $\{\zeta_i\}_{i\in\N}$, if $|\sigma_i|<\frac{1}{\sqrt{\lambda_i}}$ for all $i\in\N$, then we have
\begin{align*}
&\E\left[\exp\left(n\sum_{i=1}^m|\sigma_i\zeta_i|+\frac{1}{2}\sum_{i=1}^m\lambda_i\sigma_i^2\zeta_i^2\right)\right]
\\=&\prod_{i=1}^m
\E\left[\exp\left(|\sigma_i\zeta_i|+\frac{1}{2}\lambda_i\sigma_i^2\zeta_i^2\right)\right]
\\=&
\prod_{i=1}^m
\exp\left(n|\sigma_i\zeta_i|+\frac{1}{2}(\lambda_i\sigma_i^2-1)\zeta_i^2\right)\frac{d\zeta_i}{\sqrt{2\pi}}
\\=&
\prod_{i=1}^m
\frac{2}{\sqrt{1-\lambda_i\sigma_i^2}}
\exp\left(\frac{n^2\sigma_i^2}{2(1-\lambda_i\sigma_i^2)}\right)
\Phi_{N(0,1)}\left(\frac{n|\sigma_i|}{\sqrt{1-\lambda_i\sigma_i^2}}\right)
\end{align*}
where $\Phi_{N(0,1)}$ denotes the \CDF of the $N(0,1)$ distribution.
By the monotone convergence theorem, we have
\begin{align*}
&\E\left[\exp\left(n\sum_{i=1}^\infty|\sigma_i\zeta_i|+\frac{1}{2}\sum_{i=1}^\infty\lambda_i\sigma_i^2\zeta_i^2\right)\right]
\\=&
\prod_{i=1}^\infty
\frac{2}{\sqrt{1-\lambda_i\sigma_i^2}}
\exp\left(\frac{n^2\sigma_i^2}{2(1-\lambda_i\sigma_i^2)}\right)
\Phi_{N(0,1)}\left(\frac{n|\sigma_i|}{\sqrt{1-\lambda_i\sigma_i^2}}\right)
\\ \le &
\frac{1}{\sqrt{\prod_{i=1}^\infty (1-\lambda_i\sigma_i^2)}}
\exp\left(\frac{n^2}{2}\sum_{i=1}^\infty\frac{\sigma_i^2}{1-\lambda_i\sigma_i^2}\right)
\prod_{i=1}^\infty \left[1+2\Phi_{N(0,1)}\left(\frac{n|\sigma_i|}{\sqrt{1-\lambda_i\sigma_i^2}}\right)-1\right]
\end{align*}
Since $\lim_{i\rightarrow\infty}\lambda_i\sigma_i^2=0$ and $\sum_{i=1}^\infty\sigma_i^2<\infty$, we have 
\begin{align*}
\sum_{i=1}^\infty\frac{\sigma_i^2}{1-\lambda_i\sigma_i^2}<\infty \textup{ and } 
\sum_{i=1}^\infty\lambda_i\sigma_i^2<\infty,
\end{align*}
which also implies that
\begin{align*}
\prod_{i=1}^\infty (1-\lambda_i\sigma_i^2)<\infty.
\end{align*}
For any sequence $\{a_i\}_{i\in \N}$ such that $a_i>0$ and  $\sum_{i=1}^\infty a_i<\infty$, we have $\lim_{i\rightarrow\infty}a_i=0$ and
\begin{align*}
\lim_{i\rightarrow\infty}\frac{2\Phi_{N(0,1)}(a_i)-1}{a_i}=
\lim_{i\rightarrow\infty}
\frac{1}{\sqrt{\pi}}\frac{\exp\left(-\frac{a_i^2}{2}\right)(a_i+o(a_i))}{a_i}=\frac{1}{\sqrt{\pi}}.
\end{align*}
Since $\sum_{i=1}^\infty |a_i|<\infty$, we can conclude that
\begin{align*}
\sum_{i=1}^\infty (2\Phi_{N(0,1)}(a_i)-1)<\infty.
\end{align*}
Therefore, if we assume that $\sum_{i=1}^\infty|\sigma_i|<\infty$, we have $\sum_{i=1}^\infty\frac{n|\sigma_i|}{\sqrt{1-\lambda_i\sigma_i^2}}<\infty$ and
\begin{align*}
\prod_{i=1}^\infty\left[1+2\Phi_{N(0,1)}\left(\frac{n|\sigma_i|}{\sqrt{1-\lambda_i\sigma_i^2}}\right)-1\right]<\infty.
\end{align*}
In conclusion, we have
\begin{align*}
\E\left[\exp\left(n\sum_{i=1}^\infty|\sigma_i\zeta_i|+\frac{1}{2}\sum_{i=1}^\infty\lambda_i\sigma_i^2\zeta_i^2\right)\right]<\infty.
\end{align*}

Then, by the conditional dominated convergence theorem, we have
\begin{align*}
\E[\exp(H_m)|\F_{\infty}]\rightarrow
\E[M_n(\beta)|\F_{\infty}]=\wb M_n
\end{align*}
as $m\rightarrow\infty$ a.s..
\end{proof}

\end{document}